\documentclass[12pt]{article}

\usepackage{graphicx,psfrag,epsf}
\usepackage{enumerate}
\usepackage{natbib}
\usepackage{url} 
\usepackage[noblocks]{authblk}
\usepackage[margin=0.67in]{geometry}

\usepackage[export]{adjustbox}
\usepackage[ruled,vlined]{algorithm2e}
\usepackage{algpseudocode}
\usepackage{amsfonts}       
\usepackage{amsmath}
\usepackage{amsthm}
\usepackage{bbold}
\usepackage{xr}
\usepackage{xr-hyper}
\usepackage{bm, bbm}
\usepackage{booktabs}       
\usepackage{breakcites}
\allowdisplaybreaks
\usepackage{comment}
\usepackage{color}
\usepackage{dsfont}
\usepackage{enumitem}
\usepackage{extarrows}
\usepackage{float}
\usepackage[T1]{fontenc}    
\usepackage{hyperref}       
\usepackage[utf8]{inputenc} 
\usepackage{lscape} 
\usepackage{mathabx}
\usepackage{mathrsfs}  
\usepackage{mathtools}
\usepackage{microtype}      
\usepackage{multirow}
\usepackage{nicefrac}       
\usepackage[labelformat=simple]{subcaption}

\usepackage{url}            
\usepackage{wrapfig}
\usepackage[dvipsnames]{xcolor}

\newlist{condenum}{enumerate}{1}

\newtheorem{theorem}{Theorem}
\newtheorem{lemma}{Lemma}[section]
\newtheorem{proposition}[theorem]{Proposition} 
\newtheorem{remark}{Remark}[section]
\newtheorem{corollary}{Corollary}[theorem]
\newtheorem{definition}{Definition}

\newtheorem*{assumptions}{Assumptions}


\DeclareMathOperator*{\conv}{Conv}

\DeclareMathOperator*{\argmax}{arg\,max} 
\DeclareMathOperator*{\argmin}{arg\,min} 
\newcommand{\RomanNumeralCaps}[1]{\MakeUppercase{\romannumeral #1}}


\newcommand{\colorblue}{\color{black}}

\newcommand{\bU}{\mathbf{U}}
\newcommand{\bC}{\mathbf{C}}
\newcommand{\bW}{\mathbf{W}}

\newcommand\tC{\mathbf C^0}

\newcommand\tU{\mathbf U^0}
\newcommand{\tu}[1]{\mathbf u^{0}_{#1}}
\newcommand\tW{\mathbf W^0}

\newcommand{\tw}[1]{\mathbf w^{0}_{#1}}
\newcommand{\idfn}{\mathds 1}
\newcommand\dis{\mathcal{D}}
\newcommand\simplex{\Delta}
\newcommand{\smallC}{\mathbf C^*}
\newcommand{\setballs}{\mathcal{S}}
\newcommand{\hatCn}{\hat{\mathbf C}_n}

\newcommand*\samethanks[1][\value{footnote}]{\footnotemark[#1]}
\newcommand{\blind}{0}

\begin{document}

\if0\blind
{
\title{\bf Learning Topic Models:  Identifiability and Finite-Sample Analysis}
\author[1]{Yinyin Chen\thanks{Yinyin Chen and Shishuang He contributed equally to this work.}}
\affil{Meta Platforms, Inc.}
\author[2]{Shishuang He\samethanks}
\affil{Department of Statistics, University of Illinois at Urbana-Champaign}
\author[2]{Yun Yang}
\author[2]{Feng Liang}
\date{\vspace{-6ex}}
  \maketitle
} \fi

\if1\blind
{
\title{\bf Learning Topic Models:  Identifiability and Finite-Sample Analysis}
\author[]{}
\date{\vspace{-6ex}}
  \maketitle
} \fi


\begin{abstract}
 Topic models provide a useful text-mining tool for learning, extracting, and discovering latent structures in large text corpora. Although a plethora of methods have been proposed for topic modeling, lacking in the literature is a formal theoretical investigation of the statistical identifiability and accuracy of latent topic estimation. In this paper, we propose a maximum likelihood estimator (MLE) of latent topics based on a specific integrated likelihood that is naturally connected to the concept, in computational geometry,  of \emph{volume minimization}. Our theory introduces a new set of geometric conditions for topic model identifiability, conditions that are weaker than conventional separability conditions, which typically rely on the existence of pure topic documents or of anchor words. Weaker conditions allow a wider and thus potentially more fruitful investigation. We conduct finite-sample error analysis for the proposed estimator and discuss connections between our results and those of previous investigations. We conclude with empirical studies employing both simulated and real datasets.
\end{abstract}

{Keywords:} Topic models, Identifiability, Sufficiently scattered, Volume minimization, Maximum likelihood, Finite-sample analysis. 



\section{Introduction}\label{sec:intro}

Topic models, such as Latent Dirichlet Allocation  \citep{blei2003latent} models and probabilistic Latent Semantic Analysis  \citep{hofm99}, have been widely used in natural language processing, text mining,  information retrieval, etc. The purpose of those models is to learn a lower-dimensional representation of the data, in which each document can be expressed as a convex combination of a set of latent topics. 

Consider a corpus of $d$ documents with vocabulary size $V$. A topic model with $k$ latent topics can be summarized as the following matrix factorization: 
\begin{equation} \label{eq:matrix-decomp}
\bU_{V \times d} = \bC_{V \times k} \bW_{k \times d},
\end{equation}
where all matrices are column-stochastic\footnote{We say a matrix is column-stochastic if its entries are non-negative and columns sum to one.}. In particular, $\bU_{V \times d}$ is the true term-document matrix whose columns are the true underlying word frequencies for the $d$ documents; $\bC_{V \times k}$ is the \emph{topic matrix} whose columns are the multinomial parameters (i.e., word frequencies) for the $k$ topics; and $\bW_{k \times d}$ is the \emph{mixing matrix} whose columns present the mixing weights over $k$ topics for $d$ documents.  

The primary interest here is to reveal the latent structure of a collection of documents, i.e., to estimate the collection's topic matrix $\mathbf{C}$. Despite the popularity and success of topic models,  work on the estimation accuracy of $\bC$ is scarce. An obstacle to rigorous analysis of that important question is that the factorization~\eqref{eq:matrix-decomp} may not be unique up to permutation (throughout we ignore any non-uniqueness due to permutations of the $k$ topics). The non-uniqueness issue can be easily understood via the following geometric interpretation of Equation~\eqref{eq:matrix-decomp}: recovering $\bC$ based on $\bU$ is equivalent to finding a $k$-vertex convex polytope that encloses all columns of $\bU$; the vertices of this $k$-vertex convex polytope form the columns of $\bC$. Apparently, such a convex polytope may not be unique; see Figure \ref{fig:vol1}. In statistical language, topic models parameterized by $(\bC, \bW)$ without any further constraints are not identifiable (modulo column permutations).  

This leads to the following two questions that we aim to address in this paper.
 \begin{enumerate}
    \item \emph{Identifiability}. Under what conditions is a topic model parameterized by $(\bC,\bW)$ identifiable up to permutation? It is easy to achieve identifiability by imposing stringent conditions that significantly limit the usefulness of the result. Our goal is to develop a set of identifiability conditions that are weaker than ones proposed in prior studies but whose accuracy may nevertheless be well estimated.

    \item \emph{Finite-sample error}. For an identifiable topic model, can we provide an estimator of $\bC$ whose finite-sample error leads to the desired rate of convergence?  The rate will depend on the number of documents $d$ and/or the number of words per document $n$ (which, without loss of generality, is assumed to be the same for all documents). Throughout, we assume the vocabulary size $V$ and the number of topics $k$ to be known and fixed. 
\end{enumerate}

\begin{figure}[h]
    \centering
    \begin{subfigure}[t]{0.24\textwidth}
     \centering
       \includegraphics[width=0.75\textwidth]{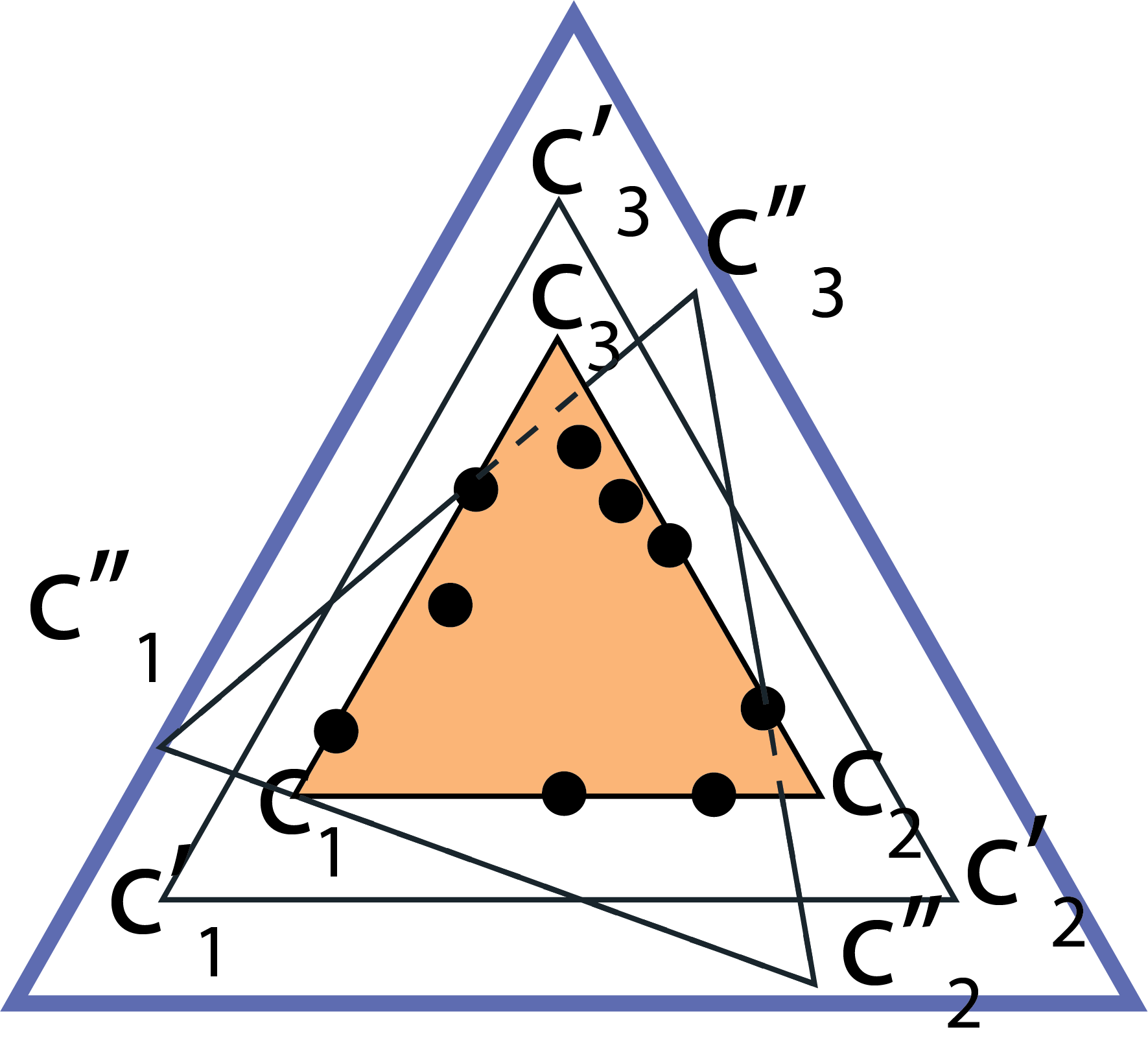}
        \subcaption{} 
        \label{fig:vol1} 
    \end{subfigure} 
    \begin{subfigure}[t]{0.24\textwidth}
     \centering
        \includegraphics[width=0.75\textwidth]{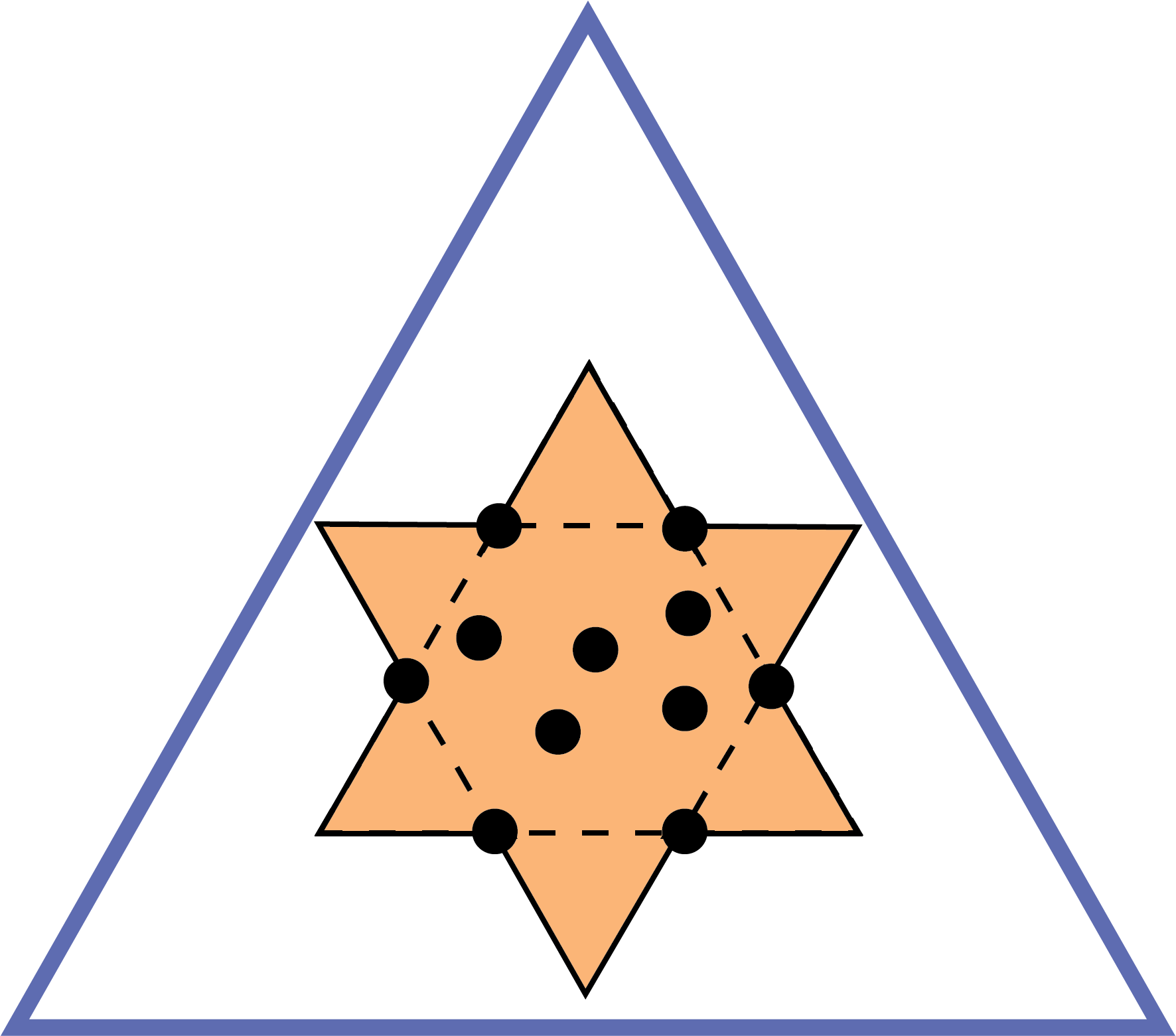}
        \subcaption{} 
        \label{fig:sm_vol1}
    \end{subfigure} 
    \begin{subfigure}[t]{0.24\textwidth}
     \centering
        \includegraphics[width=0.75\textwidth]{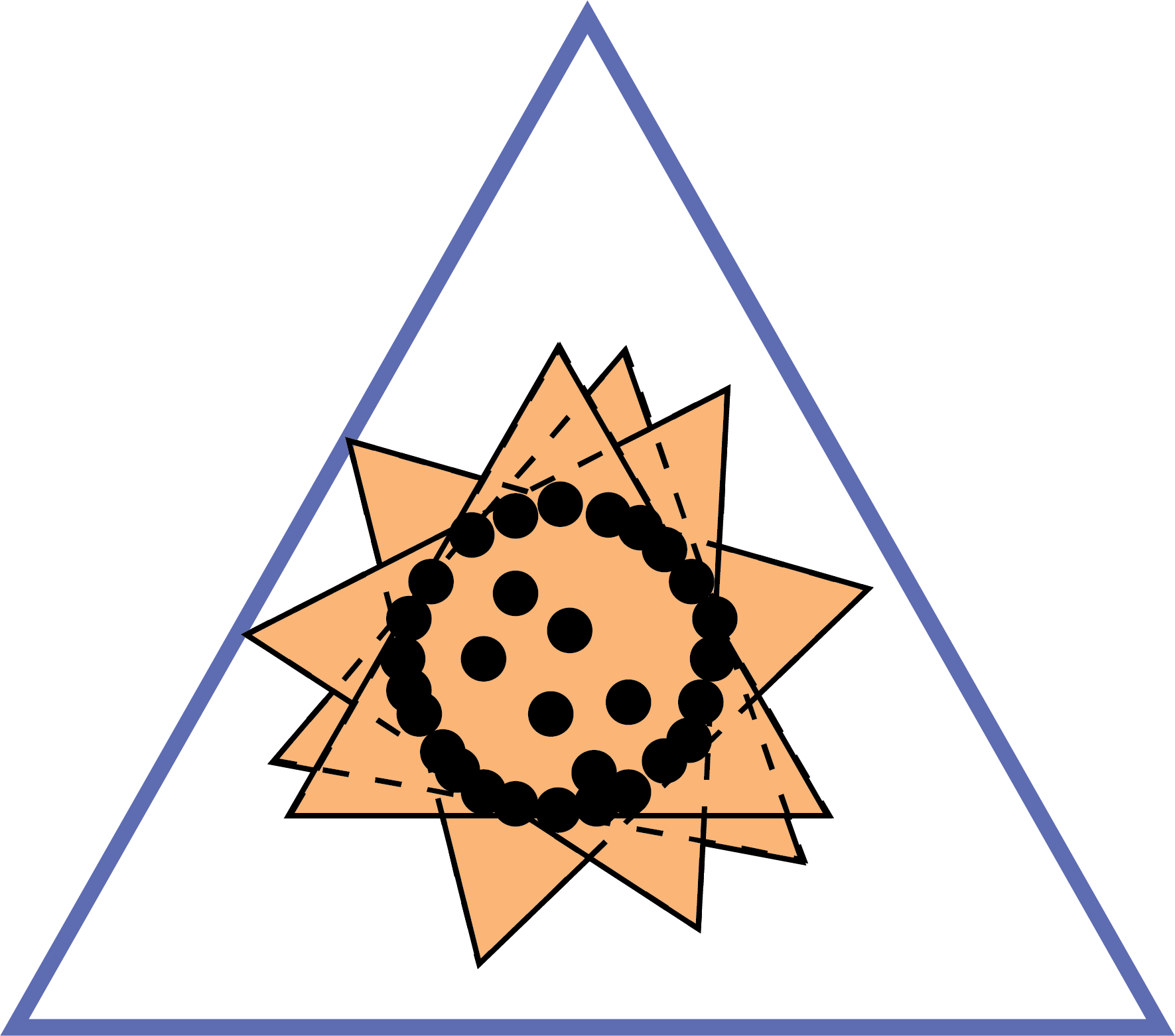}
        \subcaption{}
        \label{fig:sm_vol2} 
    \end{subfigure}
    \begin{subfigure}[t]{0.24\textwidth}
     \centering
        \includegraphics[width=0.75\textwidth]{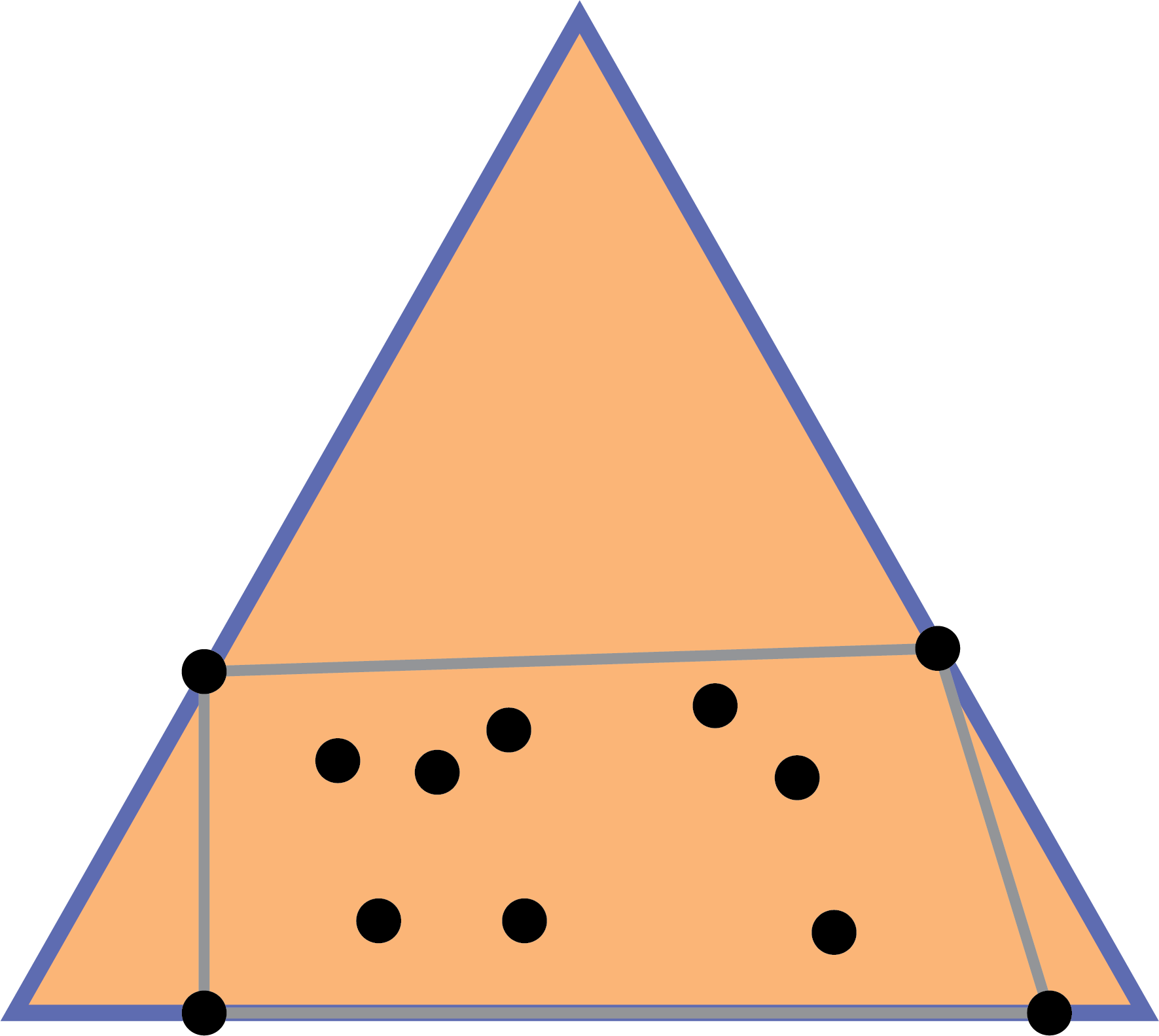}
        \subcaption{}
        \label{fig:sm_vol3} 
    \end{subfigure}
    \caption{Geometric view of the simplex $\simplex^{V-1} (k=V=3)$. Black dots are columns of $\mathbf U$. Black-lined triangles are $k$-vertex convex polygons; the shaded triangles are those with minimum volume.}
    \label{fig:vol}
\end{figure}

\subsection{Related Work}\label{sec:review}
Topic models have been studied under two settings: one in which the mixing weights, columns of $\bW$, are assumed to be stochastically generated from some distribution; the other in which they are assumed to be fixed but unknown. The Bayesian approach, for example, focuses on the former. 

\subsubsection{The Bayesian Approach}
In the Bayesian setting, the mixing weights are often assumed to be stochastically generated from a \emph{known} distribution with a full support on the simplex $\Delta^{k-1}$. Therefore, identifiability can be guaranteed under very mild conditions; for example, one such condition is just that $\bC$ be of full rank \citep{anandkumar2012spectral}. Under such Bayesian settings, \citet{nguyen2015posterior} and \citet{tang2014understanding} established posterior concentration rates; \citet{anandkumar2012spectral,anandkumar2014tensor} and \citet{Wang2019} established convergence rates for the maximum likelihood estimator (MLE). 

In this paper, we focus on a more general setting, in which the mixing weights may not be stochastically generated; if they are, moreover, we do not assume any knowledge of the corresponding distribution. Identifiability and estimation accuracy turn out to be much more challenging under this general setting. 

\subsubsection{The Separability Condition}

Several earlier investigations have addressed identifiability by imposing the so-called \emph{separability condition} or its generalization \citep{donoho2004does, arora2012learning,azar2001spectral,kleinberg2008using,kleinberg2003convergent,recht2012factoring,ge2015intersecting,ke2017new,papadimitriou2000latent,mcsherry2001spectral,anandkumar2012method}. The separability condition can be imposed either on rows of $\mathbf{C}$ or on columns of $\mathbf{W}$, due to the symmetry between these two matrices  in the factorization \eqref{eq:matrix-decomp}.

When imposed on the  topic matrix $\mathbf{C}_{V \times k}$, this condition assumes that, after the rows of $\mathbf{C}$ have been re-arranged, its top $k$ rows will form a diagonal matrix. Words associated with those rows are called \emph{anchor words}; anchor words can be used to identify topics since they appear only in one particular topic. 

When imposed on the mixing matrix $\mathbf{W}_{k \times d}$, this condition again assumes that, after the columns of $\mathbf{W}$ have been re-arranged, the first $k$ columns will form a diagonal matrix. We can further conclude that that diagonal matrix must be an identity matrix since $\mathbf{W}$ is column-stochastic; therefore, there are $k$ documents that belong to one and only one topic \citep{nascimento2005vertex,javadi2020nonnegative}. A geometric interpretation of this condition is that we can use the convex hull of $k$ columns of $\mathbf{U}$ to form the $k$-vertex polytope that contains all other columns of $\mathbf{U}$. In other words, the topic matrix $\mathbf{C}_{V \times k}$ can be recovered by identifying the corresponding subset of $k$ documents.

The separability condition can be easily violated, however, in real applications. In practice it is commonly the case that topics are correlated, tend to share keywords, and therefore are not separable. 

Nevertheless, several algorithms have been proposed to estimate $\mathbf{C}$ with a convergence rate of the order  $1/\sqrt{nd}$ \citep{arora2012learning, ke2017new}, but they assume separability. This rate of convergence would indicate that such algorithms can pool  information in the $d$ documents, each with $n$ words, to estimate $\mathbf{C}$; therefore they have an effective sample size of $nd$, instead of $n$ or $d$. However, as discussed in Section~\ref{section:theory_comparsion}, such a fast convergence rate is achievable only under the stringent separability assumption. This is because the strong separability condition greatly simplifies the statistical and computational hardness of the topic matrix estimation problem and turns it into a searching problem. As a consequence, such separability-condition-based methods circumvent the hidden non-regular statistical problem of boundary estimation (c.f.~Section~\ref{section:theory_comparsion}), which often leads to an extremely slow rate of convergence. See Section~\ref{subsec:comparison} for a review of separability-condition-based methods and how they relate to ours, from a two-stage estimation perspective.

\subsubsection{Beyond the Separability Condition}

To relax the separability assumption, the aforementioned connection between estimating a topic model and finding a $k$-vertex convex polytope that encloses all columns of $\bU$ has led researchers to start looking at geometric conditions.

When there are multiple $k$-vertex convex polytopes  enclosing columns of $\mathbf{U}$, it is natural to restrict our attention to the ones with minimum volume, that is, convex polytopes that circumscribe the data as compactly as possible. Many \emph{volume minimization} algorithms have been proposed \citep{craig1994minimum, nascimento2005vertex, miao2007endmember, fu2015blind} for nonnegative matrix factorization similar to~\eqref{eq:matrix-decomp}. However, most of these methods consider the noiseless setting. Blindly applying them to topic model estimation fails to respect the error structure in the counting data and may lead to a loss of statistical efficiency. Moreover, little theoretical work has been conducted on model identifiability and estimation accuracy beyond the limited context of topic modeling that assumes the separability condition. In particular, it is important to acknowledge that the minimum volume constraint alone does not guarantee uniqueness; see examples in Figure \ref{fig:sm_vol1}\subref{fig:sm_vol2}.

Recently, a set of geometric conditions known as the \emph{sufficiently scattered} (SS) condition, which is weaker than the separability condition, has been introduced to study identifiability of topic models \citep{huang2016anchor,jang2019minimum}.   \citet{huang2016anchor} ensure identifiability under the SS condition by adding the constraint that the determinant of $\mathbf{W} \mathbf{W}^T$ is minimized. \citet{jang2019minimum} have proved that the SS condition, along with volume minimization on the convex hull of $\mathbf{C}$, ensures identifiability when $V=k$ (vocabulary size is the same as topic size); their analysis is valid only for $V=k$ since it is built on the assumption that the volume of the convex hull of $\mathbf{C}$ is equal to the determinant of $\mathbf{C}$ (or to a monotonic function of the  determinant of $\mathbf{C}^T\mathbf{C}$) which holds true only when $V=k$. In addition, neither  \cite{huang2016anchor} nor \citet{jang2019minimum} provided a theoretical analysis of estimation errors for their proposed estimators, which are based on minimizing a squared loss based objective rather than on maximizing the multinomial likelihood associated with counting data.

\citet{javadi2020nonnegative} is the only study we are aware of that provides a theoretical analysis of estimation errors without assuming the separability condition. They proposed to estimate the $k$ columns of $\mathbf{C}$ by minimizing their distance to the convex hull of the data points, and established a convergence rate for their estimator. In their setting, model identifiability is equivalent to the uniqueness of the minimizer in the noiseless setting; that is, they assume that a unique set of $k$ columns (of $\mathbf{C}$) is closest to the convex hull formed by the columns of $\mathbf{U}$. They show that the minimizer is indeed unique when the separability condition is imposed on $\mathbf{W}$; other than that, they do not provide any checkable conditions for identifiability.

\subsection{Summary of Our Contribution}

First, we resolve the non-identifiability issue by focusing on convex hulls (of $\mathbf C$) of the smallest volume, and show that under volume minimization, the SS condition ensures identifiability regardless of the values of $V$ and $k$ (Section~\ref{sec:idfy}). 

Although volume minimization helps to ensure model identifiability, since the volume of a low-dimensional simplex in a high-dimensional space does not take a simple form \citep{miao2007endmember}, it is difficult to  incorporate volume minimization into an estimation procedure. This difficulty explains why many prior investigations have either assumed $V=k$ or used an approximation formula.

Our second contribution is to establish
the connection between volume minimization and maximization of a particular integrated likelihood (Section~\ref{sec:MLE_VM}). Specifically, we propose an estimator as the MLE of the topic matrix $\mathbf C$, based on an integrated likelihood, in which the mixing weights (i.e., columns of $\mathbf W$) are profiled out by integrating with respect to a uniform distribution over $(k-1)$-simplex. 
A geometric consequence of the use of uniform distribution is that, while maximizing the integrated likelihood, we implicitly minimize the volume of the convex hull of $\mathbf{C}$ without explicitly evaluating its volume. Here we emphasize that the uniform distribution is  used only to integrate over nuisance parameters (i.e., the mixing weights), and that our theoretical analysis does not require the mixing weights to be  generated stochastically from a uniform distribution.

Our third contribution is to  establish a finite-sample error bound of the proposed estimator of $\mathbf{C}$, of the order $\sqrt{\log (n\vee d)/n}$ under the fixed design setting where the mixing weights $\mathbf W$ can be arbitrarily allocated---as long as the SS condition pertains (Section~\ref{subsec:fixed}). As a consequence, our result implies asymptotic consistency as the number of documents $d$ and/or the number of words $n$ (in each document) increases to infinity. 
In the stochastic setting, where the mixing weights $\mathbf W$ are independently generated according to some unknown underlying distribution over the simplex, we show that, for sufficiently large $d$, $\mathbf W$ still satisfies a perturbed version of the SS condition with high probability---as long as the support of the weight generating distribution satisfies the SS condition. Based on this observation, we also provide a finite-sample error bound in the stochastic (or random design) setting (Section~\ref{Sec:random_design_error} in the supplementary material). Furthermore, by drawing a connection between our estimating approach and some representative existing methods, through a two-stage perspective (Section~\ref{subsec:comparison}), we illustrate that the separability condition greatly simplifies the topic matrix estimation problem by circumventing the highly nontrivial and non-regular statistical problem of boundary estimation (Section~\ref{section:theory_comparsion}). This explains why our finite-sample error bound is similar to that of \cite{javadi2020nonnegative} which is based on an archetypal analysis that, like ours, does not assume the separability condition; however, our error bound is (not surprisingly) worse (in terms of the dependence on $d$) than those \citep{ke2017new,arora2012learning} arrived at under the separability condition.

As a byproduct, our work provides a theoretical justification for the empirical success of Latent Dirichlet Allocation (LDA) \citep{blei2003latent} models, since the proposed estimator is essentially the maximum likelihood estimator of $\mathbf{C}$ from the LDA model, with a particular choice of prior on $\mathbf W$. More generally, the LDA model with other prior choices on $\mathbf W$ can be interpreted as maximizing the data likelihood while minimizing a weighted volume in which a non-uniform volume element is integrated over the convex hull of $\mathbf C$ when defining the volume (see Section~\ref{subsub:mis_prior} for some numerical comparisons).

Although presented in the context of topic modeling, our results can be adapted to many other applications by using the data-specific likelihood. For example, the decomposition $\bU = \bC  \bW$ plays an important role in \emph{hyperspectral imaging analysis}, in which each column of $\bU$ represents the  intensity levels over $V$ channels at a pixel. Due to the low spatial resolution of hyperspectral images, pixel spectra are usually mixtures of spectra from several pure materials, known as endmembers. So a key step in hyperspectral imaging analysis is to separate (or unmix) the pixel spectra into  convex combinations of endmember spectra; endmember spectra are essentially columns of $\mathbf{C}$ \citep{winter1999n}. Similar models also arise in \emph{reinforcement learning} \citep{singh1995reinforcement,duan2019state} as a way to compress the transition matrix of an underlying Markov decision process; a detailed discussion is given in Section \ref{subsubsec:real_taxi}. 

\subsection{Notation and Organization}
Let $\mathbf 1_k$ denote the all-ones vector of length $k$, and $\mathbf e_f$ the $f$-th column of the $k\times k$ identity matrix $\mathbf I_k$. 
Let $\simplex^{k-1} = \{\mathbf x\in \mathds{R}^{k}: 0\leq x_i \leq 1, \sum_{i=1}^k x_i = 1\}$ denote the $(k-1)$-dimensional probability simplex. 
For a matrix $\mathbf{A}_{p \times q} = (\mathbf A_1, \cdots, \mathbf A_q)$, 
let
\begin{eqnarray*}
\conv(\mathbf A) &=& \{\mathbf x\in \mathds{R}^{p}:  \mathbf{x} =  \mathbf A \bm\lambda, \bm\lambda  \in \simplex^{q-1} \}, \\
cone(\mathbf A)  &=& \{\mathbf x\in \mathds{R}^{p}: \mathbf x = \mathbf A \bm \lambda, \bm \lambda \geq 0 \},\\
\mbox{and}\quad \text{aff}(\mathbf A) &= & \{\mathbf x\in \mathds{R}^{p}:  \mathbf{x} =  \mathbf A \bm\lambda, \bm\lambda^T \mathbf 1_q = 1, \bm\lambda\in \mathds{R}^q\},
\end{eqnarray*}
denote the \emph{convex polytope},  \emph{simplicial cone} and  \emph{affine space} generated by (the $q$ columns of) $\mathbf A$, respectively. 
{\colorblue For $\mathbf A\in \mathds{R}^{p\times q} (p\geq q)$, we define $|\conv(\mathbf A)|$ as the $(q-1)$-dimensional volume of $\conv(\mathbf A)$ on $\text{aff}(\mathbf A)$, which can be computed by the Cayley–Menger determinant or Lemma~\ref{lem:vol_det} in Appendix \ref{app:pf_thm_1}.} 
For any vector $\mathbf x$, $\mathbf x \geq a$ means $\mathbf x$ is element-wisely greater than or equal to $a$. 
Denote $a\vee b$ and $a\wedge b$ as the larger and smaller number between $a$ and $b$, respectively.
For any cone $\mathcal C$, let $\mathcal C^\ast = \{\mathbf x: \mathbf x^T\mathbf y \geq 0, \forall \mathbf y \in \mathcal C\}$ denote its \emph{dual cone}. Recall some useful facts of dual cones \citep{donoho2004does}: (i) $cone (\mathbf A)^\ast = \{\mathbf x \in \mathds{R}^p: \mathbf x^T \mathbf A \geq 0\}$; (ii) if $\mathcal A$ and $\mathcal {\bar A}$ are convex cones, and $\mathcal A \subseteq \mathcal {\bar A}$, then $\mathcal {\bar A}^\ast \subseteq \mathcal A^\ast$. 
Unless stated otherwise, all the constants in the paper are independent of number of words per document $n$ and number of documents $d$.

 The rest of the paper is organized as follows. 
 In Section \ref{sec:idfy}, we discuss identifiability under volume minimization as well as a set of sufficient conditions. In Section \ref{sec:MLE}, we propose the MLE based on an integrated likelihood, establish its connection with volume minimization, and describe its computation. Theoretical analysis of the proposed estimator is presented in Section \ref{sec:est}.  Finally, empirical evidence is reported in Section \ref{sec:emp}. Proofs and technical results are included in the supplementary material.

\section{Identifiability of Topic Models}\label{sec:idfy}
In this section we start with a formal definition of topic model identifiability under the minimum volume constraint. After that, we  describe two sufficient conditions that lead to the identifiability, namely the separability condition and the sufficiently scattered condition. Finally, for the latter condition, which is weaker and less stringent than conventional separability, we provide a geometric interpretation.

\subsection{Identifiability under Volume Minimization}
We have observed (see Figure \ref{fig:vol1}) that without any constraint, a topic model is almost always non-identifiable. We thus focus on identifiability under the minimum volume volume minimization constraint, due to its natural interpretation as finding the most parsimonious topic model that explains the documents in the corpus data, or equivalently, the most compact $k$-vertex convex polytope in which the documents reside. 

We begin by defining the following distance metric between two topic matrices $\mathbf C$ and $\mathbf {\bar C}$:
\begin{equation} \label{eq:distance_W}
\dis(\mathbf C , \mathbf {\bar C}) = \min_{\mathbf \Pi}\|\mathbf {\bar C} -\mathbf C \mathbf \Pi\|_2,
\end{equation}
where $\|\cdot \|_2$ denotes the spectral norm and $\mathbf \Pi$ is a permutation matrix. Note that $\dis(\mathbf C, \mathbf {\bar C}) = 0$ if and only if $\mathbf {\bar C} = \mathbf C\mathbf \Pi$, that is, $\mathbf C$ and $\mathbf {\bar C}$ are identical up to a permutation of columns. Since $k$ and $V$ are fixed, the spectral norm in (\ref{eq:distance_W}) is not important because all matrix norms are equivalent. In particular, if the Frobenius norm is employed instead of the spectral norm, then the distance metric $\dis$ coincides with the $2$-Wasserstein distance between column vectors of $\mathbf{C}$ and $\mathbf{\bar C}$.

Next, we state the definition of identifiability under the minimum volume constraint:

\begin{definition}[Identifiability] \label{def:identify}
A topic model associated with parameters $(\mathbf C, \mathbf W)$ is identifiable, if for any other set of parameters $(\mathbf {\bar C}, \mathbf {\bar W})$, the following conditions hold,
\begin{align} \label{eq:identify}
    \mathbf C \mathbf W = \mathbf {\bar C} \mathbf {\bar W} \,  \text{ and } \, |\conv(\mathbf {\bar C})| \leq |\conv(\mathbf C)|,
\end{align}
 if and only if $\dis (\mathbf C, \mathbf {\bar C}) = 0$. 
\end{definition}

It is easy to check that model identifiability is achieved under the separability condition on columns of $\mathbf{W}$, as it  implies that $\mathbf{W}$ contains a $k\times k$ identity matrix after a proper column permutation; that is, there exist $k$ columns in $\mathbf{U}$ that are the $k$ corners of $\conv(\mathbf C)$. Therefore, no other $k$-vertex convex polytope of smaller or equal volume can still enclose all columns in $\mathbf{U}$.

\begin{proposition}
If the separability condition is satisfied on $\mathbf{W}$, then $(\mathbf{C}, \mathbf{W})$ is identifiable. 
\end{proposition}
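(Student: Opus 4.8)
The statement is an ``if and only if'' between the two displayed conditions in Definition~\ref{def:identify} and $\dis(\mathbf C,\mathbf{\bar C})=0$; the reverse implication is immediate, so the plan is to concentrate on the forward one. For the reverse direction, if $\dis(\mathbf C,\mathbf{\bar C})=0$ then $\mathbf{\bar C}=\mathbf C\mathbf\Pi$ for some permutation matrix $\mathbf\Pi$, whence $\conv(\mathbf{\bar C})=\conv(\mathbf C)$ (so the volumes are equal) and setting $\mathbf{\bar W}=\mathbf\Pi^{T}\mathbf W$ produces a column-stochastic matrix with $\mathbf{\bar C}\mathbf{\bar W}=\mathbf C\mathbf W$. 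The whole content therefore lies in showing: whenever $\mathbf C\mathbf W=\mathbf{\bar C}\mathbf{\bar W}$ and $|\conv(\mathbf{\bar C})|\le|\conv(\mathbf C)|$, the columns of $\mathbf{\bar C}$ must be a permutation of those of $\mathbf C$.

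First I would translate separability into a statement about $\mathbf U=\mathbf C\mathbf W$. Separability on $\mathbf W$ means that, after reordering, the first $k$ columns of $\mathbf W$ form the identity, i.e.\ there are indices $j_1,\dots,j_k$ with the $j_\ell$-th column of $\mathbf W$ equal to $\mathbf e_{\pi(\ell)}$ for a permutation $\pi$. Then the corresponding columns of $\mathbf U$ satisfy $\mathbf U_{j_\ell}=\mathbf C\mathbf e_{\pi(\ell)}$, so every column of $\mathbf C$ appears among the columns of $\mathbf U$. Since $\mathbf{\bar W}$ is column-stochastic, every column of $\mathbf U=\mathbf{\bar C}\mathbf{\bar W}$ is a convex combination of columns of $\mathbf{\bar C}$, hence lies in $\conv(\mathbf{\bar C})$. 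Thus all $k$ columns of $\mathbf C$ lie in $\conv(\mathbf{\bar C})$, and taking convex hulls gives the inclusion $\conv(\mathbf C)\subseteq\conv(\mathbf{\bar C})$. This inclusion also forces $\text{aff}(\mathbf C)\subseteq\text{aff}(\mathbf{\bar C})$, and under the standing nondegeneracy assumption $|\conv(\mathbf C)|>0$ (columns of $\mathbf C$ affinely independent) both affine hulls have dimension $k-1$ and therefore coincide, so the two $(k-1)$-volumes are compared within a common affine space.

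It then remains to upgrade the set inclusion to set equality and read off the permutation. By monotonicity of the $(k-1)$-dimensional volume under inclusion within a shared $(k-1)$-dimensional affine space, $\conv(\mathbf C)\subseteq\conv(\mathbf{\bar C})$ yields $|\conv(\mathbf C)|\le|\conv(\mathbf{\bar C})|$, which together with the hypothesis $|\conv(\mathbf{\bar C})|\le|\conv(\mathbf C)|$ gives equality of volumes. For two compact convex bodies, one contained in the other and of equal positive full-dimensional volume, the inclusion cannot be strict, so $\conv(\mathbf C)=\conv(\mathbf{\bar C})$ as sets. Equal polytopes have the same extreme points; since $\conv(\mathbf C)$ is a nondegenerate simplex its extreme points are exactly the $k$ columns of $\mathbf C$, and likewise for $\mathbf{\bar C}$, so the $k$ columns of $\mathbf{\bar C}$ are precisely a reordering of those of $\mathbf C$, i.e.\ $\dis(\mathbf C,\mathbf{\bar C})=0$. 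I expect the main obstacle to be this last geometric passage---carefully justifying that equal volume plus inclusion forces coincidence of the polytopes, and that the generating columns are exactly the vertices---together with making explicit the nondegeneracy hypothesis $|\conv(\mathbf C)|>0$, which prevents degenerate configurations (e.g.\ a column of $\mathbf{\bar C}$ lying in the convex hull of the others) from undermining the vertex-identification step.
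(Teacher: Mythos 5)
Your proof is correct and follows essentially the same route as the paper, which only sketches the argument in the text immediately preceding the proposition: separability on $\mathbf W$ places every column of $\mathbf C$ among the columns of $\mathbf U$, so any competing polytope $\conv(\mathbf {\bar C})$ enclosing the columns of $\mathbf U$ must contain $\conv(\mathbf C)$, and the volume constraint then forces the two polytopes---and hence their vertex sets---to coincide up to permutation. Your elaboration (coincidence of affine hulls, inclusion plus equal positive volume implies set equality, vertices of a nondegenerate simplex are exactly its generating columns) and your explicit flagging of the implicit nondegeneracy assumption on $\mathbf C$ rigorously fill in what the paper dismisses as ``easy to check.''
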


Since the separability condition can be overly stringent in practice, 
we next show that a condition weaker than the separability condition can also achieve model identifiability. Our analysis is related to the following geometric condition, known as  \emph{sufficiently scattered} (SS). Its definition relies on the second order cone $\mathcal{K}$,  its boundary $bd\mathcal{K}$, and its dual cone $\mathcal{K}^\ast$, which are defined below:
\begin{eqnarray*}
\mathcal{K} &=&  \{\mathbf x \in \mathds{R}^k: \|\mathbf x\|_2 \leq \mathbf x^T \mathbf 1_k\}, \\
bd\mathcal{K} &=& \{\mathbf x \in \mathds{R}^k: \|\mathbf x\|_2 = \mathbf x^T \mathbf 1_k\}, \\
\mbox{and}\quad \mathcal{K}^\ast  &=& \{\mathbf x \in \mathds{R}^k: \mathbf x^T \mathbf 1_k \geq \sqrt{k-1}\|\mathbf x\|_2\}.
\end{eqnarray*}


\begin{definition}[SS Condition] \label{def:W_suff_scat}
A matrix $\mathbf W$ is {\bf sufficiently scattered}, if it satisfies:
  \begin{enumerate}
      \item[(S1).]\label{con:contain_circle} $cone(\mathbf W)^\ast \subseteq \mathcal{K}$, or equivalently, $cone(\mathbf W) \supseteq \mathcal{K}^\ast$;
      \item[(S2).]  $cone(\mathbf W)^{\ast} \bigcap bd \mathcal{K} \subseteq \{\lambda \mathbf e_f,  f=1,\cdots, k, \lambda \geq 0\}$. 
  \end{enumerate} 
  \end{definition}

It is easy to verify that the separability condition on $\mathbf W$ implies $\mathbf W$ to be sufficiently scattered. In fact, the separability condition on $\mathbf W$ means that $\conv(\mathbf{W})= \simplex^{k-1}$ fills up the entire simplex, and that $cone(\mathbf W)^\ast = cone(\simplex^{k-1})$ is the most extreme cone (smallest possible cone, corresponding to the solid triangle in Figure \ref{fig:two_suff_scatt}; see the following section for details) that satisfies (S1) - (S2) in the SS condition.  

\begin{theorem}\label{thm:idf}
 If $\mathbf W$ is sufficiently scattered and $\mathbf C$ is of rank $k$ (full column rank), then $(\mathbf C, \mathbf W)$ is identifiable.
\end{theorem}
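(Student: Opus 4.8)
The ``if'' direction of Definition~\ref{def:identify} is immediate: if $\bar{\mathbf C} = \mathbf C\mathbf\Pi$ for a permutation $\mathbf\Pi$, then taking $\bar{\mathbf W} = \mathbf\Pi^T\mathbf W$ gives $\bar{\mathbf C}\bar{\mathbf W} = \mathbf C\mathbf W$ and $|\conv(\bar{\mathbf C})| = |\conv(\mathbf C)|$, since permuting columns leaves the convex hull unchanged. The substance is the ``only if'' direction, and the plan is to reduce it to a purely linear-algebraic statement about a single $k\times k$ change-of-coordinates matrix, then close the argument by a determinant-pinching inequality that activates both parts of the SS condition.

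First I would argue that the SS condition forces $\mathbf W$ to have full row rank $k$: since $cone(\mathbf W)\supseteq\mathcal K^\ast$ by (S1) and $\mathcal K^\ast$ is full-dimensional (it contains $\mathbf 1_k$ in its interior), $cone(\mathbf W)$ is full-dimensional, so the columns of $\mathbf W$ span $\mathds R^k$. Together with $\operatorname{rank}(\mathbf C)=k$ this makes $\operatorname{rank}(\mathbf C\mathbf W)=k$, whence any competitor $\bar{\mathbf C}$ with $\bar{\mathbf C}\bar{\mathbf W}=\mathbf C\mathbf W$ must itself have rank $k$ with $\operatorname{col}(\bar{\mathbf C})=\operatorname{col}(\mathbf C)$. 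Consequently there is a unique invertible $\mathbf A\in\mathds R^{k\times k}$ with $\bar{\mathbf C}=\mathbf C\mathbf A$, and left-cancelling $\mathbf C$ in $\mathbf C\mathbf W=\mathbf C\mathbf A\bar{\mathbf W}$ gives $\bar{\mathbf W}=\mathbf B\mathbf W$ with $\mathbf B:=\mathbf A^{-1}$. Column-stochasticity of $\mathbf C,\bar{\mathbf C}$ forces $\mathbf 1_k^T\mathbf A=\mathbf 1_k^T$, and hence $\mathbf 1_k^T\mathbf B=\mathbf 1_k^T$ as well, so both $\mathbf A$ and $\mathbf B$ have columns summing to one. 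The key use of non-negativity is on $\bar{\mathbf W}=\mathbf B\mathbf W\ge 0$: reading this row-by-row shows every row $\mathbf b_i$ of $\mathbf B$ satisfies $\mathbf b_i^T\mathbf W\ge 0$, i.e.\ $\mathbf b_i\in cone(\mathbf W)^\ast\subseteq\mathcal K$ by (S1), so $\|\mathbf b_i\|_2\le \mathbf b_i^T\mathbf 1_k=:r_i$.

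The next and most delicate step is to convert the volume inequality into a determinant inequality. Although the absolute $(k-1)$-volume of $\conv(\mathbf C)$ sitting inside $\mathds R^V$ has no simple closed form (exactly what forced earlier work to assume $V=k$), both $\conv(\mathbf C)$ and $\conv(\bar{\mathbf C})=\conv(\mathbf C\mathbf A)$ live in the same $(k-1)$-dimensional affine hull $\text{aff}(\mathbf C)$, on which $\boldsymbol\lambda\mapsto\mathbf C\boldsymbol\lambda$ is an affine isomorphism from $\{\mathbf 1_k^T\boldsymbol\lambda=1\}$. Passing to these barycentric coordinates, $\conv(\bar{\mathbf C})$ is the image of $\conv(\mathbf C)$ under the map induced by $\mathbf A$, which preserves the hyperplane $\mathbf 1_k^T\boldsymbol\lambda=1$ (because $\mathbf 1_k^T\mathbf A=\mathbf 1_k^T$) and therefore rescales $(k-1)$-volume by exactly $|\det\mathbf A|$. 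Formalizing this with Lemma~\ref{lem:vol_det} yields the clean identity $|\conv(\bar{\mathbf C})|=|\det\mathbf A|\,|\conv(\mathbf C)|$, valid for all $V\ge k$, so the constraint $|\conv(\bar{\mathbf C})|\le|\conv(\mathbf C)|$ becomes $|\det\mathbf A|\le 1$, equivalently $|\det\mathbf B|\ge 1$. I expect this volume-ratio-equals-determinant identity to be the main obstacle, since it is precisely what removes the $V=k$ restriction of prior analyses.

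Finally I would close the loop with a pinching inequality. Using $r_i\ge\|\mathbf b_i\|_2\ge 0$ and $\sum_i r_i=\mathbf 1_k^T\mathbf B\mathbf 1_k=k$ (from the column sums of $\mathbf B$), Hadamard's inequality and AM--GM give the chain $1\le|\det\mathbf B|\le\prod_i\|\mathbf b_i\|_2\le\prod_i r_i\le(\tfrac1k\sum_i r_i)^k=1$. Every inequality is thus an equality; in particular $r_i=1$ and $\|\mathbf b_i\|_2=r_i$ for each $i$, so each row lies on the boundary, $\mathbf b_i\in cone(\mathbf W)^\ast\cap bd\mathcal K$. Invoking (S2) then forces $\mathbf b_i=\lambda_i\mathbf e_{f_i}$ for some index $f_i$ and $\lambda_i\ge 0$, and $\|\mathbf b_i\|_2=1$ gives $\lambda_i=1$, so every row of $\mathbf B$ is a standard basis vector. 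Invertibility forces the indices $f_i$ to be distinct, so $\mathbf B$---and hence $\mathbf A=\mathbf B^{-1}$---is a permutation matrix, giving $\bar{\mathbf C}=\mathbf C\mathbf\Pi$ and $\dis(\mathbf C,\bar{\mathbf C})=0$, as required.
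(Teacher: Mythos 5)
Your proposal is correct and follows essentially the same route as the paper's proof: reduce to a single $k\times k$ matrix $\mathbf B$ with $\mathbf C = \bar{\mathbf C}\mathbf B$ and $\bar{\mathbf W} = \mathbf B\mathbf W$, place its rows in $cone(\mathbf W)^{\ast}\subseteq\mathcal K$ via non-negativity of $\bar{\mathbf W}$ and (S1), convert the volume constraint into $|\det\mathbf B|\ge 1$ through the volume--determinant identity of Lemma~\ref{lem:vol_det}, and then pinch with Hadamard and AM--GM so that every inequality is an equality and (S2) forces each row of $\mathbf B$ to be a standard basis vector, hence $\mathbf B$ a permutation. The only differences are cosmetic: you construct $\mathbf B$ from the shared column space and obtain its unit column sums from column-stochasticity of $\mathbf C,\bar{\mathbf C}$, whereas the paper defines $\mathbf B = \bar{\mathbf W}\mathbf W^{T}(\mathbf W\mathbf W^{T})^{-1}$ and derives the column sums by a least-squares argument using column-stochasticity of $\mathbf W,\bar{\mathbf W}$.
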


Proof of Theorem \ref{thm:idf} is given in the supplementary material (Section~\ref{supp:proof_thm_idf}). Here we give a sketch of the proof. Suppose $\mathbf C \mathbf W = \mathbf {\bar C} \mathbf {\bar W}$. We have $\mathbf C  = \mathbf {\bar C} \mathbf B$, where ${\mathbf B} = \mathbf {\bar W}\mathbf W^{T}(\mathbf W \mathbf W^{ T})^{-1}$. It suffices to show $\mathbf B $ is a permutation matrix, which we prove by verifying that any row of $\mathbf B$ is in $cone(\mathbf W)^\ast\bigcap bd\mathcal{K}  = \{\lambda \mathbf e_f, \lambda \geq 0\} $ and is also of unit length.

{\colorblue
\begin{remark}[Comparison with definition in \citet{javadi2020nonnegative}] 
\label{rmk:compare_javadi}
\normalfont
The model identifiability defined in \citet{javadi2020nonnegative} is different from ours. 
They define a model to be identifiable if there is a unique convex polytope that minimizes the sum of distances from vertices of $\conv(\mathbf C)$ (i.e., columns of $\mathbf C$) to the convex hull of $\bU$. Their notion of identifiability is easier than ours to be formulated into a statistical estimator that minimizes an empirical evaluation of the distance sum from data. In our approach, the volume of our low-dimensional polytope does not take a simple form, which greatly complicates the estimator construction. Fortunately, we find that maximizing a particular integrated likelihood leads to an estimator that implicitly minimizes the volume. (See Appendix \ref{Sec:example_idf} for further discussion of this topic.)
\end{remark}
}

{\colorblue
\begin{remark}[SS condition is not a necessary condition]
\normalfont
The SS condition is not necessary for identifiability --- one reason is that it does not take into account additional parameter constraints  (e.g., in the topic model, each column of topic matrix $\mathbf C$ should be a probability weight vector belonging to the simplex). See Figure \ref{fig:sm_vol3} for an example ($V=k=3$ and $\mathbf{C} = \mathbf{I}_3$) where the SS condition does not hold but the model is identifiable. Since any alternative topic matrix $\mathbf C$ as a convex polytope with three vertices must be inside $\simplex^{2},$ due to the parameter constraint, $\mathbf{I}_3$ is the only topic matrix enclosing all columns of $\bU$ and is within simplex $\simplex^{2}$.
However, the SS condition does not hold since, apparently, $cone(\mathbf W) \supseteq \mathcal{K}^\ast$ is not true.
\end{remark}
}

\subsection{Geometrical Interpretation of Sufficiently Scattered Condition}

We provide a geometric interpretation of the SS condition in Figure \ref{fig:two_suff_scatt} with $k=3$. Since the mixing weights are all on $\Delta^{2}$, 
what is shown in Figure \ref{fig:two_suff_scatt} is the intersection of the cones with the hyperplane $\mathbf x^T \mathbf 1_{3} = 1$. 
The mixing weights, $\mathbf{w}_1, \dots, \mathbf{w}_d$, are represented as blue dots. Other items related to Definition \ref{def:W_suff_scat} are:
$bd \mathcal{K}$ is the red circle, $\mathcal{K}^*$ is the dark brown ball inscribed in the triangle, and $cone(\mathbf W)^{\ast}$ is the yellow convex region with dashed boundary. 

We illustrate three different scenarios: ``SS'' means that the SS condition is satisfied, ``not SS'' means that the SS condition is violated, and ``sub-SS'' means that (S1) is satisfied but  (S2) is not.

An equivalent form of Condition (S1) is $cone(\mathbf W) \supseteq \mathcal{K}^\ast$. So (S1) has a simple and intuitive interpretation: the mixing weights (blue dots) should form a convex polytope that contains the dual cone $\mathcal{K}^\ast$, the inner ball inscribed in the triangle. See Figure \ref{fig:suff_scatter_non1} for a violation of (S1). In particular, the separability condition on $\mathbf W$ implies that the three vertices (blue circles) of the triangle are included in $\mathbf W$. As a consequence, $cone(\mathbf W)^\ast=cone(\mathbf W)$ is the entire triangle, which is the most extreme/superfluous instance that satisfies the SS condition.

Condition (S1) ensures that $\conv(\mathbf C)$ has the smallest possible volume, but such minimum volume convex polytopes may not be unique. The purpose of condition (S2) is to determine the ``orientation'' of the convex polytope and consequently to ensure that it is unique. 
When (S2) is violated, it is possible to rotate the convex polytope to produce different feasible convex polytopes of the same volume; see Figure \ref{fig:suff_scatter_non2}\subref{fig:suff_scatter_non3}. 

  \begin{figure}
    \begin{subfigure}[t]{0.3\textwidth}
        \includegraphics[width=0.95\textwidth]{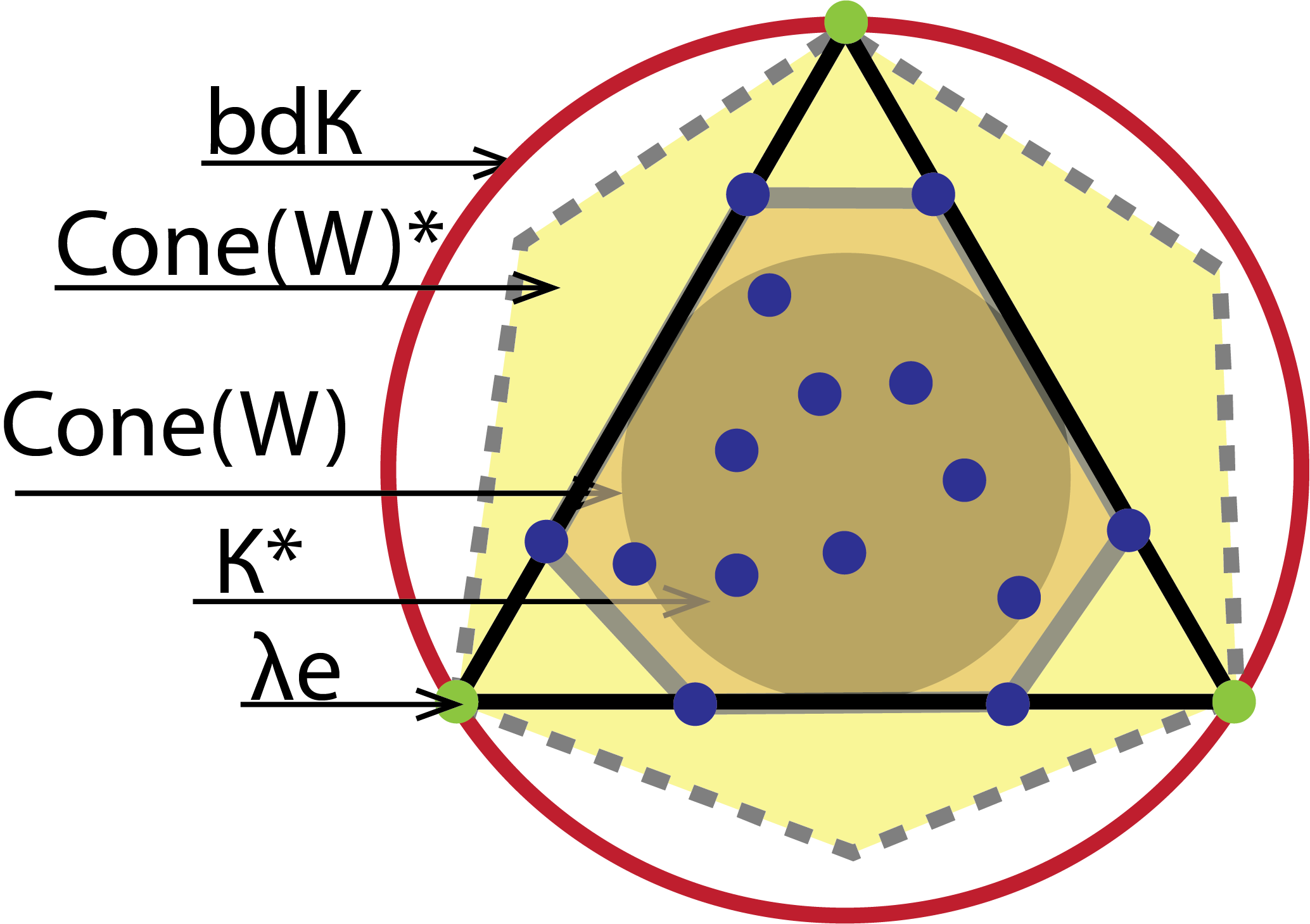}%
        \subcaption{SS}%
           \label{fig:suff_scatt} %
    \end{subfigure}%
   \begin{subfigure}[t]{0.25\textwidth}
      \centering
      \includegraphics[width=0.9\textwidth]{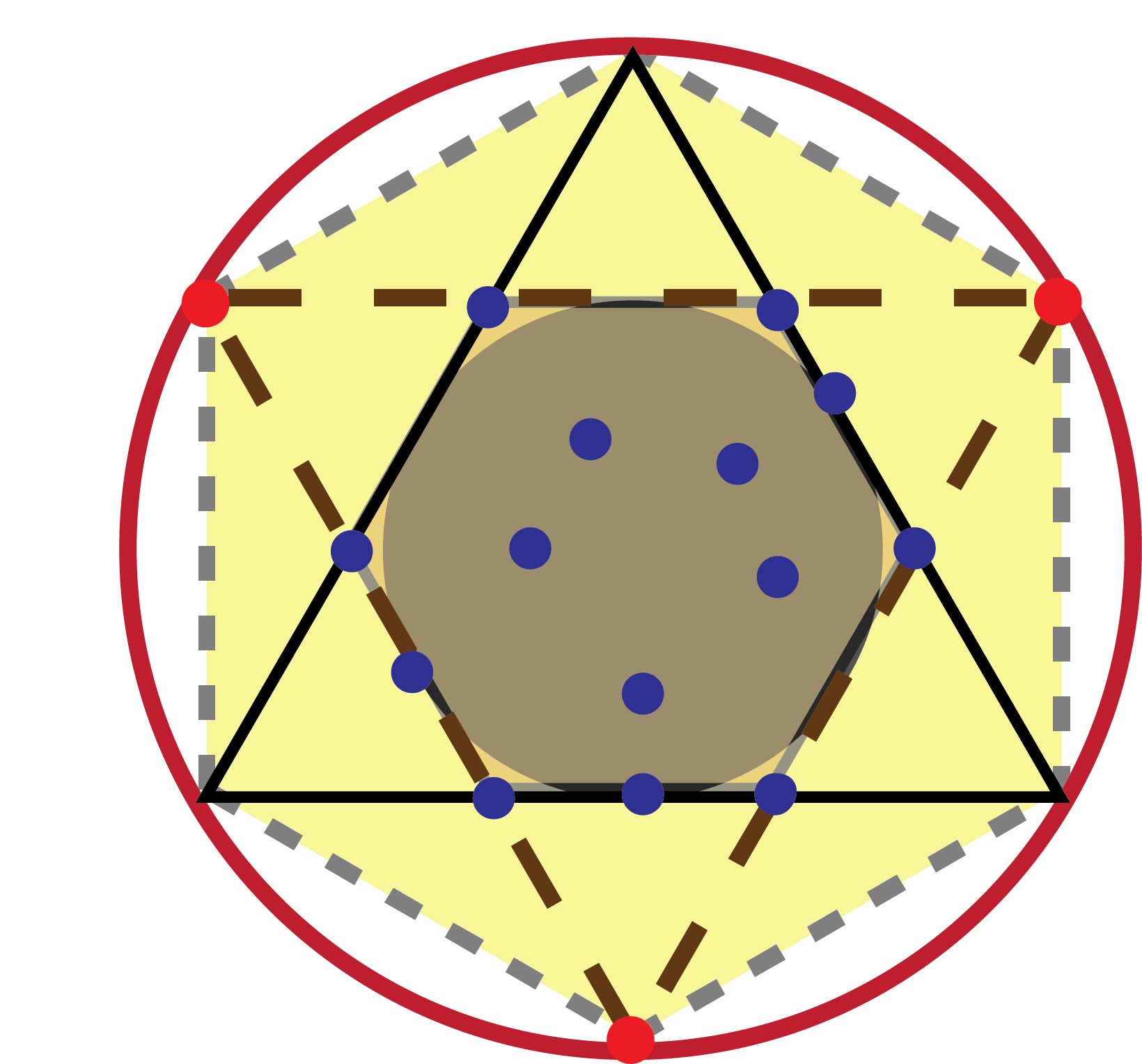}%
         \subcaption{sub-SS}%
      \label{fig:suff_scatter_non2}%
   \end{subfigure}
   \begin{subfigure}[t]{0.25\textwidth}
         \centering
         \includegraphics[width=0.9\textwidth]{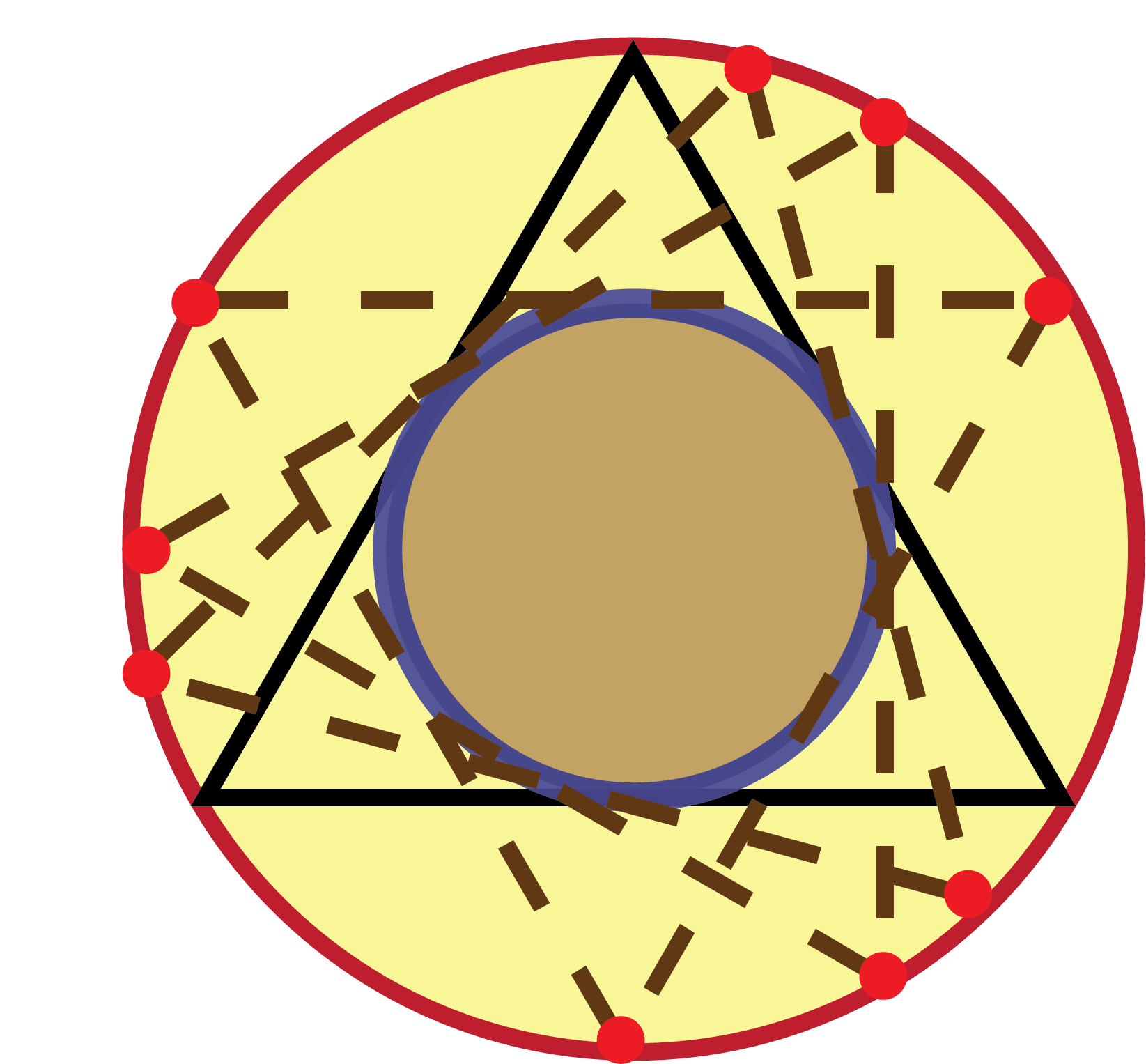}
         \subcaption{\centering sub-SS
}%
\label{fig:suff_scatter_non3}  %
   \end{subfigure}%
     \begin{subfigure}[t]{0.23\textwidth}
      \includegraphics[width=0.85\textwidth]{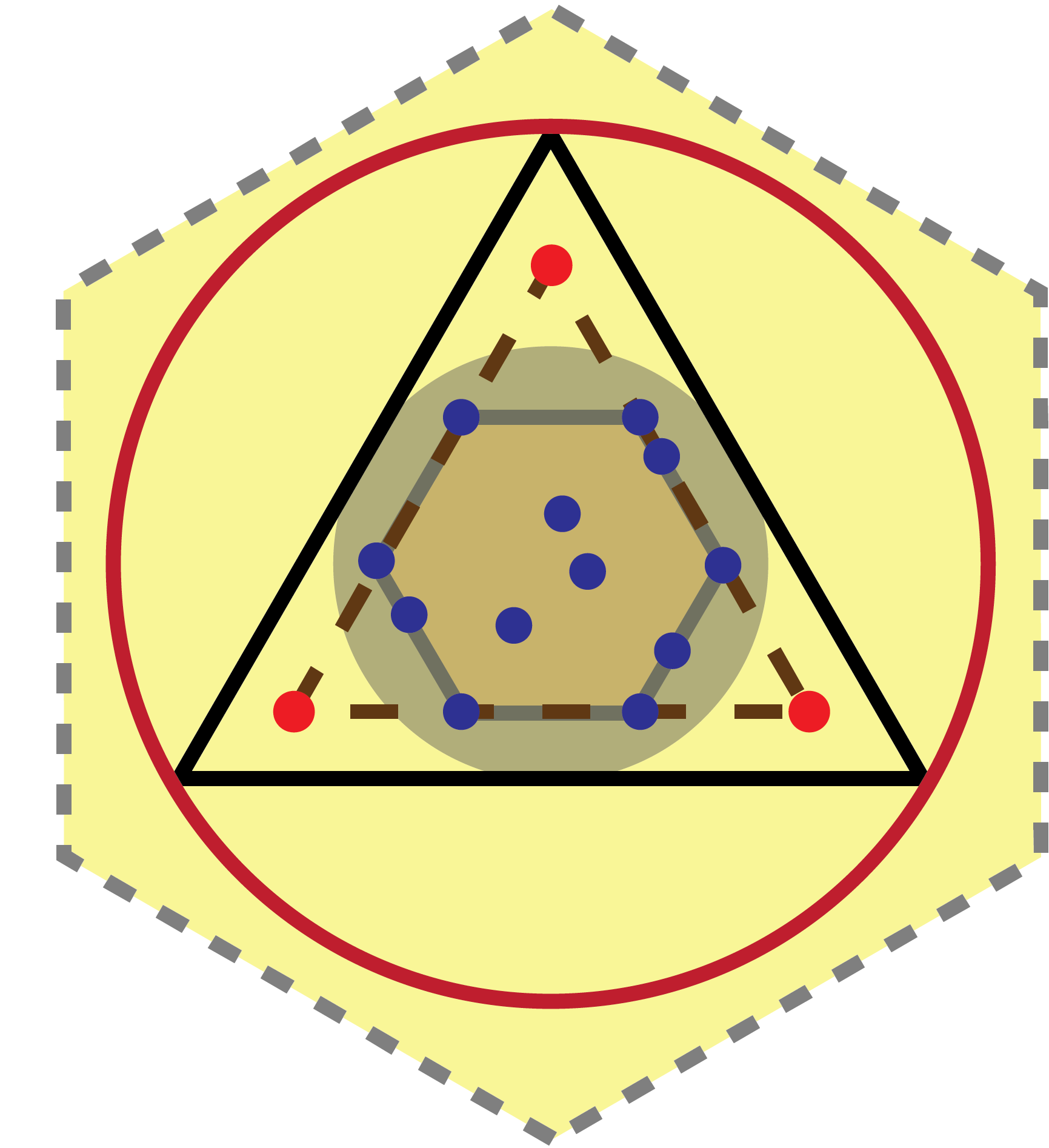}
      \subcaption{not SS}%
      \label{fig:suff_scatter_non1}%
   \end{subfigure}%
    \caption{Geometric views of the SS condition shown on the hyperplane $\mathbf x^T \mathbf 1_k = 1$ ($k=3$). Mixing weights $\mathbf w$ are represented as blue dots; blue dots in (c) are all on the boundary of the inner circle. Any dashed triangle in (b)(c)(d) is an alternative $3$-vertex convex polytope that contains all $\mathbf w$'s and is of a volume no larger than $\simplex^{k-1}$.}
    \label{fig:two_suff_scatt}%
\end{figure}

The SS condition was first introduced by \citet{huang2016anchor} to study the identifiability of topic models, where identifiability is ensured under the SS condition along with a minimal determinant on $\bW \bW^T$. This condition is used differently in their work and ours: \citet{huang2016anchor} impose the  SS condition on  rows of $\mathbf C$; we impose this condition on columns of $\mathbf{W}$. Although volume is not discussed  in \citet{huang2016anchor}, imposing the SS condition on rows of $\mathbf C$ in fact leads to a convex polytope of maximum volume; in contrast, we seek a convex polytope of the smallest volume. 

{\colorblue
\begin{remark}[Algorithm for checking SS condition]
\label{rmk:check_SS}
\normalfont Checking $cone(\mathbf W) \supseteq \mathcal{K}^\ast$ in the SS condition is equivalent to verifying whether a convex polytope contains a ball (after being projected to $\simplex^{k-1}$), which is in general an NP-complete problem in computational geometry \citep{freund1985complexity, huang2014non}. Consequently, it can be computationally difficult to provide a definitive conclusion as to whether or not the SS condition holds in high dimensions. However, if making a small probability mistake is allowed, then we propose that the following randomized algorithm to check the SS condition will give the correct answer with acceptable high probability. Since it suffices to verify that $\conv(\mathbf W) \supseteq bd \mathcal K^\ast \bigcap \simplex^{k-1}$, we can independently choose $M$ sample points uniformly from $bd \mathcal K^\ast \bigcap \simplex^{k-1}$ and check whether all of them are in $\conv(\mathbf W)$. 
If $\mathbf{W}$ satisfies the SS condition, then the $M$ sampled points should belong to $\conv(\mathbf W)$; if $\mathbf{W}$ does not satisfy the SS condition, then, since the probability of each sampled point falling in $\conv(\mathbf W)$ is a fixed number, the probability of making a mistake decays exponentially in $M$. For real datasets where $\conv(\mathbf W)$ is not observed, we can use an estimator of it to empirically check the SS condition by reporting the frequency of sampled points not falling into the estimated $\conv(\mathbf W)$.
\end{remark}
}



\section{Maximum Integrated Likelihood Estimation}\label{sec:MLE}

 Before introducing the proposed estimator for topic matrix $\mathbf C$, let us describe some more notations and the data generating process.
 Let $\mathbf X= (\mathbf x^{(1)},\cdots, \mathbf x^{(d)})$ denote the observed data as a collection of word sequences.   Without loss of generality, we assume each document has the same number of words,  denoted by $n$. Given parameters $(\mathbf C, \mathbf W)$, word sequences from different documents are independent, with the word sequence from the $i$-th document, $\mathbf x^{(i)} = (x_{i,1}, \dots, x_{i, n})$, being $n$ i.i.d.~samples from the categorical distribution Cat$(\mathbf u_i)$, where $\mathbf u_i = \mathbf C\mathbf w_i$ is the $V$-dimensional probability vector in $\simplex^{V-1}$, and $\mathbf w_i=(w_{i,1},\dots,w_{i,k})$ denotes the $i$-th column of matrix $\mathbf W$. We use $f_n(\cdot \mid \mathbf u_i)$ to denote the multinomial likelihood function of the $i$-th document.
 Let $\mathbf c_j$ denote the $j$-th topic vector, i.e.,~the $j$-th column of matrix $\mathbf C$, for $j=1,2,\ldots,k$.
 Under this notation, we can express the word frequency vector $\mathbf u_i = \sum_{j=1}^kw_{i,j}\mathbf c_j$ associated with the $i$-th document as a convex combination of the topic vectors, where $\mathbf w_i$ serves as the mixing weight vector.

\subsection{Implicit Volume Minimization}\label{sec:MLE_VM}
Since our primary interest is on the topic matrix $\mathbf C$, we can profile out the nuisance parameters $\mathbf w_i$'s by integrating them with respect to some distribution, resulting an integrated likelihood function of $\mathbf C$. After that, we can estimate $\mathbf C$ by maximizing the integrated likelihood \citep{berger1999integrated}. We propose to integrate out $\mathbf w_i$'s with respect to the \emph{uniform distribution} over simplex $\simplex^{k-1}$, which induces a uniform distribution on $\mathbf u_i = \mathbf C\mathbf w_i$ over $\conv(\mathbf C)$. This is because the linear transformation $\mathbf w\mapsto\mathbf C \mathbf w$ has a constant Jacobian. The integrated likelihood can be formally written as follows:
\begin{align}
    F_{n\times d}(\mathbf C;\mathbf X) &
     = \prod_{i=1}^d  \int_{\conv(\mathbf C)}  \frac{f_n(\mathbf x^{(i)}\,|\, \mathbf u)}{|\conv(\mathbf C)|} \,d \mathbf u, \label{eq:lkh_fn}
\end{align}
where $|\conv(\mathbf C)|$ denotes the $(k-1)$-dimensional volume of the set $\conv(\mathbf C)$. The corresponding maximum likelihood estimator (MLE) is defined to be 
\begin{equation} \label{eq:mle}
\mathbf{\hat C}_n = \underset{\mathbf C}{\argmax}   F_{n\times d}(\mathbf C;\mathbf X),
\end{equation}
where the maximum is over all $V$-by-$k$ column-stochastic matrices.

Although the integrated likelihood (\ref{eq:lkh_fn}) is equivalent to the marginal likelihood from an LDA model after integrating out the mixing weight $\mathbf{w}$ with respect to a $\text{Dirichlet}(\mathbf{1}_k)$ prior, we emphasize again that the uniform prior is just used to profile out the nuisance parameters so that we can derive an MLE for the topic matrix. In our theoretical analysis below, we do not assume data to be generated from the LDA model with a uniform prior on $\mathbf w$. 

\emph{Why uniform distribution?}  To understand the motivation behind the use of a uniform distribution in  (\ref{eq:lkh_fn}), let us consider the noiseless case (corresponding to the limiting case as $n\to\infty$), in which  we ``observe'' the true word-frequency vectors for the $d$ documents: $\tu{1}, \cdots, \tu{d}$. In this ideal setting, from a standard Laplace approximation argument, the $i$-th integral inside the product in~\eqref{eq:lkh_fn} after rescaling by a factor of order $n^{(V-1)/2}$ converges to $\idfn (\tu{i}  \in \conv(\mathbf C))$, and the MLE $\hat{\mathbf C}$ becomes:
\begin{align}\label{eq:mle_uniform}
    &  \underset{\mathbf C}{\argmax} \ \prod_{i=1}^d \frac{\idfn (\tu{i}  \in \conv(\mathbf C))}{|\conv(\mathbf C)|} =  \underset{\mathbf C}{\argmax} \ \frac{ \idfn (\tu{1},\cdots,\tu{d} \in \conv(\mathbf C))}{|\conv(\mathbf C)|} ,
\end{align}
 where $\mathds 1 (\cdot)$ is the indicator function. Therefore, maximizing the integrated likelihood function (\ref{eq:lkh_fn}) is asymptotically equivalent to minimizing the volume of $\conv(\mathbf C)$ subject to the constraint that $\conv(\mathbf C)$ contains all true word-frequency vectors.
 
 In the rest of this section we first provide an alternative interpretation of our approach as a two-stage estimation procedure. We compare it with some  representative topic learning methods designed under the separability condition that can also be cast as two-stage procedures. After that, we describe an MCMC-EM algorithm designed for implementing the optimization problem of maximizing the integrated likelihood. 
 



 \begin{figure}[h]
    \begin{subfigure}[t]{0.5\textwidth}
      \centering
      \includegraphics[trim=0 2em 0 0, clip, width=0.95\linewidth]{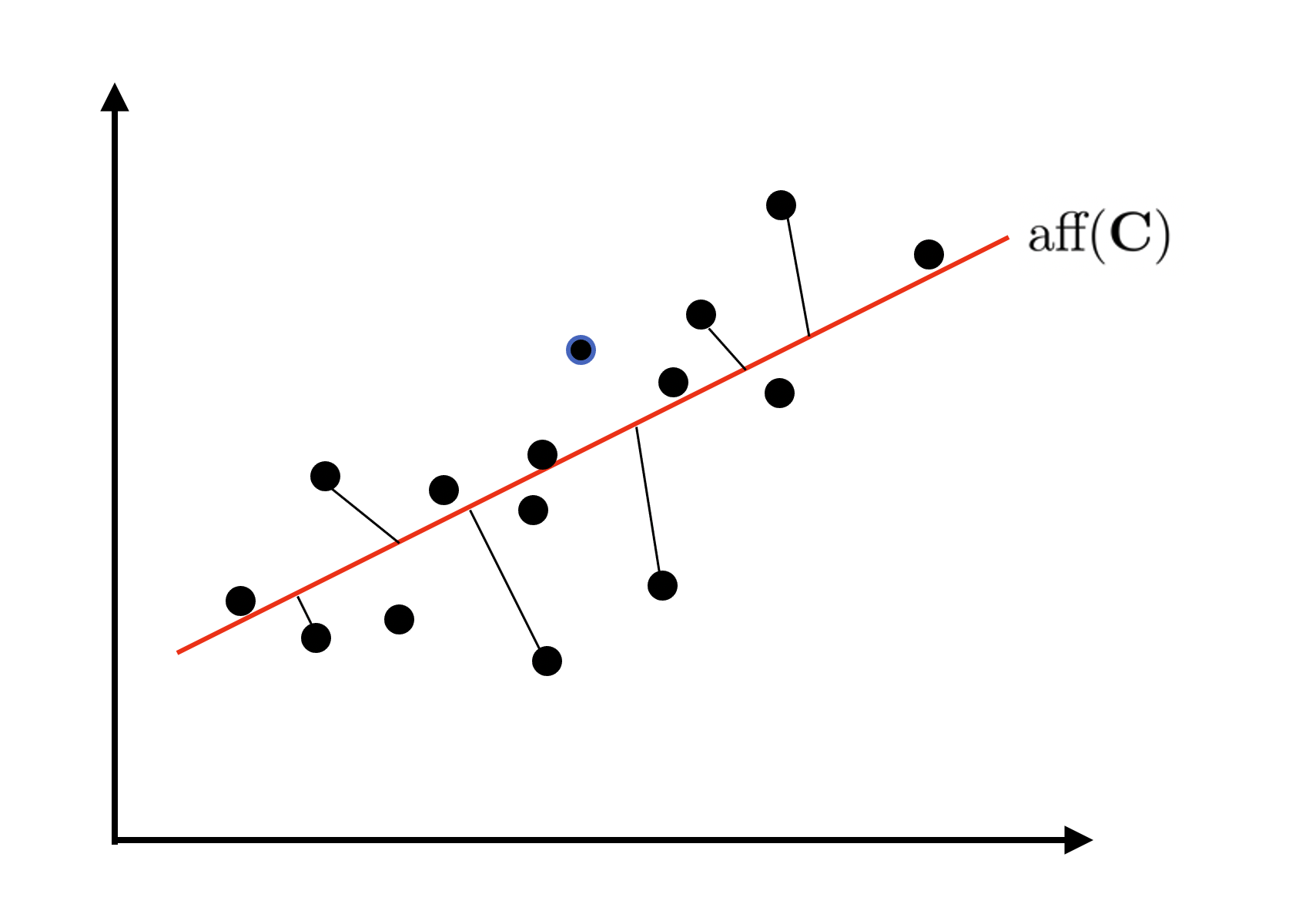}
         \subcaption{First stage}%
      \label{fig:Step1}%
   \end{subfigure}
   \begin{subfigure}[t]{0.5\textwidth}
         \centering
         \includegraphics[trim=0 -1em 0 0, clip, width=1.08\linewidth]{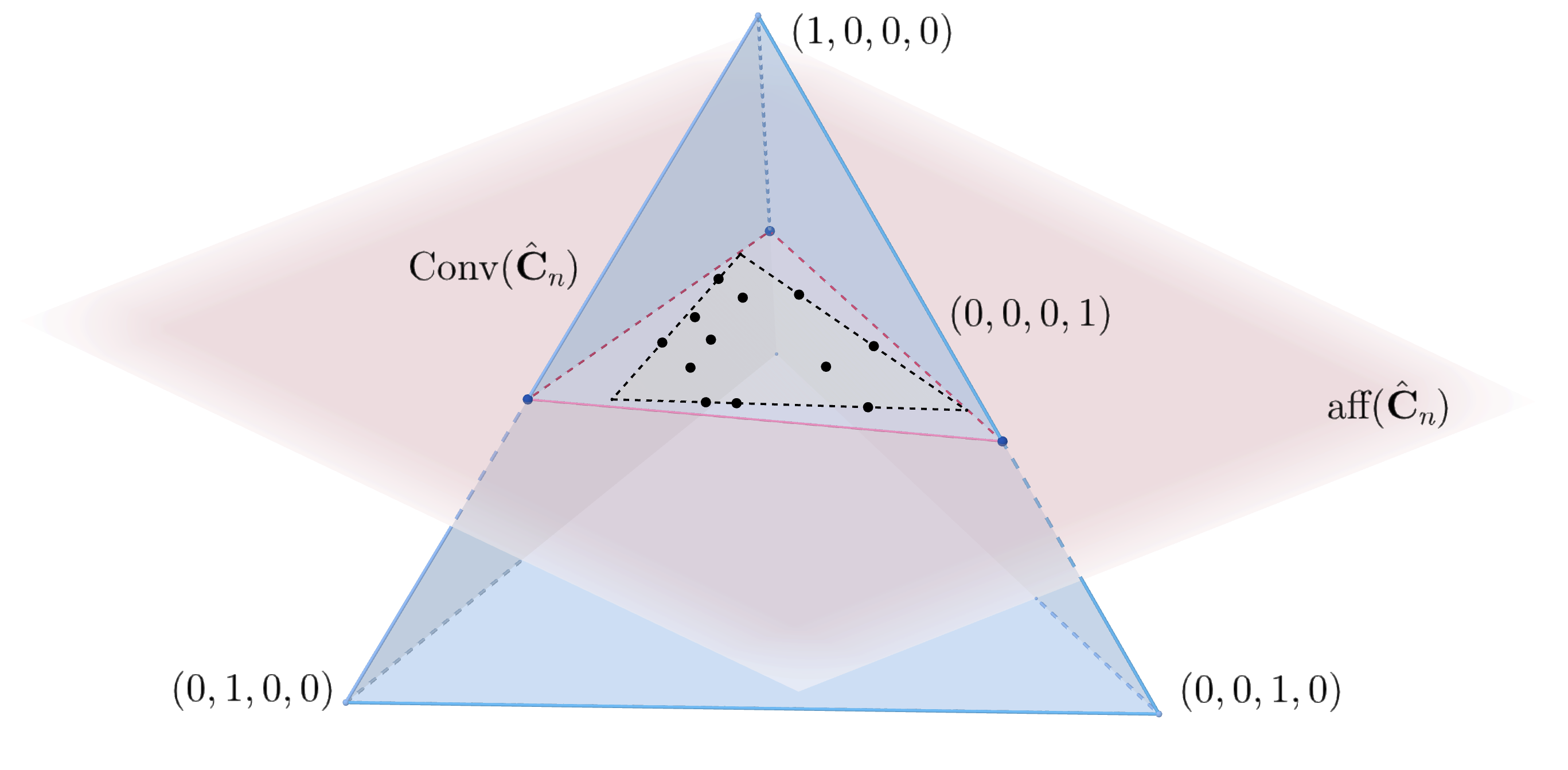}
      \subcaption{Second stage}%
      \label{fig:Intersection}%
   \end{subfigure}%
    \caption{Illustration of the two-stage perspective of maximizing the integrated likelihood~\eqref{eq:lkh_fn}. The left figure illustrates the first stage when $V=k=2$. Black dots are the sample word frequency vectors $\mathbf {\hat u}^{(i)}$'s. $\text{aff}(\mathbf C)$ is the red line. We target to minimize the sum of the squared distances, where each distance is induced from its own local norm $\|\cdot\|_{i}$ (see main text for details about the norm). That is why the black lines correspond to the projection directions are not necessarily parallel to each other. The right figure illustrates of the second stage when $V=4, k=3$. The blue tetrahedron is the simplex $\simplex^{V-1}$ and the red hyperplane is the estimated $\text{aff}(\hatCn)$ from the first stage. 
The black dots are the projections of the sample word frequency vectors $\mathbf {\hat u}^{(i)}$'s on $\text{aff}(\hatCn)$. The black dashed triangle is our estimator $\hatCn$, whose convex hull is roughly the $3$-vertex convex polytope that encloses all the black dots and has the minimal volume.}
    \label{fig:two_stage}%
\end{figure}


\subsection{Interpretation as Two-Stage Optimization}\label{subsec:comparison}

Our method of estimating $\mathbf{C}$ can be viewed as a two-stage procedure: in the first stage, we estimate the $(k-1)$-dimensional hyperplane $\text{aff}(\mathbf{C})$ in which the convex polytope of $\mathbf{C}$ lies; then in the second stage, we determine the boundary of $\text{Conv}(\mathbf{C})$ by estimating its $k$ vertices within the estimated hyperplane obtained in the first stage. See Figure \ref{fig:two_stage} for an illustration, and the following for a heuristic derivation.

It is worth mentioning that many recent separability condition based topic modeling methods in the literature (such as~\cite{arora2012learning,azar2001spectral,kleinberg2008using,kleinberg2003convergent,ke2017new,papadimitriou2000latent,mcsherry2001spectral,anandkumar2012method}) can be explained under this general two-stage framework. For example, some papers
\citep{azar2001spectral,kleinberg2008using,kleinberg2003convergent} aim only at recovering the column span of topic matrix $\mathbf C$ using singular value decomposition (SVD), which suffices for their applications. This corresponds to solving the hyperplane estimation problem in our first stage. Some papers~\citep{arora2012learning,papadimitriou2000latent,mcsherry2001spectral,anandkumar2012method} directly search for a subset of words (separability condition on anchor words,~\cite{arora2012learning}) or documents (separability condition on pure topic documents,~\cite{papadimitriou2000latent,mcsherry2001spectral,anandkumar2012method}) in their first stage, and then in their second stage recover the population-level term-document matrix (or the hyperplane $\text{aff}(\mathbf{C})$) based on the estimated anchor words/pure topic documents. This corresponds to our two-stage procedure, in reverse order. Others such as~\cite{ke2017new} also use a two-stage procedure based, first, on projecting a certain transformation of the sample term-document matrix onto a lower-dimensional hyperplane via SVD, and then searching for the anchor words over that hyperplane. Notice that all aforementioned methods reply crucially  on the separability condition, which greatly simplifies the statistical and computational hardness of the problem and turns it into a searching problem; thus they are able to circumvent the hidden non-regular statistical problem of boundary estimation (c.f.~Section~\ref{section:theory_comparsion}).


To illustrate the two-stage interpretation of our method, we observe that the integrated likelihood \eqref{eq:lkh_fn} is equivalent to the following expression:
\begin{equation} \label{eq:KL:two-stage}
\frac{1}{|\conv(\mathbf{ C})|^d} \prod_{i=1}^d \int_{\conv(\mathbf{C})}  
\exp \big\{-n\, D_{\rm KL}(\mathbf{\hat{u}}^{(i)}\, ||\, \mathbf u)\big\}\,
d\mathbf u,
\end{equation}
where $\mathbf {\hat u}^{(i)}$ denotes the sample word frequency vector for document $i$. Here, we use  $D_{\rm KL}(\mathbf p\,||\,\mathbf q)=\sum_{v=1}^V p_v\log(p_v/q_v)$ to denote the Kullback-Leibler divergence between two categorical distributions with parameters $\mathbf p=(p_1,\ldots,p_V)$ and $\mathbf q=(q_1,\ldots,q_V)$.
When $n$ is large, the classical Laplace approximation to the integral in~\eqref{eq:KL:two-stage} uses a nonnegative quadratic form $\|\mathbf u-\mathbf{\hat{u}}^{(i)}\|_{i}^2:\,=(\mathbf u-\mathbf{\hat{u}}^{(i)})^T\mathbf H_i(\mathbf u-\mathbf{\hat{u}}^{(i)})$ to approximate the exponent $D_{\rm KL}(\mathbf{\hat{u}}^{(i)}\,||\,\mathbf u)$ in a local neighborhood of $\mathbf{\hat{u}}^{(i)}$. Since such a quadratic form defines the norm $\|\cdot\|_{i}$, we can decompose it into $\|\mathbf u-\mathbf{\hat{u}}^{(i)}\|_{i}^2 = \|\mathbf u-\mathbb P_{\mathbf C}^{(i)} \,\mathbf{\hat{u}}^{(i)}\|_{i}^2+ \|\big(\mathbf I_{V}-\mathbb P_{\mathbf C}^{(i)}\big)\,\mathbf{\hat{u}}^{(i)}\|_{i}^2$, where $\mathbb P_{\mathbf C}^{(i)}$ denotes the projection operator onto the $(k-1)$-dimensional hyperplane $\text{aff}(\mathbf C)$ with respect to the distance induced from $\|\cdot\|_{i}$. Finally, we can  approximate the integrated likelihood in the preceding display as
\begin{align}\label{Eqn:likelihood_approx}
\exp\Big\{-n \LaTeXunderbrace{\sum_{i=1}^d \|\big(\mathbf I_{V}-\mathbb P_{\mathbf C}^{(i)}\big)\,\mathbf{\hat{u}}^{(i)}\|_{i}^2}_{\text{residual sum of squares}}\Big\}
\, \cdot\, \frac{1}{|\conv(\mathbf{ C})|^d}\, \prod_{i=1}^d \LaTeXunderbrace{\int_{\conv(\mathbf{C})}  
\exp \big\{-n\, \|\mathbf u-\mathbb P_{\mathbf C}^{(i)}\,\mathbf{\hat{u}}^{(i)}\|_{i}^2\big\}\,
d\mathbf u}_{\approx\,  C_i \, n^{-(k-1)/2}\, \idfn (\mathbb P_{\mathbf C}^{(i)}\,\mathbf{\hat{u}}^{(i)}  \in \conv(\mathbf C))},
\end{align}
where the display underneath the second curly bracket is due to the Laplace approximation to the $(k-1)$-dimensional integral, and the constants $C_i$  depends only on $\mathbf{\hat{u}}^{(i)}$.

We see from this approximation that the maximization of integrated likelihood~\eqref{eq:KL:two-stage} can be approximately cast into a two-stage sequential optimization problem. In the first stage, we find an optimal $(k-1)$-dimensional hyperplane spanned by $\mathbf C$ that is closest to $\mathbf {\hat u}^{(i)}$'s by minimizing the
residual sum of squares in~\eqref{Eqn:likelihood_approx} (see Figure~\ref{fig:Step1}). 
This corresponds to the SVD approach for estimating the true topic supporting hyperplane adopted by \citet{azar2001spectral,kleinberg2008using,kleinberg2003convergent,ke2017new}, and several others under the separability condition. In the second stage, we find the most compact (i.e., minimal volume) $k$-vertex convex polytope $\text{Conv}(\mathbf{C})$ that encloses the projections of $\mathbf {\hat u}^{(i)}$'s onto the hyperplane $\text{aff}(\mathbf{C})$, so that the second term in~\eqref{Eqn:likelihood_approx} is maximized. 

With the separability condition on $\mathbf{C}$ or $\mathbf{W}$, the vertex search in the second stage can be greatly simplified and restricted to a small number of choices. For example, the anchor-word assumption implies that each column of $\mathbf{C}$ has at least $(k-1)$ zeros; consequently, columns of $\mathbf{C}$ should be chosen from the intersection of $\text{aff}(\mathbf{C})$ and the simplex $\simplex^{V-1}$ in the second stage (as shown in Figure~\ref{fig:Intersection}). 

Our second stage, in the absence of a separability condition,  is essentially the much more challenging non-regular statistical problem of boundary estimation.
To see this, consider the same toy example of $(V,k)=(4,3)$ as illustrated in Figure~\ref{fig:Intersection}. The separability condition on $\mathbf C$ implies that once the hyperplane aff$(\mathbf C)$ (red hyperplane) is determined, the only candidate topic matrix $\mathbf{C}$ is the one whose columns are the intersections (blue circles) of this hyperplane and the three $1$-dimensional edges of the simplex $\simplex^3$ (blue tetrahedron), making the second stage trivial. On the contrary, the statistical problem in our setting is to estimate the minimal volume $k$-vertex convex polytope (black dashed triangle as our estimator) that encloses all true underlying word probability vectors of the documents, which is highly nontrivial (see Section~\ref{section:theory_comparsion} for a more detailed comparison). Fortunately, our computational algorithm described in the following subsection circumvents this difficulty directly maximizing the integrated likelihood via a variant of the expectation maximization (EM) algorithm, which implicitly constructs such an estimator.

\subsection{Computing Maximum Integrated Likelihood Estimator}
\label{app:MCMCEM}
 For computation, we employ an MCMC-EM algorithm to find the maximizer $\hatCn$ of the integrated likelihood objective~\eqref{eq:lkh_fn} by augmenting the model with a set of latent variables $\mathbf Z =\{Z_{ij}:\,i=1,2,\ldots,d,\, j=1,2,\ldots,n\}$, where, given the mixing weights $\mathbf w_i$, $Z_{ij}\in\{1,2,\ldots,k\}$ follows Cat$(\mathbf w_i)$ and is interpreted as the topic indicating variable for the $j$-th word $\mathbf x^{(i)}_j$ in the $i$-th document. Our MCMC-EM algorithm proceeds in a  manner similar to that of the classical EM algorithm with, first, an {\bf E}-step of computing the expected log-likelihood function $\log p(\mathbf X,\,\mathbf Z\,|\,\mathbf C)$, where the expectation is with respect to the distribution of latent variable $\mathbf Z$
 after marginalizing out $\mathbf W$, and then an {\bf M}-step of maximizing the expected log-likelihood function over topic matrix $\mathbf C$. An MCMC scheme is introduced in the {\bf E}-step for sampling $(\mathbf Z,\,\mathbf W)$ pairs from the joint conditional distribution of $p(\mathbf Z,\,\mathbf W\,|\,\mathbf X,\,\mathbf C)$ in order to compute the expected log-likelihood function via Monte-Carlo approximation.

 As discussed before, our proposed estimator is essentially the MLE estimator from the LDA model \citep{blei2003latent} with a particular choice of priors on $\mathbf{W}$. Many algorithms have been proposed for the LDA model, such as the Gibbs sampler  \citep{griffiths2004finding},  partially collapsed Gibbs samplers \citep{magnusson2018sparse,terenin2018polya}, and various variational algorithms \citep{blei2003latent}. The use of MCMC-EM here is a personal preference.  
 Our MCMC-EM algorithm is a stochastic EM algorithm similar to the Gibbs sampler in \citet{griffiths2004finding}, and to the partially collapsed Gibbs samplers in \citet{magnusson2018sparse,terenin2018polya}. According to the asymptotic results of stochastic EM algorithms in \citet{nielsen2000stochastic}, the estimation of the topic matrix produced by our algorithm is guaranteed to converge to the proposed MLE, provided that $\mathbf W^0$ is sufficiently scattered. In Section \ref{sec:real_data}, we compare our algorithm with the algorithms mentioned above and find all very similar in performance. Since computation is not the main focus of this paper, we confine the details, including derivations for the full algorithm, to the supplementary material.

\section{Finite-Sample Error Analysis}\label{sec:est}
In this section, we study the finite-sample error bound and its implied asymptotic consistency of the proposed estimator $\hatCn$. We consider the fixed design setting where columns of $\mathbf{W}$ can take arbitrary positions in $\simplex^{k-1}$ as long as a perturbed version of the SS condition described in the following is satisfied. For the stochastic setting where columns of $\mathbf{W}$ are generated from some distribution, the error analysis and consistency can be found from Section~\ref{Sec:random_design_error} in the supplementary material.
To avoid ambiguity, we use $\tC$, $\tW$, $\tU$ to denote the ground truth, and leave $\mathbf C$, $\mathbf W$, $\mathbf U$ as generic notations for parameters.

\subsection{Noise Perturbed SS Condition}\label{subsec:alpha-beta-SS}
Before introducing our results from the error analysis, it is helpful to introduce a perturbed version of the SS condition, called $(\alpha, \beta)$-SS condition, which characterizes the robustness/stability of the (population level) SS condition against random noise perturbation due to the finite sample size.

\begin{definition}[$(\alpha, \beta)$-SS Condition] \label{def:W_suff_catt_a} A matrix $\mathbf W$ is {\bf $(\bm \alpha, \bm \beta)$-\bf sufficiently scattered} for some $\alpha, \beta \geq 0 $, if it satisfies (S1) and 

\begin{enumerate}
      \item[(S3).] $[cone(\mathbf W)^{\ast}]^\alpha \bigcap [bd \mathcal K]^{\alpha} \subseteq \{\mathbf x: \|\mathbf x - \lambda \mathbf e_f \|_2 \leq \beta\lambda, \lambda \geq 0 \}, $ where 
      
      $[cone(\mathbf W)^\ast]^{ \alpha}=  \{\mathbf x: \mathbf x^T \mathbf W \geq -\alpha\|\mathbf x\|_2\}$ and $[bd\mathcal{K}]^\alpha = \{\mathbf x:  |\|\mathbf x\|_2 - \mathbf x^T \mathbf 1_k| \leq \alpha  \|\mathbf x\|_2 \}$ are the $\alpha$-enlargements of $cone(\mathbf W)^\ast$ and $bd \mathcal K$, respectively.
\end{enumerate} 
\end{definition}

We provide a geometric view of the $(\alpha, \beta)$-SS condition in Figure \ref{fig:suff_scatt_a}. Similar to the setting of Figure \ref{fig:two_suff_scatt}, everything is projected onto the hyperplane $\mathbf x^T \mathbf 1_k = 1$: blue dots denote columns of $\mathbf{W}$, the inner brown ball inscribed in the triangle denotes $\mathcal{K}^\ast$, and the shaded yellow region denotes $cone(\mathbf W)^\ast$ along with the dashed gray line as its boundary. The boundary of the enlarged cone of $cone(\mathbf W)^\ast$, $[cone(\mathbf W)^\ast]^{ \alpha}$, is marked by the solid gray line, and the thickened boundary of $\mathcal K$, $[bd\mathcal{K}]^\alpha$, is the outside ring in red. The set $\{\mathbf x: \|\mathbf x - \lambda \mathbf e_f \|_2 \leq \beta\lambda, \lambda \geq 0, f \in [k]\}$, when being projected to the hyperplane $\mathbf x^T \mathbf 1_k = 1$, corresponds to the green balls centered at the vertices of $\Delta^{k-1}$ with radius $\beta$.

  \begin{figure}[h]
    \begin{subfigure}[t]{0.42\textwidth}
        \includegraphics[width=0.93\textwidth]{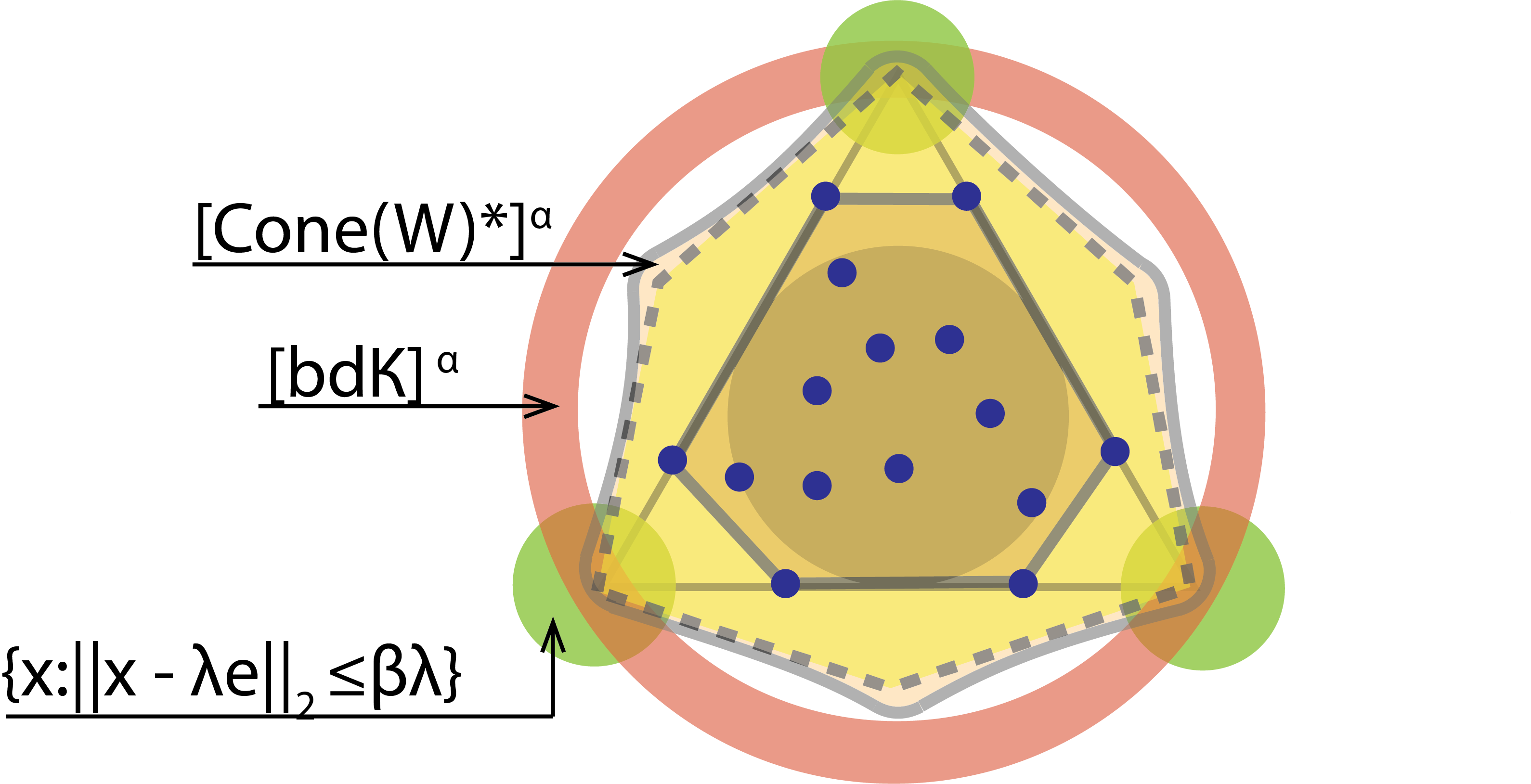}%
        \subcaption{$(\alpha, \beta)$-SS}%
           \label{fig:suff_scatt_alpha} %
    \end{subfigure}%
      \begin{subfigure}[t]{0.29\textwidth}
      \includegraphics[width=0.68\textwidth]{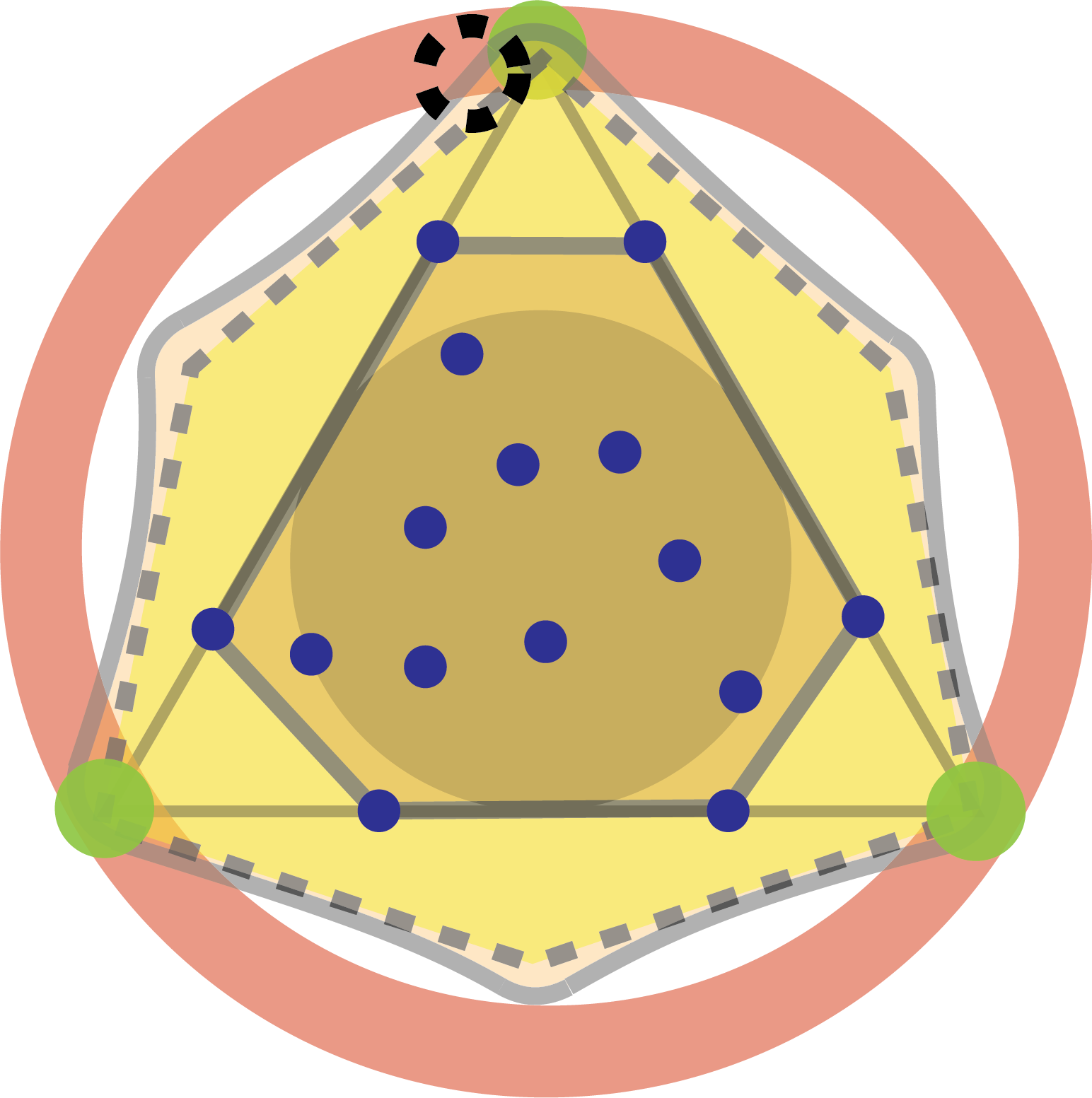}%
         \subcaption{not ($\alpha_2$, $\beta_2$)-SS\\ 
         ($\alpha_2 = \alpha, \beta_2 < \beta$)}%
      \label{fig:suff_scatter_alpha2}%
   \end{subfigure}
      \begin{subfigure}[t]{0.27\textwidth}
         \centering
         \includegraphics[width=0.8\textwidth]{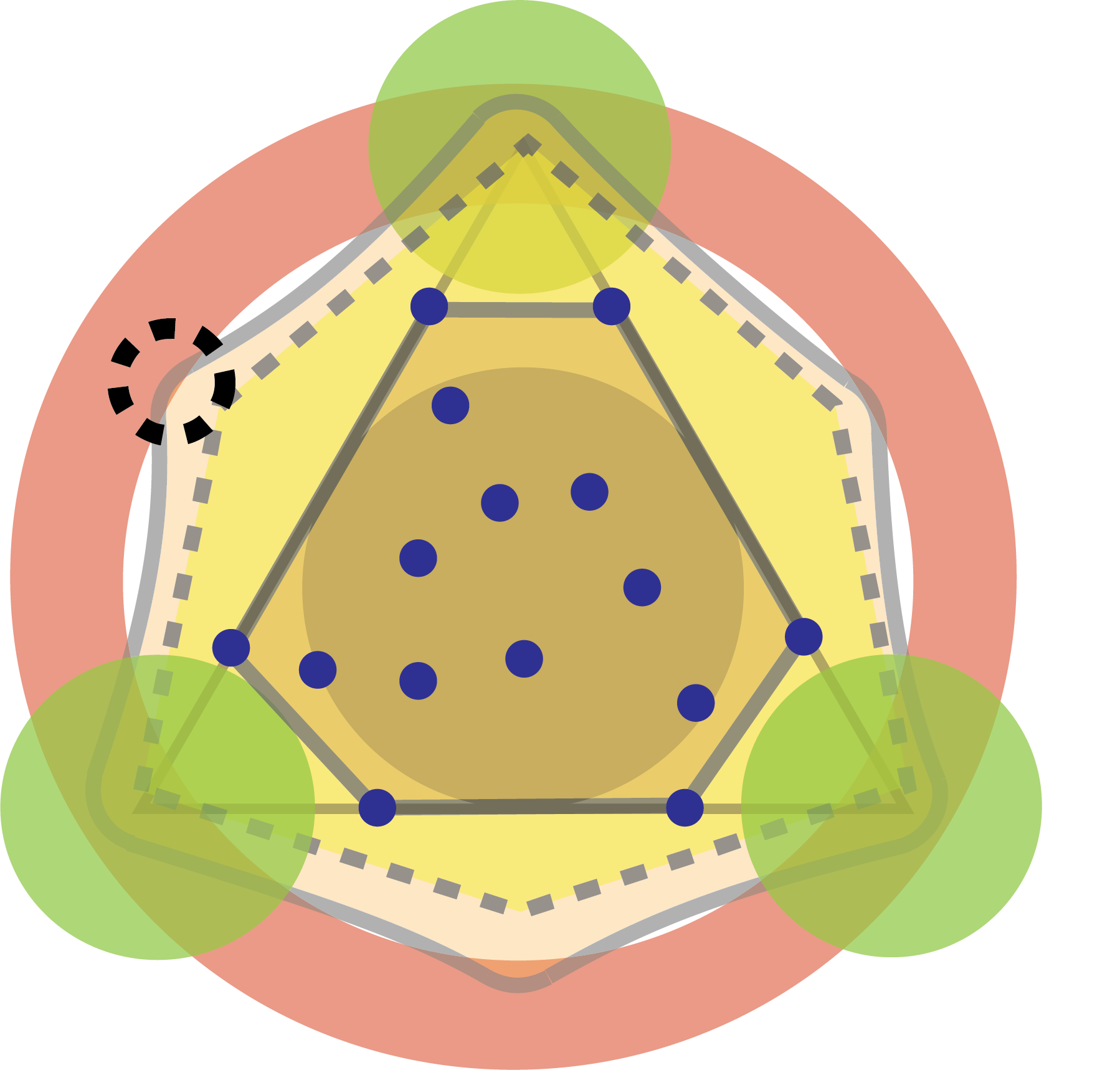}
         \subcaption{ not ($\alpha_3$, $\beta_3$)-SS\\ ($\alpha_3 > \alpha, \beta_3 > \beta$)
}%
\label{fig:suff_scatter_alpha3}  %
   \end{subfigure}%
    \caption{Geometric view of $( \alpha, \beta)$-SS sliced at the hyperplane $\mathbf x^T \mathbf 1_k = 1$ ($k=3$). $\mathbf W$ is the same in (a)(b)(c) while the values of $\alpha$ and $\beta$ are different.  In (b) and (c), we highlight the region (the dashed circle) that are in $[cone(\mathbf W)^{\ast}]^\alpha \bigcap [bd \mathcal K]^{\alpha}$ but not in $\{\mathbf x: \|\mathbf x -\lambda \mathbf e_f \|_2 \leq \beta\lambda  \}$. 
    }
    \label{fig:suff_scatt_a}%
\end{figure}

For a matrix $\mathbf{W}$ to satisfy the $(\alpha, \beta)$-SS condition, the corresponding convex hull of the blue dots need to contain $\mathcal{K}^\ast$, the inner brown ball. In addition, the intersection of the red ring, $[bd\mathcal{K}]^\alpha$, and the region enclosed by the solid gray line, $[cone(\mathbf W)^\ast]^{ \alpha}$, must be inside the green balls; see Figure \ref{fig:suff_scatt_alpha}. In other words, $[cone(\mathbf W)^\ast]^{ \alpha}$ only touches $[bd\mathcal{K}]^\alpha$ near the $k$ vertices of the simplex $\Delta^{k-1}$.

The $(\alpha, \beta)$-SS condition can be viewed as a generalization of the SS condition with the two parameters $(\alpha, \beta)$ quantifying the robustness of $cone(\mathbf W)^\ast$ under noise perturbation. In particular, $\alpha$ characterizes the \textit{tolerable noise level}, and $\beta$, which we refer to as the vertices \textit{sensitivity coefficient}, represents the maximum estimation error induced by noises below level $\alpha$. Due to this interpretation, the $(\alpha,\beta)$-SS condition becomes stronger as $\alpha$ increases and $\beta$ decreases (c.f.~Proposition~\ref{prop:ab_ss_property}). In particular, the minimal allowable $\beta$ under (S3) should increase as $\alpha$ increase. In most examples, $\beta$ should be proportional to $\alpha$ up to some constant depending on the geometric structure of $cone(\mathcal K)$ (for a concrete example, c.f.~Proposition~\ref{prop:suff_SS_new}).

While the SS condition requires $cone(\mathbf W)^\ast$ and $bd\mathcal{K}$ to intersect exactly at the positive semi-axis rays $\{\lambda \mathbf e_f, \lambda \geq 0\}$, the $(\alpha, \beta)$-SS condition requires the intersection of $[cone(\mathbf W)^\ast]^{ \alpha}$ and $[bd\mathcal{K}]^\alpha$---the perturbed versions of $cone(\mathbf W)^\ast$ and $bd\mathcal{K}$, respectively, with noise level $\alpha$---to be within distance $\beta$ away from the semi-axis rays. Note that $(\alpha, \beta)$-SS degenerates to the SS condition when $\alpha = \beta = 0.$  

Intuitively, if a matrix $\mathbf W$ has vertices sensitivity coefficient $\beta$ under noise level $\alpha$, then condition~(S3) remains valid at the same sensitivity coefficient as we decrease the noise level and at the same tolerable noise level as we increase the sensitivity coefficient. The following proposition provides a more general picture about the relation of the $(\alpha,\beta)$-SS conditions under different combinations of $(\alpha,\beta)$.

\begin{proposition}\label{prop:ab_ss_property}
The followings are some properties of $(\alpha, \beta)$-SS condition and SS condition.
\begin{itemize}
\item[(i)] If $\alpha \geq \alpha'$ and $\beta \leq \beta'$, then $(\alpha, \beta)$-SS implies $(\alpha', \beta')$-SS.
 \item[(ii)] If $\mathbf W$ is $(\alpha, \beta)$-SS and $\conv(\mathbf W) \subseteq \conv(\mathbf{\bar{W}})$, then $\mathbf{\bar{W}}$ is also $(\alpha, \beta)$-SS.
 \item[(iii)] If $\mathbf W$ is SS and $cone(\mathbf W) \subseteq cone(\mathbf{\bar{W}})$, then $\mathbf{\bar{W}}$ is also SS.
\end{itemize}
\end{proposition}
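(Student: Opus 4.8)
The plan is to treat all three parts as monotonicity statements driven by one principle: enlarging the generating columns (or the cone they span) shrinks both the dual cone $cone(\cdot)^{\ast}$ and the $\alpha$-enlargement $[cone(\cdot)^{\ast}]^{\alpha}$, whereas relaxing the parameters $(\alpha,\beta)$ enlarges the perturbed sets $[cone(\mathbf W)^{\ast}]^{\alpha}$, $[bd\mathcal{K}]^{\alpha}$ and the target region on the right-hand side of (S3). A unifying observation I would use throughout is that (S1) is exactly the $\alpha=0$ instance of the enlargement, since $[cone(\mathbf W)^{\ast}]^{0}=\{\mathbf x:\mathbf x^T\mathbf W\geq 0\}=cone(\mathbf W)^{\ast}$ by the dual-cone fact (i) recorded in the notation section; this lets me recover (S1) as a corollary of the enlargement inclusions rather than argue it separately.

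For parts (i) and (iii), no genuinely new idea is needed. For (i), I would first read two elementary monotonicities directly off the defining inequalities: $\alpha'\leq\alpha$ makes each slack smaller and hence $[cone(\mathbf W)^{\ast}]^{\alpha'}\subseteq[cone(\mathbf W)^{\ast}]^{\alpha}$ and $[bd\mathcal{K}]^{\alpha'}\subseteq[bd\mathcal{K}]^{\alpha}$, while $\beta\leq\beta'$ widens the circular cone $\{\mathbf x:\|\mathbf x-\lambda\mathbf e_f\|_2\leq\beta\lambda,\ \lambda\geq 0\}$. Since (S1) is parameter-free it transfers verbatim, and chaining these inclusions through the $(\alpha,\beta)$ form of (S3) yields the $(\alpha',\beta')$ form. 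Part (iii) is the cleanest: I would invoke only dual-cone anti-monotonicity (fact (ii)), namely $cone(\mathbf W)\subseteq cone(\mathbf{\bar{W}})$ forces $cone(\mathbf{\bar{W}})^{\ast}\subseteq cone(\mathbf W)^{\ast}$, and then intersect this inclusion with $\mathcal{K}$ and with $bd\mathcal{K}$ and apply (S1) and (S2) for $\mathbf W$ to obtain them for $\mathbf{\bar{W}}$.

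Part (ii) is where the one genuine observation is required, and I expect it to be the main obstacle. The hypothesis is phrased with $\conv$ rather than $cone$ precisely because, for $\alpha>0$, the enlargement $[cone(\mathbf W)^{\ast}]^{\alpha}$ is sensitive to the actual columns and not only to the cone they generate (rescaling the columns rescales the effective noise level). The key lemma I would establish is the convex-hull characterization $[cone(\mathbf W)^{\ast}]^{\alpha}=\{\mathbf x:\min_{\mathbf y\in\conv(\mathbf W)}\mathbf x^T\mathbf y\geq-\alpha\|\mathbf x\|_2\}$, which holds because a linear functional attains its minimum over a polytope at a generating column. Given this lemma, $\conv(\mathbf W)\subseteq\conv(\mathbf{\bar{W}})$ makes the minimum over $\conv(\mathbf{\bar{W}})$ no larger than that over $\conv(\mathbf W)$, so $[cone(\mathbf{\bar{W}})^{\ast}]^{\alpha}\subseteq[cone(\mathbf W)^{\ast}]^{\alpha}$; taking $\alpha=0$ recovers (S1) for $\mathbf{\bar{W}}$. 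Finally I would intersect with the common set $[bd\mathcal{K}]^{\alpha}$ (whose defining inequality does not involve $\mathbf W$, so it is unchanged) and apply (S3) for $\mathbf W$ to conclude (S3) for $\mathbf{\bar{W}}$. The only point requiring care is verifying the convex-hull characterization and the direction of the resulting inclusion; everything else is routine set inclusion.
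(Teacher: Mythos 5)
Your proposal is correct and takes essentially the same route as the paper's proof: part (i) by chaining the same three monotonicity inclusions, part (ii) via the same key reformulation $[cone(\mathbf W)^{\ast}]^{\alpha}=\{\mathbf x:\mathbf x^T\mathbf w\geq-\alpha\|\mathbf x\|_2,\ \forall\,\mathbf w\in\conv(\mathbf W)\}$ (identical to your min-over-hull characterization, justified by linearity over the polytope), which yields $[cone(\mathbf{\bar W})^{\ast}]^{\alpha}\subseteq[cone(\mathbf W)^{\ast}]^{\alpha}$, and part (iii) by dual-cone anti-monotonicity, which the paper simply declares trivial. No gaps.
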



By Proposition \ref{prop:ab_ss_property}(i), the $(\alpha, \beta)$-SS condition gets more stringent if we increase the tolerable noise level $\alpha$ and/or reduce the vertices sensitivity coefficient $\beta$. This is because when $\alpha$ gets larger, the intersection $[cone(\mathbf W)^{\ast}]^\alpha \bigcap [bd \mathcal K]^{\alpha}$ gets larger and consequently may not be packed inside the green ball with radius $\beta$. 
Similarly, when $\beta$ gets smaller, the green balls may not be large enough to contain the intersection. See Figure \ref{fig:suff_scatter_alpha2}\subref{fig:suff_scatter_alpha3} for illustration.
Since $\conv(\mathbf W) \subseteq \conv(\mathbf{\bar{W}})$ implies $cone(\mathbf W) \subseteq cone(\mathbf{\bar{W}})$,  we provide a more general sufficient condition for SS in Proposition \ref{prop:ab_ss_property}(iii) compared to that in Proposition \ref{prop:ab_ss_property}(ii), where SS is a special case of $(\alpha, \beta)$-SS. However, in this paper, the columns of $\mathbf W$ we consider are all on the hyperplane $\mathbf x^T \mathbf 1_k = 1$, so $\conv(\mathbf W) \subseteq \conv(\mathbf{\bar{W}})$ is equivalent to $cone(\mathbf W) \subseteq cone(\mathbf{\bar{W}})$. As a direct consequence of Proposition \ref{prop:ab_ss_property}(ii), if some columns of $\mathbf W$ is $(\alpha, \beta)$-SS, then $\mathbf W$ is $(\alpha, \beta)$-SS.

The maximal allowable tolerable noise level $\alpha$ is determined by the geometric structure of $cone(\mathbf W)$. Given $\alpha$, the $(\alpha, \beta)$-SS condition can be satisfied by almost any $\mathbf W$ when $\beta$ is large enough. However, such a condition is meaningless since $\beta$ will appear as one of the error terms later in Theorem \ref{thm:main}. So we would like to set $\beta$ as small as possible in order to derive a tight error bound. For example, we need $\beta$ to have an order of $\sqrt{\frac{\log (n\vee d)}{n}}$ in Theorem \ref{thm:main} to ensure a desired error rate that matches the order of our $\alpha$ choice reflecting the effective noise level in the data.

\subsection{Error Analysis and Consistency}\label{subsec:fixed}
In this subsection, we consider the setting where columns of $\mathbf W$ are fixed, and satisfy a set of conditions related to the noise perturbed SS condition discussed in the previous subsection. Note that the results in this subsection also apply to randomly generated mixing weights, as long as we can verify that the set of conditions below holds for the random mixing weights with high probability (c.f.~Section~\ref{Sec:random_design_error} in the supplementary material).
Before presenting our main results on the finite-sample error bound of the estimator $\mathbf{\hat C}_n$, let us first state our assumptions.

\begin{assumptions} Assume the following: 
\begin{itemize}
\item[(A1)]
 $\tC$ is of rank $k$ and its columns are bounded away from the boundary of $\simplex^{V-1}$. 
 \item[(A2)] Eigenvalues of $\frac{1}{d}\mathbf{W}_c{\mathbf{W}_c}^{T}$ are lower bounded by a positive constant, where $\mathbf{W}_c = \tW - \frac{1}{d}\tW \mathbf{1}_d\mathbf{1}_d^T$ is the centered version of $\tW$. In addition, there exist $k$ affinely independent columns of $\tW$ with minimum positive singular value larger than a positive constant.
\item[(A3)] There exist $s$ columns of $\tW$ which are ($\alpha$, $\beta$)-SS with $\alpha \geq C_1\sqrt{\frac{s\log (n\vee d)}{n}}$, where $s$ and $C_1$ are constants.

\end{itemize}
\end{assumptions}

Now we are ready to present our main result on the estimation accuracy. 

\begin{theorem}\label{thm:main}
Under Assumptions (A1)-(A3), with probability at least $(1-3/(n\vee d)^{c})^d$, 
\begin{equation}\label{eq:result}
    \dis(\mathbf{\hat C}_n, \tC)  \leq  D_1 \sqrt{\frac{s \log (n\vee d)}{n}} + D_2 \sqrt{s} \beta,
\end{equation}
where $c,D_1$ and $D_2$ are positive constants. 
In particular, if $\beta \leq C_2\sqrt{\frac{\log (n\vee d)}{n}}$ where $C_2$ is a constant, then 
\begin{equation}\label{eq:result_new}
    \dis(\mathbf{\hat C}_n, \tC)  \le D_1' \sqrt{\frac{s\log (n\vee d)}{n}}.
\end{equation}
\end{theorem}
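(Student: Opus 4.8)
The plan is to exploit the two-stage decomposition of the integrated likelihood developed in Section~\ref{subsec:comparison}, which turns the analysis of $\hatCn$ into (i) a subspace-perturbation problem for the supporting hyperplane $\text{aff}(\tC)$ and (ii) a robust boundary-estimation problem for the vertices, the latter being where the $(\alpha,\beta)$-SS condition does the essential work. First I would establish the stochastic input feeding both stages: a uniform concentration bound on the empirical word frequencies. Each $\mathbf{\hat u}^{(i)}$ is an average of $n$ i.i.d.\ categorical draws with mean $\tu{i}=\tC\tw{i}$, so a Bernstein-type inequality together with a union bound over the $d$ documents yields $\max_{1\le i\le d}\|\mathbf{\hat u}^{(i)}-\tu{i}\|_2 \lesssim \sqrt{\log(n\vee d)/n}$ on an event of probability at least $(1-3/(n\vee d)^{c})^d$, which is exactly the high-probability event in the statement. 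This identifies the effective noise level as $\sqrt{\log(n\vee d)/n}$, which by (A3) sits below the tolerable level $\alpha$ once the dimensional factor $\sqrt{s}$ is accounted for.

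For the first stage, the residual-sum-of-squares factor in~\eqref{Eqn:likelihood_approx} is (approximately) minimized by the best-fitting $(k-1)$-dimensional hyperplane, i.e.\ a weighted PCA of the cloud $\{\mathbf{\hat u}^{(i)}\}$. Since the true points $\tu{i}$ lie exactly on $\text{aff}(\tC)$, and since (A2) lower-bounds the smallest nonzero eigenvalue of $\frac1d\mathbf{W}_c\mathbf{W}_c^T$, the signal in the $(k-1)$ directions spanning $\text{aff}(\tC)$ dominates the perturbation. A Davis--Kahan/Wedin subspace-perturbation bound then controls the principal angle between $\text{aff}(\hatCn)$ and $\text{aff}(\tC)$, and hence the displacement of the projected points on the common hyperplane, at order $\sqrt{s\log(n\vee d)/n}$; combining $s$ directions in the spectral norm is the source of the $\sqrt s$ factor. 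This contributes the $D_1$ term, with (A1) ensuring the hyperplane is well-defined and the projections remain uniformly bounded away from the simplex boundary.

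The second stage is the main obstacle. After projecting onto $\text{aff}(\hatCn)$, the matrix $\hatCn$ is (up to the small hyperplane mismatch) the minimal-volume $k$-vertex polytope enclosing the projected data, by the second factor of~\eqref{Eqn:likelihood_approx}. Writing $\hatCn=\tC\mathbf B$ for a change-of-basis matrix $\mathbf B\in\mathds R^{k\times k}$, the goal is a quantitative, noisy version of the identifiability argument behind Theorem~\ref{thm:idf}. In the noiseless case that proof shows each row of $\mathbf B$ lies in $cone(\tW)^\ast\cap bd\mathcal K=\{\lambda\mathbf e_f\}$; here the enclosure constraint ``$\conv(\hatCn)\supseteq$ data'' holds only up to the noise level, so the cone memberships relax to the $\alpha$-enlargements $[cone(\tW)^\ast]^{\alpha}\cap[bd\mathcal K]^{\alpha}$. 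Condition (S3) of the $(\alpha,\beta)$-SS condition then upgrades this to $\|\mathbf B_{\text{row}}-\lambda\mathbf e_f\|_2\le\beta\lambda$, forcing every row of $\mathbf B$ to within relative distance $\beta$ of a scaled standard basis vector; with the volume normalization this makes $\mathbf B$ within $O(\sqrt s\,\beta)$ of a permutation matrix, yielding $\dis(\hatCn,\tC)\lesssim\sqrt s\,\beta$, the $D_2$ term. The genuinely hard part is making this perturbation of Theorem~\ref{thm:idf} rigorous: tracking how the finite-sample volume minimization (through the Laplace remainder in~\eqref{Eqn:likelihood_approx}) and the merely approximate enclosure propagate into the enlarged-cone memberships, which is precisely the purpose for which $(\alpha,\beta)$ were introduced, and where the second part of (A2) is used to keep $\mathbf B$ well-conditioned so that row-wise control transfers to the spectral-norm distance $\dis$.

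Finally, adding the two contributions gives~\eqref{eq:result}, and the high-probability guarantee is inherited from the concentration event above. Specializing to $\beta\le C_2\sqrt{\log(n\vee d)/n}$ makes $D_2\sqrt s\,\beta$ the same order as the $D_1$ term, collapsing the bound to~\eqref{eq:result_new}.
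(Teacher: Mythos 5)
Your plan has the right skeleton---concentration of the $\mathbf{\hat u}^{(i)}$'s, an approximate-enclosure statement, approximate volume minimality, and then a quantitative perturbation of the identifiability argument of Theorem~\ref{thm:idf} driven by condition (S3)---and that endgame (rows of a transfer matrix $\mathbf B$ forced into $[cone(\mathbf W^0_1)^\ast]^{\alpha}\bigcap[bd\mathcal K]^{\alpha}$, hence $\beta$-close to scaled basis vectors, hence $\mathbf B$ close to a permutation) is exactly how the paper concludes. But there is a genuine gap: both load-bearing facts in your argument, namely (i) every true word-frequency vector $\tu{i}$ lies within $O(\epsilon_n)$ of $\conv(\hatCn)$, and (ii) $|\conv(\hatCn)|\leq (1+O(\epsilon_n))\,|\conv(\tC)|$, are justified in your proposal only by appealing to the Laplace-approximation display~\eqref{Eqn:likelihood_approx}. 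The paper presents that display as a heuristic interpretation, not a proof device, and for good reason: to use it rigorously you would need uniform control of the Laplace remainders over all candidate $\mathbf C$ and all $d$ documents simultaneously (the errors multiply across documents, so they must be controlled at scale $e^{-cn\epsilon_n^2}$ per document); the local norms $\|\cdot\|_i$ are data-dependent and degenerate near the boundary of $\simplex^{V-1}$; and the step ``Gaussian integral $\approx C_i n^{-(k-1)/2}\idfn(\mathbb P^{(i)}_{\mathbf C}\mathbf{\hat u}^{(i)}\in\conv(\mathbf C))$'' fails precisely for points near the boundary of $\conv(\mathbf C)$, which under the SS condition is where the informative documents sit. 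You acknowledge this yourself (``the genuinely hard part is making this perturbation rigorous''), but that deferred part is not a technicality---it is the bulk of the proof.

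For comparison, the paper never invokes the Laplace approximation in the proof. Fact (i) is Lemma~\ref{lem:dis_upper_bound}: it follows from the exact inequality $F_{n\times d}(\hatCn;\mathbf X)\geq F_{n\times d}(\tC;\mathbf X)$, a lower bound on the right side via the reverse Pinsker inequality (Lemma~\ref{lem:F_lower_bound}), and a contradiction argument (Lemma~\ref{lem:dis_upper_bound1}) in which assumption (A2) is used to show that if even one $\tu{i}$ were far from $\conv(\hatCn)$ then a constant fraction of them would be, driving $F_{n\times d}(\hatCn;\mathbf X)$ below the established lower bound via Pinsker's inequality. Fact (ii) is Lemma~\ref{lem:det_C_bound}, proved by sandwiching: $\conv(\hatCn)$ is nearly the smallest $k$-vertex polytope containing the noise balls around the $\tu{i}$'s (Lemmas~\ref{lem:F_lower_bound_2}--\ref{lem:vol_upper_bound}), and that minimal polytope is in turn dominated by the projection onto $\text{aff}(\hatCn)$ of a $\gamma\epsilon_n$-enlargement of $\conv(\tC)$ (Lemmas~\ref{lem:Csharp_inside_simplex}--\ref{lem:proj_inside}). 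A secondary discrepancy: your first stage estimates $\text{aff}(\tC)$ by Davis--Kahan applied to a weighted PCA of the data cloud, but the MLE performs no such PCA; in the paper the hyperplane control is a consequence of fact (i) applied to $k$ affinely independent columns of $\tW$ guaranteed by (A2) (Lemma~\ref{lem:project_error}, where Davis--Kahan does appear), and the $\sqrt s$ factors actually arise from $\|\mathbf E_{n1}\|_2\leq C\sqrt s\,\epsilon_n$ and from the bound $\|\mathbf B\|_2\leq \sqrt s\,k$ entering Lemma~\ref{lem:B_close_Pi}, not from ``combining $s$ directions'' in a subspace perturbation bound.
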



\noindent In the theorem, constants $D_1$ and $c$ have the relation that $D_1 = C_3\cdot\sqrt{c}+C_4$ where $C_3$ and $C_4$ are constants independent of $(n,d)$. Some remarks about the assumptions are in order.

 (A1) is commonly imposed for technical reasons in other related work, such as \citet{nguyen2015posterior} and \citet{Wang2019}, to avoid singularity issues.
The geometric interpretation of the assumption in (A2) on $\mathbf{W}_c$ is that $\conv(\mathbf {U}_0)$ should contain a ball of a constant radius, which is again imposed to avoid singularity issues when a large proportion of the mixing weight vectors are too concentrated. Similar assumptions are also made in \citet{ke2017new, javadi2020nonnegative}.  

Next, we discuss Assumption (A3) in detail. First, note that a subset of columns of $\mathbf W^0$ satisfying the $(\alpha,\beta)$-SS condition immediately implies the full matrix $\mathbf W^0$ itself to satisfy the same condition, due to Proposition~\ref{prop:ab_ss_property}(ii).
Second, note that to attain the error bound~(\ref{eq:result_new}) we need the existence of a sub-matrix $\mathbf{W}^0$ to satisfy condition (A3) with $\beta$ of the same order as $\alpha$. 
The following proposition provides a sufficient condition for fulfilling this requirement. For example, when $k=3$ as illustrated in Figure~\ref{fig:suff_scatt_alpha}, all we need are two data points on each of the three line segments connecting $\mathbf e_i$ and $\mathbf e_j$ $(i\neq j)$ (i.e., totally six points) with the distance from each data point to the nearest vertex is less than $1/3$.

\begin{proposition}\label{prop:suff_SS_new}
Suppose for all $1\leq i \neq j \leq k$, there exists a column of $\tW$ that can be represented as $(1-x_{ij})\mathbf{e}_i+x_{ij}\mathbf{e}_j$ where $0\leq x_{ij}< 1/k$, then $\tW$ is ($\epsilon$, $C\epsilon$)-SS for all $\epsilon>0$, where $C$ is constant only depending on the geometry of $\mathbf W^0$.
\end{proposition}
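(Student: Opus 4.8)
The plan is to prove the $(\epsilon,C\epsilon)$-SS property for the sub-matrix $\mathbf W^\sharp$ whose columns are the $k(k-1)$ prescribed edge points $\mathbf w_{ij}:=(1-x_{ij})\mathbf e_i+x_{ij}\mathbf e_j$, and then transfer it to $\tW$. Since these columns belong to $\tW$ we have $\conv(\mathbf W^\sharp)\subseteq\conv(\tW)$, so Proposition~\ref{prop:ab_ss_property}(ii) upgrades $(\epsilon,C\epsilon)$-SS of $\mathbf W^\sharp$ to the full matrix. Throughout I work on the slice $\mathbf x^T\mathbf 1_k=1$, on which $\mathcal K^\ast$ is the inscribed ball of $\simplex^{k-1}$ (radius $1/\sqrt{k(k-1)}$, tangent to every facet) and $bd\mathcal K$ is the circumscribed sphere (radius $\sqrt{(k-1)/k}$, through the vertices $\mathbf e_f$), both centred at $\tfrac1k\mathbf 1_k$; by homogeneity I normalise $\|\mathbf y\|_2=1$. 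It then remains to check (S1) and (S3).

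For (S1) I must show $\conv(\mathbf W^\sharp)$ contains the inscribed ball. The $k-1$ points $\{\mathbf w_{fj}\}_{j\ne f}$ near $\mathbf e_f$ span the corner-cutting facet $\{\sum_{j\ne f}x_j/x_{fj}=1\}$; evaluating its defining functional at the centroid gives $\tfrac1k\sum_{j\ne f}x_{fj}^{-1}>(k-1)\ge 1$ because each $x_{fj}<1/k$, and normalising the functional turns this strict margin into a distance from the centroid to the facet that exceeds the inradius $1/\sqrt{k(k-1)}$ (indeed the threshold $x_{ij}=1/k$ is exactly the tangency case, so $x_{ij}<1/k$ forces strict enclosure). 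The remaining facets of $\conv(\mathbf W^\sharp)$ lie on facets of $\simplex^{k-1}$, which are tangent to the inscribed ball, so the ball is enclosed and (S1) holds. The identical computation with $k$ replaced by $k-1$ shows, for each fixed $f$, that the cone generated by the restricted vectors $\{\mathbf w_{ij}^{(-f)}:i,j\ne f\}$ (whose $f$-th coordinate is $0$) has the sub-centroid direction $\mathbf 1_{-f}$ (all-ones on the coordinates $\ne f$) in its relative interior with a quantitative margin $\rho>0$: for every unit $\mathbf u\in\mathbb R^{k-1}$ one can write $\mathbf 1_{-f}-\rho\,\mathbf u=\sum_{i,j\ne f}c_{ij}\mathbf w_{ij}^{(-f)}$ with $c_{ij}\ge0$ and $\sum c_{ij}\le C'$. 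I record this for the last step; $\rho,C'$ depend only on $\{x_{ij}\}$, i.e. on the geometry of $\tW$.

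Next I localise. On the normalised sphere, (S1) gives $\mathbf y^T\mathbf 1_k\ge\|\mathbf y\|_2$ for $\mathbf y\in cone(\mathbf W^\sharp)^\ast$, and the active-constraint analysis at $\mathbf e_f$ (the binding columns are exactly those $\mathbf w_{ij}$ with $f\notin\{i,j\}$, which span $\{x_f=0\}$) identifies the exact touching set $cone(\mathbf W^\sharp)^\ast\cap bd\mathcal K$ with the rays $\{\lambda\mathbf e_f\}$. Away from fixed small balls about the $\mathbf e_f$, the continuous functional $\mathbf y^T\mathbf 1_k-\|\mathbf y\|_2$ is therefore bounded below by some geometric $\delta>0$ on this compact set; by continuity of the linear constraints in the enlargement parameter there is a threshold $\epsilon_0>0$ so that for $\epsilon<\epsilon_0$ every $\mathbf y\in[cone(\mathbf W^\sharp)^\ast]^\epsilon\cap[bd\mathcal K]^\epsilon$ (which forces $\mathbf y^T\mathbf 1_k\ge(1-\epsilon)\|\mathbf y\|_2>0$) must lie in one such ball; fix the corresponding index $f$.

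Finally, the crux is a quantitative-transversality estimate near that vertex. Write $\mathbf y=\lambda(\mathbf e_f+\mathbf s)$ with $s_f=0$, so the target $\|\mathbf y-\lambda\mathbf e_f\|_2=\lambda\|\mathbf s\|_2\le C\epsilon\lambda$ reduces to $\|\mathbf s\|_2\le C\epsilon$, and localisation guarantees $\|\mathbf s\|_2$ small. The two enlargements translate into $\mathbf s^T\mathbf w_{ij}^{(-f)}\ge-\epsilon(1+\|\mathbf s\|_2)$ for all $i,j\ne f$ and $\sum_{m\ne f}s_m\le\epsilon+O(\|\mathbf s\|_2^2)$. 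Pairing $\mathbf s$ with the conic representation of Step~2 and taking $\mathbf u=\mathbf s/\|\mathbf s\|_2$ gives $\rho\|\mathbf s\|_2\le\sum_{m\ne f}s_m+C'\epsilon(1+\|\mathbf s\|_2)\le(1+C')\epsilon+C'\epsilon\|\mathbf s\|_2+O(\|\mathbf s\|_2^2)$; absorbing the last two terms (lower order, by smallness of $\epsilon$ and $\|\mathbf s\|_2$) yields $\|\mathbf s\|_2\le\frac{2(1+C')}{\rho}\,\epsilon=:C\epsilon$, with $C$ geometric. Setting $\epsilon=0$ recovers (S2), and for $\epsilon\ge\epsilon_0$ the inclusion holds trivially after enlarging $C$. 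The main obstacle is precisely this last estimate: converting the two independent $\epsilon$-thickenings (of the dual cone and of $bd\mathcal K$) into a bound that is linear rather than $\sqrt\epsilon$ in $\epsilon$. This is what forces the interior-margin/Hoffman-type argument of Step~2, and it is where the hypothesis $x_{ij}<1/k$ enters in an essential, quantitative way; the localisation and the (S1) verification are comparatively routine convex-geometry bookkeeping.
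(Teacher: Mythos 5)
Your proposal is correct in outline but takes a genuinely different route from the paper's proof. The paper first invokes Proposition~\ref{prop:ab_ss_property}(ii) to reduce to the uniform case $x_{ij}=m\in[0,1/k)$ (each uniform edge point lies on the segment between two of the given columns), and then argues \emph{globally}: a sign-pattern case analysis (all coordinates nonnegative; exactly one negative; at least two negative) shows that any unit vector in $[cone(\tW)^\ast]^\epsilon\cap[bd\mathcal K]^\epsilon$ satisfies $\|\mathbf x\|_1-\|\mathbf x\|_2\le C'\epsilon\|\mathbf x\|_1$, and a separate elementary lemma (Lemma~\ref{lem:l1_and_l2_norm}) converts near-equality of the $\ell_1$ and $\ell_2$ norms into proximity to a coordinate ray at a rate linear in $\epsilon$. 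You instead verify (S1) and the exact touching set for the truncated-simplex polytope, localise the intersection of the two enlargements near the vertex rays by compactness, and then run a Hoffman-type conic-margin estimate at each vertex; your vertex-local estimate plays the role of the paper's $\ell_1$/$\ell_2$ lemma, and your localisation replaces their case analysis. The trade-off: the paper's argument is fully explicit, with computable constants and no compactness; yours is more conceptual and closer to a general principle (SS plus a vertex margin implies $(\epsilon,C\epsilon)$-SS for small $\epsilon$), which is directly relevant to the paper's conjecture that (A3') follows from the SS condition alone, but your constants $(\delta,\rho,\epsilon_0,C')$ are non-explicit. You also address (S1), which the paper's own proof silently omits (it only establishes the (S3)-type inclusion). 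Your final algebra at the vertex checks out, including the absorption of the $O(\|\mathbf s\|_2^2)$ and $C'\epsilon\|\mathbf s\|_2$ terms and the trivialisation for $\epsilon\ge\epsilon_0$.

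Three supporting steps, however, are asserted rather than proven, and one parenthetical is wrong. First, your (S1) justification skips controlling the norm of the facet functional: a value $>1$ at the centroid does not by itself give distance exceeding the inradius. The fact is true (after normalisation it reduces, writing $1/x_{fj}=k+d_j$, to $(\sum_j d_j)^2\ge\sum_j d_j^2$), but it needs this computation; moreover the remark that ``$x_{ij}=1/k$ is exactly the tangency case'' is false for $k\ge4$ — for uniform $x_{ij}=1/k$ the cutting facet sits at $(k-2)$ times the inradius, and tangency occurs only at $k=3$. Second, the identification of $cone(\mathbf W^\sharp)^\ast\cap bd\mathcal K$ with the axis rays is a \emph{global} statement; the active-constraint analysis at $\mathbf e_f$ is local and does not rule out touching elsewhere. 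You need a short separate argument (two negative coordinates violate a constraint; a nonnegative vector on $bd\mathcal K$ has $\|\cdot\|_1=\|\cdot\|_2$, forcing one nonzero entry; the one-negative-coordinate case is excluded by the bound $y_j>(k-1)|y_1|$ and a pairwise-product inequality using $k\ge3$). This matters because your compactness step rests entirely on it, and it cannot be recovered from your final estimate — that would be circular. Third, the margin representation of Step~2 (existence of $\rho$, bounded coefficients $C'$) again needs the interiority argument in dimension $k-1$ plus compactness. None of these is a wrong approach — all three facts are true and elementary — but as written the burden of the proof rests on steps you label ``routine,'' which is precisely where the hypothesis $x_{ij}<1/k$ and the implicit restriction $k\ge3$ (shared by the paper, whose Case~(iii) divides by $k-2$) do their work.
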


Third, we discuss the parameter $s$,  the smallest number of columns in $\tW$ that are ($\alpha$, $\beta$)-SS, in Assumption (A3). The following proposition shows that when the columns of $\tW$ are stochastically generated according to some underlying distribution over $\simplex^{k-1}$ with appropriate properties, then $s$ can be chosen as a constant with high probability. 
Note that even if $s$ is not a constant,  the error bound in \eqref{eq:result_new} still goes to zero as long as $s$ is of a smaller order of $\frac{n}{\log(n \vee d)}$ in the asymptotic setting where $(n,d)\to\infty$.
\begin{proposition}\label{prop:const_SS_columns}
Suppose the columns of $\tW$ are i.i.d.~samples from a probability density function that is uniformly larger than a positive constant on neighborhoods of the vertices of $\simplex^{k-1}$. If $C\cdot n^{\frac{k-1}{2}}\leq d\leq e^{n^c}$, then with probability at least $1-C_0\cdot k/d$, there exist $k$ columns in $\tW$ that are $\left(C_1 \sqrt{\frac{\log (n\vee d)}{n}}, 
C_2 \sqrt{\frac{\log (n\vee d)}{n}} \right)$-SS, where $c\in(0,1)$, $C$, $C_0$, $C_1$ and $C_2$ are positive constants.
\end{proposition}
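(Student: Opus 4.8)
The plan is to split the proof into a deterministic geometric reduction and a probabilistic occupancy estimate, and then to calibrate the single free parameter $\delta$, the radius of the vertex neighborhoods we sample from, against $n$ and $d$. The reduction I would aim for is a quantitative, noise-robust version of the (trivial) fact that separability implies the SS condition: I will show that there are constants $C_1,C_2>0$ and a threshold $\delta_0>0$ such that, whenever $\tW$ contains $k$ columns $\mathbf w_{(1)},\dots,\mathbf w_{(k)}$ with $\|\mathbf w_{(i)}-\mathbf e_i\|_2\le \delta$ for each $i$ and $\delta\le \delta_0$, the submatrix $[\mathbf w_{(1)},\dots,\mathbf w_{(k)}]$ is $(C_1\delta,\,C_2\delta)$-SS in the sense of Definition~\ref{def:W_suff_catt_a}. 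The case $\delta=0$ is exactly the separable configuration $[\mathbf e_1,\dots,\mathbf e_k]$, which is the extreme instance of Proposition~\ref{prop:suff_SS_new} (all edge parameters $x_{ij}=0$); the entire content is the stability of conditions (S1) and (S3) under an $O(\delta)$ displacement of the generators, with the tolerable noise level $\alpha\asymp\delta$ and the vertices sensitivity coefficient $\beta\asymp\delta$.

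Granting this reduction, the probabilistic step is an occupancy argument over the $k$ vertices. Fixing $\delta=C\sqrt{\log(n\vee d)/n}$, I would lower-bound, for each vertex $\mathbf e_i$, the probability that a single column of $\tW$ lands in $B(\mathbf e_i,\delta)\cap\simplex^{k-1}$. Since the sampling density exceeds a positive constant $\gamma$ on a fixed neighborhood of each vertex, and the $(k-1)$-dimensional volume of $B(\mathbf e_i,\delta)\cap\simplex^{k-1}$ is of order $\delta^{k-1}$, this probability is at least $c_0\gamma\delta^{k-1}$. By independence of the $d$ columns, the probability that $B(\mathbf e_i,\delta)$ is missed by all of them is at most $(1-c_0\gamma\delta^{k-1})^{d}\le \exp(-c_0\gamma\delta^{k-1}d)$. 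Substituting $\delta^{k-1}\asymp(\log(n\vee d)/n)^{(k-1)/2}$ and the hypothesis $d\ge C n^{(k-1)/2}$ yields $\delta^{k-1}d\gtrsim (\log(n\vee d))^{(k-1)/2}\ge \log d$ for $k\ge 3$, so each vertex neighborhood is occupied except with probability $O(1/d)$ once $C$ is taken large enough. A union bound over the $k$ vertices then leaves a failure probability of order $k/d$, which is the stated $1-C_0k/d$.

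On the complementary event, the sampled columns supply, for each vertex, a point within $\delta$ of $\mathbf e_i$, so the geometric reduction applies and produces $k$ columns of $\tW$ that are $\big(C_1\sqrt{\log(n\vee d)/n},\,C_2\sqrt{\log(n\vee d)/n}\big)$-SS, as claimed. The two-sided hypothesis $Cn^{(k-1)/2}\le d\le e^{n^{c}}$ enters only through the lower bound, which drives the occupancy estimate, and the upper bound, which guarantees $\log(n\vee d)=o(n)$ and hence $\delta\to 0$; the latter is what ensures $\delta\le\delta_0$ for large $n$ so that the geometric reduction is in force and the noise level $\alpha\asymp\delta$ stays within the admissible range set by the geometry of $cone(\mathbf e_1,\dots,\mathbf e_k)$.

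The step I expect to be the main obstacle is the geometric reduction, and within it the stability of (S1). At $\delta=0$ the inscribed-ball cone $\mathcal K^\ast$ is only tangent to the faces of $cone(\mathbf e_1,\dots,\mathbf e_k)$ (the nonnegative orthant), so an inward displacement of the generators by $O(\delta)$ moves a face across $\mathcal K^\ast$ and the exact containment $cone(\tW)\supseteq\mathcal K^\ast$ can fail; this is precisely why the noise-robust formulation works with the $\alpha$-enlargement, and the crux is to show quantitatively that taking $\alpha$ a fixed multiple of $\delta$ absorbs this deformation while, simultaneously, the enlarged intersection $[cone(\tW)^\ast]^{\alpha}\cap[bd\mathcal K]^{\alpha}$ stays within the $\beta$-balls around the rays $\{\lambda\mathbf e_f\}$ of (S3) with $\beta\asymp\delta$. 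Making these deformation bounds uniform over the admissible perturbation directions inside $\simplex^{k-1}$, and tracking the geometry-dependent constants (which control $\delta_0$ and the proportionality between $\alpha$ and $\beta$, cf.\ Proposition~\ref{prop:suff_SS_new}), is the technical heart of the argument.
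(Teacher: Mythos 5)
Your overall architecture---an occupancy/Chernoff bound placing one sample within distance $\delta$ of each vertex $\mathbf e_i$, a union bound over the $k$ vertices giving failure probability $O(k/d)$, and a geometric step converting vertex-proximity into the $(\alpha,\beta)$-SS property, calibrated through $C n^{(k-1)/2}\le d\le e^{n^c}$---is the same as the paper's, and your occupancy step is correct and essentially identical to Step 1 of the proof of Theorem~\ref{thm:sample} (the paper works with radius $r_d=(\log d/d)^{1/(k-1)}$ and then uses $d\ge Cn^{(k-1)/2}$ to get $r_d\lesssim \sqrt{\log (n\vee d)/n}$, which is equivalent to your calibration). The genuine gap is the geometric reduction: the claim that $k$ columns within $\delta$ of the respective vertices form a $(C_1\delta,C_2\delta)$-SS matrix is the entire substance of the proposition, and you do not prove it---you explicitly defer it as ``the technical heart.'' The paper closes this step with two short moves that your plan misses: (i) apply Proposition~\ref{prop:suff_SS_new} to the \emph{unperturbed} vertex matrix $\mathbf W^\sharp=\mathbf I_k$ (the case $x_{ij}=0$), which is $(\alpha,C\alpha)$-SS for \emph{every} $\alpha>0$; and (ii) transfer to the sampled columns $\mathbf W^0_1$ via the one-line Cauchy--Schwarz containment $[cone(\mathbf W^0_1)^\ast]^{\alpha-r_d}\subseteq[cone(\mathbf W^\sharp)^\ast]^{\alpha}$, under which condition (S3) for $\mathbf W^\sharp$ at level $(\alpha,\beta)$ yields (S3) for $\mathbf W^0_1$ at level $(\alpha-r_d,\beta)$: the perturbation is paid out of the noise-tolerance parameter $\alpha$, while $\beta$ is inherited unchanged. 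Taking $\alpha=C_2\sqrt{\log(n\vee d)/n}+r_d$ finishes. Your plan to prove the $(C_1\delta,C_2\delta)$ stability ``directly,'' with deformation bounds uniform over perturbation directions, amounts to re-proving Proposition~\ref{prop:suff_SS_new} in a perturbed setting; the transfer argument is exactly what makes that unnecessary.

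Moreover, the reduction in the form you state it cannot be proved, and your proposed fix is inconsistent with the definitions. You correctly observe that strictly interior perturbations of the vertices cannot satisfy (S1) exactly: the slice of $\mathcal K^\ast$ by the hyperplane $\mathbf x^T\mathbf 1_k=1$ is the ball inscribed in $\simplex^{k-1}$, tangent to every facet, and the tangency points never lie in the convex hull of interior points, so $cone(\tW)\supseteq\mathcal K^\ast$ fails. But you then assert that the $\alpha$-enlargement in the noise-robust formulation absorbs this; it does not, because in Definition~\ref{def:W_suff_catt_a} the enlargement applies only to the two sets appearing in (S3), while (S1) is required exactly, with no slack. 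Hence ``within $\delta$ of the vertices $\Rightarrow (C_1\delta,C_2\delta)$-SS per Definition~\ref{def:W_suff_catt_a}'' is false for interior samples, not merely hard. (This subtlety is delicate even in the paper: its transfer argument establishes only the (S3) part for the sampled columns, and a fully airtight statement requires either designated columns on the boundary of $\simplex^{k-1}$, as in Proposition~\ref{prop:suff_SS_new}, or a correspondingly relaxed form of (S1).) The constructive lesson is twofold: inherit the $\beta$-parameter from the unperturbed configuration via the dual-cone containment rather than re-deriving it, and treat (S1) separately and explicitly rather than folding it into the enlargement.
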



    



Next, we show the asymptotic consistency of $\hat{\mathbf C}_n$, that is, $\hat{\mathbf C}_n \to \mathbf C^0$ in probability as $(n,d)\to\infty$. In particular, we assume the existence of a sequence of $\alpha$ and $\beta$ values along which the $(\alpha,\beta)$-SS conditions are satisfied, which is summarized in the following.



\begin{assumptions} Assume the following: 
\begin{itemize}
\item[(A3')] For any sufficiently small $\epsilon>0$, there exists some $\beta_\epsilon$ such that $\beta_\epsilon \to 0$ when $\epsilon \to 0$, and there are $s$ columns of $\tW$ satisfying the ($\epsilon$, $\beta_\epsilon$)-SS condition, where $s$ is a bounded constant.
\item[(A4)] $\log d/n\to 0$ as $(n,d)\to\infty$. 
\end{itemize} 
\end{assumptions}

\begin{theorem}[Estimation Consistency]\label{Coro:consistency}
Under Assumptions (A1), (A2) and (A3') with a fixed $d$, we have
\begin{equation}\label{eq:result_con_new}
    \dis(\mathbf{\hat C}_n, \tC)  \to 0 \quad\mbox{in probability as $n\to\infty$.}
\end{equation}
If $d$ is also increasing in $n$ in a way such that Assumption (A4) holds, then
\begin{equation}\label{eq:result_con_new_n&d}
    \dis(\mathbf{\hat C}_n, \tC)  \to 0 \quad\mbox{in probability as $(n, d)\to\infty$.}
\end{equation}
\end{theorem}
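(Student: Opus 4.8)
The plan is to derive both consistency statements as direct corollaries of the finite-sample bound in Theorem~\ref{thm:main}, via a two-stage limiting argument in which the tolerable noise level $\alpha$ is first held fixed at a small value $\epsilon$ and only afterwards is $n$ (and $d$) sent to infinity. Fix an arbitrary target accuracy $\eta>0$. Since $\beta_\epsilon\to 0$ as $\epsilon\to 0$ by Assumption~(A3'), I would first select $\epsilon>0$ small enough that $D_2\sqrt{s}\,\beta_\epsilon<\eta/2$, where $D_2$ is the constant in~\eqref{eq:result} (a fixed constant determined by the geometry through (A1)--(A2), independent of $(n,d)$ and of the SS-parameters, hence of $\epsilon$). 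With this $\epsilon$ now frozen, Assumption~(A3') supplies $s$ columns of $\tW$ that are $(\epsilon,\beta_\epsilon)$-SS, and because $s$ is a bounded constant the requirement $\epsilon\ge C_1\sqrt{s\log(n\vee d)/n}$ of (A3) is met for all sufficiently large $n$: immediately when $d$ is fixed, and via (A4) in the joint limit, since $\log(n\vee d)/n=\max(\log n,\log d)/n\to 0$.

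For every such large $n$, Theorem~\ref{thm:main} applies with $\alpha=\epsilon$, $\beta=\beta_\epsilon$ and yields, on an event $\mathcal E_n$ of probability at least $(1-3/(n\vee d)^c)^d$,
\begin{equation*}
\dis(\mathbf{\hat C}_n,\tC)\;\le\; D_1\sqrt{\frac{s\log(n\vee d)}{n}}\;+\;D_2\sqrt{s}\,\beta_\epsilon .
\end{equation*}
The first term tends to $0$ because $\log(n\vee d)/n\to 0$ and $s$ is bounded, so it falls below $\eta/2$ for all large $n$; the second term is below $\eta/2$ by the choice of $\epsilon$. Hence $\dis(\mathbf{\hat C}_n,\tC)<\eta$ on $\mathcal E_n$, giving $\Pr\big(\dis(\mathbf{\hat C}_n,\tC)\ge\eta\big)\le 1-\Pr(\mathcal E_n)\le 1-(1-3/(n\vee d)^c)^d$.

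It remains only to show the success probability tends to $1$. By Bernoulli's inequality the failure probability is at most $3d/(n\vee d)^c$. When $d$ is fixed this is $3d/n^c\to 0$ for any $c>0$, establishing~\eqref{eq:result_con_new}. For~\eqref{eq:result_con_new_n&d} I would exploit the freedom to choose the exponent $c$ in Theorem~\ref{thm:main} — recall $D_1=C_3\sqrt{c}+C_4$ stays a finite constant for any fixed $c$ — and take $c>1$; then using $n\vee d\ge d$ gives $3d/(n\vee d)^c\le 3\,d^{\,1-c}\to 0$ (and likewise $\le 3\,n^{\,1-c}\to 0$ on the region $d<n$), so $(1-3/(n\vee d)^c)^d\to 1$. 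Since $\eta>0$ was arbitrary, both limits follow. The main delicacy is precisely this joint-limit probability control: because $d$ may grow sub-exponentially under (A4), the naive choice $c\le 1$ fails since $d^{\,1-c}\not\to 0$, and the argument hinges on the observation that enlarging $c$ strengthens the per-document concentration while inflating only the harmless constant $D_1$. Equally essential is ordering the limits correctly — fix $\epsilon$ to control the $\beta_\epsilon$-term, then grow $n$ and $d$ — so that the two error contributions are suppressed by independent mechanisms.
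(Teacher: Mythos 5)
Your proof is correct and follows essentially the same route as the paper, which treats this result as a direct corollary of Theorem~\ref{thm:main} (applied with the tolerable noise level $\alpha=\epsilon$ held fixed and $\beta=\beta_\epsilon$ from Assumption~(A3'), then letting $n$, and possibly $d$, grow). The details you supply that the paper leaves implicit --- freezing $\epsilon$ before taking limits so that $D_2\sqrt{s}\,\beta_\epsilon<\eta/2$, and exploiting the tunable exponent $c$ (with $D_1=C_3\sqrt{c}+C_4$) by taking $c>1$ so that $1-(1-3/(n\vee d)^c)^d\le 3d^{\,1-c}\to 0$ under (A4) --- are exactly the right way to make that corollary rigorous.
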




 
 Note that Proposition~\ref{prop:suff_SS_new} again provides a set of sufficient conditions for  Assumption (A3'). However, our current condition on $\mathbf W^0$ in Proposition~\ref{prop:suff_SS_new} is stronger than the SS condition on $\mathbf W^0$.
 We conjecture that Assumption (A3') is equivalent to the SS condition on $\mathbf W^0$, and leave a formal proof to future work.

\subsection{Comparison with Existing Theoretical Results}\label{section:theory_comparsion}
Our error bound in Theorem~\ref{thm:main} does not decay as the number of documents $d$ increases, which is seemingly weaker than some existing results, such as \citet{arora2012learning}, \citet{bansal2014provable}, \citet{anandkumar2014tensor}, \citet{ke2017new}, and \citet{Wang2019}. In particular, under the anchor word assumption, \citet{arora2012learning} and \citet{ke2017new} showed an error upper bound as $1/\sqrt{nd}$.


As discussed in Section \ref{subsec:comparison}, many algorithms for estimating the topic matrix can be explained through a two-stage optimization, corresponding to either a single stage or both. Under this perspective, each stage will incur an error. With the anchor word assumption, the main source of errors comes from the first stage of applying an SVD approach \citep{azar2001spectral,kleinberg2008using,kleinberg2003convergent,ke2017new} to find a $(k-1)$-dimensional hyperplane best approximating the data whose error bound is $1/\sqrt{nd}$. In fact, the anchor word assumption greatly reduces the search space in the second stage of identifying columns of $\mathbf C$ as either a subset of anchor words or a subset of pure topic documents, yielding negligible estimation error. 
For example, the vertex hunting algorithm adopted in \citet{ke2017new} directly focuses on all the $k$ combinations of the noisy data points in the $(k-1)$-dimensional hyperplane obtained in the first stage, and chooses the combination that minimizes the predetermined criterion. With the separability condition, they show that the estimated vertices are all close to their corresponding true vertices in a $(k-1)$-dimensional hyperplane, from which they draw the conclusion that the estimation error of the second stage is no larger than that of the first stage (see Lemma A.3, \citet{ke2017new}).

Without the anchor word (or separability) assumption, errors incurred in the second stage become dominant. Consider the toy examples illustrated in Figures \ref{fig:vol} and \ref{fig:two_suff_scatt} with $K=V=3$. The first stage is trivial since the data are already in $(k-1)$-dimension and projection to a hyperplane is not needed. In the second stage, we need to estimate a $k$-vertex convex polytope enclosing all true  word probability vectors of the documents that generates the data, which can be formulated as the non-regular statistical problem of boundary estimation. As pointed out by \citet{goldenshluger2004estimating, brunel2021estimation}, estimation of convex supports from noisy measurements as in our second stage is an extremely difficult problem. For example, in the one-dimensional case, even with the knowledge that the noises are homogeneous and follow a known Gaussian distribution, the minimax rate of boundary estimation based on $d$ observations is as slow as $1/\sqrt{\log d}$, let alone the more complex situation where the noise distribution is heterogeneous and only partly known. For example, in our case the projection $\mathbb P_{\mathbf C}^{(i)} \, \mathbf{\hat{u}}^{(i)}$ onto aff$(\mathbf C)$ of the sample word frequency vector $\mathbf{\hat{u}}^{(i)}$ for document $i$, for $i=1,\ldots,d$, plays the role of a noisy measurement from the convex polytope $\conv(\mathbf C)$. Note that a typical noise level in our second stage is of order $1/\sqrt{n}$ due to $n$ number of words within each document; however, the error distribution depends on both the position of the hyperplane aff$(\mathbf C)$ obtained in the first stage as well as the location of $\mathbb P_{\mathbf C}^{(i)} \mathbf{\hat{u}}^{(i)}$ on the data simplex $\simplex^{V-1}$. Therefore, we cannot expect to achieve the $1/\sqrt{nd}$ error bound as those separability condition based methods.
It is an interesting open problem of determining the precise minimax-optimal rate in topic models without separability condition and whether our error bound is optimal, which we leave as a future direction.



\section{Empirical Studies}\label{sec:emp}
In this section, we describe numerical studies we have performed to test our theoretical results. We report the performance of our model on two real datasets. 
 
\subsection{Simulation Studies}  \label{sec:simstu}

We have conducted three simulation studies to verify our theoretical results and to test the performance of our proposed algorithms.
In Section \ref{subsub:effect_SS}, we apply the MCMC-EM algorithm to the data generated by non-identifiable and identifiable models, and compare the recovered convex polytopes with the truth, to show the importance of the SS condition. In Section \ref{subsub:mis_prior}, we compare the proposed uniform prior $\bm \beta_0 = \mathbf 1_k$ with other priors, using data generated from different distributions, to  demonstrate empirically the robust performance of our estimator. In Section \ref{subsub:conver}, we apply Monte Carlo simulation to visualize the convergence of the proposed MLE.

\subsubsection{Effect of the SS Condition}\label{subsub:effect_SS}

Data are generated from a simple setup: $k=V=3$, $\tC = \mathbf I_3$, and the number of words for each document is sampled from $\text{Poisson} (2000)$. For the true matrix $\mathbf{W}^0$, we consider four different configurations for $\tw{i}$: (a) concentrated in the center of $\simplex^{2}$; (b) concentrated in the bottom right; (c) satisfying the SS condition; (d) spread around three vertices. 
The four configurations are displayed in Figure \ref{fig:sim1}, where the black dots denote $\tw{i}$ and  the large black triangle represents $\conv(\tC) =\simplex^{2}$. In cases (a)(b)(d), we set the number of documents  $d=1000$, while in case (c) we set $d=6.$

\begin{figure}
    \centering
    \begin{subfigure}[t]{0.24\textwidth}
        \centering
         \includegraphics[width=.8\textwidth]{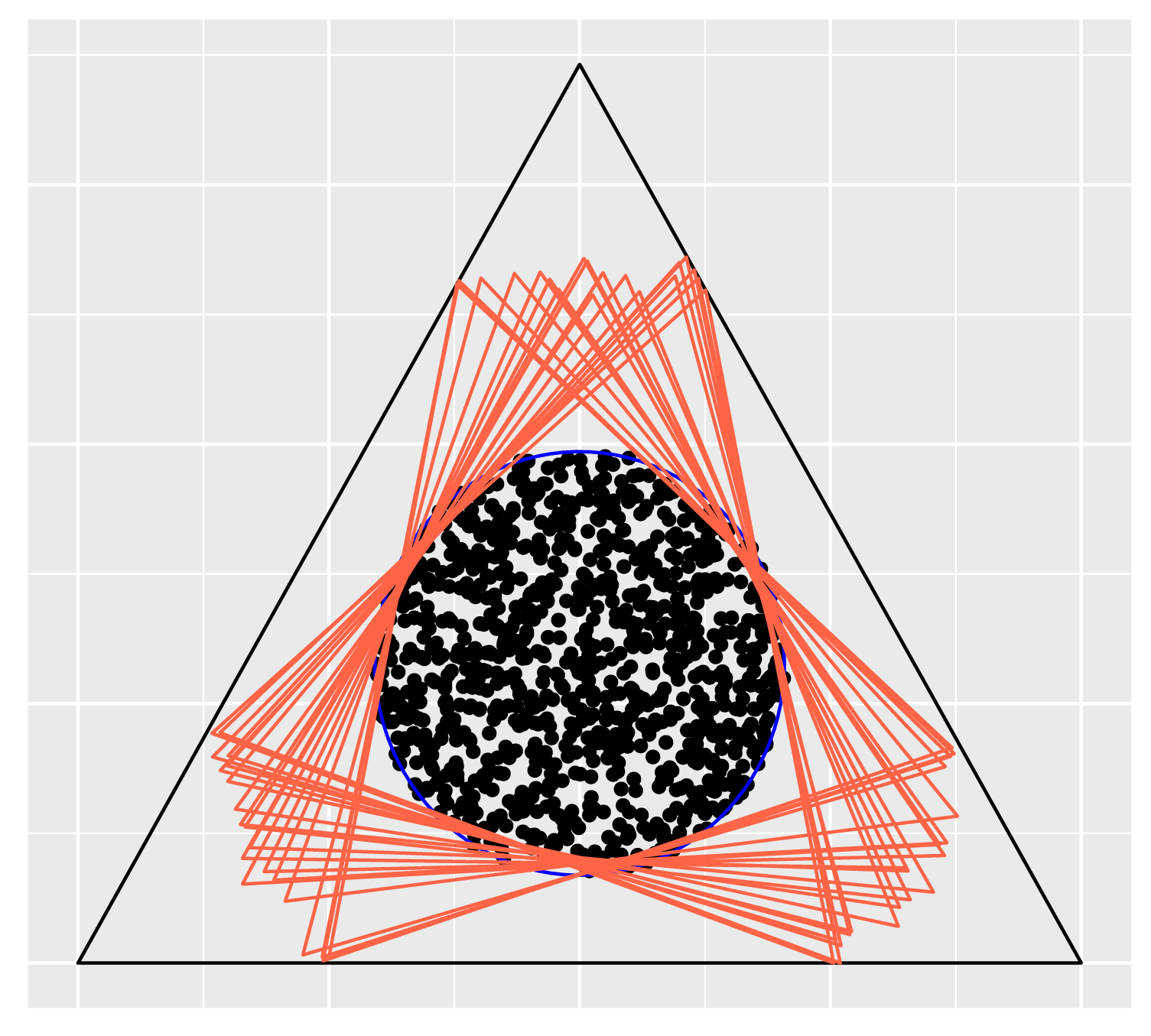}
    \label{fig:inner_circle}
    \caption{non-identifiable}
    \end{subfigure}
    \begin{subfigure}[t]{0.24\textwidth}
        \centering
         \includegraphics[width=.8\textwidth]{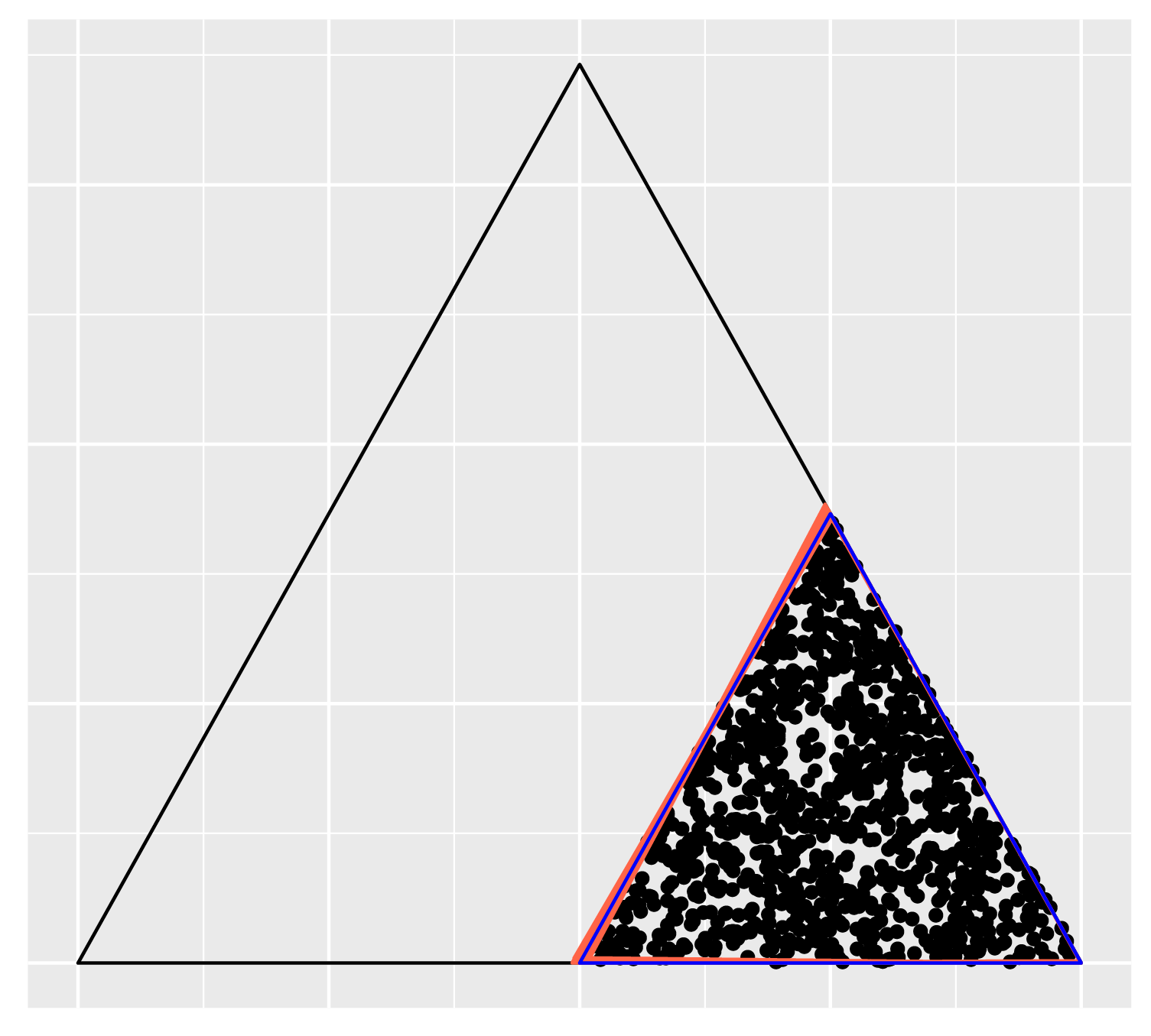}
     \label{fig:s_tri}
     \caption{non-identifiable}
     \end{subfigure}
            \begin{subfigure}[t]{0.24\textwidth}
        \centering
         \includegraphics[width=.8\textwidth]{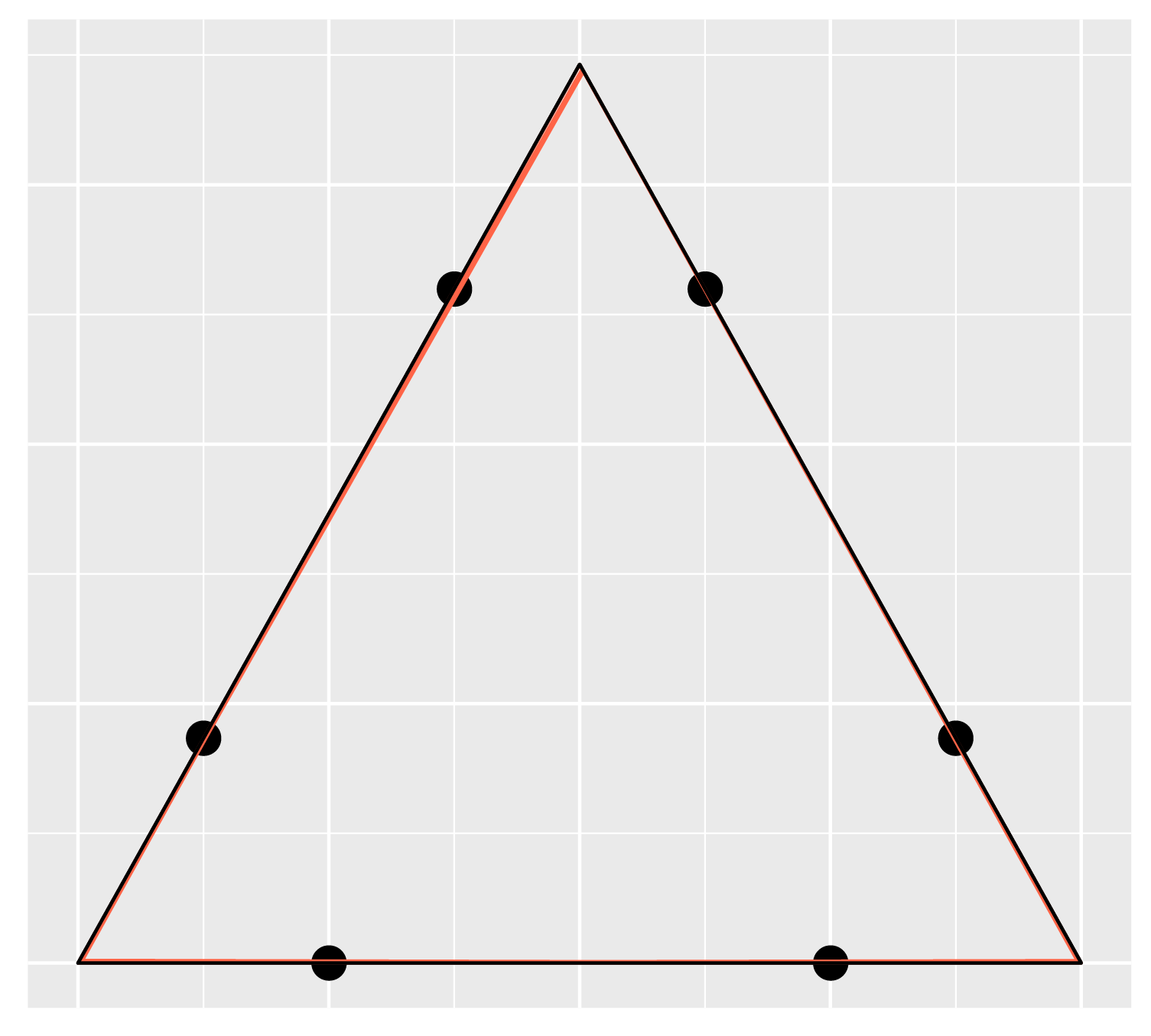}
    \label{fig:idf_1}
    \caption{identifiable}
    \end{subfigure}
    \begin{subfigure}[t]{0.24\textwidth}
        \centering
         \includegraphics[width=.8\textwidth]{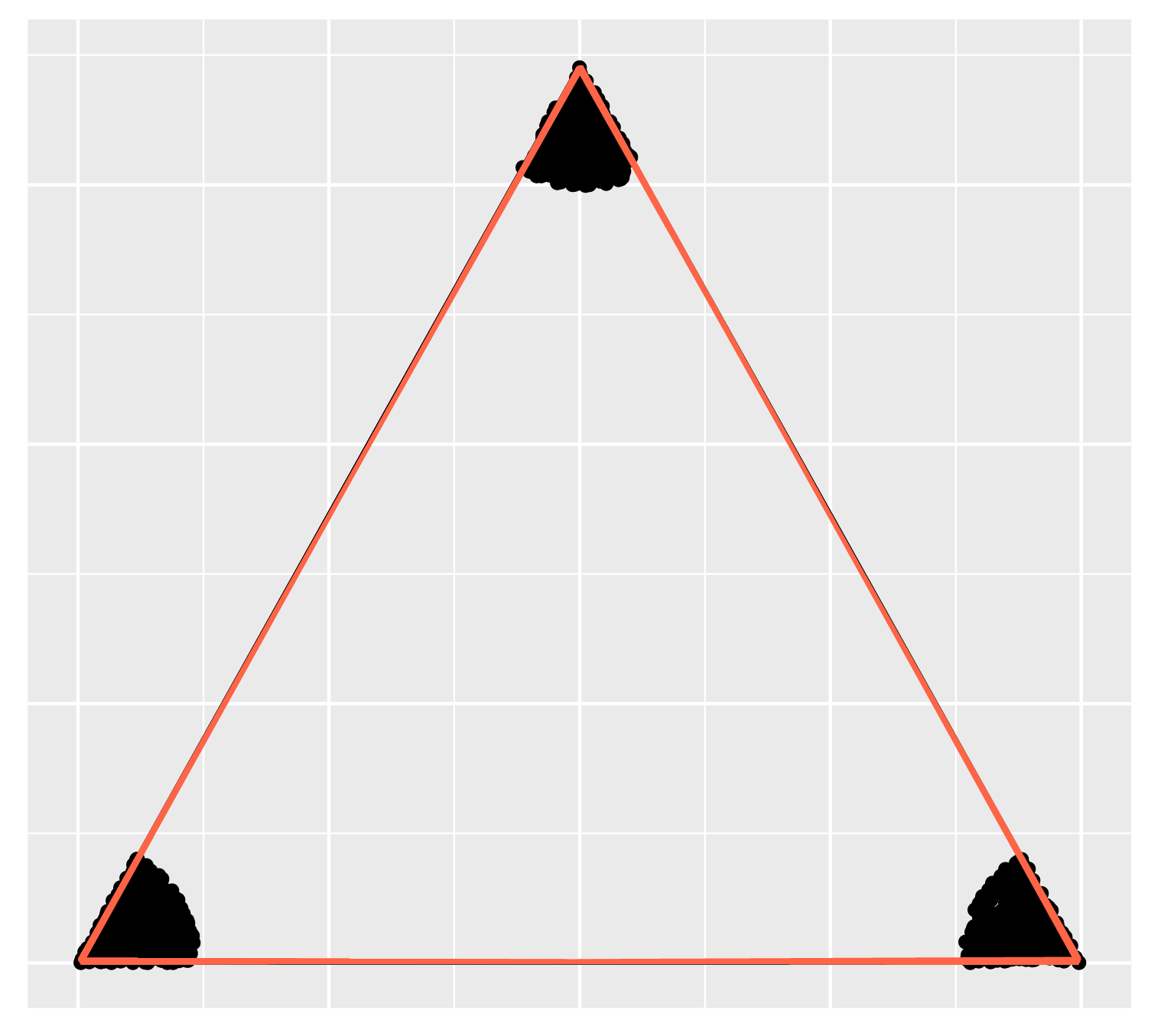}
     \label{fig:idf_2}
     \caption{identifiable}
    \end{subfigure}
    \caption{Results of the simulation in Section \ref{subsub:effect_SS}. Black dots are columns of $\tW{}$; the black triangle is the ground truth $\conv(\tC) = \simplex^2$; red triangles are estimates of $\conv(\hatCn)$.}
    \label{fig:sim1}
\end{figure}

 We run our MCMC-EM algorithm $20$ times with different initialization; Figure \ref{fig:sim1} displays the estimates of $\conv(\hatCn)$ as red triangles. Our simulation results demonstrate that if the SS condition is not satisfied, even when the sample size $d$ is fairly large ($d = 1000$ in (a) and (b)), $\conv(\tC)$ cannot be correctly recovered. However, when SS is satisfied, even with just a few samples ($d = 6$ in (c)), our algorithm can accurately recover the ground truth. Identifiability is thus determined primarily by the scatteredness of $\tw{i}$ rather than by the number of documents $d$.



%


\subsubsection{Performance under Prior Misspecification}\label{subsub:mis_prior}

When deriving our estimator, we choose to integrate over the mixing weights with respect to the uniform prior. A natural question is how our estimator would perform when the true mixing weight $\tW$ is  stochastically generated from a  distribution other than uniform. 

In this simulation study we consider the following setup: $k = 3$, $V = 1000$, $d=200$, $\tC \sim \text{Dirichlet}_V(\mathbf 1)$, and the number of words for each document is generated from $\text{Poisson} (20000)$. The true mixing weights $\tW$ are stochastically generated from the following distributions: (a) $\text{Dirichlet}_3(\mathbf 1)$; (b) uniformly from 10 Euclidean balls whose centers satisfy the SS condition; (c) a mixture of Dirichlet distributions: 
     $0.2 \times \text{Dir}_3(10,1,1) + 0.2 \times  \text{Dir}_3(0.1,1,1) + 0.2 \times \text{Dir}_3(10,10,1) + 0.2 \times  \text{Dir}_3(0.1,0.1,1) + 0.2\times  \text{Dir}_3(1,2,3)$.

\begin{table}[h]
\centering
\caption{Relative RMSE (Simulation 2). }

\begin{tabular}{c|ccccccccc}
\hline
priors       & \footnotesize (1, 1, 1)  & \footnotesize (0.1, 0.1, 0.1) & \footnotesize (10, 1, 1) & \footnotesize (0.1, 1, 1) &  \footnotesize (0.1, 0.1, 1) & \footnotesize  (10, 1, 0.1) & \footnotesize (1, 2, 3) & \footnotesize  (3, 3, 3) \\
\hline \hline
case (a)     & \textbf{0.048}    & 0.064           & 0.058      & 0.059       & 0.061         & 0.068        & \textbf{0.048}     & 0.049     \\
case (b)& \textbf{0.053}     & 0.065           & 0.062      & 0.060       & 0.061         & 0.075        & 0.056     & 0.057     \\
case (c)      & \textbf{0.040}    & 0.042           & 0.048      & \textbf{0.040}       & 0.041         & 0.049        & 0.042     & 0.044 \\
\hline
\end{tabular}
    \label{tab:sim_mis_prior}
\end{table}

We compare our estimator and estimators based on other Dirichlet priors using the averaged Relative RMSE (i.e., RMSE divided by the average of RMSE of random guesses) of $\mathbf{\hat C}_n$ over $100$ replications. The results are reported in Table \ref{tab:sim_mis_prior}. We can see that in all three cases, our proposed estimator outperforms other estimators.

\subsubsection{Convergence of the Estimation}\label{subsub:conver}

{\colorblue
We use the Monte Carlo simulation to show the convergence of the integrated likelihood $ F_{n\times d}(\mathbf C)$ and the MLE $\hatCn$.

In the first experiment, we consider the setup where $V=9$, $k=3$, and the sample size $n$ and number of documents $d$ increase simultaneously. 
The sample size $n$ varies as $n = 50, 200, 400, 1600$ and $d = n/5$. 
Let
$$\tC = \begin{bmatrix} 
2/3 & 1/6 & 1/6 \\
1/6 & 2/3 & 1/6\\
1/6 & 1/6 & 2/3 \\
\end{bmatrix}, \quad \tW = \begin{bmatrix}
5/6 & 0 & 1/6 & 5/6 & 1/6 & 0 \\ 
1/6 & 5/6 & 0 & 0 & 5/6 & 1/6 \\
0 & 1/6 & 5/6 & 1/6 & 0 & 5/6 \\
\end{bmatrix}.$$
We generate the ``noiseless'' data, i.e., $\mathbf X = n\mathbf{C}^1\mathbf{W}^1$, where $\mathbf{C}^1 = \frac{1}{3} \left(\mathbf{C}^{0T}, \mathbf{C}^{0T}, \mathbf{C}^{0T}\right)^T$, the first six columns of $\mathbf{W}^1$ are $\tW$, and the rest of the columns are randomly generated from $\text{Dir}_k (\mathbf 1)$. 
We compare the integrated likelihood among candidate topic matrices of the form  $\mathbf{C} = \frac{1}{3} \left(\mathbf{A}^{T}, \mathbf{A}^{T}, \mathbf{A}^{T}\right)^T$, where $\mathbf{A}$ is 
\begin{align}
    \label{mat:c}
\begin{bmatrix}
c & (1-c)/2 & (1-c)/2\\
(1-c)/2 & c & (1-c)/2\\
(1-c)/2 & (1-c)/2 & c\\
\end{bmatrix},
\end{align}
with $c$ taking values from $[0.5,1]$. We use the Monte Carlo method to evaluate the integrated likelihood \eqref{eq:lkh_fn}: 
$$\hat{F}_{n\times d, T}(\mathbf C) \approx  \prod_{i=1}^d  \left[\frac{1}{T} \sum_{t=1}^T f_n(\mathbf x^{(i)}| \mathbf u = \mathbf C\mathbf w_t)\right],$$
where $\mathbf w_1,\cdots, \mathbf w_T$ are i.i.d. random samples from $\text{Dir}_k (\mathbf 1)$ and $T = 100,000.$

Figure \ref{fig:lkh_vs_C_change_d} shows $\hat{F}_{n\times d, T}(\mathbf C)/\max_{\mathbf C}\hat{F}_{n\times d, T}(\mathbf C)$, the relative value of the estimated integrated likelihood. From the plot we can see that the integrated likelihood converges quickly to the truth as both $n$ and $d$ increase. That is because $n$ is the sample size, and the integrated likelihood is the product of $d$ terms. As $d$ increases, the product is more concentrated.

\begin{figure}[h!]
    \centering
    \includegraphics[width=0.6\textwidth]{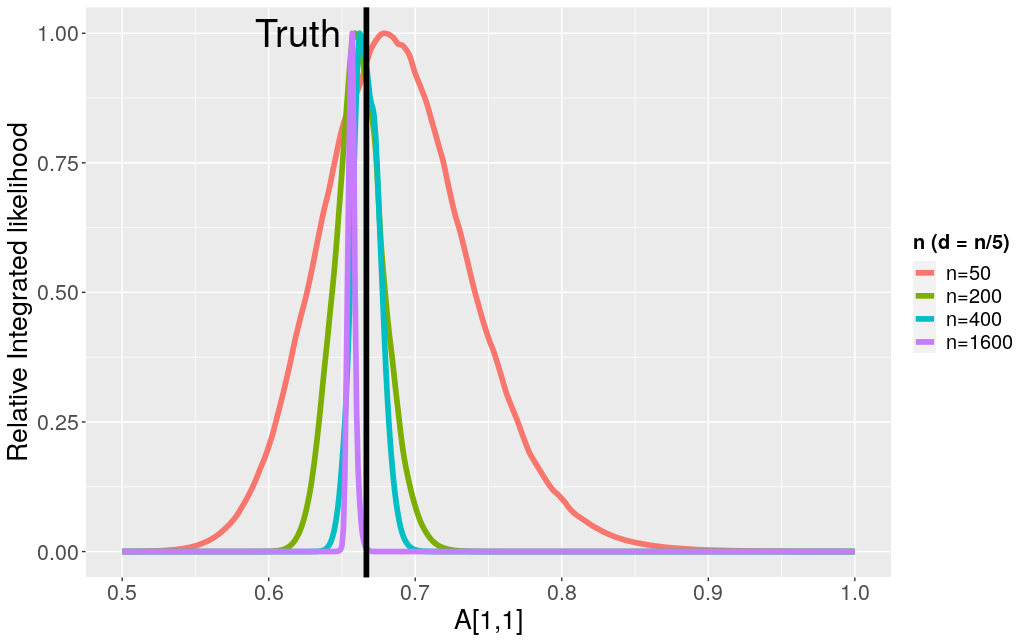}
    \caption{Results of the first experiment in Section \ref{subsub:conver}. The curves show the relative integrated likelihood of ``noiseless'' data when $n$ and $d$ increase simultaneously.}
\label{fig:lkh_vs_C_change_d}
\end{figure}

}

In the second experiment, we consider the case where $V=k=3$ and $d=6$. We  add some noise to the data, i.e., $\mathbf x^{(i)} \sim \text{Multi}(n, \tC\mathbf w^{0(i)})$. In Figure \ref{fig:sim_3_2} we plot the  multinomial likelihood density function $f_n(\mathbf u; \mathbf x^{(i)})$ (represented by the purple clusters) for the $d$ documents and the estimated $\conv(\hatCn)$ (represented by the red triangle).

\begin{figure}[h]
   \centering
    \begin{subfigure}[t]{0.24\textwidth}
        \centering
         \includegraphics[width=.7\textwidth]{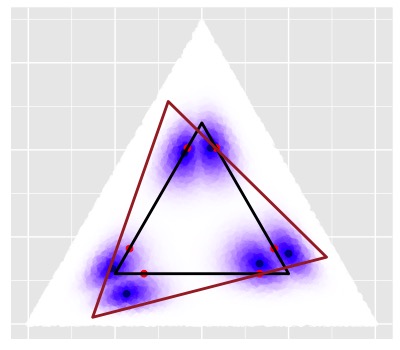}
    \label{fig:lkh_n60}
    \caption{$n=60$}
    \end{subfigure}
    \begin{subfigure}[t]{0.24\textwidth}
        \centering
         \includegraphics[width=.7\textwidth]{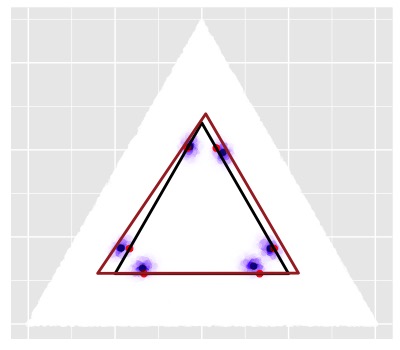}
     \label{fig:lkh_n600}
     \caption{$n=600$}
     \end{subfigure}
     \begin{subfigure}[t]{0.24\textwidth}
        \centering
         \includegraphics[width=.7\textwidth]{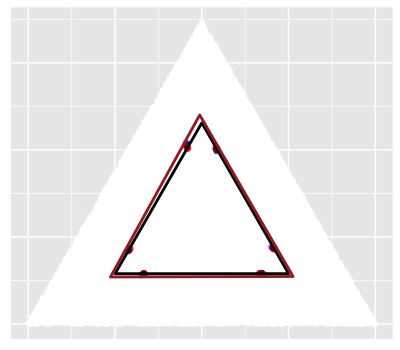}
    \label{fig:lkh_n6000}
    \caption{$n=6000$}
    \end{subfigure}
      \begin{subfigure}[t]{0.24\textwidth}
        \centering
         \includegraphics[width=.7\textwidth]{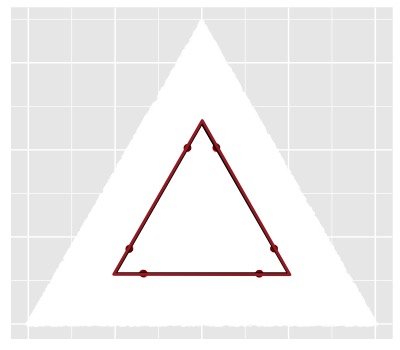}
    \label{fig:lkh_n60000}
    \caption{$n=60000$}
    \end{subfigure}
    \caption{Results of the second experiment in Section \ref{subsub:conver}. The likelihood density $f_n(\mathbf u; \mathbf x^{(i)})$ over $\simplex^2$ for different $n$. The colored circles represent the values of $f_n(\mathbf u; \mathbf x^{(i)})$: the darker the color is, the higher the likelihood is. The black triangle is $\conv(\tC)$; the dark red triangle is $\conv(\mathbf {\hat C}_n)$ produced by MCMC-EM. The red dots are the true means $\mathbf u^{0(i)}$, and the black dots are the sample means $\mathbf {\hat u}^{(i)}$.}
    \label{fig:sim_3_2}
\end{figure}

We observe that $\conv(\hatCn)$ tends to cover these density balls while maintaining its volume small.  Recall that $\hatCn = \argmax_{\mathbf C}\prod_{i=1}^d  \int_{\conv(\mathbf C)}  \frac{f_n(\mathbf x^{(i)}| \mathbf u)}{|\conv(\mathbf C)|} d \mathbf u$. $\conv(\hatCn)$ can be considered to be the convex polytope that has the highest value of the averaged likelihood density, as well as the smallest convex polytope containing the sample means $\mathbf{\hat u}^{(i)}$. Therefore, $\conv(\hatCn)$ tends to trade off its volume for a larger coverage of the density balls. In this case, the true means $\mathbf u^{0(i)}$ are all located on the boundary of $\conv(\tC)$; to fulfill the SS condition, a fraction of each circle thus lies outside $\conv(\tC)$. Consequently, the averaged likelihood density over $\conv(\hatCn)$ is larger than that of $\conv(\tC)$, though $|\conv(\hatCn)| > |\conv(\tC)|$. As proved in Theorem \ref{thm:main}, the convergence rate of $\hatCn$, in the order of $\sqrt{\log (n\vee d)/n}$, is slightly slower than that of $\mathbf {\hat u}^{(i)}$, which is in the order of $\sqrt{1/n}$.

\subsection{Real Applications}\label{sec:real_data}

We next apply our algorithm to some real-world datasets. In Section \ref{subsubsec:real_quant} we compare the quantitative performance of our algorithms,  and of several baseline methods, on two text datasets: an NIPS dataset that contains long academic documents, and the Daily Kos dataset that contains short news documents. In Section \ref{subsubsec:real_taxi} we analyze a taxi-trip dataset that contains  New York City (NYC) taxi trip records, including pick-up and drop-off locations.


\subsubsection{Text Data sets}
\label{subsubsec:real_quant}

The NIPS dataset\footnote{\href{ https://archive.ics.uci.edu/ml/datasets/NIPS+Conference+Papers+1987-2015}{https://archive.ics.uci.edu/ml/datasets/NIPS+Conference+Papers+1987-2015}} contains $V=11463$ unique words and $d=5811$ NIPS conference papers, with an average document length of $1902$ words. The Daily Kos dataset\footnote{\href{ https://archive.ics.uci.edu/ml/machine-learning-databases/bag-of-words/}{https://archive.ics.uci.edu/ml/machine-learning-databases/bag-of-words/}} contains $V=6906$ unique words and $d=3430$ Daily Kos blog entries, with an average document length of $136$ words. As the two datasets are formatted in document-term matrices without stop words or rarely occurring words, we do not apply any pre-processing procedures.

We compare the performance of our algorithm ($\text{MC}^2$-EM) with the following baseline algorithms: Anchor Free (AnchorF) \citep{huang2016anchor}, Geometric Dirichlet Means (GDM) \citep{yurochkin2016geometric}, and two MCMC algorithms---one based on Gibbs sampler (Gibbs) \citep{griffiths2004finding}, and the other based on a partially collapsed Gibbs sampler (pcLDA) \citep{magnusson2018sparse, terenin2018polya}. The hyper-parameters of the baselines are set as their default, except that the prior of the mixing weights in Gibbs and pcLDA is set as uniform as ours.  For our algorithm, the number of MCMC samples is $100$ without burn-in; the stopping criterion is that the relative change of likelihood goes below $10^{-9}$ or that $200$ EM iterations are completed, whichever comes first. 

To evaluate the results, we employ the following three metrics. \emph{Topic Coherence} is used to measure the single-topic quality, defined as $\sum_{l=1}^k\sum_{v_1, v_2 \in \mathcal V_l} \log \left(\nicefrac{\text{freq}(v_1, v_2) + \epsilon} {\text{freq}(v_2)}\right)$, where $\mathcal V_l$ is the leading 20 words for topic $l$, $\text{freq}(\cdot)$ is the occurrence count, and $\epsilon$ is a small constant added to avoid numerical issues. \emph{Similarity Count} is used to measure similarity between topics \citep{arora2013practical,huang2016anchor}; it is obtained simply by adding up the overlapped words across $\mathcal V_l$.  \emph {Perplexity Score} is used to measure  goodness of fit, which is the multiplicative inverse of the likelihood, normalized by the number of words. For the first metric, the larger the better; for the latter two, the smaller the better. (Detailed definition of these three metrics can be found in Appendix F.)

{\colorblue
In practice, the number of topics $k$ is unknown. We propose a procedure to select $k$ based on the \emph{effective rank} of the sample document-term matrix $\hat{\mathbf U}$. Since the topic matrix $\mathbf{C}$ is assumed to have full rank (Theorem \ref{thm:idf}), the true term-document matrix $\mathbf{U}$ has rank $k$. By Weyl's inequality \citep{weyl1912asymptotische}, the singular values of $\hat{\mathbf{U}}$ are expected to be close to those of $\mathbf{U}$. Therefore we can plot the ordered singular values of $\hat{\mathbf{U}}$ versus its index, and then select $k$ by detecting the location of a significant drop of the curve.  See Appendix F for a simulation illustrating this approach.
}

\begin{table}[h!]
\caption{Experiment results on the NIPS and the Daily Kos Datasets.}
\centering
 \small{\begin{tabular}{lrrrrr|rrrrr}
 \hline
 & \multicolumn{5}{c|}{NIPS}  & \multicolumn{5}{c}{Daily Kos}\\
 & AnchorF & GDM   & Gibbs &   pcLDA  & $\text{MC}^2$-EM & AnchorF & GDM   & Gibbs &  pcLDA  & $\text{MC}^2$-EM       \\
  \hline 
  \multicolumn{11}{c}{Topic Coherence }\\
 \hline \hline
$k=5$         & -904  & -501  & -365   & -355  & \textbf{-342} & -699  & \textbf{-643}  & -752  & -709    & -723   \\
$k=10$        & -1954 & -1083 & -960   & \textbf{-942}  & -975& -1659 & \textbf{-1551} & -1708 & -1609   & -1614 \\
$k=15$        & -2935 & -1770 & -1648  & -1599 & \textbf{-1573} & -2727 & \textbf{-2307} & -2465 & -2380   & -2411 \\
$k=20$        & -3664 & -2409 & -2314  & -2373 & \textbf{-2254}  & -3942 & \textbf{-3182} & -3840 & -3115   & -3299 \\
 \hline 
  \multicolumn{11}{c}{Similarity Counts}\\
  \hline \hline
$k=5$         & 24  & \textbf{10}    & 25    & 26    &24 & 24  & \textbf{14}  & 23  & 25  & 25  \\
$k=10$        & 69  & \textbf{44}    & 63    & 67    &63& 85  & \textbf{55}  & \textbf{55}  & 66  & 57  \\
$k=15$        & 102 & \textbf{98}    & 99    & 99      &102  & 151 & 111 & \textbf{78}  & 103  & 90  \\
$k=20$        & 154 & 161   & \textbf{134}   & 155     &147  & 224 & 175 & \textbf{116} & 153 & 143 \\
 \hline 
  \multicolumn{11}{c}{Perplexity Score}\\
  \hline \hline
$k=5$         & 4431 & 2955  & 2256  & 2183  & \textbf{2182} & 2252 & 2252 & 1755 & 1758 & \textbf{1724} \\
$k=10$        & 4317  & 2479  & 2067 & \textbf{1973}    & \textbf{1973} & 2124 & 2004 & 1546 & 1532 & \textbf{1507} \\
$k=15$        & 4176  & 2273  & 1975 & \textbf{1870}  & 1874& 2061 & 1912 & 1452 & 1438 & \textbf{1404} \\
$k=20$        & 3877  & 2166  & 1918 & 1801  & \textbf{1800}& 2012 & 1791 & 1405 & 1384 & \textbf{1342}\\
\hline
\end{tabular}
\label{tab:real_data_results}}
\end{table}

\begin{table}[h!]
\caption{Results on the Daily Kos dataset based on $k=7$ chosen by the singular values plot.}
\centering
 \small{\begin{tabular}{lrrrrr}
 \hline
  & \multicolumn{5}{c}{Daily Kos ($k=7$)}\\
 & AnchorF & GDM   & Gibbs &  pcLDA  & $\text{MC}^2$-EM       \\
  \hline 
Topic Coherence & \textbf{-998}  & -1007  & -1095  & -1090    & -1053  \\
 \hline 
Similarity Counts & 47  & \textbf{36}  & 40  & 40    & 40   \\
 \hline 
Perplexity Score & 2190  & 2147  & 1649  & 1643    & \textbf{1607}   \\
 \hline 
\end{tabular}
\label{tab:kos_7}}
\end{table}

The results are summarized in Table \ref{tab:real_data_results} and Table \ref{tab:kos_7}, where $k=5$ and $k=7$, respectively, are the recommended number of topics for NIPS and Daily Kos dataset, chosen by the procedure mentioned above (the singular values plots can be found in Appendix F).
The best score in each case is highlighted in boldface. Overall, our estimator ($\text{MC}^2$-EM) gives promising results. 
For all three metrics in both datasets, it gives the highest score or a score close to the highest. For topic coherence, it is the best for $k=5,15,$ and $20$ in NIPS. 
For similarity counts, it performs similarly to Gibbs and pcLDA in both datasets, and in Daily Kos largely outperforms AnchorF and GDM for $k=10, 15$, and $20$. For perplexity score, it is consistently the best in Daily Kos, and in NIPS except for $k=15$; its scores are  very close  to the best one given by pcLDA.

The leading $10$ topic words given by $\text{MC}^2$-EM can be found in the supplementary material.

\subsubsection{New York Taxi-trip Dataset}
\label{subsubsec:real_taxi}
Reinforcement learning algorithms have been widely used in solving real-world Markov decision problems. Use of a compact representation of the underlying states, known as state aggregation, is crucial for those algorithms to scale with large datasets. As shown below, learning a soft state aggregation \citep{singh1995reinforcement} is equivalent to estimating a  topic model.  

We say that a Markov chain $X_0, X_1, \cdots, X_T$ admits a \emph{soft state aggregation} with $k$ meta-states, if there exist random variables $Z_0, Z_1, \cdots, Z_{n-1}\in \{1,\cdots, k\}$ such that 
\begin{equation} \label{eq:soft:state}
\mathds P (X_{t+1}|X_t) = \sum_{l=1}^k \mathds P(Z_t=l|X_t)\cdot \mathds P (X_{t+1}|Z_t = l),
\end{equation}
for all $t$ with probability 1 \citep{singh1995reinforcement}. Here, $\mathds P(Z_t=l|X_t)$ and $\mathds P(X_{t+1}|Z_t = l)$ are independent of $t$ and are referred to as the \emph{aggregation distributions} and \emph {disaggregation distributions}. Let $\mathbf U \in \mathds R^{V\times V}$ denote the transition matrix with $U_{ji} = \mathds P(X_{t+1}=j | X_t = i)$. Let  $\mathbf{C} \in \mathds R^{V\times k}$ and $\mathbf{W} \in \mathds{R}^{k\times V}$ denote the disaggregation  and aggregation distribution matrices, respectively, with $C_{jl} = \mathds P(X_{t+1}=j|Z_t = l)$ and $W_{li} = \mathds P(Z_t=l|X_t=i)$. Then (\ref{eq:soft:state}) can be written as $\mathbf{U} = \mathbf{C} \mathbf{W}$, the same as the matrix form for topic modelling. 

In this section, we consider a New York taxi-trip\footnote{\href{ https://www1.nyc.gov/site/tlc/about/tlc-trip-record-data.page}{https://www1.nyc.gov/site/tlc/about/tlc-trip-record-data.page}} dataset. This dataset contains $\sum_{i=1}^d n_i  = 7,667,792$ New York City yellow cab trips in January 2019. The location information is discretized into $V=263$ taxi zones with $69$ in Manhattan, $69$ in Queens, $61$ in Brooklyn, $43$ in Bronx, $20$ in Staten Island, and 1 in EWR. For each trip, we are given its  pick-up  and drop-off zones. On the left of Figure \ref{fig:taxi_example}, we plot  $30$ example trips from the  data. Following a similar analysis of this dataset from \cite{duan2019state}, we aim to merge the $V=263$ taxi zones into meta-states via soft state aggregation.

\begin{figure}[h!]
\begin{minipage}{0.3\textwidth}
    \centering
    \includegraphics[width=0.8\textwidth]{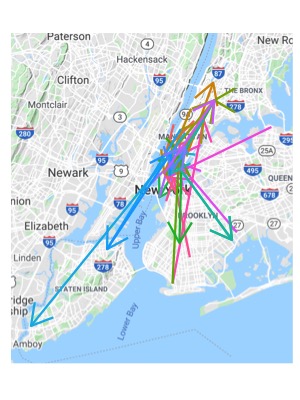}
\end{minipage}
\begin{minipage}{0.69\textwidth}
    \centering
    \includegraphics[width=0.99\textwidth]{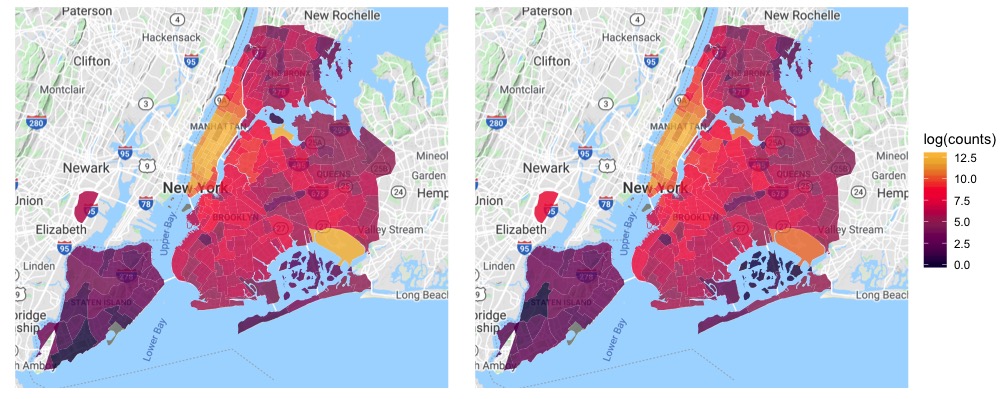}
\end{minipage}
    \caption{NYC taxi-trip data glance. Left: 30 example trips with arrows pointing from pick-up zones to drop-off zones. Middle: the pick-up distribution. Right: the drop-off distribution.}
    \label{fig:taxi_example}
\end{figure}

\begin{figure}
    \centering
    \includegraphics[width=0.9\textwidth]{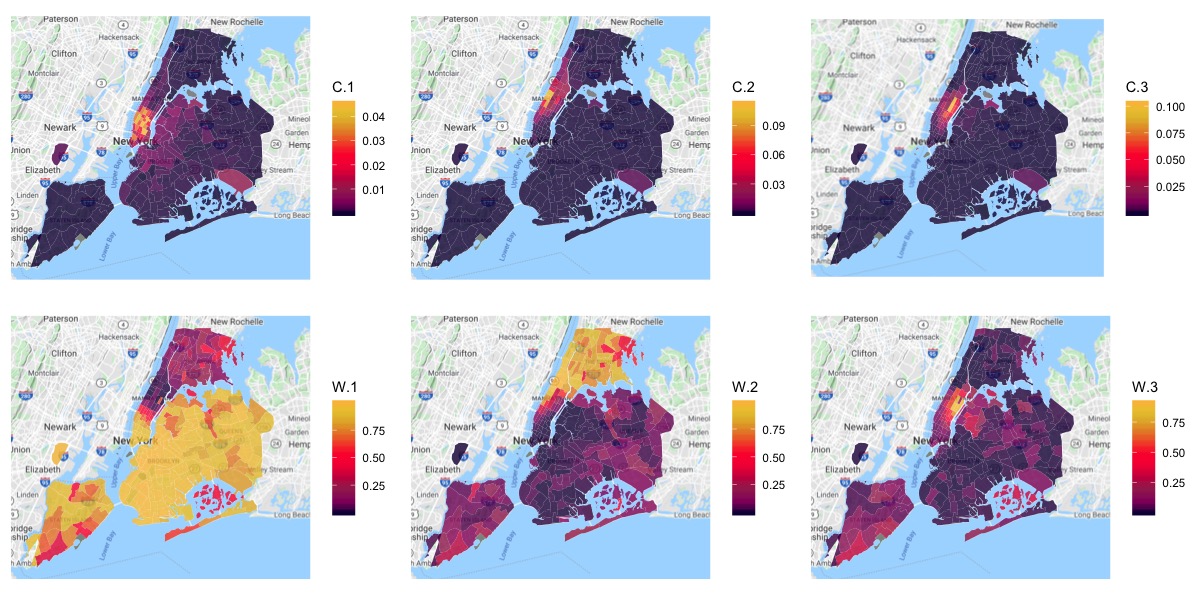}
    \caption{Estimation results for NYC taxi-trip data for $k=3$. The top three plots represent the estimated disaggregation distributions (topic vectors) $\mathbf {\hat C}_1, \mathbf {\hat C}_2, \mathbf {\hat C}_3 \in \mathds{R}^V$, where $\mathbf {\hat C}_l = \mathds P(X_{t+1}|Z_t = l)$. The bottom three plots represent the estimated aggregation distributions $\mathbf {\hat W}_1, \mathbf {\hat W}_2, \mathbf {\hat W}_3 \in \mathds{R}^V$, where $\mathbf {\hat W}_l = \mathds P(Z_t=l|X_t)$.}
    \label{fig:taxi_k3}
\end{figure}

In the middle and the right of Figure \ref{fig:taxi_example}, we use heat maps to visualize the distributions of the trip counts for pick-up and drop-off over $V=263$ zones. Most of the traffic concentrates in midtown and downtown Manhattan, as well as at the JFK airport on the southeast side of Queens, for both pick-up and drop-off. 

At the top of Figure \ref{fig:taxi_k3} we plot the estimation results for the drop-off distributions conditioned on the meta-state, $\mathds P(X_{t+1}|Z_t=l)$. We observe that the drop-off traffic is decomposed into three clusters, (1) downtown Manhattan, (2) west midtown Manhattan, and (3) east midtown Manhattan, for each of the three meta states (topics); this implies that people dropped off in downtown Manhattan may come from the first meta state, and that people dropped off in midtown east and west may come from the second and the third meta states, respectively. The JFK airport has a relatively high probability mass in all three states but is not on the top list for any of them, which implies that people arriving at JFK may come from anywhere in NYC. 

At the bottom of Figure \ref{fig:taxi_k3} we plot the conditional probability over the meta-state (topics), given the pick-up zone, $\mathds P(Z_t =l | X_{t})$. The three meta states consist of (1) Staten Island, Brooklyn, Queens, and downtown Manhattan; (2) uptown Manhattan and Bronx; and (3) east midtown Manhattan. Note that the scales of the estimates for $\mathbf C$ and $\mathbf W$ are quite different. In specific, the sum of values over each map of the top three is 1 since $\sum_{v=1}^V \mathds P(X_{t+1} = v|Z_t=l) =1, \;l=1,2,3$, while the sum of values for each zone over the bottom three maps is 1 since $\sum_{l=1}^3 \mathds P(Z_t =l | X_{t} = v) =1 , \;v = 1,\cdots, V$. The interpretation of, say, the second meta state, is that the destinations of trips starting from uptown Manhattan and Bronx are likely to be in midtown Manhattan. We observe that the pick-up and the drop-off locations in the same meta state are generally close regionally; this result is reasonable, as people tend to take a taxi for short trips, preferring less expensive public transportation for longer trips.

The estimated disaggregation and aggregation distributions plots for $k=9$ can be found in the supplementary material. They reveal that the traffic in the first eight meta states is within Manhattan, which is the most heavy-traffic place in NYC, and that the partition is more fine-grained compared with the results for $k=3$. Similar to the results for $k=3$, the pick-up and drop-off locations for each meta state are regionally close at this time. It is interesting that  such a strong regional relationship emerges, since the data fed into our algorithm do not contain any regional information. 

{\colorblue
\section{Discussion}
In this paper, we introduce a new set of geometric conditions for topic model identifiability under volume minimization, a weaker set than the commonly used separability conditions. For computation, we propose a maximum likelihood estimator of the latent topics matrix, based on an integrated likelihood. Our approach implicitly promotes volume minimization. 
We conduct finite-sample error analysis for the estimator and discuss the connection of our results to existing ones. 
Experiments on simulated and real datasets demonstrate the strength of our method.
Our work makes an important contribution to the general theory of estimation of latent structures arising for topic models. 
Some interesting future work might include: (1)~exploring a sufficient and necessary condition for model identifiability, as the SS condition is not necessary; (2)~providing explicit verifiable sufficient conditions for the $(\alpha, \beta)$-SS condition --- we conjecture that the $(\alpha, \beta)$-SS condition can be implied by the SS condition; (3)~establishing the minimax rate of convergence of topic matrix estimation, and verifying  whether the proposed estimator is (nearly) optimal. 
Although presented in the context of topic models, results from our work are immediately applicable to a wide range of mixed membership models arising from various machine learning applications. In addition, we may incorporate additional low-dimensional structures into the model, such as (group) sparsity, to enhance the estimation accuracy.
}

\bibliographystyle{chicago}

\bibliography{ref}

\begin{thebibliography}{}

\bibitem[\protect\citeauthoryear{Anandkumar, Foster, Hsu, Kakade, and
  Liu}{Anandkumar et~al.}{2012}]{anandkumar2012spectral}
Anandkumar, A., D.~P. Foster, D.~J. Hsu, S.~M. Kakade, and Y.-K. Liu (2012).
\newblock A spectral algorithm for latent dirichlet allocation.
\newblock In {\em Advances in Neural Information Processing Systems}, pp.\
  917--925.

\bibitem[\protect\citeauthoryear{Anandkumar, Ge, Hsu, Kakade, and
  Telgarsky}{Anandkumar et~al.}{2014}]{anandkumar2014tensor}
Anandkumar, A., R.~Ge, D.~Hsu, S.~M. Kakade, and M.~Telgarsky (2014).
\newblock Tensor decompositions for learning latent variable models.
\newblock {\em Journal of Machine Learning Research\/}~{\em 15\/}(1),
  2773--2832.

\bibitem[\protect\citeauthoryear{Anandkumar, Hsu, and Kakade}{Anandkumar
  et~al.}{2012}]{anandkumar2012method}
Anandkumar, A., D.~Hsu, and S.~M. Kakade (2012).
\newblock A method of moments for mixture models and hidden markov models.
\newblock In {\em Conference on Learning Theory}, pp.\  33--1. JMLR Workshop
  and Conference Proceedings.

\bibitem[\protect\citeauthoryear{Arora, Ge, Halpern, Mimno, Moitra, Sontag, Wu,
  and Zhu}{Arora et~al.}{2013}]{arora2013practical}
Arora, S., R.~Ge, Y.~Halpern, D.~Mimno, A.~Moitra, D.~Sontag, Y.~Wu, and M.~Zhu
  (2013).
\newblock A practical algorithm for topic modeling with provable guarantees.
\newblock In {\em International Conference on Machine Learning}, pp.\
  280--288.

\bibitem[\protect\citeauthoryear{Arora, Ge, and Moitra}{Arora
  et~al.}{2012}]{arora2012learning}
Arora, S., R.~Ge, and A.~Moitra (2012).
\newblock Learning topic models--going beyond {SVD}.
\newblock In {\em 2012 IEEE 53rd Annual Symposium on Foundations of Computer
  Science}, pp.\  1--10.

\bibitem[\protect\citeauthoryear{Azar, Fiat, Karlin, McSherry, and Saia}{Azar
  et~al.}{2001}]{azar2001spectral}
Azar, Y., A.~Fiat, A.~Karlin, F.~McSherry, and J.~Saia (2001).
\newblock Spectral analysis of data.
\newblock In {\em Proceedings of the thirty-third annual ACM symposium on
  Theory of computing}, pp.\  619--626.

\bibitem[\protect\citeauthoryear{Bansal, Bhattacharyya, and Kannan}{Bansal
  et~al.}{2014}]{bansal2014provable}
Bansal, T., C.~Bhattacharyya, and R.~Kannan (2014).
\newblock A provable svd-based algorithm for learning topics in dominant
  admixture corpus.
\newblock {\em Advances in Neural Information Processing Systems\/}~{\em 27},
  1997--2005.

\bibitem[\protect\citeauthoryear{Berger, Liseo, and Wolpert}{Berger
  et~al.}{1999}]{berger1999integrated}
Berger, J.~O., B.~Liseo, and R.~L. Wolpert (1999).
\newblock Integrated likelihood methods for eliminating nuisance parameters.
\newblock {\em Statistical science\/}~{\em 14\/}(1), 1--28.

\bibitem[\protect\citeauthoryear{Blei, Ng, and Jordan}{Blei
  et~al.}{2003}]{blei2003latent}
Blei, D.~M., A.~Y. Ng, and M.~I. Jordan (2003).
\newblock Latent dirichlet allocation.
\newblock {\em Journal of Machine Learning Research\/}~{\em 3\/}(Jan),
  993--1022.

\bibitem[\protect\citeauthoryear{Boyd and Vandenberghe}{Boyd and
  Vandenberghe}{2004}]{boyd2004convex}
Boyd, S. and L.~Vandenberghe (2004).
\newblock {\em Convex optimization}.
\newblock Cambridge University Press.

\bibitem[\protect\citeauthoryear{Brunel, Klusowski, and Yang}{Brunel
  et~al.}{2021}]{brunel2021estimation}
Brunel, V.-E., J.~M. Klusowski, and D.~Yang (2021).
\newblock Estimation of convex supports from noisy measurements.
\newblock {\em Bernoulli\/}~{\em 27\/}(2), 772--793.

\bibitem[\protect\citeauthoryear{Chen, Chen, and Li}{Chen
  et~al.}{2016}]{chen2016perturbation}
Chen, Y.~M., X.~S. Chen, and W.~Li (2016).
\newblock On perturbation bounds for orthogonal projections.
\newblock {\em Numerical Algorithms\/}~{\em 73\/}(2), 433--444.

\bibitem[\protect\citeauthoryear{Craig}{Craig}{1994}]{craig1994minimum}
Craig, M.~D. (1994).
\newblock Minimum-volume transforms for remotely sensed data.
\newblock {\em IEEE Transactions on Geoscience and Remote Sensing\/}~{\em
  32\/}(3), 542--552.

\bibitem[\protect\citeauthoryear{Davis and Kahan}{Davis and
  Kahan}{1970}]{davis1970rotation}
Davis, C. and W.~M. Kahan (1970).
\newblock The rotation of eigenvectors by a perturbation. {III}.
\newblock {\em SIAM Journal on Numerical Analysis\/}~{\em 7\/}(1), 1--46.

\bibitem[\protect\citeauthoryear{Devroye et~al.}{Devroye
  et~al.}{1983}]{devroye1983equivalence}
Devroye, L. et~al. (1983).
\newblock The equivalence of weak, strong and complete convergence in $ l\_1 $
  for kernel density estimates.
\newblock {\em The Annals of Statistics\/}~{\em 11\/}(3), 896--904.

\bibitem[\protect\citeauthoryear{Donoho and Stodden}{Donoho and
  Stodden}{2004}]{donoho2004does}
Donoho, D. and V.~Stodden (2004).
\newblock When does non-negative matrix factorization give a correct
  decomposition into parts?
\newblock In {\em Advances in Neural Information Processing Systems}, pp.\
  1141--1148.

\bibitem[\protect\citeauthoryear{Duan, Ke, and Wang}{Duan
  et~al.}{2019}]{duan2019state}
Duan, Y., T.~Ke, and M.~Wang (2019).
\newblock State aggregation learning from markov transition data.
\newblock In {\em Advances in Neural Information Processing Systems}, pp.\
  4488--4497.

\bibitem[\protect\citeauthoryear{Freund and Orlin}{Freund and
  Orlin}{1985}]{freund1985complexity}
Freund, R.~M. and J.~B. Orlin (1985).
\newblock On the complexity of four polyhedral set containment problems.
\newblock {\em Mathematical programming\/}~{\em 33\/}(2), 139--145.

\bibitem[\protect\citeauthoryear{Fu, Ma, Huang, and Sidiropoulos}{Fu
  et~al.}{2015}]{fu2015blind}
Fu, X., W.-K. Ma, K.~Huang, and N.~D. Sidiropoulos (2015).
\newblock Blind separation of quasi-stationary sources: Exploiting convex
  geometry in covariance domain.
\newblock {\em IEEE Transactions on Signal Processing\/}~{\em 63\/}(9),
  2306--2320.

\bibitem[\protect\citeauthoryear{Ge and Zou}{Ge and
  Zou}{2015}]{ge2015intersecting}
Ge, R. and J.~Zou (2015).
\newblock Intersecting faces: Non-negative matrix factorization with new
  guarantees.
\newblock In {\em International Conference on Machine Learning}, pp.\
  2295--2303.

\bibitem[\protect\citeauthoryear{Goldenshluger and Tsybakov}{Goldenshluger and
  Tsybakov}{2004}]{goldenshluger2004estimating}
Goldenshluger, A. and A.~Tsybakov (2004).
\newblock Estimating the endpoint of a distribution in the presence of additive
  observation errors.
\newblock {\em Statistics \& probability letters\/}~{\em 68\/}(1), 39--49.

\bibitem[\protect\citeauthoryear{G{\"o}tze, Sambale, and Sinulis}{G{\"o}tze
  et~al.}{2019}]{gotze2019higher}
G{\"o}tze, F., H.~Sambale, and A.~Sinulis (2019).
\newblock Higher order concentration for functions of weakly dependent random
  variables.
\newblock {\em Electronic Journal of Probability\/}~{\em 24}, 1--19.

\bibitem[\protect\citeauthoryear{Griffiths and Steyvers}{Griffiths and
  Steyvers}{2004}]{griffiths2004finding}
Griffiths, T.~L. and M.~Steyvers (2004).
\newblock Finding scientific topics.
\newblock {\em Proceedings of the National Academy of Sciences\/}~{\em
  101\/}(suppl 1), 5228--5235.

\bibitem[\protect\citeauthoryear{Hofmann}{Hofmann}{1999}]{hofm99}
Hofmann, T. (1999).
\newblock Probabilistic latent semantic analysis.
\newblock In {\em Uncertainty in Artificial Intelligence}, pp.\  289--296.

\bibitem[\protect\citeauthoryear{Huang, Fu, and Sidiropoulos}{Huang
  et~al.}{2016}]{huang2016anchor}
Huang, K., X.~Fu, and N.~D. Sidiropoulos (2016).
\newblock Anchor-free correlated topic modeling: Identifiability and algorithm.
\newblock In {\em Advances in Neural Information Processing Systems}, pp.\
  1786--1794.

\bibitem[\protect\citeauthoryear{Huang, Sidiropoulos, and Swami}{Huang
  et~al.}{2014}]{huang2014non}
Huang, K., N.~D. Sidiropoulos, and A.~Swami (2014).
\newblock Non-negative matrix factorization revisited: Uniqueness and algorithm
  for symmetric decomposition.
\newblock {\em IEEE Transactions on Signal Processing\/}~{\em 62\/}(1),
  211--224.

\bibitem[\protect\citeauthoryear{Jang and Hero}{Jang and
  Hero}{2019}]{jang2019minimum}
Jang, B. and A.~Hero (2019).
\newblock Minimum volume topic modeling.
\newblock In {\em International Conference on Artificial Intelligence and
  Statistics}, pp.\  3013--3021.

\bibitem[\protect\citeauthoryear{Javadi and Montanari}{Javadi and
  Montanari}{2020}]{javadi2020nonnegative}
Javadi, H. and A.~Montanari (2020).
\newblock Nonnegative matrix factorization via archetypal analysis.
\newblock {\em Journal of the American Statistical Association\/}~{\em
  115\/}(530), 896--907.

\bibitem[\protect\citeauthoryear{Ke and Wang}{Ke and Wang}{2017}]{ke2017new}
Ke, Z.~T. and M.~Wang (2017).
\newblock A new svd approach to optimal topic estimation.
\newblock {\em arXiv preprint arXiv:1704.07016\/}.

\bibitem[\protect\citeauthoryear{Kleinberg and Sandler}{Kleinberg and
  Sandler}{2003}]{kleinberg2003convergent}
Kleinberg, J. and M.~Sandler (2003).
\newblock Convergent algorithms for collaborative filtering.
\newblock In {\em Proceedings of the 4th ACM conference on Electronic
  commerce}, pp.\  1--10.

\bibitem[\protect\citeauthoryear{Kleinberg and Sandler}{Kleinberg and
  Sandler}{2008}]{kleinberg2008using}
Kleinberg, J. and M.~Sandler (2008).
\newblock Using mixture models for collaborative filtering.
\newblock {\em Journal of Computer and System Sciences\/}~{\em 74\/}(1),
  49--69.

\bibitem[\protect\citeauthoryear{Magnusson, Jonsson, Villani, and
  Broman}{Magnusson et~al.}{2018}]{magnusson2018sparse}
Magnusson, M., L.~Jonsson, M.~Villani, and D.~Broman (2018).
\newblock Sparse partially collapsed mcmc for parallel inference in topic
  models.
\newblock {\em Journal of Computational and Graphical Statistics\/}~{\em
  27\/}(2), 449--463.

\bibitem[\protect\citeauthoryear{McSherry}{McSherry}{2001}]{mcsherry2001spectral}
McSherry, F. (2001).
\newblock Spectral partitioning of random graphs.
\newblock In {\em Proceedings 42nd IEEE Symposium on Foundations of Computer
  Science}, pp.\  529--537. IEEE.

\bibitem[\protect\citeauthoryear{Miao and Qi}{Miao and
  Qi}{2007}]{miao2007endmember}
Miao, L. and H.~Qi (2007).
\newblock Endmember extraction from highly mixed data using minimum volume
  constrained nonnegative matrix factorization.
\newblock {\em IEEE Transactions on Geoscience and Remote Sensing\/}~{\em
  45\/}(3), 765--777.

\bibitem[\protect\citeauthoryear{Nascimento and Dias}{Nascimento and
  Dias}{2005}]{nascimento2005vertex}
Nascimento, J.~M. and J.~M. Dias (2005).
\newblock Vertex component analysis: A fast algorithm to unmix hyperspectral
  data.
\newblock {\em IEEE transactions on Geoscience and Remote Sensing\/}~{\em
  43\/}(4), 898--910.

\bibitem[\protect\citeauthoryear{Nguyen}{Nguyen}{2015}]{nguyen2015posterior}
Nguyen, X. (2015).
\newblock Posterior contraction of the population polytope in finite admixture
  models.
\newblock {\em Bernoulli\/}~{\em 21\/}(1), 618--646.

\bibitem[\protect\citeauthoryear{Nielsen et~al.}{Nielsen
  et~al.}{2000}]{nielsen2000stochastic}
Nielsen, S.~F. et~al. (2000).
\newblock The stochastic em algorithm: Estimation and asymptotic results.
\newblock {\em Bernoulli\/}~{\em 6\/}(3), 457--489.

\bibitem[\protect\citeauthoryear{Papadimitriou, Raghavan, Tamaki, and
  Vempala}{Papadimitriou et~al.}{2000}]{papadimitriou2000latent}
Papadimitriou, C.~H., P.~Raghavan, H.~Tamaki, and S.~Vempala (2000).
\newblock Latent semantic indexing: A probabilistic analysis.
\newblock {\em Journal of Computer and System Sciences\/}~{\em 61\/}(2),
  217--235.

\bibitem[\protect\citeauthoryear{Perrone, Jenkins, Spano, and Teh}{Perrone
  et~al.}{2016}]{perrone2016poisson}
Perrone, V., P.~A. Jenkins, D.~Spano, and Y.~W. Teh (2016).
\newblock Poisson random fields for dynamic feature models.
\newblock {\em arXiv preprint arXiv:1611.07460\/}.

\bibitem[\protect\citeauthoryear{Recht, Re, Tropp, and Bittorf}{Recht
  et~al.}{2012}]{recht2012factoring}
Recht, B., C.~Re, J.~Tropp, and V.~Bittorf (2012).
\newblock Factoring nonnegative matrices with linear programs.
\newblock In {\em Advances in Neural Information Processing Systems}, pp.\
  1214--1222.

\bibitem[\protect\citeauthoryear{Singh, Jaakkola, and Jordan}{Singh
  et~al.}{1995}]{singh1995reinforcement}
Singh, S.~P., T.~Jaakkola, and M.~I. Jordan (1995).
\newblock Reinforcement learning with soft state aggregation.
\newblock {\em Advances in neural information processing systems\/}, 361--368.

\bibitem[\protect\citeauthoryear{Tang, Meng, Nguyen, Mei, and Zhang}{Tang
  et~al.}{2014}]{tang2014understanding}
Tang, J., Z.~Meng, X.~Nguyen, Q.~Mei, and M.~Zhang (2014).
\newblock Understanding the limiting factors of topic modeling via posterior
  contraction analysis.
\newblock In {\em International Conference on Machine Learning}, pp.\
  190--198.

\bibitem[\protect\citeauthoryear{Terenin, Magnusson, Jonsson, and
  Draper}{Terenin et~al.}{2018}]{terenin2018polya}
Terenin, A., M.~Magnusson, L.~Jonsson, and D.~Draper (2018).
\newblock Polya urn latent dirichlet allocation: A doubly sparse massively
  parallel sampler.
\newblock {\em IEEE Transactions on Pattern Analysis and Machine
  Intelligence\/}~{\em 41\/}(7), 1709--1719.

\bibitem[\protect\citeauthoryear{Wang}{Wang}{2019}]{Wang2019}
Wang, Y. (2019).
\newblock Convergence rates of latent topic models under relaxed
  identifiability conditions.
\newblock {\em Electronic Journal of Statistics\/}~{\em 13\/}(1), 37--66.

\bibitem[\protect\citeauthoryear{Weyl}{Weyl}{1912}]{weyl1912asymptotische}
Weyl, H. (1912).
\newblock Das asymptotische verteilungsgesetz der eigenwerte linearer
  partieller differentialgleichungen (mit einer anwendung auf die theorie der
  hohlraumstrahlung).
\newblock {\em Mathematische Annalen\/}~{\em 71\/}(4), 441--479.

\bibitem[\protect\citeauthoryear{Winter}{Winter}{1999}]{winter1999n}
Winter, M.~E. (1999).
\newblock N-findr: An algorithm for fast autonomous spectral end-member
  determination in hyperspectral data.
\newblock In {\em Imaging Spectrometry V}, Volume 3753, pp.\  266--275.
  International Society for Optics and Photonics.

\bibitem[\protect\citeauthoryear{Yurochkin and Nguyen}{Yurochkin and
  Nguyen}{2016}]{yurochkin2016geometric}
Yurochkin, M. and X.~Nguyen (2016).
\newblock Geometric {D}irichlet means algorithm for topic inference.
\newblock In {\em Advances in Neural Information Processing Systems}, pp.\
  2505--2513.

\end{thebibliography}

\renewcommand\thesection{\arabic{section}}

\pagebreak
\begin{center}
\textbf{\LARGE Supplementary Material: \\
        Learning Topic Models: Identifiability and Finite-Sample Analysis}
\end{center}

\setcounter{equation}{0}
\setcounter{figure}{0}
\setcounter{table}{0}
\setcounter{section}{0}
\makeatletter
\renewcommand{\theequation}{\Alph{section}.\arabic{equation}}
\renewcommand{\thefigure}{S\arabic{figure}}
\renewcommand{\bibnumfmt}[1]{[S#1]}
\renewcommand{\citenumfont}[1]{S#1}
\renewcommand{\thesection}{\Alph{section}}

The supplementary material is organized as follows. 
\begin{itemize}
    \item Section \ref{Sec:example_idf}: Discussion on identifiability related to Remark \ref{rmk:compare_javadi}.
    \item Section \ref{Sec:random_design_error}: Error analysis and consistency under stochastic mixing weights.
    \item Section \ref{app:computation}: Derivation of the MCMC-EM algorithm. 
    \item Section \ref{app:pf_thm_1}: Proofs of main theorems. 
    \item Section \ref{app:prf_lemmas}: Proofs of technical lemmas and propositions.
    \item Section \ref{app:add_sim}: Additional simulations and experiments.
    \item Sections \ref{app:nips_topwords} \&   \ref{app:kos_topwords}: Top $10$ words of the latent topics returned by our algorithm for the two  real applications.
    \item Section \ref{app:taxi_trips}: Mined meta states for the taxi-trip dataset.
\end{itemize}

\newpage

{\colorblue
\section{Discussion on Identifiability Related to Remark \ref{rmk:compare_javadi}}\label{Sec:example_idf}

\citet{javadi2020nonnegative} and we both follow the same principle to address the non-identifiability issue ---  among all equivalent parameters that lead to the same statistical model, the one that minimizes a chosen criterion function is used to represent the equivalence class (therefore the most parsimonious representation). However, the adopted criterion functions are different: ours is the volume of $\conv(\mathbf C)$, while  theirs  is the sum of  distances from the vertices of $\conv(\mathbf C)$ (i.e.,  columns of $\mathbf C$)  to the convex hull of $\mathbf{U}$. The criterion function adopted \citet{javadi2020nonnegative} is easier to be formulated into a statistical estimator that minimizes an empirical evaluation of it. However, as we discussed in Section 1.2, our criterion function as the volume of a low-dimensional polytope in a high-dimensional space does not take a simple form, which greatly complicates the estimator construction. Fortunately, we find that maximizing a particular integrated likelihood leads to an estimator that implicitly minimizes the volume.
   
Regarding the two notions of identifiability, minimizers of the two criterion functions are usually different --- except for some special cases, such as when the pure topic documents condition hold so that vertices of $\conv(\mathbf C)$ are data points. Therefore, the two notions of identifiability are not directly comparable. Figure \ref{fig:reply_javadi} helps to illustrate this point. In Figure \ref{fig:reply_javadi}, the grey region is $\conv(\mathbf{U})$ and the black triangle $ABC$ is the unique volume minimizer among all three-vertex convex polytopes enclosing $\conv(\mathbf{U})$. However, triangle $ABC$ is not the minimizer of the criterion function in \citet{javadi2020nonnegative} with the Euclidean distance as the distance function:  it is easy to verify that when the ratio of the height to the base of triangle $ABC$ is larger than 6, the red triangle $FGH$ has a smaller summation of distances to the gray region than $ABC$ (see the caption that describes how we construct the red triangle $FGH$). 

   \begin{figure}[H]
    \centering
    \includegraphics[width=0.5\textwidth]{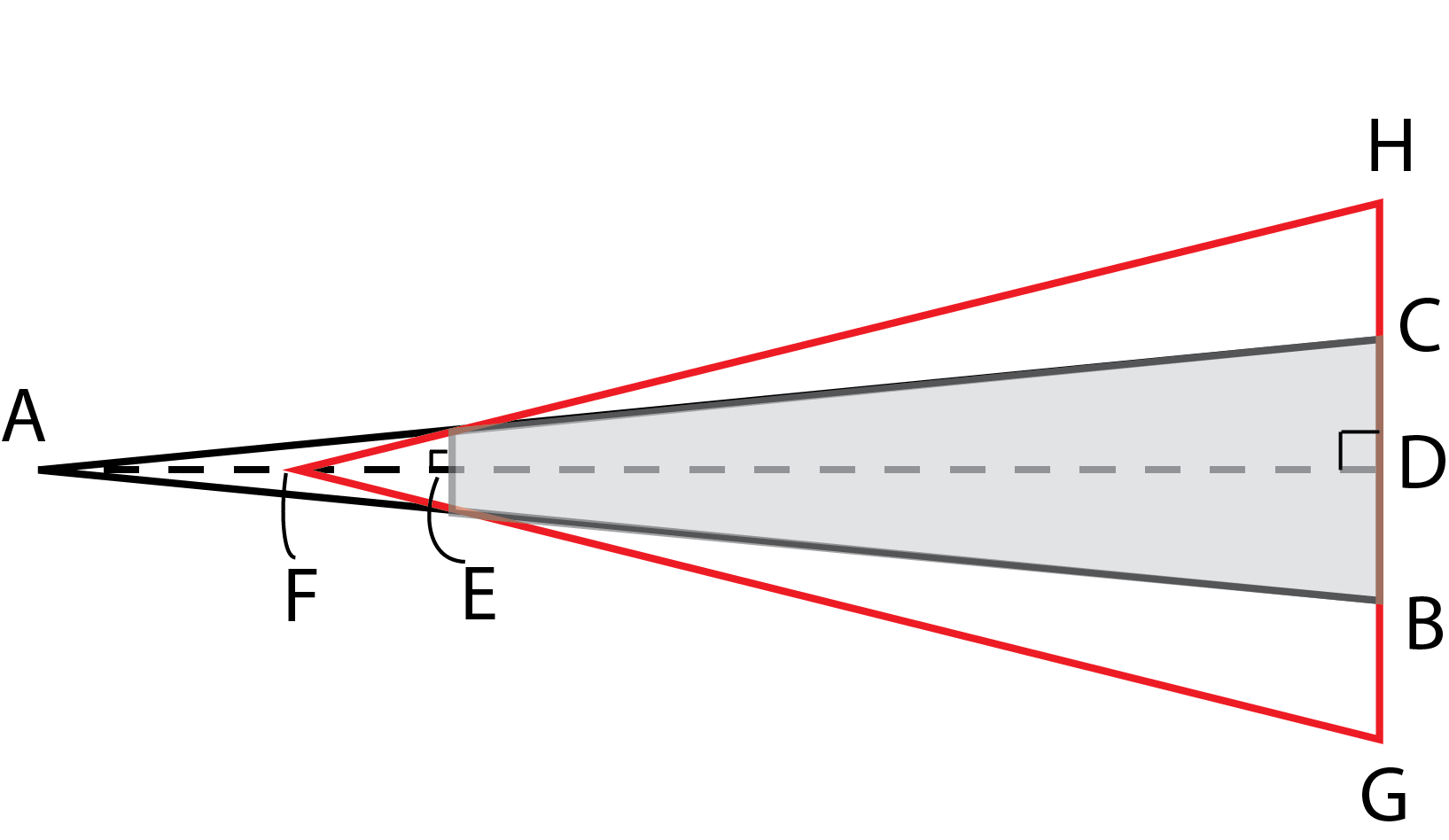}
    \caption{An example ($V=k=3$), in which both  $ABC$ and $FGH$ are isosceles triangles enclosing  $\conv(\mathbf{U})$ (grey region). In addition, $BC = b$, $AD = h$, $AE = h/4$, and $F$ is the midpoint of $AE$.}
    \label{fig:reply_javadi}
    \end{figure}
    
   Under the principle of using the minimizer to represent the whole equivalence class, a trivial identifiability condition is to assume the uniqueness of the minimizer, which is exactly the identifiability condition given in \citet{javadi2020nonnegative}. The drawback, however, is that it is often not trivial, if not impossible, to check whether the minimizer of a criterion function is unique. In  \citet{javadi2020nonnegative}, uniqueness is checked only for a simple case when the vertices of $\conv(\mathbf{C})$ are data points (their Remark 3.1). In contrast, our identifiability condition, the SS condition, is a set of explicit, verifiable conditions. Consider the example given in Figure \ref{fig:reply_javadi}. By our Theorem 2, the model is identifiable with respect to our volume minimization. But, it is difficult to verify whether the model is identifiable in \citet{javadi2020nonnegative}: We do not know whether the triangle $FGH$, although shown to be a better choice than triangle $ABC$, indeed  minimizes the criterion function; even if it does, we do not know whether it is unique. 
   
   In summary, neither definition of identifiability is more general than the other. Since the identification condition in \citet{javadi2020nonnegative} is difficult to check, we are not able to provide an example where the model is identifiable under one notion but not under the other. Due to the same reason, it is unclear whether our SS condition implies their definition of identifiability. Although the two notions of identifiability are not comparable, we would like to highlight that an advantage of our volume minimization criterion is that it helps to justify the empirical success of the Latent Dirichlet Allocation (LDA) model, because the proposed estimator is essentially the maximum likelihood estimator of $\mathbf{C}$ from the LDA model with the prior of $\mathbf{W}$ being the uniform distribution. LDA models with general priors can be interpreted as maximizing the data likelihood while minimizing a weighted volume where a non-uniform volume element is integrated over the convex hull of $\mathbf{C}$ when defining the volume.

}
\newpage

\section{Error Analysis and Consistency under Stochastic Mixing Weights}\label{Sec:random_design_error}

In this appendix, we explore cases in which  $\tw{1}, \cdots \tw{d}$ are random i.i.d. samples from some unknown distribution $\mathcal{P}$ over $\simplex^{k-1}$ (the theoretical result in the main manuscript considers the fixed mixing weights setting). In such cases, we will apply Theorem~\ref{thm:main} to this set of stochastic mixing weights by showing that under a suitable set of conditions to be described below, Assumptions (A1)-(A3) hold with high probability.

\begin{figure}[H]
   \centering
   \begin{subfigure}[t]{0.3\textwidth}
      \includegraphics[width=0.79\textwidth]{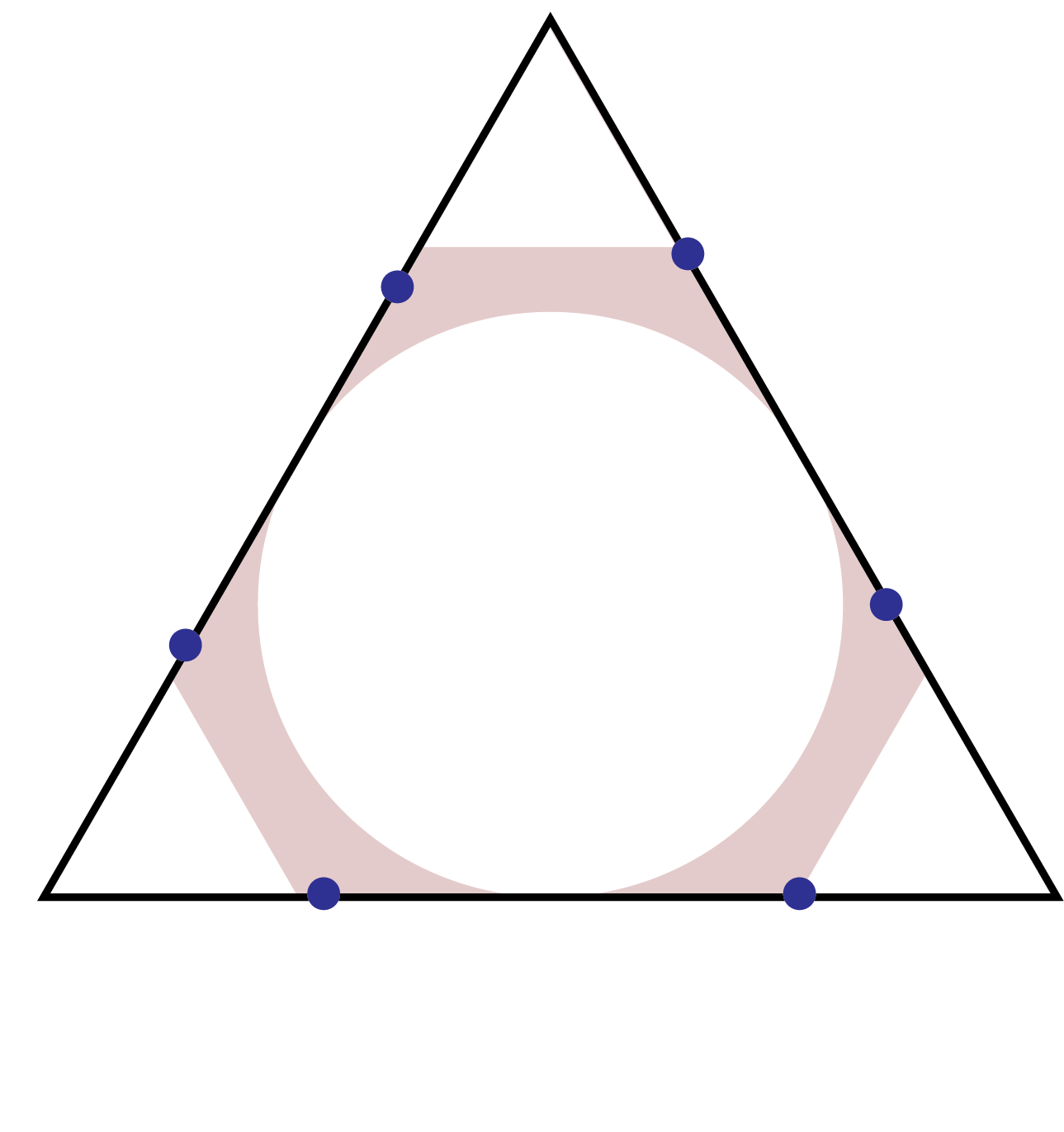}
      \label{fig:supp_2}   
   \end{subfigure}
   \begin{subfigure}[t]{0.3\textwidth}
      \centering
      \includegraphics[width=0.79\textwidth]{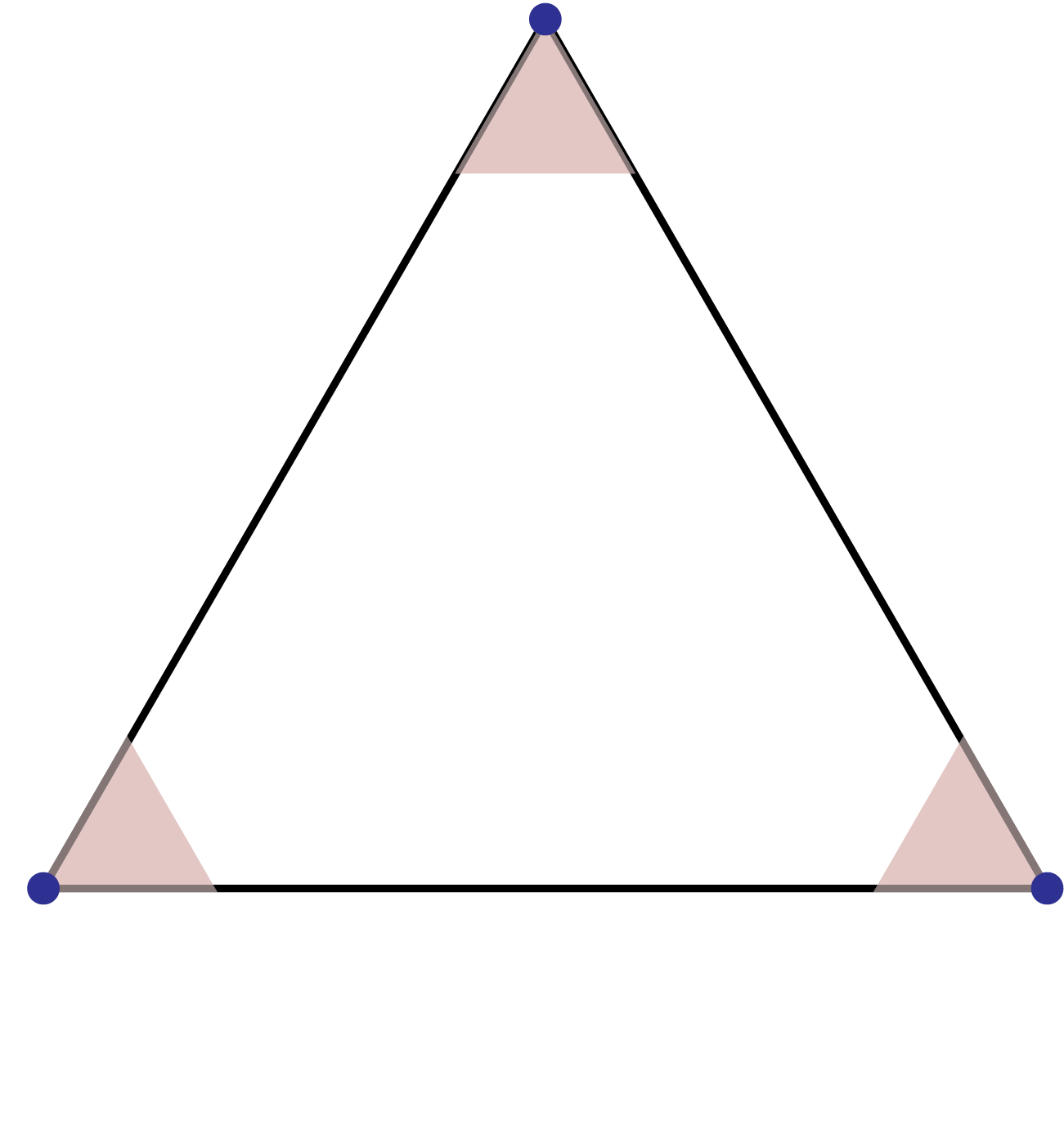}
      \label{fig:supp_3}  
   \end{subfigure}
   \begin{subfigure}[t]{0.3\textwidth}
      \centering
      \includegraphics[width=0.79\textwidth]{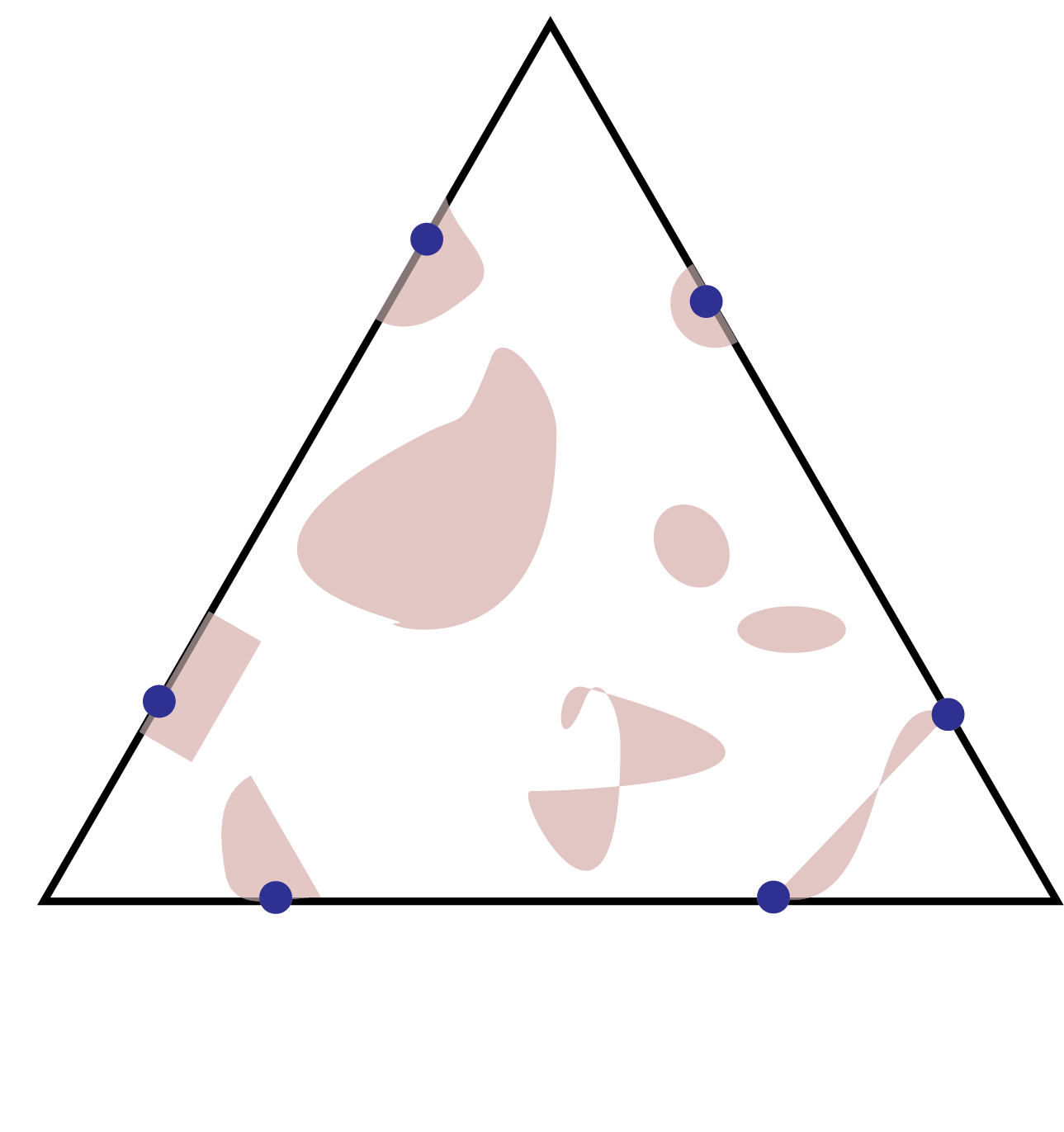}
      \label{fig:supp_4}  
   \end{subfigure}
   \caption{Examples of $(\alpha, \beta)$-SS distributions for $k=3$: $\mathbf w^{\sharp}$'s (blue dots) from $supp(\mathcal P)$ (pink area) are $(\alpha, \beta)$-SS on $\simplex^{k-1}$(the triangle).}
   \label{fig:supp_SS}
\end{figure}

Formally, we introduce a ``stochastic'' version of the SS condition on $\mathcal P$, called \emph{$(\alpha, \beta)$-SS distribution}, to ensure the $(\alpha, \beta)$-SS condition to hold for $\mathbf W$ with high probability as long as the number of documents $d$ is sufficiently large.

\begin{definition}[$(\alpha,\beta)$-SS distribution]\label{con:SS_dist}
A distribution $\mathcal P$ is an {\bf $(\bm \alpha, \bm \beta)$-SS distribution}, if there exist $s$ distinct points in its support, $\mathbf w^\sharp_1, \cdots, \mathbf w^\sharp_s \in supp(\mathcal P)$, and some positive constants $r_0$, $c_0$,
such that $\mathbf W^\sharp = \{\mathbf w^\sharp_i\}_{i=1}^s$ is $(\alpha, \beta)$-SS,  and for each $i\in[s]$, 
\begin{equation*}
    \mathcal P(\|\mathbf w - \mathbf w_i^\sharp\|_2 \leq r) \geq (k-1)!\cdot c_0 \cdot r^{k-1},\quad \forall\, 0 <r \leq r_0.
\end{equation*}
\end{definition}

The condition in Definition \ref{con:SS_dist} is mild and can be satisfied by many commonly encountered distributions over the simplex $\simplex^{k-1}$. For example, any distribution whose density function does not vanish on $\simplex^{k-1}$,
such as the uniform distribution and Dirichlet distributions, is $(\epsilon,C\epsilon)$-SS for any sufficiently small $\epsilon>0$, where $C$ is some constant depends on the distribution. In addition, an $(\alpha,\beta)$-SS distribution does not need to have a full support over $\simplex^{k-1}$---as long as a distribution has positive density values around a set of $(\alpha, \beta)$-SS points, then it is $(\alpha, \beta)$-SS. See Figure \ref{fig:supp_SS} for some examples of SS distributions whose supports are sparsely scattered over the simplex.



 
 Next, we state our assumption on the true underlying distribution $\mathcal P^0$ that generates the stochastic mixing weights.
 
\begin{assumptions} Assume the following: 
\begin{itemize}
\item[(A5)] $\tw{1}, \cdots, \tw{d}$ are i.i.d.~random samples from an $(\alpha, \beta)$-SS distribution $\mathcal P^0$, with $\alpha \geq C_1^\prime \sqrt{\frac{\log (n\vee d)}{n}} + \left(\frac{\log d}{d}  \right)^{\frac{1}{k-1}} $, where $C_1^\prime$ is a constant.
\end{itemize} 
\end{assumptions}

 The following Theorem \ref{thm:sample} establishes the finite-sample error bound when $\mathbf W$ is stochastically generated. 
 
\begin{theorem}\label{thm:sample}
Under Assumptions (A1), (A2) and (A5), it holds
with probability at least $1-D_1' s/d - D_2'd/(n\vee d)^c$ that
\begin{equation}\label{eq:sample_rate}
    \dis(\mathbf{\hat C}_n, \tC) \leq D_3' \sqrt{\frac{\log (n\vee d)}{n}} + D_4' \beta,
\end{equation}
where $c, D_1', D_2',D_3', D_4'$ are positive constants. In particular, if $\beta \leq C_2'\big(\sqrt{\log (n\vee d)/n} +(\log d/d)^{1/(k-1)} \big)$ for some constant $C_2'$, then
\begin{equation}\label{eq:result_new_dist}
    \dis(\mathbf{\hat C}_n, \tC)  \leq  D''_3 \sqrt{\frac{\log (n\vee d)}{n}} + D''_4 \left(\frac{\log d}{d}\right)^{\frac{1}{k-1}}.
\end{equation}
\end{theorem}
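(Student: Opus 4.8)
The plan is to \emph{transfer} the fixed-design guarantee of Theorem~\ref{thm:main} to the stochastic setting by showing that, under Assumptions (A1), (A2) and (A5), the random matrix $\tW$ satisfies Assumptions (A1)--(A3) with high probability; the bound then follows by conditioning on $\tW$ and invoking Theorem~\ref{thm:main}. Assumption (A1) concerns only $\tC$ and is unaffected. The two substantive tasks are to verify the perturbed scatteredness condition (A3) and the spectral condition (A2) for the random $\tW$. The randomness of the multinomial word counts is already absorbed by the probability $(1-3/(n\vee d)^c)^d \ge 1 - 3d/(n\vee d)^c$ in Theorem~\ref{thm:main}, which supplies the term $D_2'\,d/(n\vee d)^c$.

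First I would carry out the covering step for (A3). By Definition~\ref{con:SS_dist} there are $s$ points $\mathbf w^\sharp_1,\ldots,\mathbf w^\sharp_s$ with $\mathbf W^\sharp$ being $(\alpha,\beta)$-SS and each carrying local mass $\mathcal P^0(\|\mathbf w - \mathbf w_i^\sharp\|_2 \le r) \ge (k-1)!\,c_0\,r^{k-1}$. Choosing the radius $r_d \asymp (\log d/d)^{1/(k-1)}$ so that $d\cdot(k-1)!\,c_0\,r_d^{k-1} = \log d$, a union bound over the $s$ balls shows that, with probability at least $1 - s/d$, each ball contains at least one observed column; call the selected columns $\hat{\mathbf w}_1,\ldots,\hat{\mathbf w}_s$, so that $\|\hat{\mathbf w}_i - \mathbf w_i^\sharp\|_2 \le r_d$ for every $i$. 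This is the origin of the $D_1'\,s/d$ term.

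Next comes the key deterministic ingredient, a \emph{stability lemma} for the perturbed-SS condition: if $\mathbf W^\sharp$ is $(\alpha,\beta)$-SS and each column of $\hat{\mathbf W}$ lies within $r\le\alpha$ of the corresponding column of $\mathbf W^\sharp$, then $\hat{\mathbf W}$ is $(\alpha - r,\,\beta)$-SS. For (S3) this is direct: since $|\mathbf x^T(\hat{\mathbf w}_i - \mathbf w_i^\sharp)| \le r\|\mathbf x\|_2$, one gets $[cone(\hat{\mathbf W})^\ast]^{\alpha-r} \subseteq [cone(\mathbf W^\sharp)^\ast]^{\alpha}$ and $[bd\mathcal K]^{\alpha-r} \subseteq [bd\mathcal K]^{\alpha}$, so the intersection controlled by (S3) only shrinks and the \emph{same} vertex radius $\beta$ is inherited. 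The subtle part is preserving (S1), $cone(\hat{\mathbf W})^\ast \subseteq \mathcal K$: here I would argue that $(\alpha,\beta)$-SS already forces $[cone(\mathbf W^\sharp)^\ast]^{\alpha}$ to remain inside $\mathcal K$ away from the $\beta$-neighborhoods of the vertices $\mathbf e_f$ (any escape through $bd\mathcal K$ of an enlarged cone must occur at a point of $[cone(\mathbf W^\sharp)^\ast]^{\alpha}\cap[bd\mathcal K]^{\alpha}$, which (S3) confines to those neighborhoods), and that the residual leakage near the vertices is controlled because $r_d\le\alpha$. I expect this (S1)-robustness argument to be the main obstacle. Because the lemma leaves $\beta$ unchanged, feeding $(\alpha-r_d,\beta)$ into Theorem~\ref{thm:main} produces $D_1\sqrt{s\log(n\vee d)/n} + D_2\sqrt{s}\,\beta$, with the distribution's own parameter $\beta$ and no extra radius term---precisely~\eqref{eq:sample_rate} after absorbing the constant $s$. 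The summand $(\log d/d)^{1/(k-1)}$ in the lower bound on $\alpha$ in (A5) is budgeted exactly to absorb the loss $r_d$, so that $\alpha - r_d \ge C_1\sqrt{s\log(n\vee d)/n}$ and (A3) holds for $\hat{\mathbf W}$, hence for all of $\tW$ by Proposition~\ref{prop:ab_ss_property}(ii).

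For (A2) I would invoke matrix concentration: since the columns are i.i.d.\ and bounded on $\simplex^{k-1}$, the sample covariance $\frac1d\mathbf W_c\mathbf W_c^T$ concentrates about the population covariance $\Sigma$ of $\mathcal P^0$ at rate $\sqrt{\log d/d}$, and $\Sigma$ is positive definite on the tangent space of the simplex because $\mathcal P^0$ places positive mass near the $k$ affinely independent points among $\mathbf w^\sharp_1,\ldots,\mathbf w^\sharp_s$ (affine independence being forced by (S1), i.e.\ $cone(\mathbf W^\sharp)\supseteq\mathcal K^\ast$ being full-dimensional). The same nearby columns supply the required $k$ affinely independent columns of bounded minimum singular value. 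Thus (A2) holds off an event of probability $O(1/d)$, which I fold into the $D_1'\,s/d$ term. A union bound over these failure events together with Theorem~\ref{thm:main} yields~\eqref{eq:sample_rate}. Finally,~\eqref{eq:result_new_dist} is immediate: substituting $\beta \le C_2'\big(\sqrt{\log(n\vee d)/n} + (\log d/d)^{1/(k-1)}\big)$ into~\eqref{eq:sample_rate} and collecting the $\sqrt{\log(n\vee d)/n}$ terms leaves $D_3''\sqrt{\log(n\vee d)/n} + D_4''(\log d/d)^{1/(k-1)}$.
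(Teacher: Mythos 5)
Your proposal takes essentially the same route as the paper's proof: a Chernoff/union-bound covering step that places one sampled column within $r_d = (\log d/d)^{1/(k-1)}$ of each $\mathbf w_i^\sharp$ (producing the $D_1's/d$ term), the deterministic stability argument $[cone(\mathbf W^0_1)^\ast]^{\alpha - r_d} \subseteq [cone(\mathbf W^\sharp)^\ast]^{\alpha}$ showing the selected columns form an $(\alpha - r_d,\beta)$-SS matrix, and an invocation of Theorem~\ref{thm:main} with the $(\log d/d)^{1/(k-1)}$ summand in (A5) budgeted exactly to absorb the loss $r_d$ --- this is precisely the paper's two-step proof (which, incidentally, is even less explicit than you are about why (S1) survives the perturbation). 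The only deviation is your paragraph verifying (A2) via matrix concentration, which is unnecessary: Theorem~\ref{thm:sample} assumes (A2) outright for the realized $\tW$ rather than deriving it from (A5).
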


Similar to the remark of Assumption (A3), in most cases the parameter $\beta$ can be chosen as the same order as $\alpha$ in the $(\alpha, \beta)$-SS condition in Theorem \ref{thm:sample}. 
For example, according to Proposition \ref{prop:suff_SS_new}, if the support of $\mathcal P^0$ contains the point $(1-x_{ij})\mathbf{e}_i+x_{ij}\mathbf{e}_j$, where $0\leq x_{ij}< 1/k$, for all $1\leq i \neq j \leq k$, and $\mathcal P^0$ has positive density values around these points, then $\mathcal P$ is $(\epsilon, C\epsilon)$-SS for all $\epsilon>0$.

It is important to emphasize that our method does not require any prior knowledge about the distribution $\mathcal P^0$ (albeit our theory requires it to be SS). In comparison, in most Bayesian latent variable mixture model literature such as \cite{tang2014understanding},~\cite{nguyen2015posterior} and~\cite{Wang2019}, $\mathcal P^0$ is assumed to be known and have a full support over the simplex $\simplex^{k-1}$. 

Similar to Theorem~\ref{Coro:consistency}, we provide conditions for the estimator $\mathbf{\hat C}_n$ to have the estimation consistency under the double asymptotic setting by letting $(n,d)\to\infty$ in a suitable manner in Theorem~\ref{thm:sample}.
 
\begin{assumptions} Assume the following: 
\begin{itemize}
\item[(A5')] For all sufficiently small $\epsilon>0$, there exist some $\beta_\epsilon>0$, such that  $\beta_\epsilon \to 0$ as $\epsilon \to 0$, and $ \tw{1}, \cdots, \tw{d}$ are i.i.d.~random samples from distribution $\mathcal P^0$ that is $(\epsilon, \beta_\epsilon)$-SS.
\end{itemize} 
\end{assumptions}
 
\begin{theorem}[Estimation consistency]\label{cor:sample}
   Under Assumptions (A1), (A2), (A4) and (A5'), we have
   $$\dis(\mathbf{\hat C}_n, \tC)  \to 0 \quad\mbox{in probability as $(n,d)\to\infty$.}$$
\end{theorem}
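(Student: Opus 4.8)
The plan is to derive this consistency result as a corollary of the finite-sample bound in Theorem~\ref{thm:sample}, by exhibiting a sequence of $(\alpha,\beta)$ pairs along which the hypotheses of that theorem hold while both the error bound and the failure probability vanish as $(n,d)\to\infty$. Concretely, for each $(n,d)$ I would set $\alpha$ equal to the exact lower bound prescribed in Assumption~(A5),
$$\alpha_{n,d} \;=\; C_1'\sqrt{\frac{\log(n\vee d)}{n}} + \left(\frac{\log d}{d}\right)^{\frac{1}{k-1}}.$$
Under Assumption~(A4), which forces $\log(n\vee d)/n=\max\{\log n/n,\,\log d/n\}\to 0$, together with the elementary fact $\log d/d\to 0$, this gives $\alpha_{n,d}\to 0$ as $(n,d)\to\infty$. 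Because $\mathcal P^0$ is $(\epsilon,\beta_\epsilon)$-SS for all small $\epsilon$ with $\beta_\epsilon\to 0$ by Assumption~(A5'), for all sufficiently large $(n,d)$ the value $\alpha_{n,d}$ is small enough that $\mathcal P^0$ is $(\alpha_{n,d},\beta_{\alpha_{n,d}})$-SS with $\beta_{\alpha_{n,d}}\to 0$. Thus Assumption~(A5) is met with $(\alpha,\beta)=(\alpha_{n,d},\beta_{\alpha_{n,d}})$, and the number of scattered points $s$ may be taken to be a fixed constant, as in the examples following Definition~\ref{con:SS_dist}.

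Applying Theorem~\ref{thm:sample} with this choice, I would obtain that, with probability at least $1-D_1's/d-D_2'd/(n\vee d)^c$,
$$\dis(\hatCn,\tC)\;\le\; D_3'\sqrt{\frac{\log(n\vee d)}{n}} + D_4'\,\beta_{\alpha_{n,d}}.$$
The right-hand side tends to $0$: the first term vanishes by (A4), and the second because $\beta_{\alpha_{n,d}}\to 0$ by construction, while $D_3',D_4'$ are universal constants independent of $(n,d)$. It then remains to show the failure probability tends to $0$, where $D_1's/d\to 0$ holds immediately since $s$ is bounded and $d\to\infty$.

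The main obstacle, and the only step requiring real care, is the term $D_2'd/(n\vee d)^c$, which must vanish across every regime of $(n,d)$ permitted by (A4) --- including the case where $d$ grows nearly exponentially in $n$, since $\log d/n\to 0$ permits $d=e^{o(n)}$. I would handle this by exploiting the freedom to fix the exponent $c>1$ inherited from Theorem~\ref{thm:main} (recall that enlarging $c$ only inflates the constant $D_1=C_3\sqrt{c}+C_4$ and never invalidates the bound). With $c>1$, the uniform estimate $d/(n\vee d)^c\le(n\vee d)^{1-c}$ holds in both regimes $d\le n$ and $d>n$, and the exponent $1-c<0$ forces $(n\vee d)^{1-c}\to 0$. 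Hence the failure probability tends to $0$.

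Finally I would combine the two displays. Fix an arbitrary $\epsilon>0$. Since the error bound $D_3'\sqrt{\log(n\vee d)/n}+D_4'\beta_{\alpha_{n,d}}\to 0$, it is eventually strictly below $\epsilon$, so that for all large $(n,d)$,
$$\mathbb P\big(\dis(\hatCn,\tC)>\epsilon\big)\;\le\; D_1'\frac{s}{d}+D_2'\frac{d}{(n\vee d)^c}\;\longrightarrow\;0,$$
which is precisely the statement that $\dis(\hatCn,\tC)\to 0$ in probability. This reduction parallels the way Theorem~\ref{Coro:consistency} follows from Theorem~\ref{thm:main} in the fixed-design setting.
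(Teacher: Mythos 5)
Your proof is correct and is precisely the argument the paper intends: Theorem~\ref{cor:sample} is presented as a direct corollary of Theorem~\ref{thm:sample}, obtained by applying that finite-sample bound with $\alpha_{n,d}\to 0$ (via (A4)) and $\beta_{\alpha_{n,d}}\to 0$ (via (A5')), so that both the error bound and the failure probability vanish as $(n,d)\to\infty$. Your explicit handling of the $D_2'\,d/(n\vee d)^c$ term by taking $c>1$ --- legitimate since the paper notes $c$ is a free constant entering the bound only through $D_1=C_3\sqrt{c}+C_4$ --- supplies a detail the paper leaves implicit.
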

 
\newpage


\section{Derivation of the MCMC-EM Algorithm}\label{app:computation}
We use an MCMC-EM algorithm to compute the MLE of the integrated likelihood function \eqref{eq:lkh_fn}.
First we introduce a set of latent variables  $\mathbf Z = \{Z_{ij}\}$, where $Z_{ij}$ is the topic label for $x_{i, j}$. Then express the LDA model  as follows:
\begin{align*}
    x_{i,j}|\mathbf C, Z_{ij}  = l & \sim \text{Multi}_V(\mathbf C_l)\\
    Z_{ij}|\mathbf w_i & \sim \text{Multi}_k(\mathbf w_i)\\
    \mathbf w_i | \bm \beta_0 & \sim \text{Dir}_k(\bm \beta_0), 
\end{align*}
where 
$$ i = 1, \dots, d; \quad j =1, \dots, n; \quad l =1, \dots, k.$$ 
We fix $\bm \beta_0 = \mathbf 1_k$ throughout, since we consider a uniform ``prior'' on $\mathbf W$.
The integrated likelihood \eqref{eq:lkh_fn} can be written as 
\begin{align*}
     F_{n\times d}(\mathbf C;\mathbf X) & =  p(\mathbf X \mid \mathbf C)  = \int p(\mathbf X, \mathbf Z \mid \mathbf C) d\mathbf Z\\
     & \propto \prod_{i=1}^d \int \left[ \int \prod_{j=1}^n p(x_{i,j}|\mathbf C, Z_{ij} )p(Z_{ij}|\mathbf w) p(\mathbf w|\bm\beta_0) d \mathbf w\right] d\mathbf Z_{i\cdot}\\
     & \propto \prod_{i=1}^d \int \prod_{j=1}^n p(x_{i,j}|\mathbf C, Z_{ij} ) p(\mathbf Z_{i\cdot} = \mathbf z|\bm \beta_0) d \mathbf z
\end{align*}
where $\mathbf Z_{i\cdot} = (Z_{i1}, \cdots, Z_{in})$.

\paragraph{E-step} Define $Q(\mathbf C|\mathbf C^{(0)})$ as the  expected value of the log likelihood function of $\mathbf C$, with respect to $\mathbf Z$ given $\mathbf X$ and $\mathbf C^{(0)}$, where $\mathbf C^{(0)}$ is the estimated topic matrix obtained from the last EM iteration. 
\begin{align*}
    Q(\mathbf C|\mathbf C^{(0)}) & =
    \mathds E_{\mathbf Z|\mathbf C^{(0)}} \log [F_{n\times d}(\mathbf C;  \mathbf X, \mathbf Z)]\\
    & =\mathds E_{\mathbf Z|\mathbf C^{(0)}} \sum_{i=1}^d \sum_{j=1}^n\log  p(x_{i,j}|\mathbf C, Z_{ij} ) + Const
\end{align*}
We ignore the constant term in the following derivation. Since the marginal probability $p(\mathbf Z_{i\cdot} = \mathbf z|\bm \beta_0)$ is infeasible, we apply MCMC to and iteratively sample $\mathbf Z = \{Z_{ij}\}_{i,j}$ and $\mathbf W = \{\mathbf w_i\}_{i=1}^d$ as follows:
   \begin{align*}
   Z_{ij} | \mathbf C,  x_{i,j} = v & \sim \text{Multi}_k\left(
   \frac{c_{vl} w_{li}}{\sum_{l=1}^k c_{vl} w_{li}}
   \right)_{l =1, \dots, k}\\
   \mathbf w_i | \mathbf Z_{i\cdot} & \sim \text{Dir}_k\left(
   \beta_{0l} + \sum_{j=1}^n\mathds 1(Z_{ij} = l)
   \right)_{l =1, \dots, k}.
\end{align*}
We approximate $ Q(\mathbf C|\mathbf C^{(0)})$ function by the samples of $\mathbf Z$,
\begin{align*}
    Q(\mathbf C|\mathbf C^{(0)}) & = \mathds E_{\mathbf Z|\mathbf C^{(0)}}
 \sum_{v=1}^V\sum_{l=1}^k \log c_{vl} \left[\sum_{i=1}^d\sum_{j=1}^n \mathds 1(Z_{ij}=l, x_{i,j} = v)\right]\\
    & \approx  \frac{1}{T} \sum_{t=b+1}^{b+T} \sum_{v=1}^V \sum_{l=1}^k \left[\log c_{vl} \sum_{i=1}^d\sum_{j=1}^n\mathds 1(Z^{(t)}_{ij}=l,x_{i,j} = v)\right].
\end{align*}
where $c_{vl}$ is the $(v,l)$-th element of $\mathbf C$. Here the $Z^{(t)}_{ij}$ denotes the sample of $Z_{ij}$ at $t$-th MCMC iteration, $b$ denotes the burn-in period and $T$ denotes the number of the samples after burn-in. 

\paragraph{M-step} We maximize the approximated $ Q(\mathbf C|\mathbf C^{(0)})$ with respect to $\mathbf C$ by the following closed-form solution: 
   $$c_{vl} = \frac{\sum_{i,j,t} \mathds 1(Z^{(t)}_{ij} = l, x_{i,j} = v)}{\sum_{i,j,t} \mathds 1(Z^{(t)}_{ij} = l)}. $$

The algorithm of the E-step is given in Algorithm \ref{alg:E_step}. Here we use, $\mathcal Z, \mathscr Z \in \mathds R^{d\times V\times k}$, to denote the counts of the samples of $\mathbf Z$. Specifically, $\mathscr Z[i, v, l] = \sum_j\mathds 1(Z^{(t)}_{ij} = l, x_{i,j}=v)$ is the count of $\mathbf Z$ at $t$-th MCMC iteration, and $\mathcal Z [i,v,l] = \sum_{t=b+1}^{b+T}\sum_j\mathds 1(Z^{(t)}_{ij} = l, x_{i,j}=v)$ is the sum of count of $\mathbf Z$ over $T$ iterations. 
\begin{algorithm}
\KwIn{$\mathbf C$;}
$\mathcal Z[:, :, :] \gets \mathbf 0_{d\times V\times k};$\Comment{Initialize $\mathcal Z$}\\
$\mathbf W[:,i] \gets \text{Dir}_k(\mathbf 1)$, $i = 1, \cdots, d;$\Comment{$\mathbf W[:,i]$ is the $i$-th column of $\mathbf W$}\\
 \For{$t = 1, \cdots, b, b+1, \cdots, b+T$}{
 $\mathscr Z[:, :, :] \gets \mathbf 0_{d\times V\times k};$\Comment{Initialize $\mathscr Z$}\\
 \For{$i =  1, \cdots, d$}{
 \For{$v  =  1, \cdots, V$}{
 $\bm p \gets \mathbf C[v,:]\odot \mathbf W[:, i]$ 
 \Comment{$\mathbf C[v,:]$ is the $v$-th row of $\mathbf C$}\\
 \Comment{$\odot$ denotes an element-wise multiplication}\\
 $\bm p \gets \bm p/\sum_{l=1}^k \bm p[l]$\Comment{$\bm p[l]$ is the $l$-th element of $\bm p$}\\
 $\mathscr Z[i, v, :] \gets \text{Multi}(n = x_v^{(i)}, p =p);$ \Comment{$x_v^{(i)}$ is the count of $v$-th word in the $i$-th doc}
 }
 $\mathbf W[:, i] \gets \text{Dir}_k(\sum_v \mathscr  Z[i, v, :] + \bm \beta_0);$
 }
 \If{$t> b$}
{$\mathcal Z \gets \mathcal Z + \mathscr Z;$}
}
{$\mathcal Z \gets \frac{1}{T}\mathcal Z;$}

\KwOut{$\mathcal Z$.}
 \caption{The E-step of the MCMC-EM Algorithm}
    \label{alg:E_step}
\end{algorithm}

Empirically, since $\mathcal Z$ and $\mathscr Z$ are sparse, to save the computation space, we recommend to use two 2-dim arrays instead, namely $\mathscr  C = \sum_{i=1}^d \mathcal Z[i, :, :]$ and $\mathscr  W = \sum_{v=1}^V \mathcal Z[:, v, :]$, and $\mathscr C, \mathscr  W$ can be used efficiently in updating $\mathbf C$ and $\mathbf W$, respectively. In addition, the operations in the two nested for-loops over $i$ and $v$ in Algorithm \ref{alg:E_step} can be paralleled, as they are independent with each other.

The full algorithm is given in Algorithm \ref{alg:main}.

\begin{algorithm}
\KwIn{Data $\mathbf X = \{\mathbf x^{(i)}\}_{i=1}^d$; number of topics $k$;}
$\mathbf C[l,:] \gets \text{Dir}_V(\mathbf 1)$, $l = 1, \cdots, k;$ \Comment{Initialize $\mathbf C$}\\
 \Repeat{convergence}{
Obtain $\mathcal Z$ using Algorithm \ref{alg:E_step};\Comment{E-step}\\

$\mathbf C[v, l] \gets \sum_{i=1}^d \mathcal Z[i, v, l]/\sum_{v=1}^V\sum_{i=1}^d \mathcal Z[i, v, l],$\\
$~~~~~~~~v=1,\cdots, V, \, l= 1,\cdots k;$\Comment{M-step}\\
}

$\mathbf{W}[l, i] \gets \sum_{v=1}^V \mathcal Z[i,v,l]/\sum_{l=1}^k\sum_{v=1}^V \mathcal Z[i,v,l],
$\\ $~~~~~~~~ l= 1,\cdots k, \, i=1,\cdots, d;$\Comment{Estimate $\mathbf W$}

\KwOut{$\mathbf{C};\mathbf {W}$.}
 \caption{The MCMC-EM Algorithm}
    \label{alg:main}
\end{algorithm}

\newpage

\section{Proofs of Main Theorems}\label{app:pf_thm_1}

\setcounter{subsection}{-1}
\subsection{Notation}
For a vector $\mathbf{x}$, we denote by $\|\mathbf x\|_2 = \sqrt{\sum_i x^2_i}$ its $L_2$ norm and $\|\mathbf x\|_1 = \sum_i |x_i|$
its $L_1$ norm. Write $\mathbf x \geq a$ to indicate that $\mathbf x$ is element-wisely no smaller than $a$. In particular, $\mathbf 1_k$ denotes the all-ones vector of length $k$, and $\mathbf e_f$ the $f$-th column of the $k\times k$ identity matrix $\mathbf I_k$.  

For a matrix $\mathbf{A}_{p \times q}$,  $\mathbf A(i,:)$ and $\mathbf A(:,j)$ are $i$-th row and $j$-th column vectors, respectively. We use $\sigma_{\max}(\mathbf A)$ to denote the square root of the largest eigenvalue of $\mathbf A^T\mathbf A$, and $\sigma_{\min}^+(\mathbf A)$ the square root of the smallest nonzero eigenvalue of  $\mathbf A^T \mathbf A$. We denote by $\| \mathbf{A}\|_2 = \sigma_{\max}(\mathbf A)$ the spectral norm and $\| \mathbf A\|_1 = \max_{j=1}^q \sum_{i=1}^p |\mathbf A_{ij}|$ the $L_1$ matrix norm. Some useful facts we will use in the proof: 
(i) $\sigma_{\max}(\mathbf A\mathbf B) \leq \sigma_{\max}(\mathbf A)\sigma_{\max}(\mathbf B)$; 
(ii) $\sigma_{\min}^+(\mathbf A\mathbf B) \geq \sigma_{\min}^+(\mathbf A)\sigma_{\min}^+(\mathbf B)$;
(iii) $\|\mathbf A\|_2 \leq \sqrt{q}\|\mathbf A\|_1$; 
(iv) if $p \geq q$ and $\mathbf A^T\mathbf A$ is invertible, then $\sigma_{\max}((\mathbf A^T\mathbf A)^{-1}\mathbf A^T)=1/{\sigma^+_{\min}(\mathbf A)}$.

We denote by $\simplex^{k-1} = \{\mathbf x\in \mathds{R}^{k}: 0\leq x_i \leq 1, \sum_{i=1}^k x_i = 1\}$ the standard $(k-1)$-dimensional simplex. For a matrix $\mathbf{A}_{p \times q}$, let
\begin{eqnarray*}
\conv(\mathbf A) &=& \{\mathbf x\in \mathds{R}^{p}:  \mathbf{x} =  \mathbf A \bm\lambda, \bm\lambda  \in \simplex^{q-1} \} \\
\text{cone}(\mathbf A)  &=& \{\mathbf x\in \mathds{R}^{p}: \mathbf x = \mathbf A \bm \lambda, \bm \lambda \geq 0 \}\\
\text{aff}(\mathbf A) &= & \{\mathbf x\in \mathds{R}^{p}:  \mathbf{x} =  \mathbf A \bm\lambda, \bm\lambda^T \mathbf 1_q = 1\}
\end{eqnarray*}
denote the \emph{convex polytope},  the \emph{simplicial cone} and  the \emph{affine space} generated by the $q$ columns of $\mathbf A$, respectively. 

For any cone $\mathcal C$, let $\mathcal C^\ast = \{\mathbf x: \mathbf x^T\mathbf y \geq 0, \forall \mathbf y \in \mathcal C\}$ denote its \emph{dual cone}. In particular, let $\mathcal{K}= \{\mathbf x \in \mathds{R}^k: \|\mathbf x\|_2 \leq \mathbf x^T \mathbf 1_k\}$. The boundary of $\mathcal{K}$ is denoted by $bd\mathcal{K}= \{\mathbf x \in \mathds{R}^k: \|\mathbf x\|_2 = \mathbf x^T \mathbf 1_k\}$, and its dual cone takes the form as $\mathcal{K}^\ast  = \{\mathbf x \in \mathds{R}^k: \mathbf x^T \mathbf 1_k \geq \sqrt{k-1}\|\mathbf x\|_2\}$. Some useful facts of dual cones from \cite{donoho2004does}: (i) $cone (\mathbf A)^\ast = \{\mathbf x \in \mathds{R}^p: \mathbf x^T \mathbf A \geq 0\}$; (ii) if $\mathcal A$ and $\mathcal {\bar A}$ are convex cones, and $\mathcal A \subseteq \mathcal {\bar A}$, then $\mathcal {\bar A}^\ast \subseteq \mathcal A^\ast$.


The true $\mathbf C$, $\mathbf W$, and $\mathbf U$ are denoted by $\tC$,$\tW$,$\tU$, respectively; $\hat{\mathbf C}_n$ is the estimator obtained from $F_{n\times d}(\mathbf C;\mathbf X)$. $\hat{\mathbf W}_n$ is a valid estimator for the mixing matrix in $\mathds{R}^{k\times d}$ which we will construct in Lemma~\ref{lem:dis_upper_bound}
such that $\mathbf {\hat W}_n \geq 0$, $\mathbf {\hat W}_n^T \mathbf 1_k  = \mathbf 1_d$. $\epsilon_n = C_0 \sqrt{\frac{\log (n\vee d)}{n}}$ is a small quantity used to measure the convergence rates. Here $C_0$ in $\epsilon_n$ is a positive constant independent of $n$ and $d$. 

Throughout, we use symbols like  $C$, $C^\prime$, $C''$, $C'''$, $C^\ast$, $C_i$, $C'_i, i=1, 2, \dots$, and $ D_1$, $D_2$ as generic notations for large absolute numbers, whose exact values may vary from part to part. Unless stated otherwise, these constants are all independent of $n$ and $d$.

\subsection{Proof of Theorem \ref{thm:idf}}\label{supp:proof_thm_idf}

The following lemmas are useful in the proof of Proposition \ref{prop:sub_idf}. Their proofs are given in Appendix \ref{app:prf_lemmas}.


\begin{lemma}\label{lem:vol_det}
For a full column rank matrix $\mathbf C  \in \mathds{R}^{V\times k}$,
$$|\conv(\mathbf C)| = \frac{\sqrt{\det(\mathbf C^T\mathbf C)}}{h \cdot (k-1)!} ,$$
where $h$ is the perpendicular distance from the origin to the hyperplane $\textnormal{aff}(\mathbf C)$. In particular, we have 
$$\frac{|\conv(\mathbf C)|}{|\conv(\mathbf {\bar C})|} = \frac{\sqrt{\det(\mathbf C^T\mathbf C)}}{\sqrt{\det(\mathbf {\bar C}^T\mathbf {\bar C})}},$$
if $\textnormal{aff}(\mathbf C) = \textnormal{aff}(\mathbf {\bar C})$.
\end{lemma}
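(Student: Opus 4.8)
The plan is to reduce the claim to two standard facts---the edge-vector volume formula for a simplex and the block expansion of a Gram determinant---and then to identify the perpendicular distance $h$ with the length of a suitable orthogonal projection. Write $\mathbf C = (\mathbf c_1,\dots,\mathbf c_k)$. Since $\mathbf C$ has full column rank, the vertices $\mathbf c_1,\dots,\mathbf c_k$ are affinely independent, so $\conv(\mathbf C)$ is a genuine $(k-1)$-dimensional simplex. First I would recall the classical expression for its $(k-1)$-dimensional volume in terms of the edge vectors emanating from one vertex: letting $\mathbf E = (\mathbf c_2-\mathbf c_1,\dots,\mathbf c_k-\mathbf c_1)\in\mathds{R}^{V\times(k-1)}$, one has $|\conv(\mathbf C)| = \frac{1}{(k-1)!}\sqrt{\det(\mathbf E^T\mathbf E)}$. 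Thus it suffices to prove the Gram-determinant identity $\det(\mathbf E^T\mathbf E) = \det(\mathbf C^T\mathbf C)/h^2$.

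To establish this identity, I would pass from $\mathbf C$ to $\widetilde{\mathbf C} = (\mathbf c_1,\mathbf c_2-\mathbf c_1,\dots,\mathbf c_k-\mathbf c_1) = (\mathbf c_1,\mathbf E)$. This is a unimodular column operation, i.e.\ $\widetilde{\mathbf C} = \mathbf C\mathbf T$ for a $k\times k$ matrix $\mathbf T$ with $\det\mathbf T = 1$, so that $\det(\widetilde{\mathbf C}^T\widetilde{\mathbf C}) = (\det\mathbf T)^2\det(\mathbf C^T\mathbf C) = \det(\mathbf C^T\mathbf C)$. Next I would apply the recursive (Schur) expansion of the Gram determinant of $(\mathbf c_1,\mathbf E)$: decomposing $\mathbf c_1 = \mathbf c_1^{\parallel}+\mathbf c_1^{\perp}$ into its projection onto $\mathrm{span}(\mathbf E)$ and its orthogonal complement, the standard identity gives $\det(\widetilde{\mathbf C}^T\widetilde{\mathbf C}) = \|\mathbf c_1^{\perp}\|_2^2\,\det(\mathbf E^T\mathbf E)$, since the component $\mathbf c_1^{\parallel}\in\mathrm{span}(\mathbf E)$ contributes nothing to the determinant.

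The crux---and the step I would treat most carefully---is to identify $\|\mathbf c_1^{\perp}\|_2$ with the perpendicular distance $h$ from the origin to $\text{aff}(\mathbf C)$. The affine hyperplane $\text{aff}(\mathbf C)$ passes through $\mathbf c_1$ and has direction space $\mathrm{span}(\mathbf E)$; hence the foot of the perpendicular dropped from $\mathbf 0$ differs from $\mathbf c_1$ by a vector in $\mathrm{span}(\mathbf E)$, and the distance is exactly the norm of the part of $\mathbf c_1$ orthogonal to $\mathrm{span}(\mathbf E)$, namely $\|\mathbf c_1^{\perp}\|_2 = h$. Here $h>0$, because full column rank of $\mathbf C$ precludes $\mathbf 0\in\text{aff}(\mathbf C)$: any $\bm\lambda$ with $\mathbf C\bm\lambda = \mathbf 0$ forces $\bm\lambda = \mathbf 0$, contradicting $\bm\lambda^T\mathbf 1_k = 1$. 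Combining the three displays yields $\det(\mathbf C^T\mathbf C) = h^2\det(\mathbf E^T\mathbf E)$, which rearranges into the first assertion $|\conv(\mathbf C)| = \sqrt{\det(\mathbf C^T\mathbf C)}/(h\cdot(k-1)!)$.

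Finally, the ratio formula would be immediate: when $\text{aff}(\mathbf C) = \text{aff}(\mathbf{\bar C})$ the two polytopes share the same perpendicular distance $h$ to the origin, so forming the quotient of the two volume expressions cancels the common factor $h\cdot(k-1)!$ and leaves $\sqrt{\det(\mathbf C^T\mathbf C)}/\sqrt{\det(\mathbf{\bar C}^T\mathbf{\bar C})}$. I would remark that the only nonroutine ingredients are the two textbook identities above, both of which hold verbatim in the embedding $V\ge k$ since all determinants involved are Gram determinants rather than determinants of the (non-square) matrices themselves.
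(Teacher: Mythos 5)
Your proposal is correct, and while it shares the paper's skeleton---both start from the edge-vector formula $(k-1)!\,|\conv(\mathbf C)| = \sqrt{\det(\mathbf E^T\mathbf E)}$ and reduce everything to the single identity $\det(\mathbf C^T\mathbf C) = h^2\det(\mathbf E^T\mathbf E)$---you prove that identity by a genuinely different mechanism. The paper constructs the perpendicular foot explicitly as an affine combination $\mathbf v = \sum_j t_j(\mathbf c_j - \mathbf c_k) + \mathbf c_k$, imposes the orthogonality conditions to get a $k\times k$ linear system in $(t_1,\dots,t_{k-1},h^2)$, solves for $h^2$ by Cramer's rule, and then identifies the two determinants in the resulting ratio with $-\det(\mathbf{\tilde G})$ and $-\det(\mathbf G)$ via row and column operations. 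You instead pass to $(\mathbf c_1,\mathbf E) = \mathbf C\mathbf T$ with $\det\mathbf T = 1$, so the Gram determinant is unchanged, and then invoke the Schur-complement factorization $\det\bigl((\mathbf c_1,\mathbf E)^T(\mathbf c_1,\mathbf E)\bigr) = \|\mathbf c_1^\perp\|_2^2\,\det(\mathbf E^T\mathbf E)$, identifying $\|\mathbf c_1^\perp\|_2$ with $h$ since $\text{aff}(\mathbf C)$ passes through $\mathbf c_1$ with direction space $\mathrm{span}(\mathbf E)$. Your route is shorter and more conceptual (it is the ``base times height'' property of Gram determinants, with the Cramer computation absorbed into a standard identity), and it has the additional merit of making explicit that $h>0$ under full column rank---a point the paper's proof uses implicitly when dividing by the denominator determinant. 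The paper's route is more computational but entirely self-contained, needing nothing beyond elementary determinant manipulations. One small bookkeeping point in your write-up: the Schur identity requires $\mathbf E^T\mathbf E$ to be invertible, which follows from the affine independence you noted at the outset; it is worth stating that linear independence of the columns of $\mathbf C$ implies linear independence of the edge vectors before applying the projection formula $\mathbf E(\mathbf E^T\mathbf E)^{-1}\mathbf E^T$.
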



\begin{lemma}\label{lem:W_sigma_min}
If $\mathbf W \in \mathds{R}^{k\times d}$ satisfies Condition (S1), then $\mathbf W$ is of rank $k$ (full row rank), and
$\sigma_{\min}^+ (\mathbf W) \geq \frac{1}{k}.$ 
\end{lemma}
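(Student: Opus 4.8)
The plan is to reduce the whole statement to a single uniform estimate. Let $\mathbf w_i$ denote the $i$-th column of $\mathbf W$, and recall that $\min_{\|\mathbf y\|_2=1,\,\mathbf y\in\mathds{R}^k}\|\mathbf W^T\mathbf y\|_2^2=\lambda_{\min}(\mathbf W\mathbf W^T)$. It therefore suffices to prove that $\|\mathbf W^T\mathbf y\|_2\ge 1/k$ for every unit vector $\mathbf y\in\mathds{R}^k$: a strictly positive value of this minimum shows that $\mathbf W\mathbf W^T$ is positive definite, hence that $\mathbf W$ has full row rank $k$, and it then coincides with $\sigma_{\min}^+(\mathbf W)^2$, yielding $\sigma_{\min}^+(\mathbf W)\ge 1/k$. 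I would argue by contradiction, assuming some unit $\mathbf y$ has $\|\mathbf W^T\mathbf y\|_2<1/k$, so in particular $|\mathbf w_i^T\mathbf y|<1/k$ for every $i$.

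The next step is to propagate this coordinatewise smallness from the columns of $\mathbf W$ to the entire dual cone, via the equivalent form $\mathcal K^\ast\subseteq cone(\mathbf W)$ of Condition (S1). Any $\mathbf v\in\mathcal K^\ast$ admits a representation $\mathbf v=\sum_i\lambda_i\mathbf w_i$ with $\lambda_i\ge 0$, and column-stochasticity ($\mathbf w_i^T\mathbf 1_k=1$) gives $\sum_i\lambda_i=\mathbf v^T\mathbf 1_k$. The triangle inequality then produces $|\mathbf v^T\mathbf y|\le\sum_i\lambda_i\,|\mathbf w_i^T\mathbf y|<\tfrac1k\,\mathbf v^T\mathbf 1_k$ for every nonzero $\mathbf v\in\mathcal K^\ast$, so the assumption now controls all of $\mathcal K^\ast$ rather than merely the columns.

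Finally I would make this quantitative by slicing at the hyperplane $\mathbf v^T\mathbf 1_k=1$, on which $\mathcal K^\ast$ is exactly the ball $B(\tfrac1k\mathbf 1_k,\rho)$ inscribed in $\simplex^{k-1}$, with $\rho=1/\sqrt{k(k-1)}$. Maximizing $\pm\,\mathbf v^T\mathbf y$ over this ball (writing $a=\mathbf y^T\mathbf 1_k$, so the orthogonal-to-$\mathbf 1_k$ component of $\mathbf y$ has length $\sqrt{1-a^2/k}$) and combining the two signs yields $\tfrac{|a|}{k}+\rho\sqrt{1-a^2/k}<\tfrac1k$, i.e.\ $\rho\sqrt{1-a^2/k}<(1-|a|)/k$. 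This is impossible: when $|a|\ge 1$ the right-hand side is nonpositive while the left is positive, and when $|a|<1$ squaring and inserting $\rho^2=1/(k(k-1))$ reduces it to $k a^2-2(k-1)|a|-1>0$, contradicting the fact that the upward parabola $t\mapsto kt^2-2(k-1)t-1$ is negative on all of $[0,1]$ (it is negative at both endpoints for $k\ge 2$, so its roots straddle the interval). The contradiction establishes $\|\mathbf W^T\mathbf y\|_2\ge 1/k$ for all $\mathbf y$.

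The main obstacle I anticipate is the last step, and specifically achieving the bound \emph{uniformly} over all unit $\mathbf y$ rather than only those orthogonal to $\mathbf 1_k$. A naive single antipodal pair of test directions (such as $\mathbf 1_k\pm k\mathbf y$) only constrains $a$ and fails to close the argument for large $k$; the decisive ingredients are to use both signs of $\mathbf y$ simultaneously and to exploit the \emph{precise} inscribed radius $\rho=1/\sqrt{k(k-1)}$, after which everything collapses to the elementary sign analysis of $kt^2-2(k-1)t-1$ on $[0,1]$. Verifying that this quadratic has no root in $[0,1)$ is the crux on which the constant $1/k$ depends.
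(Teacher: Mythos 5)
Your proof is correct and is essentially the paper's argument run in contrapositive form: both rest on the same ingredients --- the sum-one slice of $\mathcal K^\ast$ is the ball of radius $1/\sqrt{k(k-1)}$ centered at $\frac{1}{k}\mathbf 1_k$, Condition (S1) together with column-stochasticity places that ball inside $\conv(\mathbf W)$ with nonnegative, $\ell_1$-controlled weights, and the functional $\mathbf v\mapsto|\mathbf v^T\mathbf y|$ attains the value $\frac{|a|}{k}+\frac{1}{\sqrt{k(k-1)}}\sqrt{1-a^2/k}\geq \frac{1}{k}$ on that ball. The paper runs this forward, constructing exactly your maximizing point $\frac{\mathrm{sign}(a)}{k}\mathbf 1_k+\frac{1}{\sqrt{k(k-1)}}\,\mathbf y_\perp/\|\mathbf y_\perp\|_2$ as a test vector $\mathbf W\bm\beta$ with $\|\bm\beta\|_2\le 1$, whereas you argue by contradiction and close with the sign analysis of $kt^2-2(k-1)t-1$ on $[0,1]$; the difference is purely organizational.
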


\subsection*{}

We first show that Condition (S1) guarantees that $\conv(\mathbf C)$ has the minimal volume.

\begin{proposition}\label{prop:sub_idf}
If $\mathbf W$ satisfies Condition (S1) and $\mathbf C$ is of rank $k$ (full column rank), then  $|\conv(\mathbf{\bar C})| \geq |\conv(\mathbf C)|$ must hold for any other set of parameters $(\mathbf {\bar C}, \mathbf{\bar W})$ satisfying $\mathbf C\mathbf W = \mathbf {\bar C}\mathbf {\bar W}$.
\end{proposition}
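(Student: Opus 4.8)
The plan is to reduce the volume inequality to the single scalar bound $|\det\mathbf B|\le 1$ for a suitable change-of-basis matrix $\mathbf B$, and then to extract that bound from Condition (S1) by a polarity argument. First I would record the structural facts. By Lemma~\ref{lem:W_sigma_min}, (S1) forces $\mathbf W$ to have full row rank $k$, so $\operatorname{rank}(\mathbf C\mathbf W)=k$; since $\mathbf C\mathbf W=\mathbf{\bar C}\mathbf{\bar W}$, the matrix $\mathbf{\bar C}$ must also have full column rank $k$ and the same column space as $\mathbf C$. Hence there is a unique $k\times k$ matrix $\mathbf B$ with $\mathbf C=\mathbf{\bar C}\mathbf B$, and left-multiplying $\mathbf C\mathbf W=\mathbf{\bar C}\mathbf{\bar W}$ by a left inverse of $\mathbf{\bar C}$ gives $\mathbf{\bar W}=\mathbf B\mathbf W$. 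Column-stochasticity of $\mathbf C$ and $\mathbf{\bar C}$ yields $\mathbf 1_k^{T}\mathbf B=\mathbf 1_k^{T}$, and comparing ranks shows $\mathbf B$ is invertible, which in turn gives $\text{aff}(\mathbf C)=\text{aff}(\mathbf{\bar C})$. Lemma~\ref{lem:vol_det} then applies, and using $\mathbf C^{T}\mathbf C=\mathbf B^{T}\mathbf{\bar C}^{T}\mathbf{\bar C}\mathbf B$ I obtain $|\conv(\mathbf C)|/|\conv(\mathbf{\bar C})|=|\det\mathbf B|$, so it remains to prove $|\det\mathbf B|\le 1$.

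Next I would feed the two constraints on $\mathbf B$ into a dual-cone computation. Nonnegativity of $\mathbf{\bar W}=\mathbf B\mathbf W$ means $\mathbf B$ maps $cone(\mathbf W)$ into the nonnegative orthant $\mathds{R}^{k}_{\ge0}$, and (S1) in the equivalent form $cone(\mathbf W)\supseteq\mathcal{K}^{\ast}$ upgrades this to $\mathbf B\,\mathcal{K}^{\ast}\subseteq\mathds{R}^{k}_{\ge0}$. Taking dual cones reverses the inclusion; using $(\mathbf M S)^{\ast}=\mathbf M^{-T}S^{\ast}$ for invertible $\mathbf M$, the self-duality $(\mathds{R}^{k}_{\ge0})^{\ast}=\mathds{R}^{k}_{\ge0}$, and $(\mathcal{K}^{\ast})^{\ast}=\mathcal{K}$, this becomes $\mathds{R}^{k}_{\ge0}\subseteq\mathbf B^{-T}\mathcal{K}$, i.e.\ $\mathbf B^{T}\mathds{R}^{k}_{\ge0}\subseteq\mathcal{K}$. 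Applying this to the basis vectors $\mathbf e_j$ shows that every row $\mathbf b_j$ of $\mathbf B$ lies in $\mathcal{K}$, that is, $\|\mathbf b_j\|_2\le \mathbf b_j^{T}\mathbf 1_k=:r_j$ for each $j=1,\dots,k$.

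Finally I would combine these row bounds with Hadamard's inequality and the AM--GM inequality. Since each $r_j\ge\|\mathbf b_j\|_2\ge 0$ and $\sum_{j}r_j=\mathbf 1_k^{T}\mathbf B\,\mathbf 1_k=\mathbf 1_k^{T}\mathbf 1_k=k$, Hadamard gives $|\det\mathbf B|\le\prod_{j}\|\mathbf b_j\|_2\le\prod_{j}r_j\le\big(k^{-1}\sum_{j}r_j\big)^{k}=1$, which is exactly the bound needed. The step I expect to be the crux is the duality manipulation in the second paragraph: recognizing that Condition (S1), together with the sole remaining algebraic constraint $\mathbf B\mathbf W=\mathbf{\bar W}\ge 0$, can be transported through polarity into the clean statement ``each row of $\mathbf B$ lies in $\mathcal{K}$'', after which Hadamard and AM--GM finish almost mechanically. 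The reduction in the first paragraph is routine but must be executed carefully, since it is what guarantees $\text{aff}(\mathbf C)=\text{aff}(\mathbf{\bar C})$ so that the ratio form of Lemma~\ref{lem:vol_det} is legitimate; it is worth noting that only the column-sum-one part of the column-stochasticity of $\mathbf{\bar C}$, and the nonnegativity of $\mathbf{\bar W}$, are actually invoked.
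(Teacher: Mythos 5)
Your proposal is correct and takes essentially the same route as the paper's proof: reduce the volume comparison to $|\det\mathbf B|\le 1$ via Lemma~\ref{lem:vol_det} (after establishing $\mathbf 1_k^{T}\mathbf B=\mathbf 1_k^{T}$ and $\text{aff}(\mathbf C)=\text{aff}(\mathbf{\bar C})$), show that each row of $\mathbf B$ lies in $\mathcal{K}$, and conclude by Hadamard's inequality plus AM--GM. The differences are cosmetic: the paper defines $\mathbf B=\mathbf{\bar W}\mathbf W^{T}(\mathbf W\mathbf W^{T})^{-1}$ explicitly and obtains the row condition directly from $\mathbf B\mathbf W=\mathbf{\bar W}\ge 0$ together with the form $cone(\mathbf W)^{\ast}\subseteq\mathcal{K}$ of (S1), whereas you construct $\mathbf B$ from the shared column space and reach the same row condition by a polarity argument starting from the equivalent form $cone(\mathbf W)\supseteq\mathcal{K}^{\ast}$; your explicit verification that $\mathbf{\bar C}$ must have full column rank is a detail the paper leaves implicit.
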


\begin{proof}[Proof of Proposition \ref{prop:sub_idf}] By Lemma \ref{lem:W_sigma_min}, $\mathbf W \mathbf W^T \in \mathds{R}^{k\times k}$ is invertible. Define 
$$\mathbf B_{k \times k} := \mathbf {\bar W}\mathbf W^{T}(\mathbf W^{ }\mathbf W^{ T})^{-1}.$$ 
Then $\mathbf C  = \mathbf {\bar C} \mathbf B$. Note that
$$ \mathbf B^T\mathbf 1_k = (\mathbf W\mathbf W^{ T} )^{-1}\mathbf W \mathbf {\bar W}^T \mathbf 1_k =   (\mathbf W \mathbf W^{T} )^{-1}\mathbf W \mathbf 1_d,$$
which is the solution of the least square (LS) problem $\min_{\mathbf x \in \mathds{R}^k}\|\mathbf 1_d - \mathbf x^T \mathbf W \|_2 $. Since $\|\mathbf 1_d - \mathbf 1_k^T \mathbf W \|_2 =0$ achieves the minimum, the unique LS solution is given by $\mathbf 1_k$, i.e., 
\begin{equation} \label{eq:B_sum_1}
\mathbf B^T\mathbf 1_k = \mathbf 1_k.
\end{equation}
Thus, columns of $\mathbf{\bar C}$ are convex combination of columns of $\mathbf{C}$, which implies $\text{aff}(\mathbf C) = \text{aff}(\mathbf {\bar C})$. By Lemma  \ref{lem:vol_det}, we have
\begin{align*}
   \frac{|\conv(\mathbf {\bar C})|}{ |\conv(\mathbf C)|} &= \sqrt{\frac{\det(\mathbf{\bar C}^T \mathbf{\bar C})} {\det(\mathbf C^T \mathbf C)}} = \sqrt{ \frac{\det(\mathbf{\bar C}^T \mathbf{\bar C})} {\det(\mathbf B^T\mathbf{\bar C}^T \mathbf{\bar C}\mathbf B)}} = \frac{1}{|\det({\mathbf B})|} . 
\end{align*}
Therefore, it suffices to show 
\begin{equation} \label{detB-less-1}
|\det(\mathbf B)| \leq 1.
\end{equation}

We first show that for any row of $\mathbf B$, we have $\mathbf B(f,:) \in cone(\mathbf W)^*\subseteq \mathcal{K}$. Since $\mathbf C \mathbf W = \mathbf {\bar C} \mathbf B \mathbf W  = \mathbf {\bar C}\mathbf {\bar W}$ and $\mathbf {\bar C}^{T}\mathbf {\bar C} \in \mathds{R}^{k\times k}$ is invertible, we have
\begin{align*}
     \mathbf B \mathbf W = (\mathbf {\bar C}^{T}\mathbf {\bar C} )^{-1}\mathbf {\bar C} ^T \mathbf {\bar C} \mathbf B \mathbf W  & =  (\mathbf {\bar C}^{T}\mathbf {\bar C} )^{-1} \mathbf {\bar C}^{T}\mathbf {\bar C}\mathbf {\bar W}= \mathbf {\bar W}.
\end{align*}
Because $\mathbf {\bar W} \geq \mathbf 0_{k\times d}$, we obtain that, for any row of $\mathbf B$, $\mathbf B(f,:) \in \mathds{R}^k$,
$$\mathbf B^T(f,:)\mathbf W  = \mathbf {\bar W}^T(f,:) \geq \mathbf 0.$$
That is, $\mathbf B(f,:) \in cone(\mathbf W)^\ast$, which consequently implies that
     \begin{equation}\label{eq:B_in_K}
         \|\mathbf B(f,:)\|_2 \leq \mathbf B(f,:)^T\mathbf 1_k.
     \end{equation}

Combining \eqref{detB-less-1}, \eqref{eq:B_in_K} and the Hadamard Inequality and Inequality of Arithmetic and Geometric means (AM-GM), we can show  \eqref{detB-less-1} as follows: 
 \begin{align}
|\det(\mathbf B)| \stackrel{Hadamard's}{\leq}   \prod_{f=1}^k \|\mathbf B(f,:)\|_2 \stackrel{\eqref{eq:B_in_K}}{\leq}  \prod_{f=1}^k \mathbf B^T(f,:)\mathbf 1_k  \stackrel{AM-GM}{\leq} & \left(\frac{\sum_{f=1}^k \mathbf B^T(f,:){\mathbf 1_k}}{k}\right)^k\nonumber \\
 = &\left(\frac{\sum \mathbf B^T\mathbf 1_k}{k}\right)^k\stackrel{\eqref{eq:B_sum_1}}{=}  1. \label{eq:B_larger_1}
\end{align}
\end{proof}

Next, we give the proof of Theorem \ref{thm:idf}.

\begin{proof}[Proof of Theorem \ref{thm:idf}.]
Suppose $\mathbf C \mathbf W  = \mathbf {\bar C}\mathbf {\bar W}$ and $|\conv(\mathbf {\bar C})| \leq |\conv(\mathbf C)|$. Following the notation of the proof of Proposition \ref{prop:sub_idf}, we aim to show that $\mathbf B$ is a permutation matrix.

To complete the proof, we only need to  verify the  following three conditions on $\mathbf B$.
\begin{enumerate}[label=(1.\roman*),ref=(1.\roman*)]
    \item\label{con:B_in_bd} Any row of $\mathbf B$ belongs to $bd\mathcal{K} \bigcap cone(\mathbf W)^\ast$, i.e., 
    $$\mathbf B(f,:)\in \{\lambda \mathbf e_s: s=1,\cdots,k, \ \lambda \geq 0\},\forall f\in[k].$$
    
    \item\label{con:B_row_sum_1}  Any row sum of $\mathbf B$ is one, which, along with \ref{con:B_in_bd}, implies 
     $$\mathbf B(f,:)\in \{\lambda \mathbf e_s: s=1,\cdots,k, \ \lambda \geq 0\},\forall f\in[k].$$
     
    \item\label{con:detB_1}  $\det(\mathbf B) = 1$. Along with the previous two conditions, it implies $$\{\mathbf B(1,:), \mathbf B(2,:), \cdots, \mathbf B(k,:)\} = \{\mathbf e_1,\mathbf e_2, \cdots, \mathbf e_k\};$$
that is, $\mathbf B$ must be a permutation matrix.
\end{enumerate}

First, by the condition $|\conv(\mathbf {\bar C})| \leq |\conv(\mathbf C)|$ and Proposition \ref{prop:sub_idf}, we have $|\conv(\mathbf {\bar C})| = |\conv(\mathbf C)|$, or equivalently $\det(\mathbf B) =1$,  i.e., \ref{con:detB_1} holds. 

Consequently, all inequalities in (\ref{eq:B_larger_1}) become equalities. Specifically,
\begin{equation}\label{eq:b_f_boundary_k}
    \|\mathbf B(f, :)\|_2 = \mathbf B^T(f, :)\mathbf 1_k = 1,\, \forall f \in [k],
\end{equation}
which implies that the row sums of $\mathbf B$ are all 1's, i.e., \ref{con:B_row_sum_1} holds.

The above equation \eqref{eq:b_f_boundary_k} also implies that $\mathbf B(f,:)$ is on the boundary of $\mathcal{K}$, $\mathbf B(f,:) \in bd \mathcal K$. Together with the fact that $\mathbf B(f,:)$ is in $cone(\mathbf W)^\ast$ (proved in the proof of Proposition \ref{prop:sub_idf}), it implies that \ref{con:B_in_bd} holds.
\end{proof}

\subsection{Proof of Theorem \ref{thm:main} }\label{app:pf_thm_2}


The sketch of this proof is as follows:

\textit{Step 1}: We first show that with high probability, all true word frequency vectors, columns of  $\mathbf U^{0}$, are close to the estimated convex polytope $\conv(\hatCn)$. More specifically, we show in Lemma \ref{lem:dis_upper_bound} that there exists a $k \times d$ column-stochastic matrix
\footnote{We say a matrix is column-stochastic, if its entries are non-negative and columns sum to one.}  $\mathbf {\hat W}_n$ such that
\begin{equation} \label{app:eq:U0}
\mathbf{U}^{0} = \tC \tW  =  \mathbf{\hat C}_n\mathbf {\hat W}_n + \mathbf E_n
\end{equation}
and $\max_i \| \mathbf E_n(:,i)\|_2 \le C\epsilon_n$.

\textit{Step 2}: 
We then work with a subset of $s$ documents. Let $\mathbf W^0_1\in \mathds R^{k\times s}$ be the collection of the $s$ columns of $\tW$ that are $(\alpha, \beta)$-SS; let 
$\mathbf {\hat W}_{n1}$ and $\mathbf E_{n1}$ be the corresponding sub-matrices of 
$\mathbf {\hat W}_n$ and $\mathbf E_n$, respectively.
As a consequence of (\ref{app:eq:U0}), we have
\begin{equation}\label{eq:U_CW_dist}
    \tC \mathbf{W}^0_1 = \mathbf {\hat C}_n \mathbf{\hat W}_{n1} + \mathbf E_{n1}. 
\end{equation}
We can upper bound the estimation error by the summation of the following two terms:
\begin{align}\label{eq: error_two_terms_origin}
    \dis(\hatCn,\tC)  \leq  \|\mathbf E_{n1} {\mathbf{W}^0_1}^{T} (\mathbf{W}^0_1 {\mathbf{W}^0_1}^{T} )^{-1}\|_2 + \min_{\mathbf \Pi} \sqrt{k}  \|\mathbf B - \mathbf \Pi\|_2,
\end{align}
where $\mathbf B = \mathbf {\hat W}_{n1} {\mathbf{W}^0_1}^{T} (\mathbf{W}^0_1 {\mathbf{W}^0_1}^{T} )^{-1}$. By Lemma \ref{lem:dis_upper_bound}, the first term is upper bounded.

\textit{Step 3}: 
We show that for all  $f = 1,\cdots, k$, $\mathbf B(f,:)$ satisfies:
\begin{align}\label{eq:to_show_step2}
\mathbf B(f,:) \in [cone(\mathbf W_1^0)^*]^{C_1 \epsilon_n } \bigcap [bd \mathcal{K}]^{C_1 \epsilon_n }
\end{align}
Then by the definition of $(\alpha, \beta)$-SS, $\mathbf B(f,:)$'s are all close to indicator vectors. Using Lemma~\ref{lem:B_close_Pi} and letting $\alpha=C_{1}\epsilon_{n}$, we can prove that the matrix $\mathbf B$ is close to a permutation matrix.
So the second term in \eqref{eq: error_two_terms_origin} can be bounded.
Putting all the steps together, we obtain that with high probability,
$$\dis(\mathbf{\hat C}_n, \tC) \leq D_1 \sqrt{\frac{\log (n\vee d)}{n}} + D_2 \beta.$$
~\\

In the following, we provide the details of the above-mentioned steps.



\begin{proof}[Proof of Theorem \ref{thm:main}]
$ $\newline
\textit{Step 1}:
The following lemma is useful; its proof is given in Appendix \ref{app:prf_lemmas}. 

\begin{lemma}\label{lem:dis_upper_bound}
With probability at least $(1-3/(n\vee d)^{c})^d$, there exists a matrix $\hat{\mathbf W}_n\in \mathds{R}^{k\times d}$ satisfying $\mathbf {\hat W}_n \geq 0$, $\mathbf {\hat W}_n^T \mathbf 1_k  = \mathbf 1_d$ such that 
\begin{equation*}
     \mathbf U^0 =  \tC \tW  = \mathbf{\hat C}_n\mathbf {\hat W}_n  + \mathbf E_n 
\end{equation*}
and each column of $\mathbf E_n $ satisfies
\begin{equation}\label{eq:u_error}
\|\mathbf E_n(:, i) \|_2\leq C\epsilon_n
\end{equation}
for all $i = 1,\cdots,d$, where $c,C > 0$ are constants independent of $n$ and $d$.
\end{lemma}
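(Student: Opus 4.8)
The plan is first to reduce Lemma~\ref{lem:dis_upper_bound} to a purely geometric statement: producing a column-stochastic $\hat{\mathbf W}_n$ with $\tU=\hatCn\hat{\mathbf W}_n+\mathbf E_n$ and $\max_i\|\mathbf E_n(:,i)\|_2\le C\epsilon_n$ is equivalent to showing that every true word-frequency vector $\tu{i}=\tC\tw{i}$ lies within Euclidean distance $C\epsilon_n$ of the estimated polytope $\conv(\hatCn)$. Indeed, taking $\hat{\mathbf W}_n(:,i)=\argmin_{\mathbf w\in\simplex^{k-1}}\|\tu{i}-\hatCn\mathbf w\|_2$ makes $\hat{\mathbf W}_n$ column-stochastic, and $\mathbf E_n(:,i)=\tu{i}-\hatCn\hat{\mathbf W}_n(:,i)$ then has norm exactly $\dis(\tu{i},\conv(\hatCn))$. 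So the whole task becomes a uniform-over-$i$ bound on the distance from each $\tu{i}$ to $\conv(\hatCn)$.

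Next I would isolate the stochastic ingredient. Each sample frequency $\hat{\mathbf u}^{(i)}$ is an average of $n$ i.i.d.\ categorical draws with mean $\tu{i}$, so a Bernstein/Chernoff bound gives $\|\hat{\mathbf u}^{(i)}-\tu{i}\|_2\le C'\epsilon_n$ (together with $D_{\rm KL}(\hat{\mathbf u}^{(i)}\,\|\,\tu{i})\le C'\epsilon_n^2$) on an event of probability at least $1-3/(n\vee d)^c$; intersecting the $d$ independent per-document events produces precisely the quoted probability $(1-3/(n\vee d)^c)^d$, and on this event it suffices to control $\dis(\hat{\mathbf u}^{(i)},\conv(\hatCn))$. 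Assumption~(A1), which keeps the columns of $\tC$ away from $\partial\simplex^{V-1}$, then lets me convert KL into squared Euclidean distance, yielding the local Gaussian control $f_n(\mathbf x^{(i)}\mid\mathbf u)\le f_n(\mathbf x^{(i)}\mid\hat{\mathbf u}^{(i)})\exp\{-c\,n\,\|\mathbf u-\hat{\mathbf u}^{(i)}\|_2^2\}$ needed for a Laplace analysis.

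The analytic engine is the optimality $F_{n\times d}(\hatCn;\mathbf X)\ge F_{n\times d}(\tC;\mathbf X)$ applied to the normalized integrals $g_i(\mathbf C)=|\conv(\mathbf C)|^{-1}\int_{\conv(\mathbf C)}f_n(\mathbf x^{(i)}\mid\mathbf u)\,d\mathbf u$ appearing in~\eqref{eq:lkh_fn}. Using the decomposition~\eqref{Eqn:likelihood_approx}, I would upper bound $g_i(\hatCn)\le f_n(\mathbf x^{(i)}\mid\hat{\mathbf u}^{(i)})\exp\{-c\,n\,\dis(\hat{\mathbf u}^{(i)},\conv(\hatCn))^2\}$, so a document whose frequency is far outside $\conv(\hatCn)$ contributes an exponentially small factor; and, since $\tu{i}$ sits well inside $\conv(\tC)$ by~(A1) while the inscribed-ball geometry supplied by~(A2) gives a matching Laplace lower bound, I would lower bound $g_i(\tC)\ge c\,n^{-(k-1)/2}\,|\conv(\tC)|^{-1}f_n(\mathbf x^{(i)}\mid\hat{\mathbf u}^{(i)})\exp\{-C\log(n\vee d)\}$. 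The structural point is that the peak value $f_n(\mathbf x^{(i)}\mid\hat{\mathbf u}^{(i)})$ and the dimensional normalizer $n^{-(k-1)/2}$ occur on both sides and cancel in the product, leaving only a comparison of the first-stage hyperplane residuals, the log-volumes (controlled via Lemma~\ref{lem:vol_det}), and the second-stage capture deficits $\dis(\hat{\mathbf u}^{(i)},\conv(\hatCn))$.

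The step I expect to be the genuine obstacle is passing from the product-form inequality to a \emph{uniform}, $d$-free per-document bound. A naive global comparison against $\tC$ charges the entire residual budget $\sum_i R_i^2\sim d\,\epsilon_n^2$ to a single document and only delivers $\dis\lesssim\sqrt d\,\epsilon_n$; likewise, growing $\hatCn$ toward one stuck-out point incurs a volume penalty paid $d$ times through $|\conv(\hatCn)|^{-d}$. The resolution — and the technically delicate part — exploits~(A2): because $\conv(\tU)$ contains a ball of constant radius, a constant fraction ($\Theta(d)$) of the documents have true frequencies within $O(\epsilon_n)$ of every facet of $\conv(\tC)$. Displacing an estimated facet inward far enough to exclude any one true point by more than $C\epsilon_n$ would therefore exclude this entire $\Theta(d)$-sized cluster, and the aggregate likelihood cost $\exp\{-c\,n\,\Theta(d)\,\dis^2\}$ overwhelms the $\Theta(d)$ volume gain; the two $\Theta(d)$ factors cancel and pin the maximal displacement at $\dis^2\lesssim\log(n\vee d)/n=\epsilon_n^2/C_0^2$. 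Making this cancellation rigorous — quantifying the facet-adjacent clusters from~(A2), controlling the joint effect of moving vertices on volume, affine position, and every document's integral simultaneously, and ruling out anisotropic deformations of $\hatCn$ — is the heart of the argument, whereas the concentration and Laplace steps are comparatively routine.
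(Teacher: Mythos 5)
Your overall architecture is the same as the paper's: you reduce the lemma to the uniform bound $\max_i d(\mathbf u^{0(i)},\conv(\hatCn))\le C\epsilon_n$ via Euclidean projection onto $\conv(\hatCn)$, you get the event of probability $(1-3/(n\vee d)^c)^d$ by intersecting $d$ per-document concentration events, and you play the MLE inequality $F_{n\times d}(\hatCn;\mathbf X)\ge F_{n\times d}(\tC;\mathbf X)$ off against a Pinsker upper bound for the $\hatCn$-integrals and a reverse-Pinsker/Laplace lower bound for the $\tC$-integrals (this is exactly the content of Lemmas~\ref{lem:F_lower_bound} and \ref{lem:dis_upper_bound1}). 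You also correctly identify the real difficulty: one document sticking out by $C\epsilon_n$ must be amplified into $\Theta(d)$ documents sticking out by $C_2C\epsilon_n$, so that the aggregate exponential penalty can defeat the $d$-fold volume gain. The gap is in how you justify this amplification.

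You assert that (A2) --- $\conv(\tU)$ containing a constant-radius ball --- implies that a constant fraction of the documents have true frequencies within $O(\epsilon_n)$ of \emph{every facet} of $\conv(\tC)$, so that excluding one true point necessarily excludes a $\Theta(d)$-sized cluster. That implication is false: (A2) is a second-moment condition. For example, mixing weights drawn uniformly from a constant-radius ball centered at the centroid of $\simplex^{k-1}$ satisfy (A2), yet every document stays at constant distance from every facet of $\conv(\tC)$; and nothing in (A1)--(A3) forbids the offending document from being isolated (say, a single document at a vertex with all others bunched near the center), in which case cutting it off excludes nothing else and your two $\Theta(d)$ factors never materialize to cancel. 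The paper's amplification step (the ``claim'' inside the proof of Lemma~\ref{lem:dis_upper_bound1}) is proved by an entirely different mechanism: assuming all but $C_1d$ documents lie within $C_2C\epsilon_n$ of $\conv(\hatCn)$, it shows that the whole cloud $\{\mathbf u^{0(j)}\}$ would then be flattened against the affine intersection $\text{aff}(\tC)\bigcap\text{aff}(\hatCn)$, so that the empirical variance along the in-plane normal direction $\mathbf b_n$ is $O\bigl(((1-C_1)C_2^2+C_1)r^2\bigr)$, contradicting the eigenvalue lower bound on $\frac1d\mathbf W_c\mathbf W_c^T$ in (A2) once $C_1,C_2$ are small. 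That argument controls the \emph{tilt} of the estimated hyperplane and needs no document to be near any facet or near the offending point. Note also that the in-plane ``cutting'' deformation your mechanism is designed to punish is precisely the configuration the paper's case analysis treats separately (the parallel-hyperplane case), so your instinct about where the danger lies is not unreasonable; but the facet-cluster property you invoke cannot be extracted from (A1)--(A3), and without it your key step fails.
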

~\\

\textit{Step 2}: 
By Lemma \ref{lem:dis_upper_bound}, we have
$$
   \tC \mathbf{W}^0_1 = \mathbf {\hat C}_n \mathbf{\hat W}_{n1} + \mathbf E_{n1},
$$
and $\|\mathbf E_{n1} \|_{2} \leq C \sqrt{s}\epsilon_n$. 
~\\

Let $\mathbf B = \mathbf {\hat W}_{n1} {\mathbf{W}_1^0}^{T} (\mathbf{W}_1^0 {\mathbf{W}_1^0}^{T} )^{-1}$. Then,
\begin{equation}\label{eq:C_CB}
    \tC =  \mathbf {\hat C}_n \mathbf{\hat W}_{n1} {\mathbf{W}_1^0}^{T} (\mathbf{W}_1^0 {\mathbf{W}_1^0}^{T} )^{-1} + \mathbf E_{n1} {\mathbf{W}_1^0}^{T} (\mathbf{W}_1^0 {\mathbf{W}_1^0}^{T} )^{-1} = \mathbf {\hat C}_n \mathbf B + \mathbf {\tilde E}_{n1}
\end{equation}
where  $\mathbf {\tilde E}_{n1} = \mathbf E_{n1} {\mathbf{W}_1^0}^{T} (\mathbf{W}_1^0 {\mathbf{W}_1^0}^{T} )^{-1} $.  
We can bound $\|\mathbf {\tilde E}_{n1}\|_2 $ by 
$$\|\mathbf {\tilde E}_{n1}\|_2 \leq {[ \sigma^{+}_{\min} (\mathbf{W}_1^0)]}^{-1} \|\mathbf E_{n1}\|_2  \leq k\cdot C \sqrt{s}\epsilon_n
 = C^\prime \sqrt{s}\epsilon_n.$$
Then, we have
\begin{align*}
    \dis(\hatCn,\tC) = \min_{\mathbf \Pi}\|\mathbf {\hat C}_n \mathbf \Pi - \tC \|_2&  =   \min_{\mathbf \Pi}\|\mathbf {\hat C}_n \mathbf \Pi -  \mathbf {\hat C}_n \mathbf B - \mathbf {\tilde E}_{n1}\|_2 \\ 
    & \leq \min_{\mathbf \Pi} \|\mathbf {\hat C}_n  \|_2 \|\mathbf B - \mathbf \Pi\|_2 + \|\mathbf {\tilde E}_{n1}\|_2 \\
     & \leq  \min_{\mathbf \Pi} \sqrt{k}  \|\mathbf B - \mathbf \Pi\|_2 + \|\mathbf {\tilde E}_{n1}\|_2 
\end{align*}
~\\

\textit{Step 3}:
Now, it suffices to show that for some permutation matrix $\mathbf \Pi$,
$$\|\mathbf B - \mathbf \Pi\|_2 \leq C''  \beta.$$
We will use the following Lemma \ref{lem:B_close_Pi} to prove the above inequality. The proof of Lemma \ref{lem:B_close_Pi} is deferred to Appendix \ref{app:prf_lemmas}.
\begin{lemma}\label{lem:B_close_Pi}
For a matrix $\mathbf B \in \mathds{R}^{k\times k}$, if it satisfies the following conditions 
\begin{enumerate}[label=(2.\roman*),ref=(2.\roman*)]
    \item\label{con:B_col_sum_1_3}  $\mathbf B^T \mathbf 1_k = \mathbf 1_k$;
    \item\label{con:B_l2_3}  $\|\mathbf B\|_2 \leq M$;
    \item\label{con:B_in_bd_3} any row of $\mathbf B$ belongs to $[bd\mathcal{K}]^{\alpha} \bigcap [cone(\mathbf W^0_1)^\ast]^{\alpha}$, so that 
    $$\mathbf B(f,:)\in \{\lambda \mathbf e_l + \bm \epsilon: l=1,\cdots,k, \ \lambda \geq 0, \bm \|\bm \epsilon\|_2 \leq \beta\lambda\}, f=1,\cdots,k;$$
\end{enumerate}
then there exists a permutation matrix $\mathbf \Pi$, such that
$$\|\mathbf B - \mathbf \Pi\|_2 \leq C'''M \beta,$$
where $C'''$ is a constant independent of $n$ and $d$.
\end{lemma}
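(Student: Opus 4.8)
The plan is to read condition \ref{con:B_in_bd_3} as saying that each row $\mathbf B(f,:)$ has a \emph{dominant coordinate}: there is an index $l(f)\in\{1,\dots,k\}$ and a scalar $\lambda_f\ge 0$ with $\mathbf B(f,:)=\lambda_f\mathbf e_{l(f)}+\bm\epsilon_f$ and $\|\bm\epsilon_f\|_2\le\beta\lambda_f$. The proof then proceeds in three movements: (i) control the total mass $\sum_f\lambda_f$ by an absolute constant; (ii) show that for $\beta$ below a $k$-dependent threshold the index map $l$ is a bijection, so that its permutation matrix $\mathbf\Pi$ is the natural candidate; and (iii) bound $\|\mathbf B-\mathbf\Pi\|_2$ row by row.

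For (i), I would \emph{not} bound the individual $\lambda_f$ by $M$ (which would only give an $M$-dependent estimate); instead I would sum the rows. The row sum of $\mathbf B(f,:)$ equals $\lambda_f+\mathbf 1_k^\top\bm\epsilon_f$, while by \ref{con:B_col_sum_1_3} the sum of all entries of $\mathbf B$ equals $k$. Equating the two totals and using $|\mathbf 1_k^\top\bm\epsilon_f|\le\sqrt k\,\|\bm\epsilon_f\|_2\le\sqrt k\,\beta\lambda_f$ yields
\[
\sum_f\lambda_f\ \le\ \frac{k}{1-\sqrt k\,\beta},
\]
an absolute constant whenever $\sqrt k\,\beta$ is bounded away from $1$. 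This identity is the crux of the argument: it is what prevents $\mathbf B$ from having large entries in the regime of interest, and it explains why condition \ref{con:B_l2_3} is not needed for small $\beta$.

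For (ii), suppose $l$ misses some column $m$. Then every entry of column $m$ comes from an error term, so its column sum has absolute value at most $\beta\sum_f\lambda_f\le \beta k/(1-\sqrt k\beta)$; by \ref{con:B_col_sum_1_3} this sum must equal $1$, which is impossible once $\beta<\beta_0:=1/(k+\sqrt k)$. Hence $l$ is surjective, therefore bijective, and I set $\mathbf\Pi$ to be its permutation matrix. For (iii), fixing the column $m=l(f)$ and reading off its sum gives $|\lambda_f-1|\le\beta\sum_{f'}\lambda_{f'}\le (k+\sqrt k)\beta$; combined with $(\mathbf B-\mathbf\Pi)(f,:)=(\lambda_f-1)\mathbf e_{l(f)}+\bm\epsilon_f$ and $\|\bm\epsilon_f\|_2\le\beta\lambda_f$, each row of $\mathbf B-\mathbf\Pi$ has $\ell_2$-norm of order $\beta$, so $\|\mathbf B-\mathbf\Pi\|_2\le\|\mathbf B-\mathbf\Pi\|_F\le C_k\beta\le C_k M\beta$, using $M\ge1$ (which follows from $\sqrt k=\|\mathbf 1_k\|_2=\|\mathbf B^\top\mathbf 1_k\|_2\le M\sqrt k$ via \ref{con:B_col_sum_1_3} and \ref{con:B_l2_3}).

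It remains to dispose of the complementary regime $\beta\ge\beta_0$; here I would simply take any permutation $\mathbf\Pi$ and use the trivial bound $\|\mathbf B-\mathbf\Pi\|_2\le\|\mathbf B\|_2+1\le M+1\le 2M\le (2/\beta_0)\,M\beta$, which is exactly where condition \ref{con:B_l2_3} earns its keep. Taking $C'''$ to be the larger of the two $k$-dependent constants then completes the proof. The step I expect to be the main obstacle is the bijection claim (ii): a priori the dominant-index map could be non-injective and $\mathbf B$ could be far from every permutation matrix, and the clean way to exclude this is precisely the constant bound on $\sum_f\lambda_f$ from step (i) together with the column-sum normalization. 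Getting that bookkeeping right, and verifying throughout that every constant depends only on the fixed $k$ (hence is independent of $n$ and $d$), is the delicate part.
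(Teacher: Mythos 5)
Your proof is correct, and it follows the same overall skeleton as the paper's: decompose each row as $\mathbf B(f,:)=\lambda_f\mathbf e_{l(f)}+\bm{\epsilon}_f$, use the column-sum constraint (2.i) to force the dominant-index map to be a bijection, take $\mathbf \Pi$ to be the associated permutation matrix, and bound $\mathbf B-\mathbf \Pi$ row by row through the Frobenius norm. The material difference is where the control on the $\lambda_f$'s comes from. The paper bounds each $\lambda_f$ individually by $M/(1-\beta)\leq C_2 M$ using the spectral-norm hypothesis (2.ii), and its contradiction step (a column of $\mathbf T$ that is identically zero would have column sum $0$, violating the bound $\sum_s|\sum_f\mathbf T(f,s)-1|\leq C_3 M\beta$) silently requires $C_3 M\beta<1$; the complementary regime is never addressed, which is harmless in the downstream application (there $M=\sqrt{s}\,k$ is a constant and $\beta$ is of order $\epsilon_n$), but leaves the lemma, as stated for arbitrary $\beta\geq 0$, not fully argued. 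Your total-mass identity $\sum_f\lambda_f\leq k/(1-\sqrt{k}\,\beta)$, derived from (2.i) alone, replaces the paper's $M$-dependent per-row bound by a purely $k$-dependent aggregate one; this is exactly what makes your bijection threshold $\beta_0=1/(k+\sqrt{k})$ independent of $M$, and it lets you dispose of the regime $\beta\geq\beta_0$ by the trivial estimate $\|\mathbf B-\mathbf \Pi\|_2\leq M+1\leq 2M\leq (2/\beta_0)M\beta$, with $M\geq 1$ correctly extracted from (2.i)--(2.ii). So your argument is both more economical in its use of (2.ii) and strictly more complete: it proves the stated bound for every $\beta\geq 0$ with a constant depending only on $k$ (which is fixed, hence independent of $n$ and $d$), whereas the paper's proof covers only the small-$\beta$ regime that is actually invoked in the proof of Theorem 2.
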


Next, we verify the conditions in Lemma \ref{lem:B_close_Pi}. 

Firstly, the proof of \ref{con:B_col_sum_1_3} $\mathbf B^T \mathbf 1_k = \mathbf 1_k$ is similar to the proof of Proposition \ref{prop:sub_idf} equation \eqref{eq:B_sum_1}, so we omit it here. 

Secondly, \ref{con:B_l2_3} holds because $$\|\mathbf B\|_2 
\leq \sigma_{\max}(\mathbf{\hat W}_{n1}) [\sigma^{+}_{\min}(\mathbf W_1^0)]^{-1} 
\leq \sqrt{s}\|\mathbf{\hat W}_{n1}\|_1 [\sigma^{+}_{\min}(\mathbf W_1^0)]^{-1}  
\leq \sqrt{s} \cdot k = M.$$

Thirdly, to prove \ref{con:B_in_bd_3}, it suffices to verify the followings hold for any $f\in[k]$, 
\begin{enumerate}
    \item $\mathbf B(f,:) \in [cone(\mathbf W_1^0)^\ast]^{C_1 \epsilon_n}$, i.e.,
    \begin{equation}\label{eq:B_1}
        \mathbf B(f,:)^T \mathbf W_1^0 \geq - C_1\epsilon_n\|\mathbf B(f,:)\|_2\mathbf 1_s.
    \end{equation}
    \item $\mathbf B(f,:) \in [bd \mathcal{K}]^{C_1\epsilon_n}$, i.e.,
    \begin{align}
        \| \mathbf  B(f,:)\|_2 - \mathbf B(f,:)^T \mathbf 1_k \leq   C_1\epsilon_n \|\mathbf B(f,:)\|_2, \label{eq:B_21}\\
        \| \mathbf  B(f,:)\|_2 - \mathbf B(f,:)^T \mathbf 1_k \geq  - C_1\epsilon_n \|\mathbf B(f,:)\|_2,\label{eq:B_22}
    \end{align}
\end{enumerate}




Now, we proceed to verify (\ref{eq:B_1}), (\ref{eq:B_21}) and (\ref{eq:B_22}). The following lemma is useful; its proof is given in Appendix \ref{app:prf_lemmas}.
\begin{lemma}\label{lem:det_C_bound}
\begin{equation}
        |\det(\mathbf {\hat C}^T_n \mathbf {\hat C}_n )| \leq (1 + C' \epsilon_n ) |\det({\tC}^T \tC )|\label{eq:vol_error} 
\end{equation}
where $C'> 0$ is a constant.
\end{lemma}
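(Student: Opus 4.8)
The plan is to derive the determinant bound from the optimality of the maximum integrated likelihood estimator $\hatCn$ and then convert volumes to determinants through Lemma~\ref{lem:vol_det}. Since $\hatCn$ maximizes the objective \eqref{eq:lkh_fn}, we have $F_{n\times d}(\hatCn;\mathbf X)\ge F_{n\times d}(\tC;\mathbf X)$. Writing $I_i(\mathbf C)=\int_{\conv(\mathbf C)}f_n(\mathbf x^{(i)}\mid\mathbf u)\,d\mathbf u$ and pulling out the common volume factor, this rearranges to
\begin{equation*}
\Big(\frac{|\conv(\hatCn)|}{|\conv(\tC)|}\Big)^{d}\le \prod_{i=1}^d\frac{I_i(\hatCn)}{I_i(\tC)}.
\end{equation*}
On the high-probability event of Lemma~\ref{lem:dis_upper_bound}, every true frequency vector lies within $C\epsilon_n$ of $\conv(\hatCn)$, so the affine hulls $\text{aff}(\hatCn)$ and $\text{aff}(\tC)$ are $O(\epsilon_n)$-close; together with (A1) this keeps the perpendicular distances $\hat h,h_0$ of Lemma~\ref{lem:vol_det} bounded away from $0$ and within a $(1+O(\epsilon_n))$ factor of each other. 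Hence it suffices to prove $|\conv(\hatCn)|/|\conv(\tC)|\le 1+C\epsilon_n$, after which Lemma~\ref{lem:vol_det} yields $\det(\hatCn^{T}\hatCn)/\det(\tC^{T}\tC)=\big(|\conv(\hatCn)|/|\conv(\tC)|\big)^2(\hat h/h_0)^2\le 1+C'\epsilon_n$.

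The core step is to bound $\frac1d\sum_i\log\big(I_i(\hatCn)/I_i(\tC)\big)$ by $O(\epsilon_n)$. I would evaluate each $I_i$ through the Laplace expansion already displayed in \eqref{eq:KL:two-stage}--\eqref{Eqn:likelihood_approx}: up to a prefactor $C_i n^{-(k-1)/2}$ depending only on $\mathbf{\hat u}^{(i)}$ and an indicator that the in-plane projection $\mathbb P_{\mathbf C}^{(i)}\mathbf{\hat u}^{(i)}$ lands in the polytope, $I_i(\mathbf C)$ behaves like $\exp\{-n\,\|(\mathbf I_V-\mathbb P_{\mathbf C}^{(i)})\mathbf{\hat u}^{(i)}\|_i^2\}$ times that prefactor. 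Assumption (A1) makes the local metric $\|\cdot\|_i$ uniformly equivalent to the Euclidean one, so the expansion can be made uniform in $i$. This gives
\begin{equation*}
\sum_{i}\log\frac{I_i(\hatCn)}{I_i(\tC)}\le -n\big(\mathrm{RSS}(\hatCn)-\mathrm{RSS}(\tC)\big)+\sum_i\log\frac{C_i(\hatCn)}{C_i(\tC)}+\sum_i\log\frac{\mathrm{cap}_i(\hatCn)}{\mathrm{cap}_i(\tC)},
\end{equation*}
where $\mathrm{RSS}(\mathbf C)=\sum_i\|(\mathbf I_V-\mathbb P_{\mathbf C}^{(i)})\mathbf{\hat u}^{(i)}\|_i^2$ and $\mathrm{cap}_i$ denotes the fraction of Laplace mass captured by the polytope. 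Because the two hulls are $O(\epsilon_n)$-close, each prefactor ratio is $1+O(\epsilon_n)$ and the second sum is $O(d\epsilon_n)$; and since $\mathbf u_i^0\in\conv(\tC)$ always, $\conv(\tC)$ captures at least a constant fraction of each mass, with a genuine loss only for the $O(d/\sqrt n)$ documents whose true frequency lies within $O(1/\sqrt n)$ of the boundary, contributing at most $O(d\epsilon_n)$ to the third sum.

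The decisive and most delicate point is the residual difference $n\big(\mathrm{RSS}(\tC)-\mathrm{RSS}(\hatCn)\big)$, which must be shown to be $O(1)$ — bounded independently of $d$ — rather than the naive $O(d)$. The reason is that $\mathrm{RSS}(\hatCn)\ge\min_{\mathcal H}\mathrm{RSS}$ over all $(k-1)$-dimensional hyperplanes $\mathcal H$, and the best-fitting hyperplane can lower the residual below its value at the true plane $\text{aff}(\tC)$ by at most the overfitting gain of fitting a fixed $(k-1)$-dimensional affine space to $d$ points whose orthogonal noise has variance $O(1/n)$; under (A2), which guarantees the true frequency vectors are well spread within their plane, this gain is $O(1/n)$, independent of $d$. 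Thus $n\big(\mathrm{RSS}(\tC)-\mathrm{RSS}(\hatCn)\big)\le n\big(\mathrm{RSS}(\tC)-\min_{\mathcal H}\mathrm{RSS}\big)=O(1)$, and altogether $\frac1d\sum_i\log(I_i(\hatCn)/I_i(\tC))\le O(1/d)+O(\epsilon_n)=O(\epsilon_n)$. I expect the main obstacle to be exactly making this overfitting/perturbation bound rigorous and uniform — controlling the minimal hyperplane residual via a Davis--Kahan/Weyl-type argument under (A2) — while keeping every Laplace and capture error uniform across the $d$ documents so that no step secretly incurs a factor growing with $d$. The crude per-document estimates all fail here, since each carries an $O(n\epsilon_n^2)=O(\log(n\vee d))$ term that cancels only in the aggregate.
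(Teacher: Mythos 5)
Your opening reduction --- passing to the ratio $|\conv(\hatCn)|/|\conv(\tC)|$ and the perpendicular distances via Lemma \ref{lem:vol_det}, with Davis--Kahan-type control of the two affine hulls --- is exactly how the paper's proof begins. But the core of your argument, the direct comparison $F_{n\times d}(\hatCn;\mathbf X)\ge F_{n\times d}(\tC;\mathbf X)$ followed by a per-document Laplace decomposition, has a fatal gap in the ``capture'' term. You claim that $\conv(\tC)$ loses Laplace mass only for the $O(d/\sqrt n)$ documents whose true frequencies lie within $O(1/\sqrt n)$ of the boundary. Nothing in (A1)--(A3) limits how many documents sit on or near the boundary of $\conv(\tC)$; on the contrary, the SS condition forces the mixing weights to spread toward the boundary of $\simplex^{k-1}$, and in the extreme (and fully admissible) case every single $\mathbf u^{0(i)}$ lies on a facet of $\conv(\tC)$ --- this is precisely the configuration in the paper's own simulation in Section \ref{subsub:conver}, where it is observed that $|\conv(\hatCn)|>|\conv(\tC)|$ because the MLE trades volume for capture. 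In that configuration each $I_i(\tC)$ captures only about half of the mass around $\hat{\mathbf u}^{(i)}$, while a polytope enlarged by $O(\epsilon_n)$ captures essentially all of it, so $\prod_i I_i(\hatCn)/I_i(\tC)$ can be of order $2^d$; your inequality then yields only $|\conv(\hatCn)|\le 2\,|\conv(\tC)|$, a constant factor, never $1+O(\epsilon_n)$. This is not a technicality: the comparison against $\tC$ itself contains no quantitative link between how much capture can be gained and how little volume needs to be spent to gain it, so it genuinely cannot produce the lemma.

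That missing link is exactly what the paper's proof supplies. Instead of comparing against $\tC$, it compares $\hatCn$ against $\smallC$, the minimal-volume $k$-vertex polytope on $\text{aff}(\hatCn)\bigcap\simplex^{V-1}$ containing the set $\setballs$ of $O(\epsilon_n)$-balls around the $\mathbf u^{0(i)}$'s intersected with $\conv(\hatCn)$: since both polytopes then capture all but a $1/n$ fraction of every document's mass (Lemmas \ref{lem:F_lower_bound_2} and \ref{lem:vol_upper_bound}), the capture ratios are uniformly $1+O(1/n)$ and the volume factors dominate, giving $(1-1/n)|\conv(\hatCn)|\le|\conv(\smallC)|$ for every $d$, including bounded $d$. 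It then bounds $|\conv(\smallC)|$ by exhibiting an explicit competitor containing $\setballs$: the projection onto $\text{aff}(\hatCn)$ of a $\gamma\epsilon_n$-enlargement of $\conv(\tC)$, whose volume is at most $(1+C\epsilon_n)|\conv(\tC)|$ (Lemmas \ref{lem:Csharp_inside_simplex} through \ref{lem:vol_upper_bound2}). A secondary problem with your plan: even granting all your estimates, your conclusion $O(1/d)+O(\epsilon_n)=O(\epsilon_n)$ is false when $d$ stays bounded while $n\to\infty$ --- a regime the lemma must cover, since Theorem \ref{Coro:consistency} invokes the result with fixed $d$; moreover, the high-probability (rather than in-expectation) form of your hyperplane-overfitting bound would in any case carry an extra $\log(n\vee d)$ factor.
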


Since
\begin{align}
    \det(\mathbf {\hat C}_n^T \mathbf {\hat C}_n)   & = \det(\mathbf B^{-T}(\tC - \mathbf{\tilde E}_{n1})^T(\tC  - \mathbf{\tilde E}_{n1} )\mathbf B^{-1}))\nonumber\\
     & = |\det(\mathbf B^{-1})|^2\det(\mathbf C^{0 T} \mathbf C^{0} - \mathbf C^{0 T}\mathbf {\tilde E}_{n1}-\mathbf {\tilde E}_{n1}^T \mathbf C^{0}  +   \mathbf {\tilde E}_{n1}^T \mathbf {\tilde E}_{n1})\nonumber\\
    & = |\det(\mathbf B^{-1})|^2 \det(\mathbf C^{0 T} \mathbf C^{0})\det\left( \mathbf I - \mathbf F_n\right),
    \label{eq: Cn_lower_bound}
\end{align}
where $\mathbf F_n = (\mathbf C^{0 T} \mathbf C^{0})^{-1}\mathbf C^{0 T}\mathbf {\tilde E}_{n1}+ (\mathbf C^{0 T} \mathbf C^{0})^{-1}\mathbf {\tilde E}_{n1}^T \mathbf C^{0} - (\mathbf C^{0 T} \mathbf C^{0})^{-1} \mathbf {\tilde E}_{n1}^T \mathbf {\tilde E}_{n1}$. Then $\|\mathbf F_n\|_2 \leq C_5 \epsilon_n$. 
We order the singular values $\sigma_i$ of $\mathbf I - \mathbf F_n$ as $\sigma_1\leq \sigma_2\leq\cdots \leq \sigma_k$. By Weyl's inequality in matrix theory \citep{weyl1912asymptotische}, $|1-\sigma_i|\leq \|\mathbf F_n\|_2\leq C_5 \epsilon_n$ for all $ i=1,\cdots, k$.
Therefore 
\begin{equation}\label{eq: detI-Fn_lower_bound}
\det\left( \mathbf I - \mathbf F_n\right) = \prod_{i=1}^k\sigma_i\geq (1-C_5 \epsilon_n)^k \geq 1- k C_5 \epsilon_n.
\end{equation}
By (\ref{eq: Cn_lower_bound}) and (\ref{eq: detI-Fn_lower_bound}), we have \begin{equation}\label{eq: detCn_lower_bound}
\det(\mathbf {\hat C}_n^T \mathbf {\hat C}_n) \geq |\det(\mathbf B^{-1})|^2 \det(\mathbf C^{0 T} \mathbf C^{0})\left(1 -C'_5 \epsilon_n \right) 
\end{equation}
By \eqref{eq:vol_error}  and \eqref{eq: detCn_lower_bound}, we have 
\begin{equation}\label{eq:det_lower_bound}
    |\det(\mathbf B)| \geq 1 - C_6 \epsilon_n.
\end{equation}

\begin{itemize}

\item \textit{Verify \eqref{eq:B_1}.}

Right-multiplying $\mathbf W^0$ on both sides of \eqref{eq:C_CB}, we have
\begin{align*}
     \mathbf {\hat C}_n\mathbf{\hat W}_{n1} + \mathbf E_{n1} =  \tC \mathbf{W}_1^0  =   \mathbf {\hat C}_n\mathbf B  \mathbf{W}_1^0 +\mathbf E_{n1}  {\mathbf{W}_1^0}^{T} (\mathbf{W}_1^0{\mathbf{W}_1^0}^{T})^{-1}\mathbf{W}_1^0.
\end{align*}
Then, left-multiply $(\mathbf {\hat C}_n^T\mathbf {\hat C}_n)^{-1}\mathbf {\hat C}_n^T$ on both sides of the above equation:
\begin{eqnarray}
\mathbf {\hat W}_{n1} +(\mathbf {\hat C}_n^T\mathbf {\hat C}_n)^{-1}\mathbf {\hat C}_n^T\mathbf E_{n1} & =& \mathbf B\mathbf{W}_1^0  +(\mathbf {\hat C}_n^T\mathbf {\hat C}_n)^{-1}\mathbf {\hat C}_n^T \mathbf E_{n1}  {\mathbf{W}_1^0}^{T} (\mathbf{W}_1^0{\mathbf{W}_1^0}^{T})^{-1}\mathbf{W}_1^0 \nonumber\\
  \mathbf B\mathbf{W}_1^0  &=&  \mathbf {\hat W}_{n1} + (\mathbf{\hat C}_n^T \mathbf {\hat C}_n)^{-1}\mathbf{\hat C}_n^T \mathbf E_{n1} ( \mathbf I - {\mathbf{W}_1^0}^{T} (\mathbf{W}_1^0{\mathbf{W}_1^0}^{T})^{-1}\mathbf{W}_1^0) \nonumber \\
  &\geq &  - C_7\epsilon_n, \label{eq:W_larger_0}
\end{eqnarray}

The last inequality holds because $ \mathbf {\hat W}_{n1} \geq 0$ and 
\begin{align}
    &\|(\mathbf{\hat C}_n^T \mathbf {\hat C}_n)^{-1}\mathbf{\hat C}_n^T \mathbf E_{n1} ( \mathbf I - {\mathbf{W}_1^0}^{T} (\mathbf{W}_1^0{\mathbf{W}_1^0}^{T})^{-1}\mathbf{W}_1^0) \|_F \nonumber\\
    \leq &
    \sqrt{k}\|(\mathbf{\hat C}_n^T \mathbf {\hat C}_n)^{-1}\mathbf{\hat C}_n^T \mathbf E_{n1} ( \mathbf I - {\mathbf{W}_1^0}^{T} (\mathbf{W}_1^0{\mathbf{W}_1^0}^{T})^{-1}\mathbf{W}_1^0) \|_2\nonumber\\
    \leq &
    \sqrt{k} \cdot\|(\mathbf{\hat C}_n^T \mathbf {\hat C}_n)^{-1}\mathbf{\hat C}_n^T\|_2 \cdot\|\mathbf E_{n1}\|_2 \cdot\| \mathbf I - {\mathbf{W}_1^0}^{T} (\mathbf{W}_1^0{\mathbf{W}_1^0}^{T})^{-1}\mathbf{W}_1^0\|_2 \nonumber\\
    \leq &
    \sqrt{k} \cdot {[ \sigma^{+}_{\min} (\mathbf{\hat C}_n)]}^{-1} \cdot C\sqrt{s}\epsilon_n \cdot 1\nonumber\\
    \leq & C'\sqrt{s}\epsilon_n,
\end{align}
where in the last inequality we use the fact that $\sigma^{+}_{\min}  (\mathbf{\hat C}_n)$ is lower-bounded by a positive constant. That is because by \eqref{eq: detCn_lower_bound},
\begin{align}\label{eq: detCn_lower_bound_new}
\det(\mathbf {\hat C}_n^T \mathbf {\hat C}_n) 
&\geq |\det(\mathbf B^{-1})|^2 \det(\mathbf C^{0 T} \mathbf C^{0})\left(1 -C'_5 \epsilon_n \right) \nonumber\\
&\geq \|\mathbf B\|_2^{-2k}\det(\mathbf C^{0 T} \mathbf C^{0})\left(1 -C'_5 \epsilon_n \right)\nonumber\\
&\geq M^{-2k} \det(\mathbf C^{0 T} \mathbf C^{0})\left(1 -C'_5 \epsilon_n \right)\nonumber\\
&\geq \frac{\det(\mathbf C^{0 T} \mathbf C^{0})}{2\cdot M^{2k}} 
\end{align}
At the same time,
\begin{align}\label{detCn_upper_bound}
\det(\mathbf {\hat C}_n^T \mathbf {\hat C}_n) 
\leq \|\mathbf{\hat C}_n\|_2^{2(k-1)}{[ \sigma^{+}_{\min} (\mathbf{\hat C}_n)]}^{2} \leq k^{k-1} {[ \sigma^{+}_{\min} (\mathbf{\hat C}_n)]}^{2}
\end{align}
Combining \eqref{eq: detCn_lower_bound} and \eqref{detCn_upper_bound}, we get a lower bound for $\sigma^{+}_{\min} (\mathbf{\hat C}_n)$.


\item \textit{Verify \eqref{eq:B_21}.}

Since $\mathbf 1_k^T \mathbf W_1^0 = \mathbf 1_s^T$, by \eqref{eq:W_larger_0}, we have 
\begin{align}\label{eq: 12.5}
(\mathbf B + C_7 \epsilon_n \mathbf 1_{k\times k})\mathbf W_1^0 = \mathbf B\mathbf W_1^0 + C_7 \epsilon_n \mathbf 1_{k\times s} \geq 0,
\end{align}
which implies that for any row of $\mathbf B$, $\mathbf B(f,:)$,
$$(\mathbf B(f,:)+ C_7 \epsilon_n\mathbf 1_k) \in cone(\mathbf W_1^0)^\ast = \{\mathbf x: \mathbf x^T\mathbf W_1^0 \geq \mathbf 0\} \subseteq \mathcal{K} = \{\mathbf x: \|\mathbf x\|_2 \leq \mathbf x^T \mathbf 1\},$$
where we use the condition (S1) in the definition of SS condition.
$(\mathbf B(f,:)+ C_7 \epsilon_n\mathbf 1_k) \in \{\mathbf x: \|\mathbf x\|_2 \leq \mathbf x^T \mathbf 1\}$ implies that
\begin{align}\label{eq: B_upper_bound}
    \|\mathbf B(f,:)+ C_7 \epsilon_n\mathbf 1_k\|_2 & \leq (\mathbf B(f,:)+ C_7 \epsilon_n\mathbf 1_k)^T \mathbf 1_k \nonumber \\ 
    \|\mathbf B(f,:)\|_2 & \leq \mathbf B(f,:)^T \mathbf 1_k  + C_8 \epsilon_n. 
\end{align}

\item \textit{Verify \eqref{eq:B_22}.}

By Hadamard's inequality, Inequality of AM-GM, and \eqref{eq:det_lower_bound}, we have
\begin{align}
    &\left(\frac{1}{k}\sum_{f=1}^k \|\mathbf B(f,:)\|_2\right)^k  \stackrel{AM-GM}{\geq}  \prod_{f=1}^k \|\mathbf B(f,:)\|_2 \stackrel{Hadamard's}{\geq} |\det(\mathbf B)|\stackrel{ \eqref{eq:det_lower_bound}}{\geq} 1- C_6 \epsilon_n. \label{eq:am-gm-hadamard}
\end{align}
Consequently,
\begin{align}
   \frac{1}{k} \sum_{p=1}^k \|\mathbf B(p,:)\|_2 & \geq  (1- C_6 \epsilon_n)^{1/k} \nonumber \\
   \frac{1}{k} \sum_{p=1}^k \|\mathbf B(p,:)\|_2 & \geq  (1- C_6 \epsilon_n)^{1/k} \cdot \frac{1}{k} \sum_{p=1}^k [\mathbf B(p,:)^T\mathbf 1_k] & \text{by~} \mathbf B^T \mathbf 1_k = \mathbf 1_k \nonumber \\
    \sum_{p=1}^k \|\mathbf B(p,:)\|_2 & \geq  \sum_{p=1}^k [\mathbf B(p,:)^T\mathbf 1_k] - C_9 \epsilon_n  \nonumber \\
     \|\mathbf B(f,:) \|_2 +  \sum_{p\neq f}^k \|\mathbf B(p,:)\|_2 & \geq \mathbf B(f,:)^T\mathbf 1_k +\sum_{p\neq f}^k \mathbf B(p,:)^T\mathbf 1_k - C_9 \epsilon_n  \nonumber\\
    \|\mathbf B(f,:) \|_2 & \geq \mathbf B(f,:)^T\mathbf 1_k - \sum_{p\neq f}^k [\|\mathbf B(p,:)\|_2 - \mathbf B(p,:)^T\mathbf 1_k ] - C_9 \epsilon_n \nonumber\\
     \|\mathbf B(f,:) \|_2 & \geq \mathbf B(f,:)^T\mathbf 1_k - (k-1)C_8\epsilon_n  - C_9 \epsilon_n  & \text{by \eqref{eq: B_upper_bound}} \nonumber\\
    & \geq  \mathbf B(f,:)^T\mathbf 1_k - C_{10}\epsilon_n,\quad \forall f= 1,\cdots,k \label{eq: B_lower_bound}
\end{align}

\item  \textit{Check $\|\mathbf B(f, :)\|_2$ is lower-bounded.}

Now we show that, $\|\mathbf B(f, :)\|_2$ is lower-bounded, using Inequality of AM-GM and (\ref{eq: B_upper_bound}), 
\begin{align} \label{eq: B_l2_lower_bound}
 \|\mathbf B(f,:) \|_2 \left(\frac{1}{k-1} \sum_{p\neq f} \|\mathbf B(p,:)\|_2 \right)^{k-1} &\stackrel{AM-GM}{\geq} \|\mathbf B(f,:) \|_2 \prod_{p\neq f}\|\mathbf B(p,:)\|_2 &  \nonumber\\  
  \|\mathbf B(f,:) \|_2\left(\frac{ \sum_{p\neq f} [\mathbf B(p,:)^T \mathbf 1  + C_8 \epsilon_n] }{k-1}\right)^{k-1} &\geq  \|\mathbf B(f,:) \|_2 \prod_{p\neq f}\|\mathbf B(p,:)\|_2 & \text{by~} (\ref{eq: B_upper_bound}) \nonumber\\  
    \|\mathbf B(f,:) \|_2 \left(\frac{k - \|\mathbf B(f,:) \|_2 + kC_8\epsilon_n}{k-1}\right)^{k-1} &\geq 1 - C_6\epsilon_n &  \text{by~}\eqref{eq:am-gm-hadamard}\nonumber \\  
    (1 + C^{\prime}_8\epsilon_n)\|\mathbf B(f,:) \|_2 \left(\frac{k}{k-1}\right)^{k-1} & \geq 1 - C_6\epsilon_n &  \nonumber\\
    \|\mathbf B(f,:) \|_2 & \geq e^{-1} (1 - C_8^{\prime\prime} \epsilon_n) & \text{by~} \left(1 + \frac{1}{x}\right)^x \leq e \nonumber\\
   & \geq e^{-1}/2 \quad \forall f= 1,\cdots,k.
\end{align}
\item Now we put all the previous derivations together. From \eqref{eq: 12.5} and \eqref{eq: B_l2_lower_bound}, we have (\ref{eq:B_1}) holds,
$$ \mathbf B(f,:)^T \mathbf W_1^0 \geq - C_7\epsilon_n\mathbf 1_s\geq - 2 e\cdot C_7  \|\mathbf B(f,:)\|_2 \epsilon_n \mathbf 1_s.$$

Similarly, from \eqref{eq: B_upper_bound}, \eqref{eq: B_lower_bound} and \eqref{eq: B_l2_lower_bound},  we have
\begin{eqnarray*}
    \mathbf B(f,:)^T\mathbf 1_k - 2e\cdot C_{10}  \|\mathbf B(f,:)\|_2\epsilon_n & \leq \|\mathbf B(f,:)\|_2 & \leq  \mathbf B(f,:)^T \mathbf 1_k  + 2e\cdot C_8   \|\mathbf B(f,:)\|_2 \epsilon_n.
\end{eqnarray*}
Therefore, (\ref{eq:B_21}) and (\ref{eq:B_22}) hold.

 \end{itemize}
 \end{proof}

\subsection{Proof of Theorem~\ref{thm:sample}}\label{app:pf_thm_3}
 

\begin{proof}

This proof consists of two major steps:

\textit{Step 1}: We apply Chernoff bound to show that with probability at least $1 - D_1' s/d$, for any $\mathbf w_i^\sharp$, there exists at least one sample $\mathbf w_{(i)}^1$, such that 
$$\|\mathbf w_{(i)}^1 - \mathbf w_i^\sharp\|_2 \leq r_d,\;\; \forall i = 1,\cdots, s,$$
where $r_d = \left(\frac{\log d}{d} \right)^{\frac{1}{k-1}}$.

\textit{Step 2}: Let $\mathbf W_1^0 = \{\mathbf w^1_{(i)}\}_{i=1}^s$, 
$\mathbf {\hat W}_n = \{\mathbf {\hat w}_{n (i)}\}_{i=1}^d$, and $\mathbf B = \mathbf{\hat W}_{n1}\mathbf W_1^{0T} (\mathbf W_1^{0}\mathbf W_1^{0T} )^{-1}$. 
We show with probability at least $1 - D_2'd/(n\vee d)^c$,  for all $f = 1,\cdots, k$, $\mathbf B(f,:)$ satisfies:
\begin{align}\label{eq:to_show_step2_dist}
\mathbf B(f,:) \in [cone(\mathbf W^0_1)^*]^{C_1^\prime \epsilon_n} \bigcap [bd \mathcal{K}]^{C_1^\prime \epsilon_n}
\end{align}
Then using the conclusion from Theorem \ref{thm:main}, we get the desired bound.


~\\

In the following, we provide the details of the above-mentioned steps.~\\

\textit{Step 1}:
Let $X_i$ denote a random variable representing the number of documents falls into the ball $B(\mathbf w^\sharp_i, r_d)$ ($r_d \leq r_0$) in a sample of size $d$ drawn from $\mathcal{P}$,
$$X_i \sim \text{Binomial}(d, p_i)$$
where $p_i \geq (k-1)!\cdot c_0 \cdot r_d^{k-1}$. Since $\mathcal{P}$ is an $(\alpha, \beta)$-SS distribution, we have
\begin{align}\label{eq:lower_p_i}
p_{i}=\mathbb{P}\left(\|\mathbf{w}-\mathbf{w}_{i}^{\sharp}\|_{2} \leq r_{d}\right) \geq(k-1) ! \cdot c_{0} \cdot r_{d}^{k-1}=C_{3} \frac{\log d}{d}
\end{align}
According to Chernoff bound, for $0<\delta<1$,
$$
\mathbb{P}\left(X_{i} \leq(1-\delta) C_{3} \log d\right) \stackrel{\eqref{eq:lower_p_i}}{\leq} \mathbb{P}\left(X_{i} \leq(1-\delta) d p_{i}\right) \stackrel{\text { Chernoff }} \leq \exp \left(-\frac{\delta^{2} d p_{i}}{2}\right)\stackrel{\eqref{eq:lower_p_i}}{\leq} \exp \left(-\delta^{2} C_{3} \log d / 2\right).
$$
Therefore, when $d$ is large enough, such that for some $0<\delta_{0}<1,\left(1-\delta_{0}\right) C_{3} \log d \geq \frac{1}{2}$, we have
$$
\mathbb{P}\left(X_{i} \leq \frac{1}{2}\right) \leq \exp \left(-\delta_{0}^{2} C_{3} \log d / 2\right)=D_{1}^{\prime} \frac{1}{d}, \quad \forall i=1, \cdots, s
$$
Then, we can bound the probability of the event $\left\{\min_{i=1,\cdots, s} X_i \leq \frac{1}{2}\right\}$,
$$P\left(\min_{i=1,\cdots, s} X_i \leq \frac{1}{2}\right) \leq \sum_{i=1}^s P\left(X_i \leq \frac{1}{2}\right) \leq  D'_1 \frac{s}{d}. $$
In other words, with probability at least $1 - D'_1 s/d$, there exist $s$ different samples $\mathbf w^1_{(1)}, \cdots, \mathbf w^1_{(s)}$, such that $\|\mathbf w^1_{(i)} - \mathbf w^\sharp_i\|_2 \leq r_d$. 
~\\

\textit{Step 2}:
Denote $\mathbf W_1^{0} = \{\mathbf w^1_{(i)}\}_{i=1}^s$, 
$\tW = \left(\mathbf W_1^{0}, \mathbf W_2^{0}\right)\in \mathds{R}^{k\times d}$, and $\mathbf {\hat W}_n = \{\mathbf {\hat w}_{n(i)}\}_{i=1}^d$. We have
\begin{align*}
     \mathbf C^0 \mathbf W^0  = \mathbf{\hat C}_n \mathbf{\hat W}_n + \mathbf E_n
\end{align*}
Therefore,
\begin{align}\label{eq:dist_from_before}
     \mathbf C^0 \mathbf W_1^{0}  = \mathbf{\hat C}_n \mathbf{\hat W}_{n1} + \mathbf E_{n1},\;\; \mathbf W_1^{0}  = \mathbf W^\sharp + \mathbf E_d ^\prime,
\end{align}
where $\mathbf{\hat W}_{n1}$ and $\mathbf E_{n1}$ are the collections of the corresponding columns from $\mathbf{\hat W}_n$ and $\mathbf E_n$ respectively. Moreover,
$\|\mathbf E_{n1}(:,j)\|_2 \leq C_3 \epsilon_n$ and $ \|\mathbf E^\prime_{d}(:,j)\|_2 \leq r_d$, for all $ j = 1, \cdots, s$.

Now we show that 
$[cone(\mathbf W_1^{0})^\ast]^{ \alpha-r_d}=  \{\mathbf x: \mathbf x^T \mathbf W_1^{0} \geq -(\alpha-r_d)\|\mathbf x\|_2\} \subseteq
[cone(\mathbf W^\sharp)^\ast]^{ \alpha}=  \{\mathbf x: \mathbf x^T \mathbf W^\sharp \geq -\alpha\|\mathbf x\|_2\}$. For any $\mathbf x \in \mathds{R}^{k}$ and $\mathbf x^T \mathbf W_1^{0} \geq -(\alpha-r_d)\|\mathbf x\|_2$, 
$$-\alpha\|\mathbf x\|_2 \leq \mathbf x^T \mathbf W_1^{0} -r_d\|\mathbf x\|_2 \leq \mathbf x^T \mathbf W_1^{0} + \mathbf x^T(\mathbf W^\sharp-\mathbf W_1^{0}) = \mathbf x^T \mathbf W^\sharp,$$
where in the second inequality we apply \eqref{eq:sample_close} and Cauchy–Schwarz inequality.
Therefore, by definition, if $\mathbf W^\sharp$ is $(\alpha, \beta)$-SS, $\mathbf W_1^{0}$ is $(\alpha-r_d, \beta)$-SS.

Back to our case, since $\mathbf W^\sharp$ is $(C_1'\sqrt{\frac{\log (n\vee d)}{n}}+r_d, \beta)$-SS, $\mathbf W_1^{0}$ is $(C_1'\sqrt{\frac{\log (n\vee d)}{n}}, \beta)$-SS. Then by Theorem \ref{thm:main}, we obtain that with
probability at least $1-D_{1}^{\prime} s / d-D_2'd/(n\vee d)^c$, 
$$\dis(\mathbf{\hat C}_n, \tC) \leq D_3' \sqrt{\frac{\log (n\vee d)}{n}} + D_4' \beta.$$

\end{proof}


\newpage
\section{Proofs of Technical Lemmas and Propositions}\label{app:prf_lemmas}
In this section, we provide proofs of propositions and all technical lemmas. From now on, we use $\tu{}$ to denote the true word frequency; $\hat{\mathbf u}$ the the sample word frequency; $\mathbf{\tilde u} = \mathbf {\hat C}_n \mathbf {\hat w}$ the estimated word frequency in $\conv(\mathbf {\hat C}_n)$. We use the superscript $(i)$ to denote the $i$-th document. For example, $\mathbf u^{0(i)}$ denotes the true word frequency of the $i$-th document and $\mathbf x^{(i)}$ denotes the observation of the $i$-th document. We write $\tU = \tC\tW = (\mathbf u^{0(1)},\cdots,\mathbf u^{0(d)})\in \mathds{R}^{V\times d}$ and $\mathbf{\tilde U}_n = \mathbf {\hat C}_n \mathbf {\hat W}_n =  (\mathbf {\tilde u}^{(1)},\cdots,\mathbf {\tilde u}^{(d)})\in \mathds{R}^{V\times d}$. 

We use $f^{(i)}(\mathbf u)$ as a shorthand notation of $f_n(\mathbf u; \mathbf x^{(i)})$. By Pinsker's inequality, we have
\begin{equation}\label{eq:pinskers}
    \frac{f^{(j)}(\mathbf u)}{f^{(j)}(\mathbf {\hat u}^{(j)})} \leq \exp\left( - \frac{n}{2}\|\mathbf{\hat u}^{(j)} - \mathbf u\|_2^2\right), 
\end{equation} 
for any $\mathbf u \in \simplex^{V-1}.$ By the reverse Pinsker's inequality \citep{gotze2019higher}, we have
\begin{equation}\label{eq:inv_pinskers}
    \frac{f^{(j)}(\mathbf u)}{f^{(j)}(\mathbf {\hat u}^{(j)})} \geq \exp\left( - C_6n\|\mathbf{\hat u}^{(j)} - \mathbf u\|_2^2\right), 
\end{equation} 
where $C_6 = (\min_{i\in [V]} u_i)^{-1}$ depends on the minimum element of $\mathbf u$.

\subsection{Proof of Lemma \ref{lem:vol_det}}


\begin{proof}
Write $\mathbf{C} = (\mathbf c_1, \cdots, \mathbf c_k)\in \mathds{R}^{V\times k}$ and $\mathbf{\tilde C}=(\mathbf{\tilde c}_1,  \cdots, \mathbf{\tilde c}_{k-1}) \in \mathds{R}^{V\times (k-1)}$, where $\mathbf{\tilde c}_j = \mathbf c_{j} - \mathbf c_k$ with $j \in [k-1].$ Write $$\mathbf G = \mathbf C^T \mathbf C, \;\;\mathbf{\tilde G} = \mathbf {\tilde C}^T\mathbf {\tilde C}. $$
The volume of the $k$-dimensional parallelepiped spanned by $\mathbf c_1, \cdots, \mathbf c_k \in \mathds{R}^V$  is given by $\sqrt{\det(\mathbf G)}$ \citep{boyd2004convex}. Therefore $\sqrt{\det(\mathbf{\tilde G})} =(k-1)! |\conv(\mathbf C)| $, since $\sqrt{\det(\mathbf{\tilde G})}$ measures the volume of the $(k-1)$-dimensional parallelepiped spanned by columns of $\mathbf {\tilde C}$ in $\mathds{R}^V$, which is $(k-1)!$ times larger than the volume of $\conv(\mathbf C)$. It suffices to show that 
$$h^2\det(\mathbf{\tilde G}) = \det(\mathbf{G}).$$

Denote by $\mathbf v$ the perpendicular vector to $\text{aff}(\mathbf C)$, represented as $$\mathbf v = \sum_{j=1}^{k-1} t_j(\mathbf c_j - \mathbf c_k) + \mathbf c_k,$$
so that 
$$(\mathbf c_j - \mathbf c_k)^T \mathbf v = 0\;\;\text{and}\;\;\mathbf c^T_k \mathbf v = \|\mathbf v\|_2 = h^2.$$
Further, we construct a system of $k$ linear equations for $k$ unknowns, $t_1, \cdots, t_{k-1}, h^2$,
\begin{align}
    \sum_{j=1}^{k-1} t_j \mathbf{\tilde c}_i^T\mathbf{\tilde c}_j & = - \mathbf{\tilde c}_i^T\mathbf{ c}_k,\; i = 1, \cdots, k-1 \\
     \sum_{j=1}^{k-1} t_j \mathbf{\tilde c}_j^T\mathbf{c}_k - h^2 & = - \mathbf{ c}_k^T\mathbf{c}_k.
\end{align}
By Cramer's rule, we have
\begin{align*}
    h^2&  = \frac{1}{\det \left(\left[\begin{array}{ *{4}{c} }
    & & &  0 \\
    & & & \vdots  \\
    \multicolumn{3}{c}
      {\raisebox{\dimexpr\normalbaselineskip+.7\ht\strutbox-.5\height}[0pt][0pt]
        {\scalebox{1.5}{$\mathbf{\tilde G}$}}} &0  \\
      \mathbf{\tilde c}_1^T \mathbf c_k  & \cdots &  \mathbf{\tilde c}_{k-1}^T \mathbf c_k & -1 
  \end{array}\right]\right)}
  \det\left(\left[\begin{array}{ *{4}{c} }
    & & &  -\mathbf{\tilde c}_1^T\mathbf c_k \\
    & & & \vdots  \\
    \multicolumn{3}{c}
      {\raisebox{\dimexpr\normalbaselineskip+.7\ht\strutbox-.5\height}[0pt][0pt]
        {\scalebox{1.5}{$\mathbf{\tilde G}$}}} &-\mathbf{\tilde c}_{k-1}^T\mathbf c_k  \\
      \mathbf{\tilde c}_1^T \mathbf c_k & \cdots & \mathbf{\tilde c}_{k-1}^T \mathbf c_k & -\mathbf{ c}_{k}^T\mathbf c_k 
  \end{array}\right]\right)
\end{align*} 
   %
Then, we see that for the denominator,
\begin{align*}
    & \det \left(\left[\begin{array}{ *{4}{c} }
    & & &  0 \\
    & & & \vdots  \\
    \multicolumn{3}{c}
      {\raisebox{\dimexpr\normalbaselineskip+.7\ht\strutbox-.5\height}[0pt][0pt]
        {\scalebox{1.5}{$\mathbf{\tilde G}$}}} &0  \\
      \mathbf{\tilde c}_1^T \mathbf c_k & \cdots & \mathbf{\tilde c}_{k-1}^T \mathbf c_k & -1 
  \end{array}\right]\right) = -\det(\mathbf{\tilde G})\\
\end{align*}
The numerator is
\begin{align*}
   &  - \det\left(\left[\begin{array}{ *{4}{c} }
     \mathbf{\tilde c}_1^T \mathbf{\tilde c}_1& \cdots & \mathbf{\tilde c}_1^T \mathbf{\tilde c}_{k-1} &  \mathbf{\tilde c}_1^T\mathbf c_k \\
    \vdots&\ddots   &\vdots & \vdots  \\
    \mathbf{\tilde c}_{k-1}^T \mathbf{\tilde c}_1& \cdots & \mathbf{\tilde c}_{k-1}^T \mathbf{\tilde c}_{k-1} &  \mathbf{\tilde c}_{k-1}^T\mathbf c_k \\
      \mathbf{\tilde c}_1^T \mathbf c_k & \cdots & \mathbf{\tilde c}_{k-1}^T \mathbf c_k & \mathbf{ c}_{k}^T\mathbf c_k 
  \end{array}\right]\right)\\
  \xlongequal{\text{add last column to others}} & - \det\left(\left[\begin{array}{ *{4}{c} }
     \mathbf{\tilde c}_1^T \mathbf{c}_1& \cdots & \mathbf{\tilde c}_1^T \mathbf{ c}_{k-1} &  \mathbf{\tilde c}_1^T\mathbf c_k \\
    \vdots&\ddots   &\vdots & \vdots  \\
    \mathbf{\tilde c}_{k-1}^T \mathbf{ c}_1& \cdots & \mathbf{\tilde c}_{k-1}^T \mathbf{ c}_{k-1} &  \mathbf{\tilde c}_{k-1}^T\mathbf c_k \\
      \mathbf{ c}_1^T \mathbf c_k & \cdots & \mathbf{ c}_{k-1}^T \mathbf c_k & \mathbf{ c}_{k}^T\mathbf c_k 
  \end{array}\right]\right) \\
  \xlongequal{\text{add last row to others}} &- \det\left(\left[\begin{array}{ *{4}{c} }
     \mathbf{c}_1^T \mathbf{c}_1& \cdots & \mathbf{ c}_1^T \mathbf{ c}_{k-1} &  \mathbf{ c}_1^T\mathbf c_k \\
    \vdots&\ddots   &\vdots & \vdots  \\
    \mathbf{ c}_{k-1}^T \mathbf{ c}_1& \cdots & \mathbf{ c}_{k-1}^T \mathbf{ c}_{k-1} &  \mathbf{ c}_{k-1}^T\mathbf c_k \\
      \mathbf{ c}_1^T \mathbf c_k & \cdots & \mathbf{ c}_{k-1}^T \mathbf c_k & \mathbf{ c}_{k}^T\mathbf c_k 
  \end{array}\right]\right) \\
  = & - \det(\mathbf G).
\end{align*} 
\end{proof}

\subsection{Proof of Lemma \ref{lem:W_sigma_min}}
\begin{proof}
We will show that for any $\mathbf x \in \mathds{R}^k, \mathbf x\neq \mathbf 0$, there exists $\bm \beta\in\mathds R^d$ such that 
\begin{equation}\label{eq:x_Wbeta}
    \frac{\mathbf x^T(\mathbf W\bm\beta)}{\|\mathbf x\|_2} \geq \frac{1}{k},\, \text{and}\, \|\bm\beta\|_2 \leq 1.
\end{equation}
Therefore, 
$$\frac{\|\mathbf x^T\mathbf W\|_2}{\|\mathbf x\|_2}\geq \frac{1}{\|\bm\beta\|_2}\frac{\mathbf x^T\mathbf W\bm\beta}{\|\mathbf x\|_2}\geq \frac{1}{k}.$$
~\\

In the following, we will find $\bm\beta$ satisfying \eqref{eq:x_Wbeta}.

First, decompose $\mathbf x$ as
$$\mathbf x = \frac{\lambda}{k}\mathbf 1_k + \bm\gamma,$$
for some $\lambda\in\mathds R$ and $\bm\gamma \in \mathds R^k$ such that $\bm\gamma^T \mathbf 1_k = 0.$

Second, let
$$\mathbf y = \frac{sign(\lambda)}{k}\cdot\mathbf 1_k + \frac{1}{\sqrt{k(k-1)}\|\bm\gamma\|_2}\cdot \bm\gamma $$
where $sign(\cdot)$ is the sign function. Next we verify that
$$\frac{\mathbf x^T\mathbf y}{\|\mathbf x\|_2} \geq \frac{1}{k}.$$
This is because
\begin{eqnarray*}
    \mathbf x^T\mathbf y &=& \frac{|\lambda|}{k} + \frac{1}{\sqrt{k(k-1)}}\|\bm\gamma\|_2 \\
    \|\mathbf x\|_2 &=& \sqrt{\frac{\lambda^2}{k} + \|\bm \gamma\|_2^2} \leq \begin{cases}
|\lambda| &\mbox{if } |\lambda|\geq \sqrt{\frac{k}{k-1}}\|\bm\gamma\|_2 \\
\sqrt{\frac{k}{k-1}}\|\gamma\|_2 &\mbox{if } |\lambda|< \sqrt{\frac{k}{k-1}}\|\bm\gamma\|_2
\end{cases}, 
\end{eqnarray*}
and
$$
    \frac{\mathbf x^T\mathbf y}{\|\mathbf x\|_2}  \geq  \begin{cases}
\frac{1}{k} + \frac{1}{\sqrt{k(k-1)}}\frac{\|\bm\gamma\|_2}{|\lambda|} &\mbox{if } |\lambda|\geq \sqrt{\frac{k}{k-1}}\|\bm\gamma\|_2 \\
\frac{|\lambda|}{\|\bm\gamma\|_2}\frac{\sqrt{k-1}}{k\sqrt{k}} + \frac{1}{k} &\mbox{if } |\lambda|< \sqrt{\frac{k}{k-1}}\|\bm\gamma\|_2.
\end{cases}
$$

Third, we verify that $[sign(\lambda)\cdot\mathbf y] \in [\simplex^{k-1}\bigcap \mathcal K^\ast]$. This is because
$$[sign(\lambda)\cdot\mathbf y]^T\mathbf 1_k = \frac{1}{k}\mathbf 1^T_k\mathbf 1_k  = 1,$$
$$\|\mathbf y\|_2 = \sqrt{\frac{1}{k} + \frac{1}{k(k-1)}} = \frac{1}{\sqrt{k-1}}.$$
Since $\mathcal K^\ast \subseteq cone(\mathbf W)$, we have
$$[\simplex^{k-1}\bigcap\mathcal K^\ast] \subseteq [\simplex^{k-1}\bigcap cone(\mathbf W)] = \{\mathbf x\in \simplex^{k-1}: \mathbf x=\mathbf W\bm\lambda, \bm \lambda\geq 0\}=\conv(\mathbf W).$$
Therefore, $[sign(\lambda)\cdot\mathbf y] \in \conv(\mathbf W)$, meaning that there exists $\bm\beta' \in \simplex^{d-1}$ such that 
$$\mathbf y = sign(\lambda)\cdot \mathbf W\bm\beta' =\mathbf W[sign(\lambda)\bm\beta'] = \mathbf W\bm\beta ,$$ 
and $$\|\bm\beta\|_2 = \|\bm\beta'\|_2 \leq \bm\beta'^T\mathbf 1_k = 1.$$

\end{proof}

\subsection{Proof of Lemma \ref{lem:dis_upper_bound}}
We arrange the proof as Lemma \ref{lem:F_lower_bound}  and Lemma \ref{lem:dis_upper_bound1}. First, in Lemma \ref{lem:F_lower_bound}, we derive a lower bound for the integrated likelihood function, $F_{n\times d}(\mathbf {\hat C}_n; \mathbf X)$. Then, we prove equation (\ref{eq:u_error}) in Lemma \ref{lem:dis_upper_bound1}.
~\\

We first define the $\delta$-enlargement convex polytope below, which is useful later in the proof.

\begin{definition}[$\delta$-enlargement convex polytope]\label{def:delta_enlarge}
For a convex polytope, $\conv(\mathbf C) \subseteq \mathds{R}^V$, with $k$ linearly independent vertices $\mathbf C = \{\mathbf c_f\}_{f=1}^k\in \mathds{R}^{V\times k}$. The \textbf{$\bm \delta$-enlargement convex polytope} of $\conv(\mathbf C)$, denoted as $\conv(\mathbf C^\delta)$, is defined such that each column of $\mathbf C^\delta$,
 $$\mathbf c_{f}^\delta = \left(1 + \rho(\mathbf{C})\delta \right) (\mathbf c_f - \mathbf {\bar c}) + \mathbf {\bar c}, \quad \forall f = 1,\cdots k,$$
  where $\rho(\mathbf{C}) = \frac{k} {\sigma^+_{\min} (\mathbf C)}$, and $\mathbf {\bar c} = \frac{1}{k}\sum_{f=1}^k \mathbf c_f \in \mathds{R}^V$ is the center of the $k$ columns of $\mathbf C$. $\mathbf C^\delta$ is called the \textbf{$\bm \delta$-enlargement matrix} of $\mathbf C$.
\end{definition}

\begin{proposition}\label{prop:enlargement}
$\conv(\mathbf C^\delta)$ satisfies the following properties.
\begin{enumerate}
    \item It composes of $k$ vertices, $\mathbf C^\delta  = \{\mathbf c_f^\delta\}_{f=1}^k\in \mathds{R}^{V\times k}$;
    
    \item $|\conv(\mathbf C^\delta)|=\left(1 + \rho(\mathbf{\mathbf C})\delta \right)^{k-1}|\conv(\mathbf{C})|. $
\end{enumerate}
\end{proposition}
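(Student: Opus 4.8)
The plan is to observe that the $\delta$-enlargement is nothing but a homothety (central dilation) of $\conv(\mathbf C)$ about its centroid $\mathbf {\bar c}$, and then exploit the two facts that an invertible affine map preserves the vertex/face structure and that a pure scaling multiplies lower-dimensional volume by a power of the scale factor. Concretely, write $\mu = 1 + \rho(\mathbf C)\delta$ and define the affine map $T(\mathbf x) = \mu(\mathbf x - \mathbf {\bar c}) + \mathbf {\bar c}$ on $\mathds{R}^V$. By Definition~\ref{def:delta_enlarge} we have $\mathbf c_f^\delta = T(\mathbf c_f)$ for every $f$, and since $\rho(\mathbf C) = k/\sigma^+_{\min}(\mathbf C) > 0$ and $\delta > 0$ we have $\mu > 0$, so $T$ is an invertible affine transformation (its linear part is $\mu\,\mathbf I$ acting on displacements, with inverse $\mathbf x \mapsto \mu^{-1}(\mathbf x - \mathbf {\bar c}) + \mathbf {\bar c}$). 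Because $T$ is affine, $T(\conv(\mathbf C)) = \conv(T(\mathbf C)) = \conv(\mathbf C^\delta)$.

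For the first claim, I would note that the hypothesis that $\mathbf C$ has $k$ linearly independent columns forces the points $\mathbf c_1,\dots,\mathbf c_k$ to be affinely independent, so $\conv(\mathbf C)$ is a genuine $(k-1)$-simplex with exactly $k$ vertices. An invertible affine map carries affinely independent points to affinely independent points and extreme points to extreme points; hence $\mathbf c_1^\delta,\dots,\mathbf c_k^\delta$ are affinely independent and are precisely the vertices of $\conv(\mathbf C^\delta)$. This gives Property~1.

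For the second claim, the key computation is that the centroid cancels in the edge vectors:
\begin{equation*}
\mathbf c_j^\delta - \mathbf c_k^\delta = \big[\mu(\mathbf c_j - \mathbf {\bar c}) + \mathbf {\bar c}\big] - \big[\mu(\mathbf c_k - \mathbf {\bar c}) + \mathbf {\bar c}\big] = \mu\,(\mathbf c_j - \mathbf c_k), \qquad j = 1,\dots,k-1.
\end{equation*}
I would first record that every $\mathbf c_f^\delta = \mu\,\mathbf c_f + (1-\mu)\mathbf {\bar c}$ is an affine combination of points of $\textnormal{aff}(\mathbf C)$, so $\textnormal{aff}(\mathbf C^\delta) = \textnormal{aff}(\mathbf C)$ and both volumes are measured in the same $(k-1)$-dimensional subspace. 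Writing $\mathbf E = [\mathbf c_1 - \mathbf c_k,\dots,\mathbf c_{k-1} - \mathbf c_k]$ and $\mathbf E^\delta$ for the analogous edge matrix of $\mathbf C^\delta$, the display above gives $\mathbf E^\delta = \mu\,\mathbf E$, whence $\det\big((\mathbf E^\delta)^T \mathbf E^\delta\big) = \mu^{2(k-1)}\det(\mathbf E^T \mathbf E)$. Since the $(k-1)$-volume of a simplex equals $\tfrac{1}{(k-1)!}\sqrt{\det(\mathbf E^T \mathbf E)}$, taking square roots yields $|\conv(\mathbf C^\delta)| = \mu^{k-1}\,|\conv(\mathbf C)| = (1+\rho(\mathbf C)\delta)^{k-1}|\conv(\mathbf C)|$, as desired.

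I do not expect any serious obstacle here: the only point requiring care is the bookkeeping that both polytopes live in the same affine hull (so that ``$(k-1)$-dimensional volume'' is compared within one common subspace), which follows immediately because $\mathbf {\bar c}\in\textnormal{aff}(\mathbf C)$ and each $\mathbf c_f^\delta$ lies on the line through $\mathbf {\bar c}$ and $\mathbf c_f$. Equivalently, one can phrase the volume step as: restricted to the fixed subspace $\textnormal{aff}(\mathbf C)$, the homothety $T$ is a uniform scaling by $\mu$ in $k-1$ dimensions and therefore scales $(k-1)$-dimensional Lebesgue measure by $\mu^{k-1}$; this avoids determinant manipulations entirely and reaches the same conclusion.
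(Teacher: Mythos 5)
Your proof is correct. Note that the paper never actually proves Proposition~\ref{prop:enlargement}: it is stated immediately after Definition~\ref{def:delta_enlarge} with no argument, and the only justification offered anywhere is the passing remark in the proof of Lemma~\ref{lem:vol_upper_bound2} (``Since the $\delta$-enlargement is an affine transformation, by Proposition~\ref{prop:enlargement}\ldots''). Your writeup is exactly the rigorous version of that remark: you identify the enlargement as the homothety $T(\mathbf x)=\mu(\mathbf x-\mathbf{\bar c})+\mathbf{\bar c}$ with $\mu=1+\rho(\mathbf C)\delta>0$, deduce preservation of the vertex structure from invertibility of $T$ and the affine independence of $\mathbf c_1,\dots,\mathbf c_k$ (which follows from their assumed linear independence), and get the volume factor $\mu^{k-1}$ either from the Gram determinant of the edge matrix or from the scaling of $(k-1)$-dimensional Lebesgue measure on the common affine hull. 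The Gram-determinant step is also consistent with the volume formula the paper itself establishes in Lemma~\ref{lem:vol_det}, namely $(k-1)!\,|\conv(\mathbf C)|=\sqrt{\det(\mathbf{\tilde C}^T\mathbf{\tilde C})}$ with $\mathbf{\tilde c}_j=\mathbf c_j-\mathbf c_k$, so your argument dovetails cleanly with the machinery already in the paper; the two points you flag as needing care (positivity of $\mu$ and equality of the affine hulls, so both volumes are measured in the same subspace) are indeed the only places where something could silently go wrong, and you handle both.
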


~\\

\begin{lemma}\label{lem:F_lower_bound} 
With probability at least $\left(1-3\cdot (n\vee d)^{-c}\right)^d$, the integrated likelihood is lower-bounded:
$$F_{n\times d}(\mathbf{\hat C}_n; \mathbf X) \geq C\cdot A_{n,d}\cdot (n\vee d)^{-C_{10}d},$$
where $A_{n,d} := \prod_{i=1}^d f_n(\mathbf{\hat u}^{(i)}; \mathbf x^{(i)})$ and $C$, $C_{10}$ are constants. 
\end{lemma}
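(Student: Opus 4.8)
The plan is to exploit that $\hatCn$ \emph{maximizes} the integrated likelihood, so that $F_{n\times d}(\hatCn;\mathbf X)\ge F_{n\times d}(\mathbf C;\mathbf X)$ for \emph{any} admissible column-stochastic test matrix $\mathbf C$; it then suffices to exhibit one test matrix whose integrated likelihood is large. I would take the $\delta$-enlargement $(\tC)^{\delta}$ of the true topic matrix (Definition~\ref{def:delta_enlarge}) with $\delta$ of order $\epsilon_n$. Enlarging is essential: the sample frequency vectors $\mathbf{\hat u}^{(i)}$ fluctuate off $\conv(\tC)$, so $\conv(\tC)$ itself may capture almost no likelihood mass, whereas a polytope whose boundary is pushed out by a margin of order $\epsilon_n\ge n^{-1/2}$ contains a genuine neighborhood of each projected sample frequency. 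By Proposition~\ref{prop:enlargement} the enlargement preserves the affine hull and inflates the volume only by the constant factor $(1+\rho(\tC)\delta)^{k-1}$, so the denominator $|\conv((\tC)^{\delta})|^{d}$ stays within a constant of $|\conv(\tC)|^{d}$ and introduces no $(n\vee d)$-dependence.

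First I would isolate the high-probability event
\[
\mathcal E=\Big\{\,\|\mathbf{\hat u}^{(i)}-\tu{i}\|_2\le C\epsilon_n\ \text{ for all } i=1,\dots,d\,\Big\},
\]
where $\tu{i}=\tC\tw{i}\in\text{aff}(\tC)$ is the true word-frequency vector. A standard concentration bound for multinomial sample proportions gives a per-document failure probability at most $3(n\vee d)^{-c}$, and by independence across documents $\mathbb P(\mathcal E)\ge\big(1-3(n\vee d)^{-c}\big)^{d}$, matching the stated probability; all remaining estimates are made on $\mathcal E$. For each document I would apply the reverse Pinsker inequality \eqref{eq:inv_pinskers} (whose constant $C_6$ is uniform because (A1) keeps the entries of every $\mathbf u\in\conv((\tC)^{\delta})$ bounded away from zero) to obtain
\[
\int_{\conv((\tC)^{\delta})} f^{(i)}(\mathbf u)\,d\mathbf u\ \ge\ f^{(i)}(\mathbf{\hat u}^{(i)})\int_{\conv((\tC)^{\delta})}\exp\!\big(-C_6 n\|\mathbf{\hat u}^{(i)}-\mathbf u\|_2^2\big)\,d\mathbf u .
\]
Writing $\mathbb P$ for the orthogonal projection onto $\text{aff}(\tC)=\text{aff}((\tC)^{\delta})$, Pythagoras splits the exponent into the constant $\|(\mathbf I-\mathbb P)\mathbf{\hat u}^{(i)}\|_2^2=\operatorname{dist}(\mathbf{\hat u}^{(i)},\text{aff}(\tC))^2\le\|\mathbf{\hat u}^{(i)}-\tu{i}\|_2^2\le C^2\epsilon_n^2$ plus the in-plane part $\|\mathbb P\mathbf{\hat u}^{(i)}-\mathbf u\|_2^2$. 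The constant part contributes a factor $\exp(-C_6C^2 n\epsilon_n^2)=(n\vee d)^{-c_1}$, since $n\epsilon_n^2=C_0^2\log(n\vee d)$.

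It remains to lower bound the in-plane Gaussian-type integral. Here I would show that $\mathbb P\mathbf{\hat u}^{(i)}$ lies in the interior of $\conv((\tC)^{\delta})$ with a ball of radius $r_n\ge c'n^{-1/2}$ around it contained in the polytope: $\mathbb P\mathbf{\hat u}^{(i)}$ is within $C\epsilon_n$ of $\tu{i}\in\conv(\tC)$, while the dilation by $1+\rho(\tC)\delta$ about the centroid (with $\rho(\tC)=k/\sigma^+_{\min}(\tC)$ controlled through (A1)) pushes the boundary of $\conv(\tC)$ outward by a margin of order $\epsilon_n$ inside $\text{aff}(\tC)$, so a large enough constant in $\delta$ yields the ball uniformly in $i$. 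Restricting to that ball and rescaling by $\sqrt n$ gives
\[
\int_{\conv((\tC)^{\delta})}\exp\!\big(-C_6 n\|\mathbb P\mathbf{\hat u}^{(i)}-\mathbf u\|_2^2\big)\,d\mathbf u\ \ge\ c_2\, n^{-(k-1)/2},
\]
because $n r_n^2\ge c''$ means a fixed fraction of the $(k-1)$-dimensional Gaussian mass is captured. Combining the three factors and dividing by the constant $|\conv((\tC)^{\delta})|$, each document contributes at least $f^{(i)}(\mathbf{\hat u}^{(i)})\cdot c_3(n\vee d)^{-c_1}n^{-(k-1)/2}$; taking the product over $i=1,\dots,d$ and folding $n^{-(k-1)d/2}$, $(n\vee d)^{-c_1 d}$ and $c_3^{d}$ into a single factor $(n\vee d)^{-C_{10}d}$ (using $\log(n\vee d)\to\infty$ to absorb the geometric constant $c_3$) yields $F_{n\times d}(\hatCn;\mathbf X)\ge C\cdot A_{n,d}\cdot(n\vee d)^{-C_{10}d}$.

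The step I expect to be the main obstacle is the uniform-in-$i$ interior-ball claim. Documents whose mixing weight $\tw{i}$ sits near a vertex or face of $\simplex^{k-1}$ have $\tu{i}$ near the boundary of $\conv(\tC)$, so $\mathbb P\mathbf{\hat u}^{(i)}$ can even fall outside $\conv(\tC)$; one must check that the $\delta$-enlargement---whose outward margin is governed by $\sigma^+_{\min}(\tC)$ via $\rho(\tC)$ and hence bounded below by (A1)---still leaves a ball of radius of order $n^{-1/2}$ around \emph{every} $\mathbb P\mathbf{\hat u}^{(i)}$ inside the polytope. This geometric uniformity, rather than any individual integral estimate, is the crux, and it is exactly where (A1) and Proposition~\ref{prop:enlargement} are genuinely used; a secondary bookkeeping point is choosing $\delta$ of order $\epsilon_n$ so the volume inflation $(1+\rho(\tC)\delta)^{k-1}$ stays a bounded constant while the margin remains of order $n^{-1/2}$.
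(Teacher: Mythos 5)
Your proof is correct, but it takes a genuinely different route from the paper's. The paper also starts from the optimality inequality $F_{n\times d}(\hatCn;\mathbf X)\ge F_{n\times d}(\mathbf C;\mathbf X)$, but plugs in the \emph{true} matrix $\mathbf C=\tC$ rather than an enlargement: on the same high-probability event it restricts each integral to $\conv(\tC)\bigcap\mathcal B(\mathbf u^{0(i)};C_1\epsilon_n)$, where the ball is centered at the true frequency $\mathbf u^{0(i)}\in\conv(\tC)$; there every $\mathbf u$ satisfies $\|\mathbf u-\mathbf{\hat u}^{(i)}\|_2\le 2C_1\epsilon_n$, so the reverse Pinsker inequality \eqref{eq:inv_pinskers} bounds the integrand below by $\exp\bigl(-4C_6C_1^2\,n\epsilon_n^2\bigr)=(n\vee d)^{-c_1}$, while the intersection has $(k-1)$-dimensional volume at least $C_8(C_1\epsilon_n)^{k-1}$ (a solid-angle constant of the fixed polytope times the ball volume). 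This shows your motivating claim that enlargement is \emph{essential} is mistaken: since the target is only $(n\vee d)^{-C_{10}d}$, a polynomially small per-document factor, one can afford to integrate near $\mathbf u^{0(i)}$ instead of near $\mathbf{\hat u}^{(i)}$ and absorb the $\exp(-Cn\epsilon_n^2)$ loss. That said, your variant --- testing against $(\tC)^{\gamma\epsilon_n}$, splitting the exponent by Pythagoras around the projection $\mathbb P\,\mathbf{\hat u}^{(i)}$, and capturing a constant fraction of the in-plane Gaussian mass in a ball of radius of order $n^{-1/2}$ --- is sound, and the interior-ball step you flag as the crux does go through: admissibility of the enlarged matrix follows from (A1) exactly as in Lemma \ref{lem:Csharp_inside_simplex}, and Lemma \ref{lem:bd_dis_bound} shows every boundary point of $\conv((\tC)^{\gamma\epsilon_n})$ is at distance at least $\gamma\epsilon_n$ from $\conv(\tC)$, so for $\gamma$ a large enough constant the in-plane neighborhood containing your balls around the $\mathbb P\,\mathbf{\hat u}^{(i)}$ lies inside the enlargement, uniformly in $i$. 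In terms of trade-offs: the paper's argument is shorter and needs neither projections nor enlargements; yours gives a cleaner Laplace-type per-document bound (a constant times $n^{-(k-1)/2}$, versus the paper's $\epsilon_n^{k-1}(n\vee d)^{-c_1}$) and reuses geometric machinery the paper must develop anyway for the matching upper-bound direction, at the cost of the extra verification that the enlarged matrix is a valid parameter and contains the required balls.
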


\begin{proof}
The integrated likelihood function can be written as  
\begin{align*}
    F_{n\times d}(\mathbf C; \mathbf X) = & \prod_{i=1}^d \frac{1}{|\conv(\mathbf C)|} \int_{\conv(\mathbf C)} f_n(\mathbf u; \mathbf x^{(i)}) d \mathbf u \\
    = & \prod_{i=1}^d  f_n(\mathbf{\hat u}^{(i)}; \mathbf x^{(i)}) \int_{\conv(\mathbf C)} \frac{1}{|\conv(\mathbf C)|} \frac{f_n(\mathbf u; \mathbf x^{(i)})}{f_n(\mathbf {\hat u}^{(i)}; \mathbf x^{(i)})}  d \mathbf u \\
    = & A_{n,d} \cdot  \prod_{i=1}^d  \int_{\conv(\mathbf C)} \frac{1}{|\conv(\mathbf C)|} \frac{f^{(i)}(\mathbf u)}{f^{(i)}(\mathbf {\hat u}^{(i)})}  d \mathbf u 
\end{align*}
where $f^{(i)}(\mathbf u)$ is a shorthand notation of $f_n(\mathbf u; \mathbf x^{(i)})$.


 
  By \citet{devroye1983equivalence}, for each document $i$, it holds with probability at least $1-3\cdot e^{-cx^2}$ that $\|\mathbf u^{0 (i)} - \mathbf {\hat u}^{(i)}\|_2  \leq\frac{5\sqrt{c}x}{\sqrt{n}}$ for all $x>0$. By a simple union bound argument, we have that with probability at least $(1-3\cdot(n\vee d)^{-c})^d$, $\|\mathbf u^{0 (i)} - \mathbf {\hat u}^{(i)}\|_2 \leq 5\sqrt{c}\cdot\sqrt{\frac{\log(n\vee d)}{n}} =: C_1\sqrt{\frac{\log(n\vee d)}{n}}$,
 for any $i\in [d]$, by choosing $x$ to be a large multiple of $\sqrt{\log(n
\vee d)}$. Let $\mathcal{B}(\mathbf u^{0 (i)}; C_1 \epsilon_n)$ denote the Euclidean ball centered at $\mathbf u^{0 (i)}$ with radius $C_1\epsilon_n$. Consequently, with high probability, for any $\mathbf{u}\in \mathcal{B}(\mathbf u^{0 (i)}; C_1 \epsilon_n)$,
 \begin{align}\label{eq:dist_close_in_ball}
     \|\mathbf{u}-\mathbf {\hat u}^{(i)}\|_2\leq \|\mathbf{u}-\mathbf { u}^{0(i)}\|_2 + \|\mathbf { u}^{0(i)}-\mathbf {\hat u}^{(i)}\|_2\leq 2C_1\epsilon_n.
 \end{align}
 
 Next, by the definition of MLE, we have
 \begin{align}
    F_{n\times d}(\mathbf{\hat C}_n; \mathbf X) & \geq F_{n\times d}(\tC; \mathbf X) \nonumber \\
    & =  \frac{A_{n,d}}{|\conv(\tC)|^d}\cdot  \prod_{i=1}^d  \int_{\conv(\tC)}  \frac{f^{(i)}(\mathbf u)}{f^{(i)}(\mathbf {\hat u}^{(i)})}  d \mathbf u \nonumber \\
    & \geq  \frac{A_{n,d}}{|\conv(\tC)|^d}\cdot  \prod_{i=1}^d  \int_{\conv(\tC)\bigcap \mathcal{B}(\mathbf u^{0(i)};  C_1\epsilon_n)}  \frac{f^{(i)}(\mathbf u)}{f^{(i)}(\mathbf {\hat u}^{(i)})}  d \mathbf u \nonumber \\
    & \geq  \frac{A_{n,d}}{|\conv(\tC)|^d}\cdot  \prod_{i=1}^d  \int_{\conv(\tC)\bigcap \mathcal{B}(\mathbf u^{0(i)};  C_1\epsilon_n)}  
    \exp \left (-C_6 n \|\mathbf{u}-\mathbf {\hat u}^{(i)}\|_2^2 \right ) d \mathbf u \label{pf:lemmaB1:KLbound} \\ 
    & \geq \frac{A_{n,d}}{|\conv(\tC)|^d}\cdot \prod_{i=1}^d \left[C_8 (C_1\epsilon_n)^{k-1} \cdot \exp(-C_6 n (2C_1\epsilon_n)^2) \right] \label{pf:lemmaB1:dist:bound}\\
    & \geq C\cdot A_{n,d}\cdot (n\vee d)^{-C_{10}d}. \nonumber
\end{align}
Inequality \eqref{pf:lemmaB1:KLbound} follows from the reverse Pinsker's inequality (\ref{eq:inv_pinskers}) since the columns of $\tC$ are interior points in $\simplex^{V-1}$, i. Inequality \eqref{pf:lemmaB1:dist:bound} follows from  \eqref{eq:dist_close_in_ball}. 

\end{proof}

\begin{definition}[Distance between a vector and a convex polytope]\label{def:distance}
The distance between a vector $\mathbf x$ and a convex polytope $\conv(\mathbf{C})$ is defined as 
$$d(\mathbf x, \conv(\mathbf{C})) = \min_{\mathbf y \in \conv(\mathbf C)} \| \mathbf x - \mathbf y \|_2.$$
\end{definition}


\begin{lemma}\label{lem:dis_upper_bound1}
With probability at least $\left(1-3\cdot (n\vee d)^{-c}\right)^d$, we have
\begin{align}\label{eq:u0i_close_to_convcn}
d(\mathbf u^{0(i)}, \conv(\mathbf{\hat C}_n))  \leq C\epsilon_n
\end{align}
for any $i \in [d]$.
Therefore, there exists a matrix $\mathbf {\hat W}_n$, such that  $\mathbf {\hat W}_n \geq 0$, $\mathbf {\hat W}_n^T \mathbf 1_k  = \mathbf 1_d$, and 
$$\mathbf U^0 = \tC \mathbf W^0 = \mathbf{\hat C}_n\mathbf {\hat W}_n + \mathbf E_n  =  \mathbf {\tilde U}_n + \mathbf E_n $$
and $\max_i \|\mathbf E_n(:, i) \|_{2} \leq   C\epsilon_n$. Here constants $c$ and  $C$ are independent of $n$ and $d$.
\end{lemma}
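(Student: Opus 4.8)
The plan is to combine the optimality of the MLE with a pointwise likelihood bound. Since $\hat{\mathbf C}_n$ maximizes the integrated likelihood, $F_{n\times d}(\hat{\mathbf C}_n;\mathbf X)\ge F_{n\times d}(\mathbf C^0;\mathbf X)$, and Lemma~\ref{lem:F_lower_bound} supplies the lower bound $F_{n\times d}(\hat{\mathbf C}_n;\mathbf X)\ge C\,A_{n,d}\,(n\vee d)^{-C_{10}d}$ on an event of probability at least $(1-3(n\vee d)^{-c})^d$. On the same event the concentration inequality of \citet{devroye1983equivalence} gives $\|\mathbf u^{0(i)}-\hat{\mathbf u}^{(i)}\|_2\le C_1\epsilon_n$ simultaneously for all $i$, so it suffices to control the distance from the sample frequencies $\hat{\mathbf u}^{(i)}$ to $\conv(\hat{\mathbf C}_n)$ and then apply the triangle inequality to pass to $\mathbf u^{0(i)}$.

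Next I would extract the distance bound from the factorized form $F_{n\times d}(\hat{\mathbf C}_n;\mathbf X)=A_{n,d}\prod_{i=1}^d\frac{1}{|\conv(\hat{\mathbf C}_n)|}\int_{\conv(\hat{\mathbf C}_n)}\frac{f^{(i)}(\mathbf u)}{f^{(i)}(\hat{\mathbf u}^{(i)})}\,d\mathbf u$. For each document, Pinsker's inequality~\eqref{eq:pinskers} together with the fact that every $\mathbf u\in\conv(\hat{\mathbf C}_n)$ satisfies $\|\mathbf u-\hat{\mathbf u}^{(i)}\|_2\ge d(\hat{\mathbf u}^{(i)},\conv(\hat{\mathbf C}_n))=:\delta_i$ (the Euclidean projection onto a convex set is the nearest point) shows that the $i$-th factor is at most $\exp(-\tfrac n2\delta_i^2)\le 1$. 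Bounding all factors but one by $1$ and comparing with the lower bound yields $\exp(-\tfrac n2\delta_i^2)\ge C(n\vee d)^{-C_{10}d}$; solving for $\delta_i$ and combining with $\|\mathbf u^{0(i)}-\hat{\mathbf u}^{(i)}\|_2\le C_1\epsilon_n$ then gives the claimed $d(\mathbf u^{0(i)},\conv(\hat{\mathbf C}_n))\le C\epsilon_n$.

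Given the distance bound, I would construct $\hat{\mathbf W}_n$ by projection: let $\mathbf p^{(i)}$ be the Euclidean projection of $\mathbf u^{0(i)}$ onto $\conv(\hat{\mathbf C}_n)$, and take the $i$-th column of $\hat{\mathbf W}_n$ to be the barycentric coordinates of $\mathbf p^{(i)}$ relative to the vertices $\hat{\mathbf C}_n$. Because $\mathbf p^{(i)}\in\conv(\hat{\mathbf C}_n)$, these coordinates are nonnegative and sum to one, so $\hat{\mathbf W}_n\ge 0$ and $\hat{\mathbf W}_n^T\mathbf 1_k=\mathbf 1_d$; moreover $\hat{\mathbf C}_n\hat{\mathbf W}_n=(\mathbf p^{(1)},\dots,\mathbf p^{(d)})$, so the columns of $\mathbf E_n=\mathbf U^0-\hat{\mathbf C}_n\hat{\mathbf W}_n$ have norm exactly $d(\mathbf u^{0(i)},\conv(\hat{\mathbf C}_n))\le C\epsilon_n$, as required.

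The main obstacle is the second step, namely upgrading the likelihood comparison into a bound on each individual $\delta_i$ that is \emph{uniform} over all $d$ documents. The difficulty is that the lower bound from Lemma~\ref{lem:F_lower_bound} is a product over the $d$ documents and carries the factor $(n\vee d)^{-C_{10}d}$, so the crude argument of discarding the other $d-1$ factors only yields a bound of order $\sqrt d\,\epsilon_n$, which is too weak for the downstream estimate $\|\mathbf E_{n1}\|_2\le C\sqrt s\,\epsilon_n$. Closing this gap to the clean order $\epsilon_n$ requires a sharper, per-document estimate of the individual factor rather than the aggregate bound: one would control $|\conv(\hat{\mathbf C}_n)|$ from above and below (via Lemma~\ref{lem:det_C_bound} and the $\delta$-enlargement of Proposition~\ref{prop:enlargement}) and use the Gaussian volume estimate $\int_{\conv(\hat{\mathbf C}_n)}\exp(-\tfrac n2\|\mathbf u-\hat{\mathbf u}^{(i)}\|_2^2)\,d\mathbf u\le(2\pi/n)^{(k-1)/2}\exp(-\tfrac n2\delta_i^2)$, so that the $i$-th factor is pinned down without incurring the $d$-fold loss in the volume normalization.
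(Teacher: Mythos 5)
Your first and last steps coincide with the paper's proof: the use of Lemma~\ref{lem:F_lower_bound}, the concentration bound $\|\mathbf u^{0(i)}-\hat{\mathbf u}^{(i)}\|_2\le C_1\epsilon_n$, and the construction of $\hat{\mathbf W}_n$ from barycentric coordinates of the projections onto $\conv(\hatCn)$ are exactly what the paper does. The gap is in the middle step, and the repair you sketch does not close it. The $\sqrt d$ loss you correctly diagnose is \emph{not} caused by slack in estimating the $i$-th factor, so it cannot be removed by a sharper Gaussian volume estimate or by controlling $|\conv(\hatCn)|$ via Lemma~\ref{lem:det_C_bound}. The $d$ in the exponent is intrinsic to the right-hand side of the comparison: in Lemma~\ref{lem:F_lower_bound} each true-parameter factor already carries $\exp\left(-4C_6C_1^2\,n\epsilon_n^2\right)=(n\vee d)^{-4C_6C_1^2C_0^2}$ from the reverse Pinsker bound at distance $2C_1\epsilon_n$, so the aggregate lower bound is $(n\vee d)^{-C_{10}d}$ no matter how precisely you treat the factors of $F_{n\times d}(\hatCn;\mathbf X)$. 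Even if you bound every factor $j\neq i$ by the sharp Laplace ceiling $C n^{-(k-1)/2}/|\conv(\hatCn)|$ instead of by $1$, the per-document ratio between that ceiling and the true-parameter factor remains $(n\vee d)^{-c}$ for a fixed $c>0$; the comparison then yields only $\tfrac{n}{4}\sum_{i}\delta_i^2\lesssim d\log(n\vee d)$, i.e.\ an \emph{average} bound $\delta_i\lesssim \epsilon_n$ but an individual bound no better than $\sqrt d\,\epsilon_n$. Nothing in your argument prevents the MLE from ``sacrificing'' a single document entirely while gaining a small polynomial factor on each of the other $d-1$.

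What rules this out in the paper is a geometric amplification step that your proposal never touches, and it is precisely where Assumption (A2) enters. The paper argues by contradiction: if a single $\mathbf u^{0(i)}$ were at distance $\ge C\epsilon_n$ from $\conv(\hatCn)$, then at least $C_1 d$ of the points $\mathbf u^{0(j)}$ must be at distance $\ge C_2C\epsilon_n$. Otherwise almost all $\mathbf u^{0(j)}$ would lie within $C_2C\epsilon_n$ of $\conv(\hatCn)$, hence within $C_2 r$ of the intersection $\text{aff}(\tC)\cap\text{aff}(\hatCn)$, forcing the variance of the $\mathbf u^{0(j)}$ along the unit normal $\mathbf b_n$ of that intersection inside $\text{aff}(\tC)$ to be at most $\bigl((1-C_1)C_2^2+C_1\bigr)r^2$, which for small $C_1,C_2$ contradicts the eigenvalue lower bound on $\tfrac1d\mathbf W_c\mathbf W_c^T$. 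With a constant fraction of far documents, Pinsker's inequality applied to each of them gives $F_{n\times d}(\hatCn;\mathbf X)\le A_{n,d}\,(n\vee d)^{-\frac12 C_1(C_2C-1)^2C_0^2 d}$, whose exponent grows like $C^2d$ and therefore beats $C_{10}d$ once $C$ is chosen large enough; this is how the clean per-document rate $\epsilon_n$ is obtained. Any correct proof needs this (or an equivalent) mechanism converting one large deviation into a likelihood deficit proportional to $d$; a purely analytic refinement of single-factor estimates, without invoking (A2), cannot succeed.
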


\begin{proof}
We prove the lemma by contradiction. Suppose the $i$-th document violates (\ref{eq:u0i_close_to_convcn}):
$$d(\mathbf u^{0(i)}, \conv(\mathbf{\hat C}_n))  \geq C\epsilon_n.$$
First, we claim that there exist at least $C_1d$ columns of $\tU$ such that 
$$d(\mathbf u^{0(i)}, \conv(\mathbf{\hat C}_n))  \geq C_2C\epsilon_n,$$
where $C_1, C_2\in (0,1)$ are constants independent of $n$ and $d$. We prove this claim at the end. 

Then, by \citet{devroye1983equivalence}, with probability at least $(1-3\cdot(n\vee d)^{-c})^d$, we have $\|\mathbf u^{0 (i)} - \mathbf {\hat u}^{(i)}\|_2 \leq O\left(\sqrt{\frac{\log(n\vee d)}{n}}\right)$
hold for all $i = 1, \cdots, d$.
By making the constant $C$ large enough, we have
$$d(\mathbf {\hat u}^{(j)}, \conv(\mathbf{\hat C}_n))  \geq (C_2C-1)\epsilon_n.$$
Therefore,
\begin{align*}
    F_{n \times d}(\mathbf{\hat C}_n; \mathbf X) = &  A_{n,d}  \prod_{i=1}^d \int_{\conv(\mathbf{\hat C}_n)} \frac{1}{|\conv(\mathbf{\hat C}_n)|}  \frac{f^{(i)}(\mathbf u)}{f^{(i)}(\mathbf {\hat u}^{(i)})} d\mathbf u \\
    \leq & \frac{A_{n,d}}{|\conv(\mathbf{\hat C}_n)|^d} \prod_{i=1}^d\int_{\conv(\mathbf{\hat C}_n)} \exp\left( - \frac{n}{2}\|\mathbf{\hat u}^{(i)} - \mathbf u\|_2^2\right) d\mathbf u \\
     = & A_{n,d} \prod_{i=1}^d\exp\left( - \frac{n}{2}\|\mathbf{\hat u}^{(i)} - \mathbf u^{*(i)}\|_2^2\right)\\
   \leq & A_{n,d} \cdot \exp\left( - \frac{n}{2}\sum_{i=1}^d d^2(\mathbf {\hat u}^{(i)}, \conv(\mathbf{\hat C}_n))\right) \\
   \leq & A_{n,d} \cdot \exp\left(- \frac{n}{2} \cdot C_1 d \cdot (C_2 C-1)^2\epsilon_n^2\right) \\
   = & A_{n,d} \cdot (n\vee d)^{- \frac{1}{2} C_1 (C_2C-1)^2C^2_0 d},
   \end{align*}
where the first inequality follows \eqref{eq:pinskers} and the second inequality is due to the mean value theorem for integrals with $\mathbf u^{*(i)}$'s being  some points in $\conv(\mathbf{\hat C}_n)$.
By choosing $C$ large enough, we can make 
$$ F_{n \times d}(\mathbf{\hat C}_n; \mathbf X) \leq A_{n,d} \cdot (n\vee d)^{-(C_{10}+1)d},$$
which contradicts with Lemma \ref{lem:F_lower_bound}. So we conclude that 
$$d(\mathbf u^{0(i)}, \conv(\mathbf{\hat C}_n))  \leq C\epsilon_n$$
for all $i=1,\cdots,d$.

It remains to prove the claim we made at the  beginning. When $\text{aff}(\tC)$ is parallel to $\text{aff}(\mathbf{\hat C}_n)$, the claim is trivial by making $C_2$ small. When $\text{aff}(\tC)$ is not parallel to $\text{aff}(\mathbf{\hat C}_n)$, again we prove it by contradiction. Suppose there are at least $(1-C_1)d$ columns of $\tU$ such that 
$$d(\mathbf u^{0(j)}, \conv(\mathbf{\hat C}_n))  \leq C_2C\epsilon_n$$
and let $\mathcal{S}$ be their column index set.  

Denote $r$ as the distance from $\mathbf u^{0(i)}$ to the intersection of $\text{aff}(\tC)$ and $\text{aff}(\mathbf{\hat C}_n)$, i.e.,
\begin{align*}
r = d(\mathbf u^{0(i)}, \text{aff}(\tC)\bigcap \text{aff}(\mathbf{\hat C}_n)),
\end{align*}
where $\mathbf u^{0(i)}$ is the vector such that $d(\mathbf u^{0(i)}, \conv(\mathbf{\hat C}_n))  \geq C\epsilon_n$. 
Since $d(\mathbf u^{0(j)}, \conv(\mathbf{\hat C}_n))  \leq C_2C\epsilon_n$ for all $j\in \mathcal{S}$, we know that 
$$
d(\mathbf u^{0(j)}, \text{aff}(\tC)\bigcap \text{aff}(\mathbf{\hat C}_n))\leq \frac{C_2C\epsilon_n}{C\epsilon_n}\cdot r = C_2r,\; \forall j\in \mathcal{S}.
$$
At the same time, 
$$
r-C_2r \leq \max_{j\in \mathcal{S}}\|\mathbf u^{0(i)}- \mathbf u^{0(j)}\|_2 \leq \max_{ i,j\in [k] }\|\mathbf C^{0(i)}- \mathbf C^{0(j)}\|_2.
$$
Since the RHS is a constant, we know that $r$ is upper bounded.

Let $\mathbf{b}_n$ be the unit normal vector of $\text{aff}(\tC)\bigcap \text{aff}(\mathbf{\hat C}_n)$ on the hyperplane $\text{aff}(\tC)$. Since $\mathbf{b}_n \in \text{aff}(\tC)$, there exists $\bm{\lambda}_n\in \mathds{R}^k$ and $\bm{\lambda}_n^T \mathbf{1}_k = 1$ such that $\mathbf{b}_n = \tC\bm{\lambda}_n$.

On the one hand, the variance of all $\mathbf u^{0(i)}$'s on the direction of $\mathbf{b}_n$ can be upper bounded:
\begin{align}\label{eq:var_upper}
Var_{\mathbf{b}_n}(\tU) \leq \frac{1}{d}\left[(1-C_1)d\cdot C_2^2r^2 + C_1d \cdot r^2\right] = \left((1-C_1) C_2^2 + C_1  \right)r^2
\end{align}
On the other hand, since the minimum eigenvalue of $\mathbf{W}_c \mathbf{W}_c^T$ is lower bounded, we have
\begin{align}\label{eq:var_lower}
Var_{\mathbf{b}_n}(\tU) &\geq \frac{1}{d}\mathbf{b}_n^T \mathbf{U}_c \mathbf{U}_c^T\mathbf{b}_n = \frac{1}{d}\mathbf{b}_n^T \tC\mathbf{W}_c \mathbf{W}_c^T\mathbf{C}^{0T}\mathbf{b}_n \nonumber\\
& \geq C_3 \|\mathbf{C}^{0T}\mathbf{b}_n\|_2^2 = C_3 \cdot\bm{\lambda}_n^T\mathbf{C}^{0T} \tC\mathbf{C}^{0T}\tC\bm{\lambda}_n \nonumber\\
&\geq C_3 \left[\sigma_{\min}^{+}(\tC)\right]^4\|\bm{\lambda}_n\|_2^2 \nonumber\\&
\geq C_3 \left[\sigma_{\min}^{+}(\tC)\right]^4\frac{1}{k}
\end{align}
In \eqref{eq:var_upper}, by choosing the constants $C_1$ and $C_2$ small enough, we can make 
$$
\left((1-C_1) C_2^2 + C_1  \right)r^2 < C_3 \left[\sigma_{\min}^{+}(\tC)\right]^4\frac{1}{k}.
$$
Therefore, we get a contradiction from \eqref{eq:var_upper} and \eqref{eq:var_lower}, which finishes the proof of the claim.

As a conclusion, let $ \mathbf {\tilde u}^{(i)}=\argmin_{\mathbf u\in \conv(\hatCn)}d(\mathbf u, \mathbf u^{0(i)})$ for $i = 1, \cdots, d$ and 
 $\mathbf {\tilde U}_n = \{\mathbf {\tilde u}^{(1)},\cdots, \mathbf {\tilde u}^{(d)}\}$, then we have shown that w.h.p. 
 $\|\mathbf u^{0(i)} -\mathbf {\tilde u}^{(i)}\|_2 \leq  C\epsilon_n$. 
Further, by the definition of $\conv(\mathbf{\hat C}_n)$, there exists $\mathbf {\hat w}^{(i)} \in \simplex^{k-1}$, such that $\mathbf {\tilde u}^{(i)} = \mathbf{\hat C}_n\mathbf {\hat w}^{(i)}$, for any $i=1,\cdots,d$. Let $\mathbf {\hat W}_n = \{\mathbf {\hat w}^{(1)}, \cdots, \mathbf {\hat w}^{(d)}\}$, we have $\mathbf{\hat C}_n\mathbf {\hat W}_n = \mathbf {\tilde U}_n $ and 
$$\|\mathbf E_n(:,i) \|_{2} = \|\mathbf u^{0(i)} -\mathbf{\hat C}_n\mathbf {\hat w}^{(i)}\|_2 = \|\mathbf u^{0(i)} -\mathbf {\tilde u}^{(i)}\|_2  \leq  C \epsilon_n.$$
\end{proof}

\subsection{Proof of Lemma \ref{lem:B_close_Pi}}
\begin{proof}
Since $\mathbf B(f,:) = \lambda_f \mathbf e_{(f)} + \bm \epsilon_f, \|\bm \epsilon_f\|_2 \leq  \lambda_f \beta$ and $\|\mathbf B\|_2 \leq M$, we can bound $\lambda_f$ by $C_2 M$.
\begin{align*}
M \geq \|\mathbf B \|_2  \geq  \|\mathbf B(f,:) \|_2 \geq  \|\lambda_f \mathbf e_{(f)}\|_2 - \|\bm \epsilon_f\|_2 \geq \lambda_f - \beta \lambda_f, \quad  f = 1,\cdots, k.
\end{align*}
$$\lambda_f \leq \frac{M}{1-\beta} \leq C_2 M,\quad f = 1, \cdots, k.$$

We write $\mathbf T = (\lambda_1 \mathbf e_{(1)}, \cdots \lambda_k \mathbf e_{(k)})^T$, $\mathbf E = (\bm \epsilon_1, \cdots, \bm \epsilon_k)^T$, such that $\mathbf T + \mathbf E = \mathbf B$.  


Next, we show that the column sums of $\mathbf T$ are close to $1$, using the fact that the column sums of $\mathbf B$ are all 1's.
\begin{align}\label{eq:T_close_1}
  \sum_{s=1}^k\left|\sum_{f=1}^k \mathbf T(f,s) - 1\right|  & = \sum_{s=1}^k\left| \sum_{f=1}^k \mathbf T(f,s) - \sum_{f=1}^k \mathbf B(f,s) \right| \nonumber\\
  & \leq \sum_{s=1}^k \sum_{f=1}^k \left|\mathbf B(f,s) - \mathbf T(f,s)\right|  = \sum_{f=1}^k \sum_{s=1}^k \left|\mathbf B(f,s) - \mathbf T(f,s)\right| \nonumber\\
    & =  \sum_{f=1}^k \left\|\mathbf B(f,:) - \mathbf T(f,:)\right\|_1 \leq \sqrt{k}\sum_{f=1}^k \|\mathbf B(f,: ) - \mathbf T(f, :)\|_2\nonumber \\
    & = \sqrt{k}\sum_{f=1}^k \|\bm \epsilon_f\|_2 
    \leq \sqrt{k}  \sum_{f=1}^k \lambda_f\beta 
    \leq C_3  M \beta.
\end{align}

Let $\mathbf \Pi = (\mathbf e_{(1)}, \cdots \mathbf e_{(k)})^T$.  Then $\mathbf \Pi$ must be a permutation matrix. Otherwise, there exists at least one column $p$, such that all the entries in the $p$-th column of $\mathbf \Pi$ are $0$, i.e., $ \mathbf e_{(1),p} =\cdots  \mathbf e_{(k),p} = 0$, where $\mathbf e_{(f),p}$ denotes the $p$-th element in $\mathbf e_{(f)}$.  Then the sum of $p$-th column of $\mathbf T$  is $0$, i.e., $\sum_{f=1}^k \mathbf T(f,p) = \sum_{f=1}^k \lambda_f \mathbf e_{(f),p} = 0$, which contradicts with (\ref{eq:T_close_1}). 

Furthermore, since $\mathbf T = \mathbf \Pi \cdot \text{diag}(\lambda_1, \cdots, \lambda_k)$ and $\mathbf \Pi$ is a permutation matrix,  each column of $\mathbf T$ should include one and only one of $\lambda_1, \cdots, \lambda_k$, so that
 $$\sum_{s=1}^k \left|\sum_{f=1}^k \mathbf T(f,s) - 1\right|  = \sum_{f=1}^k |\lambda_f -1 |   \leq C_3 M\beta.$$
 
Consequently,
$$\|\mathbf B - \mathbf \Pi \|_2 \leq \|\mathbf T - \mathbf \Pi\|_2 + \|\mathbf B - \mathbf T \|_2 \leq \sum_{f=1}^k |\lambda_f -1| + \|\mathbf E\|_2 \leq  C'_3 M \beta.$$
where the last inequality holds because 
$$\|\mathbf E\|_2 \leq \|\mathbf E\|_F = \left(\sum_{f=1}^k \|\bm \epsilon_f\|_2^2\right)^{\frac{1}{2}}\leq \sqrt{k}C_2 M\beta$$
\end{proof}


\subsection{Proof of Lemma \ref{lem:det_C_bound}}


Recall that $\epsilon_n = C_0\sqrt{\frac{\log (n\vee d)}{n}}$ where $C_0 > 0$ is a constant. We aim to show that
\begin{align*}
    |\det(\mathbf {\hat C}^T_n \mathbf {\hat C}_n )| &\leq (1 + C'' \epsilon_{n} ) |\det({\tC}^T \tC )|.
\end{align*}

Let $\text{aff}(\hatCn)$ and $\text{aff}(\tC)$ be the $(k-1)$-dim hyperplanes obtained by expanding $\conv(\hatCn)$ and $\conv(\tC)$, respectively. 
By Lemma \ref{lem:vol_det}, 
$$\frac{|\det(\mathbf {\hat C}^T_n \mathbf {\hat C}_n )|}{|\det({\tC}^T \tC )|} = \frac{\hat h_n}{ h^0}\cdot \frac{|\conv(\hatCn)|}{|\conv(\tC)|}.$$
where $\hat h_n$ is the perpendicular distance from the origin to $\text{aff}(\hatCn)$, and $h^0$ is the perpendicular distance from the origin to $\text{aff}(\tC)$. 

Therefore, it suffices to show the following two inequalities,
\begin{equation}\label{eq:h_close}
    \hat h_n \leq (1+C_1 \epsilon_n) h^0,
\end{equation}
and 
\begin{equation}\label{eq:vol_close}
    |\conv(\mathbf {\hat C}_n )| \leq (1 + C_2 \epsilon_n ) |\conv(\tC )|.
\end{equation}
~\\

We first prove the projection matrix associated with $\text{aff}(\hatCn)$ converges to the one associated with $\text{aff}(\tC)$ in the order of $\epsilon_n$ in Lemma \ref{lem:project_error}. Then  \eqref{eq:h_close} is proved in Corollary \ref{lem:h_diff}, as a special case of Lemma \ref{lem:project_error}.
~\\

To compare $|\conv(\mathbf {\hat C}_n )|$ and $|\conv(\tC )|$, we introduce three more convex polytopes: 
\begin{itemize}
    \item $\conv((\tC)^{\gamma\epsilon_n})$, an enlarged convex polytope of $\conv(\tC)$ (defined in Definition \ref{def:delta_enlarge}). Here $\gamma>0$ is a constant.
    \item $\conv(\mathbf C^\sharp)$, the projection of $\conv((\tC)^{\gamma\epsilon_n})$ on $\text{aff}(\hatCn)$. 
    \item $\conv(\smallC)$, the smallest $k$-vertex convex polytope on $\text{aff}(\mathbf {\hat C}_n) \bigcap \simplex^{V-1}$ containing \\$\setballs =  \conv(\mathbf{\hat C}_n) \bigcap \left\{\bigcup_{i=1}^d \mathcal{B}(\mathbf u^{0 (i)}; C_4\epsilon_n)\right\}$. Here $\mathcal{B}(\mathbf u^{0 (i)}; C_4 \epsilon_n)$ is the Euclidean ball centered at $\mathbf u^{0 (i)}$ with radius $C_4\epsilon_n$. The formal definition is given in Definition \ref{def:smallest_C}.
\end{itemize}

We then prove \eqref{eq:vol_close}  by the following steps.
\begin{enumerate}
    \item  In Lemma \ref{lem:F_lower_bound_2} and Lemma \ref{lem:vol_upper_bound}, we show 
    \begin{equation}\label{eq:hatCn_Csharp}
       \left(1-\frac{1}{n}\right) |\conv(\mathbf{\hat C}_n)| \leq |\conv(\smallC)|.
    \end{equation}
    
    \item In Lemma \ref{lem:Csharp_inside_simplex} to Lemma \ref{lem:proj_inside}, we show that $\conv(\mathbf C^\sharp)$ is a $k$-vertex convex polytope within $\simplex^{V-1}$ containing $\setballs$. Therefore, by the definition of $\conv(\mathbf C^\ast)$, we have 
    \begin{equation}\label{eq:smallC_Csharp}
    |\conv(\smallC)| \leq |\conv(\mathbf C^\sharp)|.
    \end{equation}
    \item In Lemma \ref{lem:vol_upper_bound2}, we prove \eqref{eq:vol_close} by summarizing the the above inequalities, i.e.,
    \begin{align*}
       \left(1-\frac{1}{n}\right) |\conv(\mathbf{\hat C}_n)| & \stackrel{\eqref{eq:hatCn_Csharp}}{\leq}|\conv(\smallC)|
        \stackrel{\eqref{eq:smallC_Csharp}}{\leq}|\conv(\mathbf C^\sharp)|
        \stackrel{\text{Definition of~} \conv(\mathbf C^\sharp)}{\leq}|\conv((\tC)^{\gamma\epsilon_n})|\\
        &  \stackrel{\text{Proposition~} \ref{prop:enlargement}}{\leq} \left(1 + \rho(\tC)\gamma\epsilon_n\right)^{k-1}|\conv(\tC)|.
    \end{align*} 
\end{enumerate}

~\\

Next, we provide detailed proof.
~\\

First, we show that the projection matrix and any projected vector of $\text{aff}(\mathbf{\hat C}_n)$ converges to the ones of $\text{aff}(\tC)$ in the order of $ \sqrt{\frac{\log (n\vee d)}{n}}$. 

Let $(\mathbf u^{(1)}, \cdots, \mathbf u^{(k)})$ be any $k$ linearly independent vectors from $\text{aff}(\mathbf C)$. Then, the \textbf{projection matrix} of $\text{aff}(\mathbf C)$ can be written as
\begin{align}\label{eq:proj_matrix}
\mathbf P_{\mathbf C}= \mathbf {U}^\prime(\mathbf {U}^{\prime T}\mathbf { U}^\prime)^{ -1} \mathbf { U}^{^\prime T},
\end{align}
where $\mathbf {U}^\prime = \left(\mathbf u^{(2)} - \mathbf u^{(1)}, \cdots, \mathbf u^{(k)} -\mathbf u^{(1)}\right)$.  

For any vector $\mathbf y$, its \textbf{projection onto $\text{aff}(\mathbf C)$}  is given by
\begin{equation}\label{eq:y_projection}
    \mathbf {\hat y}_{{\mathbf C}} = \mathbf P_{\mathbf C}(\mathbf y -\mathbf u^{(1)}) +  \mathbf u^{(1)} =\mathbf P_{\mathbf C}\mathbf y + (\mathbf I - \mathbf P_{\mathbf C}) \mathbf u^{(1)}. 
\end{equation}

\begin{lemma}\label{lem:project_error}
\begin{equation}\label{eq:proj_mtr_bound}
    \| \mathbf P_{\hatCn} - \mathbf P_{\tC}\|_2 \leq C\epsilon_n
\end{equation}
\begin{equation}\label{eq:diff_projection_bound}
    \|\mathbf {\hat y}_{\hatCn} - \mathbf {\hat y}_{\tC}\|_2 \leq C\|\mathbf y\|_2\epsilon_n + C^\prime \epsilon_n,
\end{equation}
for any $\mathbf y \in \mathds{R}^V$, where $C$ and $C'$ are positive constants.

\end{lemma}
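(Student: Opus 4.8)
The plan is to realize both hyperplanes $\text{aff}(\tC)$ and $\text{aff}(\hatCn)$ as affine hulls of $k$ points that are pairwise within $C\epsilon_n$ of one another, and then to control the two projectors by a standard matrix perturbation argument. First I would use Assumption (A2) to extract $k$ affinely independent columns $\mathbf w^{0(i_1)},\dots,\mathbf w^{0(i_k)}$ of $\tW$ whose difference matrix $\mathbf W^{\prime}=(\mathbf w^{0(i_2)}-\mathbf w^{0(i_1)},\dots,\mathbf w^{0(i_k)}-\mathbf w^{0(i_1)})$ has $\sigma^+_{\min}(\mathbf W^{\prime})$ bounded below by a positive constant. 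Writing $\mathbf u^{0(i_j)}=\tC\mathbf w^{0(i_j)}$ and $\mathbf U^{0\prime}=\tC\mathbf W^{\prime}$, Assumption (A1) bounds $\sigma^+_{\min}(\tC)$ below, so that $\sigma^+_{\min}(\mathbf U^{0\prime})\geq \sigma^+_{\min}(\tC)\,\sigma^+_{\min}(\mathbf W^{\prime})$ is bounded below as well. In particular the columns of $\mathbf U^{0\prime}$ span the $(k-1)$-dimensional linear subspace parallel to $\text{aff}(\tC)$, so by \eqref{eq:proj_matrix} one has $\mathbf P_{\tC}=\mathbf U^{0\prime}(\mathbf U^{0\prime T}\mathbf U^{0\prime})^{-1}\mathbf U^{0\prime T}$.

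Next I would invoke Lemma \ref{lem:dis_upper_bound1}, which supplies points $\mathbf{\tilde u}^{(i_j)}\in\conv(\hatCn)\subseteq\text{aff}(\hatCn)$ with $\|\mathbf{\tilde u}^{(i_j)}-\mathbf u^{0(i_j)}\|_2\leq C\epsilon_n$ for each $j$. Setting $\hat{\mathbf U}^{\prime}=(\mathbf{\tilde u}^{(i_2)}-\mathbf{\tilde u}^{(i_1)},\dots,\mathbf{\tilde u}^{(i_k)}-\mathbf{\tilde u}^{(i_1)})$ gives $\|\hat{\mathbf U}^{\prime}-\mathbf U^{0\prime}\|_2\leq 2\sqrt{k}\,C\epsilon_n$. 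For $n\vee d$ large enough that $\epsilon_n$ is below a fixed multiple of $\sigma^+_{\min}(\mathbf U^{0\prime})$, Weyl's inequality keeps $\hat{\mathbf U}^{\prime}$ of full column rank $k-1$; the $\mathbf{\tilde u}^{(i_j)}$ are then affinely independent, and their affine hull, being $(k-1)$-dimensional and contained in $\text{aff}(\hatCn)$, must coincide with $\text{aff}(\hatCn)$, so $\mathbf P_{\hatCn}=\hat{\mathbf U}^{\prime}(\hat{\mathbf U}^{\prime T}\hat{\mathbf U}^{\prime})^{-1}\hat{\mathbf U}^{\prime T}$. I would then apply the classical perturbation bound for orthogonal projectors onto column spaces of full-rank matrices, $\|\mathbf P_{\hatCn}-\mathbf P_{\tC}\|_2\leq C\,\|\hat{\mathbf U}^{\prime}-\mathbf U^{0\prime}\|_2/\sigma^+_{\min}(\mathbf U^{0\prime})$, valid once the perturbation is below $\sigma^+_{\min}(\mathbf U^{0\prime})$; together with the lower bound on $\sigma^+_{\min}(\mathbf U^{0\prime})$ this yields \eqref{eq:proj_mtr_bound}. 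The same bound could instead be obtained by hand, expanding $(\hat{\mathbf U}^{\prime T}\hat{\mathbf U}^{\prime})^{-1}=(\mathbf U^{0\prime T}\mathbf U^{0\prime})^{-1}+O(\epsilon_n)$ and using that $\|\mathbf U^{0\prime}\|_2$ is bounded while $\sigma^+_{\min}(\mathbf U^{0\prime})$ is bounded below.

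To deduce \eqref{eq:diff_projection_bound}, I would use the formula \eqref{eq:y_projection} in the form $\mathbf{\hat y}_{\mathbf C}=\mathbf P_{\mathbf C}\mathbf y+(\mathbf I-\mathbf P_{\mathbf C})\mathbf a_{\mathbf C}$, with anchors $\mathbf a_{\tC}=\mathbf u^{0(i_1)}$ and $\mathbf a_{\hatCn}=\mathbf{\tilde u}^{(i_1)}$, and regroup the difference as
\begin{align*}
\mathbf{\hat y}_{\hatCn}-\mathbf{\hat y}_{\tC}
=(\mathbf P_{\hatCn}-\mathbf P_{\tC})\mathbf y
+(\mathbf I-\mathbf P_{\hatCn})(\mathbf{\tilde u}^{(i_1)}-\mathbf u^{0(i_1)})
+(\mathbf P_{\tC}-\mathbf P_{\hatCn})\mathbf u^{0(i_1)}.
\end{align*}
The first term is at most $\|\mathbf P_{\hatCn}-\mathbf P_{\tC}\|_2\|\mathbf y\|_2\leq C\epsilon_n\|\mathbf y\|_2$ by \eqref{eq:proj_mtr_bound}; the second is at most $\|\mathbf{\tilde u}^{(i_1)}-\mathbf u^{0(i_1)}\|_2\leq C\epsilon_n$, since orthogonal projectors have unit spectral norm; and the third is at most $\|\mathbf P_{\tC}-\mathbf P_{\hatCn}\|_2\,\|\mathbf u^{0(i_1)}\|_2\leq C\epsilon_n$, because $\mathbf u^{0(i_1)}\in\simplex^{V-1}$ has Euclidean norm at most one. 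Summing the three bounds gives \eqref{eq:diff_projection_bound}.

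The hard part will be the projector perturbation step: one must verify that $\sigma^+_{\min}(\mathbf U^{0\prime})$ is bounded below using only (A1)--(A2), and confirm that the $O(\epsilon_n)$ perturbation preserves the rank of $\hat{\mathbf U}^{\prime}$, so that $\text{aff}(\hatCn)$ is genuinely the affine hull spanned by the $\mathbf{\tilde u}^{(i_j)}$'s. Once this geometric stability is secured, the remaining work---expanding the inverse Gram matrix, or invoking the Wedin/$\sin\Theta$ projector bound---is routine.
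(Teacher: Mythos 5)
Your proposal is correct and follows essentially the same route as the paper's proof: anchor both hyperplanes at the images under $\tC$ of the $k$ affinely independent columns guaranteed by (A2), pull in the nearby points on $\conv(\hatCn)$ from Lemma \ref{lem:dis_upper_bound1}, lower-bound $\sigma^+_{\min}(\mathbf U^{0\prime})$ via $\sigma^+_{\min}(\tC)\,\sigma^+_{\min}$ of the weight matrix, preserve rank by Weyl's inequality, and apply the Davis--Kahan/Wedin projector perturbation bound, finishing \eqref{eq:diff_projection_bound} with exactly the same three-term decomposition the paper uses. The only (cosmetic) differences are that you phrase the singular-value condition on the difference matrix $\mathbf W'$ rather than on $\mathbf W^{0\ast}\mathbf Q$, and you make explicit the dimension-counting argument that the affine hull of the $\mathbf{\tilde u}^{(i_j)}$'s equals $\text{aff}(\hatCn)$, which the paper leaves implicit.
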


\begin{proof}
By assumption (A3), let $(i_1,\cdots, i_k)$ denote the index set of the columns of $\mathbf W^{0\ast}$ in
$\tW$, where the $k$ columns of $\mathbf W^{0\ast}$ are affinely independent and have minimum positive singular value lower bounded. 
Let $\mathbf U^{0\ast} =  \tC\mathbf W^{0\ast}= (\mathbf u^{0(i_1)}, \cdots, \mathbf u^{0(i_k)})$ and $\mathbf {\tilde U}^\ast_n = (\mathbf {\tilde u}^{(i_1)}, \cdots, \mathbf{\tilde u}^{(i_k)})$, where $ \mathbf {\tilde u}^{(i)}=\argmin_{\mathbf u\in \conv(\hatCn)}d(\mathbf u, \mathbf u^{0(i)})$ is the projection of $\mathbf u^{0(i)}$ onto $\conv(\hatCn)$.

By Lemma \ref{lem:dis_upper_bound1}, we have
$$\|\mathbf U^{0\ast}  - \mathbf {\tilde U}^\ast_n \|_2 
\leq \|\mathbf U^{0\ast}  - \mathbf {\tilde U}^\ast_n \|_F 
= \left(\sum_{j=1}^k\|\mathbf u^{0(i_j)} -\mathbf {\tilde u}^{(i_j)}\|_2^2\right)^{\frac{1}{2}} 
\leq C_2 \epsilon_n,$$
and 
$$\| \mathbf u^{0(i_1)} - \mathbf {\tilde u}^{(i_1)} \|_2 
\leq C_3 \epsilon_n
.$$

By \eqref{eq:proj_matrix}, we have
$$\mathbf P_{\tC} = \mathbf {U}^{0\prime}(\mathbf { U}^{0\prime T}\mathbf { U}^{0\prime})^{-1} \mathbf {U}^{0\prime T}, \quad \mathbf P_{\hatCn} = \mathbf {\tilde U}^\prime_n(\mathbf {\tilde U}^{\prime T}_n \mathbf {\tilde U}^\prime_n)^{-1} \mathbf {\tilde U}^{\prime T}_n $$
where $\mathbf {U}^{0\prime} = \mathbf U^{0\ast} \mathbf Q$, $\mathbf {\tilde U}^\prime_n = \mathbf{\tilde U}^{\ast}_n \mathbf Q$, and   
$\mathbf Q_{k\times (k-1)} = \left[ \begin{array}{cc}
 - \mathbf 1_{k-1} &\mathbf I_{k-1}  \\
\end{array}
\right]^T.$

By Weyl's inequality in matrix theory \citep{weyl1912asymptotische},
$$\sigma^+_{\min} (\mathbf U^{0\prime}) - \sigma^+_{\min} (\mathbf{\tilde U}_n^\prime) \leq \|\mathbf U^{0\prime}- \mathbf{\tilde U}_n^\prime\|_2 \leq \|\mathbf U^{0\ast}  - \mathbf {\tilde U}^\ast_n \|_2\|\mathbf Q \|_2 \leq C_2'\epsilon_n.$$
Therefore,
\begin{align}\label{min_singular_close}
    \sigma^+_{\min} (\mathbf{\tilde U}_n^\prime) 
    \geq \sigma^+_{\min} (\mathbf U^{0\prime})-C_2'\epsilon_n
    \geq \frac{\sigma^+_{\min} (\mathbf U^{0\prime})}{2}.
\end{align}
Moreover,
\begin{align}\label{eq:singular_lower_bound}
    \sigma^+_{\min} (\mathbf U^{0\prime}) = \sigma^+_{\min} (\mathbf U^{0\ast}\mathbf Q)
    = \sigma^+_{\min} (\tC\mathbf W^{0\ast}\mathbf Q)
    \geq \sigma^+_{\min} (\tC)  \sigma^+_{\min}(\mathbf W^{0\ast}) \sigma^+_{\min}(\mathbf Q) 
    \geq C_3.
\end{align}
So the columns of $\mathbf {\tilde U}^\ast_n$ are also affinely independent. 

According to Davis-Kahan theorem \citep{chen2016perturbation,davis1970rotation}, we have
\begin{align*}
    \|\mathbf P_{\hatCn} - \mathbf P_{\tC}\|_2  & \stackrel{Davis-Kahan}{\leq}  \max{\left(\frac{1}{\sigma^+_{\min} (\mathbf{\tilde U}_n^\prime)},\frac{1}{\sigma^+_{\min} (\mathbf U^{0\prime}) } \right)}
    \|\mathbf{\tilde U}_n^\prime - \mathbf U^{0\prime}\|_2\\
    &\leq  \max{\left(\frac{1}{\sigma^+_{\min} (\mathbf{\tilde U}_n^\prime)},\frac{1}{\sigma^+_{\min} (\mathbf U^{0\prime}) } \right)}
    \|\mathbf{\tilde U}_n^\ast - \mathbf U^{0\ast}\|_2\|\mathbf Q\|_2\\
    &  \leq 
    C_4 \epsilon_n,
\end{align*}
where the last inequality is due to \eqref{min_singular_close} and \eqref{eq:singular_lower_bound}.

Finally, for any $\mathbf y \in \mathds{R}^V$, 
\begin{align*}
    \|\mathbf {\hat y}_{\hatCn} - \mathbf {\hat y}_{\tC} \|_2 
    & \leq  \|\mathbf P_{\hatCn}- \mathbf P_{\tC}\|_2\|\mathbf y\|_2  + \|\mathbf P_{\hatCn}- \mathbf P_{\tC}\|_2\|\|\mathbf u^{0(i_1)} \|_2 + \| \mathbf u^{0(i_1)} - \mathbf {\tilde u}^{(i_1)} \|_2\\
 & \leq  C \|\mathbf y\|_2 \epsilon_n  + C' \epsilon_n 
\end{align*}
\end{proof}

\begin{corollary}\label{lem:h_diff}
Denote the perpendicular distance between origin, $\mathbf 0 = (0,0,\cdots, 0)$, and  $\text{aff}(\tC)$ by $h^0$, and the perpendicular distance between origin and $\text{aff}(\mathbf {\hat C}_n)$ by $\hat h_n$. The followings hold,
\begin{enumerate}
    \item $|\hat h_n - h^0 | \leq C'\epsilon_n,$
    \item $h_0 > C''.$
\end{enumerate}
where $C'$ and $C''$ are positive constants.
\end{corollary}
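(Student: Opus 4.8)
The plan is to obtain both bounds as short consequences of Lemma~\ref{lem:project_error}, together with one elementary geometric fact locating $\text{aff}(\tC)$. First I would introduce $\mathbf p^0$ and $\hat{\mathbf p}_n$ for the orthogonal projections of the origin $\mathbf 0$ onto $\text{aff}(\tC)$ and $\text{aff}(\hatCn)$, respectively; these are precisely $\mathbf{\hat y}_{\tC}$ and $\mathbf{\hat y}_{\hatCn}$ of \eqref{eq:y_projection} evaluated at $\mathbf y = \mathbf 0$. By the definition of perpendicular distance, $h^0 = \|\mathbf 0 - \mathbf p^0\|_2 = \|\mathbf p^0\|_2$ and $\hat h_n = \|\hat{\mathbf p}_n\|_2$.

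For claim (1), I would combine the reverse triangle inequality with \eqref{eq:diff_projection_bound} of Lemma~\ref{lem:project_error} applied at $\mathbf y = \mathbf 0$:
\[
|\hat h_n - h^0| = \big|\,\|\hat{\mathbf p}_n\|_2 - \|\mathbf p^0\|_2\,\big| \le \|\hat{\mathbf p}_n - \mathbf p^0\|_2 \le C\|\mathbf 0\|_2\,\epsilon_n + C'\epsilon_n = C'\epsilon_n,
\]
which is the stated bound since the first term vanishes at $\mathbf y = \mathbf 0$.

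For claim (2), I would use that every column of $\tC$ is column-stochastic and hence lies in $\simplex^{V-1}\subseteq H$, where $H=\{\mathbf x\in\mathds R^V:\mathbf x^T\mathbf 1_V=1\}$. Since any affine combination of vectors satisfying $\mathbf x^T\mathbf 1_V = 1$ again satisfies this constraint, we have $\text{aff}(\tC)\subseteq H$. Minimizing the distance to the origin over the smaller set $\text{aff}(\tC)$ then gives
\[
h^0 = \min_{\mathbf x\in\text{aff}(\tC)}\|\mathbf x\|_2 \ \ge\ \min_{\mathbf x\in H}\|\mathbf x\|_2 = \frac{1}{\|\mathbf 1_V\|_2} = \frac{1}{\sqrt V},
\]
so claim (2) holds with, say, $C'' = 1/(2\sqrt V)$, a positive constant depending only on the fixed vocabulary size $V$.

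I do not expect a genuine obstacle here: both parts follow immediately once $\mathbf y=\mathbf 0$ is substituted into the already-established Lemma~\ref{lem:project_error} and the containment $\text{aff}(\tC)\subseteq H$ is noted. The only points requiring minor care are the routine identification of $h^0$ and $\hat h_n$ with the norms of the projected origins, and confirming that the constant $C''$ in claim (2) is $(n,d)$-independent, which it is, being determined by $V$ alone.
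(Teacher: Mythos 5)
Your proposal is correct. Part (1) is essentially identical to the paper's own proof: both identify $h^0$ and $\hat h_n$ with the norms of the projections of $\mathbf 0$ onto the two affine hulls, then apply the reverse triangle inequality followed by \eqref{eq:diff_projection_bound} of Lemma~\ref{lem:project_error} at $\mathbf y = \mathbf 0$, where the $C\|\mathbf y\|_2\epsilon_n$ term vanishes.

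For part (2), however, you take a genuinely different and more elementary route. The paper writes the projected origin as $\tC\mathbf w_h$ with affine coefficients summing to one, and bounds $h^0 \geq \sigma^+_{\min}(\tC)\,\|\mathbf w_h\|_2 \geq \sigma^+_{\min}(\tC)/\sqrt{k}$, which relies on the rank-$k$ assumption (A1) to make $\sigma^+_{\min}(\tC)$ a positive constant. (Incidentally, the paper asserts $\mathbf w_h \in \simplex^{k-1}$, which is imprecise since the projection of the origin onto $\text{aff}(\tC)$ need not have nonnegative coefficients; only the sum-to-one constraint is actually used there, so the bound survives.) Your argument instead uses only column-stochasticity: every column of $\tC$ lies on the hyperplane $H = \{\mathbf x \in \mathds{R}^V : \mathbf x^T\mathbf 1_V = 1\}$, affine combinations stay in $H$, and the distance from the origin to $H$ is exactly $1/\sqrt{V}$ by Cauchy–Schwarz, giving $h^0 \geq 1/\sqrt{V}$. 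This buys an explicit constant depending only on the fixed vocabulary size $V$, requires no rank assumption on $\tC$, and sidesteps the simplex-membership imprecision in the paper's version; the paper's route, by contrast, yields a constant tied to the conditioning of the true topic matrix, which offers no real advantage here since both are $(n,d)$-independent constants.
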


\begin{proof}
The perpendicular distance of $\text{aff}(\mathbf C)$ is the length of the projected vector of $\mathbf 0$ on $\text{aff}(\mathbf C)$. Specifically,
$$\hat h_n= \|\mathbf {\hat 0}_{\hatCn} \|_2,\quad h^0 = \|\mathbf {\hat 0}_{\tC} \|_2, $$

Therefore,
$$\left|\hat h_n - h^0 \right| =  \left|\|\mathbf {\hat 0}_{\hatCn}\|_2 -  \|\mathbf {\hat 0}_{\tC} \|_2 \right| 
    \leq \|\mathbf {\hat 0}_{\hatCn} - \mathbf {\hat 0}_{\tC} \|_2 \leq   (C\|\mathbf 0\|_2 \epsilon_n + C'\epsilon_n) \leq C'\epsilon_n.$$
    
Furthermore, since $\mathbf {\hat 0}_{\tC}$ is on $\text{aff}(\tC)$, we can represent $\mathbf {\hat 0}_{\tC}$ by $\tC\mathbf w_h$ for some $\mathbf w_h \in \simplex^{k-1}$.  
$$h^0 = \|\mathbf {\hat 0}_{\tC}\|_2 \geq \sigma^+_{\min}(\tC) \|\mathbf w_h\|_2 \geq \sigma^+_{\min}(\tC)\|\mathbf w_h\|_1/\sqrt{k} = \sigma^+_{\min}(\tC)/\sqrt{k}.$$
\end{proof}

With the result from Lemma \ref{lem:dis_upper_bound1}, in the following Lemma \ref{lem:F_lower_bound_2}, we show that most of the mass of $f^{(i)}(\mathbf u)$ on $\conv(\mathbf {\hat C}_n)$ is concentrated on $\conv(\mathbf {\hat C}_n)\bigcap \mathcal{B}(\mathbf u^{0(i)}; C_4\epsilon_n)$.
\begin{lemma}\label{lem:F_lower_bound_2}
For any $i\in [d]$,
\begin{align*}
    & \int_{\conv(\mathbf {\hat C}_n)\bigcap \mathcal{B}(\mathbf u^{0(i)};  C_4\epsilon_n)} \frac{f^{(i)}(\mathbf u)}{f^{(i)}(\mathbf {\hat u}^{(i)})} d\mathbf u \geq (1 - \frac{1}{ n}) \int_{\conv(\mathbf {\hat C}_n)}\frac{f^{(i)}(\mathbf u)}{f^{(i)}(\mathbf {\hat u}^{(i)})} d\mathbf u.
\end{align*}
\end{lemma}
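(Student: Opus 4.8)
The plan is to split the integral over $\conv(\hatCn)$ into the part inside the ball $\mathcal{B}(\mathbf u^{0(i)}; C_4\epsilon_n)$ and the complementary tail, and to show the tail is negligible. Writing $I_{\mathrm{tot}} = \int_{\conv(\hatCn)} f^{(i)}(\mathbf u)/f^{(i)}(\mathbf{\hat u}^{(i)})\,d\mathbf u$, with $I_{\mathrm{ball}}$ the integral restricted to $\conv(\hatCn)\cap\mathcal{B}(\mathbf u^{0(i)};C_4\epsilon_n)$ and $I_{\mathrm{tail}} = I_{\mathrm{tot}} - I_{\mathrm{ball}}$, the claimed bound $I_{\mathrm{ball}} \ge (1-1/n)I_{\mathrm{tot}}$ is equivalent to $I_{\mathrm{tail}} \le (n-1)^{-1}I_{\mathrm{ball}}$. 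I would work throughout on the high-probability event of Lemma~\ref{lem:dis_upper_bound1} (equivalently the concentration event of \citet{devroye1983equivalence}), on which $\|\mathbf u^{0(i)}-\mathbf{\hat u}^{(i)}\|_2 \le C_1\epsilon_n$ for every $i$, so that the whole statement is to be understood on that event.

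First I would upper-bound the tail via Pinsker's inequality. For $\mathbf u\in\conv(\hatCn)\setminus\mathcal{B}(\mathbf u^{0(i)};C_4\epsilon_n)$ the triangle inequality gives $\|\mathbf u-\mathbf{\hat u}^{(i)}\|_2 \ge (C_4-C_1)\epsilon_n$, so \eqref{eq:pinskers} yields $f^{(i)}(\mathbf u)/f^{(i)}(\mathbf{\hat u}^{(i)}) \le \exp(-\tfrac n2(C_4-C_1)^2\epsilon_n^2)$ uniformly over the tail. Since $\conv(\hatCn)\subseteq\simplex^{V-1}$ has $(k-1)$-volume bounded by a constant, integrating gives $I_{\mathrm{tail}} \le C\exp(-\tfrac n2 (C_4-C_1)^2\epsilon_n^2) = C\,(n\vee d)^{-(C_4-C_1)^2C_0^2/2}$, using $n\epsilon_n^2 = C_0^2\log(n\vee d)$. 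Next I would lower-bound $I_{\mathrm{ball}}$ using the reverse Pinsker inequality \eqref{eq:inv_pinskers}. Let $\tilde{\mathbf u}^{(i)}$ be the projection of $\mathbf u^{0(i)}$ onto $\conv(\hatCn)$; Lemma~\ref{lem:dis_upper_bound1} gives $\|\mathbf u^{0(i)}-\tilde{\mathbf u}^{(i)}\|_2 \le C\epsilon_n$. On a $(k-1)$-dimensional ball $\mathcal N$ of radius $\delta\epsilon_n$ around $\tilde{\mathbf u}^{(i)}$ inside $\conv(\hatCn)$ one has $\|\mathbf u-\mathbf{\hat u}^{(i)}\|_2 \le C'\epsilon_n$, and choosing $C_4 > C+\delta$ ensures $\mathcal N\subseteq\mathcal{B}(\mathbf u^{0(i)};C_4\epsilon_n)$. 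Hence $I_{\mathrm{ball}} \ge \mathrm{Vol}_{k-1}(\mathcal N)\exp(-C_6 n(C'\epsilon_n)^2) \ge c(\delta\epsilon_n)^{k-1}(n\vee d)^{-C_6{C'}^2C_0^2}$, which is only polynomially small in $n\vee d$; crucially $C'$ and the lower bound do not depend on $C_4$.

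The main obstacle is the volume estimate $\mathrm{Vol}_{k-1}(\mathcal N)\gtrsim(\delta\epsilon_n)^{k-1}$: it requires that $\conv(\hatCn)$ be non-degenerate near $\tilde{\mathbf u}^{(i)}$, i.e.\ that its solid angles be bounded below by a constant, so that a fixed fraction of the small ball lies inside the simplex even when $\tilde{\mathbf u}^{(i)}$ sits on a face or at a vertex. I would derive this from Assumption (A2): since $\conv(\mathbf U^0)$ contains a ball of constant radius and, by Lemma~\ref{lem:dis_upper_bound1}, every column of $\mathbf U^0$ lies within $C\epsilon_n$ of $\conv(\hatCn)$, the polytope $\conv(\hatCn)$ contains a ball of constant radius for large $n$ and is therefore non-degenerate with solid angles bounded away from $0$. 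This is the same mechanism by which Lemma~\ref{lem:F_lower_bound} controlled the analogous volume for the fixed simplex $\conv(\tC)$.

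Finally I would combine the two bounds. Bounding the volume factor crudely by $(\delta\epsilon_n)^{k-1}\ge (\delta C_0)^{k-1}(n\vee d)^{-(k-1)/2}$, we obtain
\[
\frac{I_{\mathrm{tail}}}{I_{\mathrm{ball}}} \;\le\; C''\,(n\vee d)^{\,(k-1)/2 + C_6{C'}^2C_0^2 \,-\, (C_4-C_1)^2C_0^2/2}.
\]
Since $C_4$ is a free constant not affecting any other quantity, I would choose it large enough that the exponent is below $-2$; then, using $n\vee d \ge n$, the ratio is at most $(n-1)^{-1}$, which yields $I_{\mathrm{tail}} \le (n-1)^{-1}I_{\mathrm{ball}}$ and hence $I_{\mathrm{ball}}\ge(1-1/n)I_{\mathrm{tot}}$, the assertion of the lemma.
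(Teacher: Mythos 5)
Your proposal is correct and follows essentially the same route as the paper's proof: split off the tail, bound it uniformly via Pinsker's inequality \eqref{eq:pinskers}, lower-bound the integral on an $O(\epsilon_n)$-neighborhood of the projection $\tilde{\mathbf u}^{(i)}$ via the reverse Pinsker inequality \eqref{eq:inv_pinskers} together with a volume bound of order $\epsilon_n^{k-1}$, and then take $C_4$ large (the paper handles the tail slightly differently---by splitting the exponent and integrating a Gaussian over $\text{aff}(\hatCn)$ rather than by volume times supremum---but this is immaterial). You are in fact more explicit than the paper on the one delicate point: the paper simply asserts the constant $C_8$ in the bound $\mathrm{Vol}\bigl(\conv(\hatCn)\cap\mathcal{B}(\mathbf u^{0(i)},\sqrt{2}C\epsilon_n)\bigr)\ge C_8(C\epsilon_n)^{k-1}$, whereas you justify the required non-degeneracy of $\conv(\hatCn)$ through Assumption (A2) and Lemma \ref{lem:dis_upper_bound1}.
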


\begin{proof}
It suffices to show that for  any $i\in [d]$,
\begin{align}\label{eq:to_show_lm_b4}
\int_{\conv(\mathbf{\hat C}_n)\bigcap \mathcal{B}^C(\mathbf u^{0(i)}; C_4\epsilon_n)} \frac{f^{(i)}(\mathbf u)}{f^{(i)}(\mathbf {\hat u}^{(i)})} d\mathbf u \leq   \frac{1}{n} \int_{\conv(\mathbf{\hat C}_n)}\frac{f^{(i)}(\mathbf u)}{f^{(i)}(\mathbf {\hat u}^{(i)})} d\mathbf u.
\end{align}

For the LHS of \eqref{eq:to_show_lm_b4},
\begin{align*}
&\int_{\conv(\mathbf{\hat C}_n)\bigcap \mathcal{B}^C(\mathbf u^{0(i)}; C_4\epsilon_n)} \frac{f^{(i)}(\mathbf u)}{f^{(i)}(\mathbf {\hat u}^{(i)})} d\mathbf u \\
\leq & \int_{\conv(\mathbf{\hat C}_n)\bigcap \mathcal{B}^C(\mathbf u^{0(i)}; C_4\epsilon_n)}
\exp\left( - \frac{n}{2}\|\mathbf{\hat u}^{(i)} - \mathbf u\|_2^2\right) d\mathbf u\\
\leq & \exp\left(- \frac{n}{4}(C_4-1)^2\epsilon_n^2\right)
\int_{\text{aff}(\mathbf{\hat C}_n)} \exp\left( - \frac{n}{4}\|\mathbf{\hat u}^{(i)} - \mathbf u\|_2^2\right) d\mathbf u\\
=& \exp\left(- \frac{n}{4}(C_4-1)^2\epsilon_n^2\right)
\exp\left(- \frac{n}{4} d^2(\mathbf{\hat u}^{(i)}, \text{aff}(\mathbf{\hat C}_n)) \right)
\int_{\text{aff}(\mathbf{\hat C}_n)} \exp\left( - \frac{n}{4}\|\mathbf{\hat u}^{(i)}_{\mathbf{\hat C}_n} - \mathbf u\|_2^2\right) d\mathbf u\\
\leq & \exp\left(- \frac{n}{4}(C_4-1)^2\epsilon_n^2\right)\cdot 1\cdot \frac{C_5}{n^{\frac{k-1}{2}}},
\end{align*}
where the first inequality is due to the Pinsker's inequality \eqref{eq:pinskers}, the third inequality is from the normalizing constant for a multivariate Gaussian distribution.

 For the integration in the RHS of \eqref{eq:to_show_lm_b4},
\begin{align*}
&\int_{\conv(\mathbf{\hat C}_n)}\frac{f^{(i)}(\mathbf u)}{f^{(i)}(\mathbf {\hat u}^{(i)})} d\mathbf u \\
\geq& \int_{\conv(\mathbf {\hat C}_n)\bigcap \mathcal{B}(\mathbf u^{0(i)};  \sqrt{2}C\epsilon_n)} \frac{f^{(i)}(\mathbf u)}{f^{(i)}(\mathbf {\hat u}^{(i)})} d\mathbf u\\
\geq& \int_{\conv(\mathbf {\hat C}_n)\bigcap \mathcal{B}(\mathbf u^{0(i)};  \sqrt{2}C\epsilon_n)} \exp\left( - C_7n\|\mathbf{\hat u}^{(i)} - \mathbf u\|_2^2\right) d\mathbf u\\
\geq& C_8 (C\epsilon_n)^{k-1} \cdot\exp\left( - C_7n (\sqrt{2}C+1)^2\epsilon_n^2 \right)
\end{align*}
where $C$ is the constant from \eqref{eq:u0i_close_to_convcn}. 
Since $\mathbf u^{0(i)}$'s are interior points in $\simplex^{V-1}$, when $n$ is large enough, the second inequality follows from the reverse Pinsker's inequality \eqref{eq:inv_pinskers}.

By choosing $C_4$ large enough, we can ensure 
$$
\exp\left(- \frac{n}{4}(C_4-1)^2\epsilon_n^2\right) \frac{C_5}{n^{\frac{k-1}{2}}}
\leq \frac{1}{n}\cdot C_8 C^{k-1}\epsilon_n^{k-1} \exp\left( - C_7n  (\sqrt{2}C+1)^2\epsilon_n^2 \right)
$$
Consequently, we have
$$
\int_{\conv(\mathbf{\hat C}_n)\bigcap \mathcal{B}^C(\mathbf u^{0(i)}; C_4\epsilon_n)} \frac{f^{(i)}(\mathbf u)}{f^{(i)}(\mathbf {\hat u}^{(i)})} d\mathbf u \leq   \frac{1}{n} \int_{\conv(\mathbf{\hat C}_n)}\frac{f^{(i)}(\mathbf u)}{f^{(i)}(\mathbf {\hat u}^{(i)})} d\mathbf u.
$$
\end{proof}

\begin{definition}\label{def:smallest_C}
Define $\smallC = [\mathbf c_1^*, \cdots, \mathbf c_k^*] \in \mathds{R}^{V\times k}$, such that, $\mathbf c_f^* \in \text{aff}(\mathbf{\hat C}_n)\bigcap \simplex^{V-1},$ $\forall f\in[k]$, and $\conv(\smallC)$ is {\bf the smallest (volume) convex polytope with $k$ vertices on  $\text{aff}(\mathbf{\hat C}_n)\bigcap \simplex^{V-1}$} that contains the set $\setballs = \conv(\mathbf{\hat C}_n) \bigcap \left\{\bigcup_{i=1}^d \mathcal{B}(\mathbf u^{0 (i)}; C_4\epsilon_n)\right\}$.
\end{definition}

Note that $\conv(\mathbf{\hat C}_n)$ is a convex polytope with $k$ vertices on  $\text{aff}(\mathbf{\hat C}_n)\bigcap \simplex^{V-1}$ containing $\setballs$. So $\smallC$ must exist and it satisfies $|\conv(\smallC)| \leq|\conv(\mathbf{\hat C}_n)|$. In the following lemma, we show that $|\conv(\mathbf{\hat C}_n)|$ cannot be much larger than $|\conv(\smallC)|$. 
\begin{lemma}\label{lem:vol_upper_bound}

$$\left(1- \frac{1}{n}\right)|\conv(\mathbf{\hat C}_n)| \leq |\conv(\smallC)|$$
\end{lemma}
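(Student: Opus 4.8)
The plan is to exploit the optimality of the maximum integrated likelihood estimator $\hatCn$ together with the mass-concentration result of Lemma~\ref{lem:F_lower_bound_2}. By Definition~\ref{def:smallest_C}, $\smallC$ is a $V\times k$ matrix whose columns lie in $\text{aff}(\mathbf{\hat C}_n)\cap\simplex^{V-1}$; since these columns are affinely independent vertices of a nondegenerate $k$-vertex polytope (so $|\conv(\smallC)|>0$) and each sums to one, $\smallC$ is column-stochastic of full column rank, hence a feasible competitor in the optimization~\eqref{eq:mle}. Consequently $F_{n\times d}(\mathbf{\hat C}_n;\mathbf X)\ge F_{n\times d}(\smallC;\mathbf X)$, which after cancelling the common factor $A_{n,d}=\prod_{i}f_n(\mathbf{\hat u}^{(i)};\mathbf x^{(i)})$ reads
$$\frac{1}{|\conv(\mathbf{\hat C}_n)|^d}\prod_{i=1}^d\int_{\conv(\mathbf{\hat C}_n)}\frac{f^{(i)}(\mathbf u)}{f^{(i)}(\mathbf{\hat u}^{(i)})}\,d\mathbf u \;\ge\; \frac{1}{|\conv(\smallC)|^d}\prod_{i=1}^d\int_{\conv(\smallC)}\frac{f^{(i)}(\mathbf u)}{f^{(i)}(\mathbf{\hat u}^{(i)})}\,d\mathbf u.$$

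Next I would lower bound each integral over $\conv(\smallC)$ by the corresponding integral over $\conv(\mathbf{\hat C}_n)$ up to a factor $(1-1/n)$. The key geometric observation is the containment chain valid for every document $i$:
$$\conv(\mathbf{\hat C}_n)\cap\mathcal{B}(\mathbf u^{0(i)};C_4\epsilon_n)\subseteq\setballs\subseteq\conv(\smallC),$$
where the first inclusion follows from the definition $\setballs=\conv(\mathbf{\hat C}_n)\cap\bigcup_{i}\mathcal{B}(\mathbf u^{0(i)};C_4\epsilon_n)$ and the second from the defining property of $\smallC$. As the integrand is nonnegative, this gives $\int_{\conv(\smallC)}\ge\int_{\conv(\mathbf{\hat C}_n)\cap\mathcal{B}(\mathbf u^{0(i)};C_4\epsilon_n)}$, and applying Lemma~\ref{lem:F_lower_bound_2} to the right-hand side yields, for each $i$,
$$\int_{\conv(\smallC)}\frac{f^{(i)}(\mathbf u)}{f^{(i)}(\mathbf{\hat u}^{(i)})}\,d\mathbf u\;\ge\;\Big(1-\tfrac1n\Big)\int_{\conv(\mathbf{\hat C}_n)}\frac{f^{(i)}(\mathbf u)}{f^{(i)}(\mathbf{\hat u}^{(i)})}\,d\mathbf u.$$

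Finally I would substitute this bound into the optimality inequality. Writing $P=\prod_{i=1}^d\int_{\conv(\mathbf{\hat C}_n)}\frac{f^{(i)}(\mathbf u)}{f^{(i)}(\mathbf{\hat u}^{(i)})}\,d\mathbf u>0$, the two displays combine into $P/|\conv(\mathbf{\hat C}_n)|^d\ge(1-1/n)^d\,P/|\conv(\smallC)|^d$; cancelling the strictly positive $P$ and taking $d$-th roots gives exactly $(1-1/n)|\conv(\mathbf{\hat C}_n)|\le|\conv(\smallC)|$, as claimed. The individual arguments are short because the heavy lifting---the concentration of $f^{(i)}$ near $\mathbf u^{0(i)}$---is already isolated in Lemma~\ref{lem:F_lower_bound_2}. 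The step requiring the most care is the bookkeeping that makes everything cancel: verifying that $\smallC$ is genuinely admissible (column-stochastic, positive volume), that the common factor $A_{n,d}$ and the positive product $P$ can be legitimately cancelled, and that the per-document factor $(1-1/n)$ aggregates to $(1-1/n)^d$ so that the $d$-th root removes the product structure cleanly.
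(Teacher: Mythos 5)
Your proposal is correct and follows essentially the same route as the paper's proof: MLE optimality $F_{n\times d}(\mathbf{\hat C}_n;\mathbf X)\ge F_{n\times d}(\smallC;\mathbf X)$, the containment $\conv(\mathbf{\hat C}_n)\cap\mathcal{B}(\mathbf u^{0(i)};C_4\epsilon_n)\subseteq\setballs\subseteq\conv(\smallC)$ from Definition~\ref{def:smallest_C}, and Lemma~\ref{lem:F_lower_bound_2}, combined into a chain whose common positive factors cancel before taking $d$-th roots. The only cosmetic difference is which product you cancel (integrals over $\conv(\mathbf{\hat C}_n)$ versus, in the paper, integrals over $\conv(\mathbf{\hat C}_n)\cap\mathcal{B}(\mathbf u^{0(i)};C_4\epsilon_n)$), which does not change the argument.
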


\begin{proof}
Since $\mathbf c_f^* \in \simplex^{V-1}, \, \forall f \in [k]$, $\smallC$ is a valid parameter of $F_{n\times d}(\mathbf C; \mathbf X) $, and $F_{n\times d}(\mathbf{\hat C}_n; \mathbf X) \geq F_{n\times d}(\smallC; \mathbf X)$. From Lemma \ref{lem:F_lower_bound_2}, we have 
\begin{align*}
    &  \frac{A_{n,d}}{|\conv(\mathbf{\hat C}_n)|^d} \cdot \prod_{i=1}^d \int_{\conv(\mathbf{\hat C}_n)\bigcap \mathcal{B}(\mathbf u^{0(i)}; C_4\epsilon_n)} \frac{f^{(i)}(\mathbf u)}{f^{(i)}(\mathbf {\hat u}^{(i)})} d\mathbf u  \\
    \geq & \left(1 - \frac{1}{ n}\right)^d F_{n\times d}(\mathbf{\hat C}_n; \mathbf X) \geq  \left(1 - \frac{1}{ n}\right)^d F_{n\times d}(\smallC; \mathbf X)\\
    = & \left(1 - \frac{1}{ n}\right)^d  \frac{A_{n,d}}{|\conv(\smallC)|^d} \cdot \prod_{i=1}^d \int_{\conv(\smallC)} \frac{f^{(i)}(\mathbf u)}{f^{(i)}(\mathbf {\hat u}^{(i)})} d\mathbf u  \\
    \geq & \left(1 - \frac{1}{ n}\right)^d  \frac{A_{n,d}}{|\conv(\smallC)|^d} \cdot \prod_{i=1}^d \int_{\conv(\mathbf{\hat C}_n) \bigcap \mathcal{B}(\mathbf u^{0(i)}; C_4\epsilon_n)} \frac{f^{(i)}(\mathbf u)}{f^{(i)}(\mathbf {\hat u}^{(i)})} d\mathbf u  ,
\end{align*}
where the last inequality is due to the definition of $\conv(\smallC)$.
Therefore,
$$\frac{1}{|\conv(\mathbf{\hat C}_n)|^d}  \geq \frac{\left(1 - \frac{1}{n}\right)^d}{|\conv(\smallC)|^d},$$
$$\left(1- \frac{1}{n}\right)|\conv(\mathbf{\hat C}_n)|\leq |\conv(\smallC)|.$$

\end{proof}

Next, we compare $|\conv(\tC)|$ and $|\conv(\smallC)|$. We will construct an enlarged convex polytope, $\conv((\tC)^{\gamma\epsilon_n})$, and then project it to $\text{aff}(\mathbf {\hat C}_n)$ to obtain a projected convex polytope, $|\conv(\mathbf C^\sharp)|$. We will show that $|\conv(\mathbf C^\sharp)|$ contains the set $\setballs$, so that  $   |\conv(\smallC)| \leq |\conv(\mathbf C^\sharp)|$. \\

\begin{definition}
Let $\mathbf C^\sharp = (\mathbf c^\sharp_1, \cdots, \mathbf c^\sharp_k) \in \mathds{R}^{V\times k}$ such that $\mathbf c^\sharp_f$ is the projected vector of the $f$-th vertex of $\conv\left((\mathbf{C}^0)^{\gamma  \epsilon_n}\right)$ on $\text{aff}(\mathbf {\hat C}_n)$, $\forall f= [k]$. Here $\gamma>0$ is a constant.
\end{definition}

\begin{lemma}\label{lem:Csharp_inside_simplex}
When $n$ is large enough, $\conv(\mathbf C^\sharp)$ is in $\simplex^{V-1}$.
\end{lemma}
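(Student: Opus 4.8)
The plan is to reduce the set inclusion to a statement about vertices. Since $\simplex^{V-1}$ is convex and $\conv(\mathbf C^\sharp)$ is the convex hull of its $k$ vertices $\mathbf c_1^\sharp,\ldots,\mathbf c_k^\sharp$, it suffices to show that each $\mathbf c_f^\sharp\in\simplex^{V-1}$ for $n$ large, i.e.\ that $\mathbf c_f^\sharp\geq 0$ elementwise and $(\mathbf c_f^\sharp)^T\mathbf 1_V=1$. The sum-to-one constraint is automatic: because $\mathbf{\hat C}_n$ is column-stochastic, every point of $\text{aff}(\mathbf{\hat C}_n)$ satisfies $\mathbf x^T\mathbf 1_V=1$, and by definition $\mathbf c_f^\sharp$ is the projection of a point onto $\text{aff}(\mathbf{\hat C}_n)$, hence lies in $\text{aff}(\mathbf{\hat C}_n)$ and inherits this property. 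Thus the real work is the non-negativity, and the strategy is to show that $\mathbf c_f^\sharp$ is within $O(\epsilon_n)$ of the true vertex $\mathbf c_f$ (the $f$-th column of $\tC$), which by Assumption (A1) is bounded away from the boundary of $\simplex^{V-1}$ by some constant $\tau>0$.

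To control $\|\mathbf c_f^\sharp-\mathbf c_f\|_2$ I would split it through the enlarged vertex $\mathbf c_f^{\gamma\epsilon_n}$. For the enlargement step, Definition~\ref{def:delta_enlarge} gives $\mathbf c_f^{\gamma\epsilon_n}-\mathbf c_f=\rho(\tC)\gamma\epsilon_n(\mathbf c_f-\bar{\mathbf c})$, whose norm is $O(\epsilon_n)$ since $\rho(\tC)=k/\sigma^+_{\min}(\tC)$ is a constant (by (A1) $\tC$ has full rank) and $\|\mathbf c_f-\bar{\mathbf c}\|_2$ is bounded because the columns lie in the simplex. The key structural observation is that this enlargement is an affine map \emph{within} $\text{aff}(\tC)$: writing $\mathbf c_f^{\gamma\epsilon_n}=\tC\bm\lambda_f$ with $\bm\lambda_f=(1+\rho(\tC)\gamma\epsilon_n)(\mathbf e_f-\tfrac1k\mathbf 1_k)+\tfrac1k\mathbf 1_k$, one checks $\bm\lambda_f^T\mathbf 1_k=1$, so $\mathbf c_f^{\gamma\epsilon_n}\in\text{aff}(\tC)$ and its projection onto $\text{aff}(\tC)$ is itself. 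Consequently $\mathbf c_f^\sharp-\mathbf c_f^{\gamma\epsilon_n}$ equals the difference between the projection of $\mathbf c_f^{\gamma\epsilon_n}$ onto $\text{aff}(\mathbf{\hat C}_n)$ and its projection onto $\text{aff}(\tC)$, which is exactly the quantity bounded by Lemma~\ref{lem:project_error}. Applying \eqref{eq:diff_projection_bound} with $\mathbf y=\mathbf c_f^{\gamma\epsilon_n}$ (whose norm is bounded, being $O(\epsilon_n)$-close to $\mathbf c_f\in\simplex^{V-1}$) yields $\|\mathbf c_f^\sharp-\mathbf c_f^{\gamma\epsilon_n}\|_2=O(\epsilon_n)$, and the triangle inequality gives $\|\mathbf c_f^\sharp-\mathbf c_f\|_2=O(\epsilon_n)$.

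The conclusion is then immediate: by (A1) every entry satisfies $(\mathbf c_f)_v\geq\tau$, so $(\mathbf c_f^\sharp)_v\geq\tau-O(\epsilon_n)>0$ once $n$ is large enough that $\epsilon_n$ falls below a fixed multiple of $\tau$. Hence each $\mathbf c_f^\sharp$ is non-negative, and together with the sum-to-one property already established, $\mathbf c_f^\sharp\in\simplex^{V-1}$; convexity of the simplex then gives $\conv(\mathbf C^\sharp)\subseteq\simplex^{V-1}$. I expect the main obstacle to be the careful bookkeeping in the second paragraph rather than any deep difficulty: one must correctly identify that the enlargement keeps the vertices inside $\text{aff}(\tC)$ so that the projection-error bound of Lemma~\ref{lem:project_error} applies to precisely $\mathbf c_f^\sharp-\mathbf c_f^{\gamma\epsilon_n}$, and verify that $\mathbf c_f^{\gamma\epsilon_n}$ has controlled norm so the $C\|\mathbf y\|_2\epsilon_n$ term in \eqref{eq:diff_projection_bound} remains $O(\epsilon_n)$. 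Everything else is routine constant-chasing.
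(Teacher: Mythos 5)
Your proposal is correct and follows essentially the same route as the paper's proof: sum-to-one via column-stochasticity of $\hatCn$ and the fact that $\mathbf c_f^\sharp \in \text{aff}(\hatCn)$, and non-negativity by combining the projection-error bound of Lemma~\ref{lem:project_error} with Assumption (A1), the only cosmetic difference being that the paper lower-bounds the entries of the enlarged vertex $(\mathbf c_f^0)^{\gamma\epsilon_n}$ directly while you pass through the original vertex $\mathbf c_f$ by one extra triangle inequality. Your explicit justification that $(\mathbf c_f^0)^{\gamma\epsilon_n}$ lies in $\text{aff}(\tC)$ (so that its projection onto $\text{aff}(\tC)$ is itself and Lemma~\ref{lem:project_error} applies verbatim) is a detail the paper leaves implicit, and is a welcome clarification.
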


\begin{proof}
It suffices to show that for any $f\in[k]$, (1) ${\mathbf c_f^{\sharp T}} \mathbf 1_V = 1 $ and (2) $\mathbf c_f^\sharp \geq 0$.

By the definition of $\text{aff}(\hatCn)$, $\mathbf c_f^\sharp= \hatCn \bm{\lambda}_f$ and $\bm{\lambda}_f^T\mathbf 1_k=1$. Therefore, (1) holds because
$${\mathbf c_f^{\sharp T}} \mathbf 1_V =  \bm{\lambda}_f^T \hatCn^T\mathbf 1_V = \bm{\lambda}_f^T \mathbf 1_k = 1.$$

By Lemma \ref{lem:project_error}, we have
\begin{align*}
\|\mathbf c^\sharp_f - (\mathbf{c}_f^0)^{\gamma  \epsilon_n}\|_2 \leq C\epsilon_n.
\end{align*}

Therefore, to show (2), it suffices to verify that $(\mathbf{c}_f^0)^{\gamma  \epsilon_n} > C_1,$ for any $f\in[k]$. 

Note that $\mathbf c^0_1,\cdots, \mathbf c^0_k \geq C_2,$ since $\mathbf c^0_1,\cdots, \mathbf c^0_k$ are strict inner points of $\simplex^{V-1}$.  

By the definition of the enlarged convex polytope (Definition \ref{def:delta_enlarge}), we have 
\begin{align*}
    (\mathbf{c}_f^0)^{\gamma \epsilon_n} & = \left(1 + \rho\gamma\epsilon_n \right) (\mathbf{c}_f^0 - \mathbf {\bar c}^0) + \mathbf {\bar c}^0 \\
    & = \left(1 + \rho\gamma\epsilon_n \right) \mathbf{c}_f^0 -  \rho\gamma\epsilon_n \mathbf {\bar c}^0\\
    & \geq (1 +  \rho\gamma\epsilon_n) C_2-  \rho\gamma\epsilon_n\geq  C_1,
\end{align*}
where $\rho = \rho(\tC)$ and $\mathbf {\bar c}^0 = \frac{1}{k}\sum_{f=1}^k \mathbf{c}_f^0$.
\end{proof}

Next, we want to prove that $\conv(\mathbf C^\sharp)$ contains the set $\setballs$. We first study the property of the boundary points of the $\delta$-enlargement convex polytope in Lemma \ref{lem:bd_dis_bound} and show that the distance between any boundary point and the original convex polytope is at least $\delta$. Using this fact,  we know that any boundary point of $\conv(\mathbf C^\sharp)$ is at least $\gamma\epsilon_n$ away from any $\mathbf u^{0(i)} \in \conv(\tC)$. By letting $\gamma$ large enough, we can have $\conv(\mathbf C^\sharp)$ contain the set $\setballs'_i = \mathcal{B}(\mathbf u^{0(i)}, C_4\epsilon_n)\bigcap \text{aff}(\mathbf {\hat C}_n)$ for any $i \in [d]$. Therefore, $\conv(\mathbf C^\sharp)$ contains the set $\setballs' = \bigcup_{i=1}^d\setballs'_i$, which is a superset of the set $\setballs$.  The detailed proof is in Lemma \ref{lem:proj_inside}.
\begin{lemma}\label{lem:bd_dis_bound}
 For any point $\mathbf x$ on the boundary of $\conv(\mathbf C^\delta)$,
$$\delta \leq d(\mathbf x, \conv(\mathbf C))  \leq \kappa(\mathbf C)k\delta.$$
where $\kappa(\mathbf C) = \frac{\sigma_{\max}(\mathbf C)}{\sigma^+_{\min}(\mathbf C)}$ is the conditional number of $\mathbf C$.
\end{lemma}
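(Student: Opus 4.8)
The plan is to exploit that $\conv(\mathbf C^\delta)$ is nothing but the dilation of $\conv(\mathbf C)$ about its centroid $\bar{\mathbf c}=\frac1k\sum_{f}\mathbf c_f$ by the factor $s:=1+\rho(\mathbf C)\delta$, and in particular that the two polytopes share the same affine hull (each $\mathbf c_f^\delta$ is an affine combination of $\mathbf c_f$ and $\bar{\mathbf c}$ whose coefficients sum to one, so $\mathbf c_f^\delta\in\text{aff}(\mathbf C)$ and, by affine independence, $\text{aff}(\mathbf C^\delta)=\text{aff}(\mathbf C)$). The workhorse identity is this: any $\mathbf x\in\conv(\mathbf C^\delta)$ can be written as $\mathbf x=\mathbf C^\delta\bm\mu$ for some $\bm\mu\in\simplex^{k-1}$, and setting $\mathbf y:=\mathbf C\bm\mu\in\conv(\mathbf C)$, the definition $\mathbf c_f^\delta=(1+\rho(\mathbf C)\delta)(\mathbf c_f-\bar{\mathbf c})+\bar{\mathbf c}$ gives $\mathbf x-\mathbf y=\rho(\mathbf C)\delta\,(\mathbf y-\bar{\mathbf c})$.

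For the upper bound I would simply bound $d(\mathbf x,\conv(\mathbf C))\le\|\mathbf x-\mathbf y\|_2=\rho(\mathbf C)\delta\,\|\mathbf y-\bar{\mathbf c}\|_2$. Writing $\mathbf y-\bar{\mathbf c}=\mathbf C(\bm\mu-\frac1k\mathbf 1_k)$ and using $\|\bm\mu-\frac1k\mathbf 1_k\|_2^2=\|\bm\mu\|_2^2-\frac1k\le 1$ for $\bm\mu\in\simplex^{k-1}$, one gets $\|\mathbf y-\bar{\mathbf c}\|_2\le\sigma_{\max}(\mathbf C)$, whence $d(\mathbf x,\conv(\mathbf C))\le\rho(\mathbf C)\,\sigma_{\max}(\mathbf C)\,\delta=\kappa(\mathbf C)\,k\,\delta$, recalling $\rho(\mathbf C)=k/\sigma_{\min}^+(\mathbf C)$.

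For the lower bound I would work inside the common affine hull, where both sets are $(k-1)$-simplices. A boundary point $\mathbf x$ lies on some facet $F_j^\delta=\conv\{\mathbf c_f^\delta:f\neq j\}$, the dilation image of the facet $F_j$ of $\conv(\mathbf C)$ opposite vertex $\mathbf c_j$. Let $H_j$ be the hyperplane carrying $F_j$; since $\conv(\mathbf C)$ lies in the closed half-space bounded by $H_j$ while $\mathbf x$ lies strictly beyond $H_j$, we have $d(\mathbf x,\conv(\mathbf C))\ge d(\mathbf x,H_j)$, and because $H_j^\delta\parallel H_j$ this perpendicular gap equals $(s-1)h_j=\rho(\mathbf C)\delta\,h_j$, where $h_j$ is the centroid-to-facet distance. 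It then suffices to show $h_j\ge 1/\rho(\mathbf C)=\sigma_{\min}^+(\mathbf C)/k$. A centroid computation yields $h_j=d_j/k$, where $d_j$ is the height of $\mathbf c_j$ above $\text{aff}\{\mathbf c_f:f\neq j\}$, and the variational description $d_j=\min\{\|\mathbf C\bm\mu\|_2:\mu_j=1,\ \mathbf 1_k^T\bm\mu=0\}$ combined with the full column rank of $\mathbf C$ gives $d_j\ge\sigma_{\min}^+(\mathbf C)\min\|\bm\mu\|_2\ge\sigma_{\min}^+(\mathbf C)$ (since $\mu_j=1$ forces $\|\bm\mu\|_2\ge 1$). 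Chaining these, $d(\mathbf x,\conv(\mathbf C))\ge\rho(\mathbf C)\delta\,h_j\ge\delta$.

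The main obstacle is the lower bound, specifically identifying the right geometric quantity — the centroid-to-facet distance $h_j$ — and relating it to $\sigma_{\min}^+(\mathbf C)$ through the height $d_j$; this is exactly what forces the normalization $\rho(\mathbf C)=k/\sigma_{\min}^+(\mathbf C)$ in the definition of the enlargement, so that the factor $k$ (the ratio of height to centroid distance in a simplex) and the smallest nonzero singular value appear for a genuine reason rather than incidentally. The upper bound, by contrast, is a short consequence of the dilation identity together with the operator-norm estimate $\|\mathbf C(\bm\mu-\frac1k\mathbf 1_k)\|_2\le\sigma_{\max}(\mathbf C)$.
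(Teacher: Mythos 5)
Your proof is correct, and while your upper bound coincides with the paper's, your lower bound takes a genuinely different route. For the upper bound, both you and the paper pick the candidate point $\mathbf y = \mathbf C\bm\mu$ with the same barycentric coefficients as $\mathbf x$, use the dilation identity $\mathbf x - \mathbf y = \rho(\mathbf C)\delta\,(\mathbf y - \bar{\mathbf c})$, and finish with $\|\mathbf C(\bm\mu - \tfrac1k\mathbf 1_k)\|_2 \le \sigma_{\max}(\mathbf C)$. For the lower bound, the paper stays entirely in coefficient space: writing $\mathbf x - \mathbf y = \mathbf C\bm\eta$ with $\bm\eta = (1+\rho\delta)\bm\alpha - \bm\beta - \tfrac{\rho\delta}{k}\mathbf 1_k$, it uses that a boundary point has some $\alpha_j = 0$, so the single coordinate $\eta_j = -\beta_j - \rho\delta/k$ already satisfies $|\eta_j| \ge \rho\delta/k$, whence $\|\mathbf C\bm\eta\|_2 \ge \sigma^+_{\min}(\mathbf C)\,|\eta_j| \ge \delta$. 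You instead argue geometrically inside the common affine hull: the boundary point sits on a dilated facet, the dilated facet hyperplane is parallel to the original one at perpendicular gap $(s-1)h_j = \rho(\mathbf C)\delta\, h_j$, half-space separation gives $d(\mathbf x, \conv(\mathbf C)) \ge \rho(\mathbf C)\delta\, h_j$, and you control the centroid-to-facet distance via $h_j = d_j/k$ and the variational characterization $d_j = \min\{\|\mathbf C\bm\mu\|_2 : \mu_j = 1,\ \mathbf 1_k^T\bm\mu = 0\} \ge \sigma^+_{\min}(\mathbf C)$. Both arguments ultimately rest on $\|\mathbf C\bm v\|_2 \ge \sigma^+_{\min}(\mathbf C)\|\bm v\|_2$, but what each buys is different: the paper's coordinate-extraction trick is shorter and avoids any discussion of facets or supporting hyperplanes, while your version makes transparent why the normalization $\rho(\mathbf C) = k/\sigma^+_{\min}(\mathbf C)$ is the right one --- $k$ is exactly the height-to-centroid-distance ratio of a simplex, and $\sigma^+_{\min}(\mathbf C)$ uniformly lower-bounds the heights. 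Your route does quietly use two additional (easy) facts --- that the vertices of $\mathbf C^\delta$ remain affinely independent and that the relative boundary of a simplex is the union of its facets --- both of which follow from the full column rank of $\mathbf C$, so there is no gap.
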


\begin{proof}
By the definition of $\delta$-enlargement in Definition \ref{def:delta_enlarge}, 
\begin{align*}
    \mathbf x =& \sum_{f=1}^k \alpha_f \mathbf{c}_f^\delta = \sum_{f=1}^k \alpha_f [(1 + \rho \delta)(\mathbf{c}_f - \mathbf{\bar c})  + \mathbf{\bar c}]\\
    = & \sum_{f=1}^k \alpha_f (1 + \rho \delta)[\mathbf{c}_f  - \frac{1}{k}\sum_{s=1}^k\mathbf{c}_s] + \sum_{f=1}\alpha_f\mathbf{\bar c} \\
     = & \sum_{f=1}^k \alpha_f (1 + \rho \delta)\mathbf{c}_f  - \frac{1 + \rho \delta}{k}\sum_{s=1}^k\mathbf{c}_s + \mathbf{\bar c} \\
    = & \sum_{f=1}^k (1 + \rho \delta)\left(\alpha_f -\frac{1}{k}\right)\mathbf{c}_f  + \mathbf{\bar c} 
\end{align*}
where $\rho =  \frac{k} {\sigma^+_{\min} (\mathbf C)}$, and $\bm \alpha = \{\alpha_f\}_{f=1,\cdots,k} \in \simplex^{k-1}$. Since $\mathbf x$ is a boundary point, there exists at least one $f\in[k]$, such that $\alpha_f = 0$. WLOG, we assume $\alpha_k = 0$.  

Let $\bm \alpha^\prime = \bm \alpha - \frac{1}{k}\mathbf 1$. We have $$\mathbf x = \mathbf C (1 + \rho\delta)\bm\alpha'+  \mathbf{\bar c}.$$

At the same time, any point $\mathbf y$ in $\conv(\mathbf C)$ can be represented by $$\mathbf y = \mathbf{C}\bm{\beta} = \sum_{f=1}^k\beta_f (\mathbf{c}_f - \mathbf{\bar c}) +\mathbf{\bar C} = \mathbf C \bm \beta'  +\mathbf{\bar c},$$ 
where $\bm \beta= \{\beta_f\}_{f=1,\cdots,k} \in \simplex^{k-1}$ and $\bm \beta^\prime = \bm \beta - \frac{1}{k}\mathbf 1$.

Now we can measure the distance between the boundary point $\mathbf x$ and $\conv(\mathbf C)$,
\begin{align*}
    d(\mathbf x,\conv(\mathbf C)) & = \min_{\mathbf y \in \conv(\mathbf C)}\|\mathbf x - \mathbf y\|_2 = \min_{\bm \beta \in \simplex^{V-1}}\|\mathbf{ C} [(1+\rho\delta)\bm \alpha^\prime - \bm \beta^\prime]\|_2 \\
    & \geq  \sigma^+_{\min}(\mathbf{C}) \cdot \min_{\bm \beta \in \simplex^{V-1}}\|(1+\rho\delta)\bm \alpha^\prime - \bm \beta^\prime\|_2.
\end{align*}

Write $\bm \eta = (1+\rho\delta)\bm \alpha^\prime - \bm \beta^\prime = (1+\rho\delta)\bm \alpha - \bm \beta - \frac{\rho\delta}{k}\bm 1. $ Then, the $k$-th element of $\bm \eta $ is 
$$\eta_k = (1+\rho\delta)\alpha_k - \beta_k - \frac{\rho\delta}{k} = 0 - \beta_k - \frac{\rho\delta}{k} \leq  - \frac{\rho\delta}{k} $$
because we assume $\alpha_k = 0$, and $\bm\beta \in \simplex^{(k-1)}$ so that $\beta_k \geq 0$. Then, we obtain the lower bound,
$$ d(\mathbf x,\conv(\mathbf C)) \geq \sigma^+_{\min}(\mathbf{C}) \cdot \min_{\bm\eta }\|\bm \eta \|_2 \geq \sigma^+_{\min}(\mathbf{C}) \cdot \min_{\bm \eta }|\eta_k|\geq \sigma^+_{\min}(\mathbf{C})\cdot \frac{\rho\delta}{k} = \delta.$$

For the upper bound, let $\bm \beta^\prime = \bm \alpha^\prime$, we have 
\begin{align*}
    \min_{\mathbf y \in \conv(\mathbf C)}\|\mathbf x - \mathbf y\|_2&\leq \|\mathbf{C} [(1+\rho\delta)\bm \alpha^\prime - \bm \alpha^\prime]\|_2 \leq  \sigma_{\max}(\mathbf{C}) \cdot \rho\delta \cdot \|\bm \alpha^\prime\|_2\\
    & = \sigma_{\max}(\mathbf{C})\rho\delta  \sqrt{\|\bm \alpha\|_2^2 - \frac{1}{k}} \leq  \sigma_{\max}(\mathbf{C})\rho\delta  \sqrt{1 - \frac{1}{k}} \\
    & \leq \kappa(\mathbf C)k\delta.
\end{align*}
\end{proof}

\begin{lemma}\label{lem:proj_inside}


For some $\gamma > 0$, $\conv(\mathbf C^\sharp)$ covers the set  $\setballs' = \left\{\bigcup_{i=1}^d \mathcal{B}(\mathbf u^{0(i)}, C_4\epsilon_n)\right\} \bigcap \text{aff}(\mathbf {\hat C}_n)$, i.e., 
$$\setballs \subseteq \setballs' \subseteq \conv(\mathbf C^\sharp).$$
\end{lemma}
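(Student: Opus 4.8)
The plan is to prove the two inclusions separately, with essentially all of the work in the second. The first inclusion $\setballs \subseteq \setballs'$ is immediate: since $\conv(\hatCn) \subseteq \text{aff}(\hatCn)$, intersecting both sides with $\bigcup_{i=1}^d \mathcal B(\mathbf u^{0(i)}, C_4\epsilon_n)$ gives $\setballs = \conv(\hatCn)\cap\big(\bigcup_i\mathcal B\big) \subseteq \text{aff}(\hatCn)\cap\big(\bigcup_i\mathcal B\big) = \setballs'$. For the second inclusion $\setballs' \subseteq \conv(\mathbf C^\sharp)$ it suffices to fix one index $i$ and show that the slice $\mathcal B(\mathbf u^{0(i)}, C_4\epsilon_n) \cap \text{aff}(\hatCn)$ lies in $\conv(\mathbf C^\sharp)$.

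First I would record that $\conv(\mathbf C^\sharp)$ is exactly the image of the enlarged polytope under the affine projection onto $\text{aff}(\hatCn)$. Writing $\Phi(\mathbf x) = \mathbf P_{\hatCn}\mathbf x + (\mathbf I - \mathbf P_{\hatCn})\mathbf c_1^\sharp$ for this orthogonal projection (cf.~\eqref{eq:y_projection}), the map $\Phi$ is affine and hence commutes with convex hulls, so $\Phi\big(\conv((\tC)^{\gamma\epsilon_n})\big) = \conv(\mathbf C^\sharp)$ by the definition of the vertices $\mathbf c_f^\sharp$. Because each enlarged vertex $(\mathbf c_f^0)^{\gamma\epsilon_n}$ is an affine combination of $\mathbf c_1^0,\dots,\mathbf c_k^0$, the enlarged polytope lies in $\text{aff}(\tC)$, and Lemma~\ref{lem:project_error} (eq.~\eqref{eq:proj_mtr_bound}) gives $\|\mathbf P_{\hatCn} - \mathbf P_{\tC}\|_2 \le C\epsilon_n$. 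For $n$ large this forces the restriction $\Phi|_{\text{aff}(\tC)}:\text{aff}(\tC)\to\text{aff}(\hatCn)$ to be an affine isomorphism: for any direction vector $\mathbf v$ of $\text{aff}(\tC)$ one has $\|\mathbf P_{\hatCn}\mathbf v\|_2 \ge (1-C\epsilon_n)\|\mathbf v\|_2$, so $\Phi$ collapses no dimension of the $(k-1)$-dimensional source, and relative interiors and relative boundaries are preserved.

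Next I would exploit the margin created by the enlargement. Since $\mathbf u^{0(i)} = \tC\mathbf w^{0(i)} \in \conv(\tC)$ and the enlargement strictly expands the polytope, Lemma~\ref{lem:bd_dis_bound} with $\delta = \gamma\epsilon_n$ shows every relative boundary point $\mathbf z$ of $\conv((\tC)^{\gamma\epsilon_n})$ satisfies $d(\mathbf z,\conv(\tC)) \ge \gamma\epsilon_n$, whence $\|\mathbf z - \mathbf u^{0(i)}\|_2 \ge \gamma\epsilon_n$ and $\mathbf u^{0(i)}$ is a relative interior point. Writing an arbitrary relative boundary point of $\conv(\mathbf C^\sharp)$ as $\Phi(\mathbf z)$ and using $\Phi(\mathbf z)-\Phi(\mathbf u^{0(i)}) = \mathbf P_{\hatCn}(\mathbf z-\mathbf u^{0(i)})$ with $\mathbf z-\mathbf u^{0(i)}$ a direction of $\text{aff}(\tC)$, the near-isometry bound gives $\|\Phi(\mathbf z)-\Phi(\mathbf u^{0(i)})\|_2 \ge (1-C\epsilon_n)\gamma\epsilon_n \ge \gamma\epsilon_n/2$. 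Thus $\Phi(\mathbf u^{0(i)})$ is a relative interior point of $\conv(\mathbf C^\sharp)$, every relative boundary point of which is at distance at least $\gamma\epsilon_n/2$ from it.

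Finally I would invoke the elementary fact that if $p$ is a relative interior point of a convex body $K$ and the open ball $B(p,r)$ in the ambient affine space meets no relative boundary point of $K$, then $B(p,r)\subseteq K$ (otherwise the segment from $p$ to an outside point of the ball would cross $\partial K$ inside the ball). Applying this with $K = \conv(\mathbf C^\sharp)$, $p = \Phi(\mathbf u^{0(i)})$, $r = \gamma\epsilon_n/2$ yields $B(\Phi(\mathbf u^{0(i)}),\gamma\epsilon_n/2)\cap\text{aff}(\hatCn)\subseteq\conv(\mathbf C^\sharp)$. Any $\mathbf x$ in the slice obeys $\mathbf x = \Phi(\mathbf x)$, and since $\Phi$ is $1$-Lipschitz, $\|\mathbf x-\Phi(\mathbf u^{0(i)})\|_2 = \|\Phi(\mathbf x)-\Phi(\mathbf u^{0(i)})\|_2 \le \|\mathbf x-\mathbf u^{0(i)}\|_2 \le C_4\epsilon_n$; choosing the constant $\gamma > 2C_4$ makes $C_4\epsilon_n < \gamma\epsilon_n/2$, so $\mathbf x\in\conv(\mathbf C^\sharp)$, and taking the union over $i$ gives $\setballs'\subseteq\conv(\mathbf C^\sharp)$. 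The hard part will be the middle step: one must verify that orthogonal projection between the two nearly parallel $(k-1)$-dimensional affine spaces is a genuine isomorphism that neither collapses a dimension nor shrinks the enlargement margin by more than a constant factor — exactly where the quantitative bound \eqref{eq:proj_mtr_bound} and the large-$n$ regime are indispensable — while the remaining arguments are routine convex geometry.
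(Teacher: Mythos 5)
Your proof is correct and takes essentially the same route as the paper's: both arguments reduce the claim to showing that relative boundary points of $\conv(\mathbf C^\sharp)$ --- which are projections of relative boundary points of the enlarged polytope $\conv\left((\tC)^{\gamma\epsilon_n}\right)$ --- remain at distance of order $\gamma\epsilon_n$ from the ball centers, combining the enlargement margin of Lemma~\ref{lem:bd_dis_bound} with the projection perturbation bound \eqref{eq:proj_mtr_bound} of Lemma~\ref{lem:project_error}, and then choosing $\gamma$ large relative to $C_4$. The only real difference is that where the paper concludes tersely from ``$\conv(\mathbf C^\sharp)$ is a closed and simply connected region,'' you make the topological step explicit --- the projection restricted to $\text{aff}(\tC)$ is an affine isomorphism for large $n$, $\Phi(\mathbf u^{0(i)})$ is a relative interior point of $\conv(\mathbf C^\sharp)$, and a ball around an interior point that avoids the relative boundary must lie inside --- which supplies the justification (a point of each ball slice actually lying in $\conv(\mathbf C^\sharp)$) that the paper leaves implicit.
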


\begin{proof}
Since $\conv(\mathbf C^\sharp)$ is a closed and simply connected region, it suffices to show that for any boundary point, $\mathbf x \in bd\conv(\mathbf C^\sharp)$, 
$$\min_{i=1\cdots,d}\|\mathbf x - \mathbf u^{0(i)}\|_2 \geq C_4\epsilon_n.$$ 

In fact, any boundary point of $\conv(\mathbf{C}^\sharp)$, $\mathbf x \in bd\conv(\mathbf{C}^\sharp)$, is projected from a boundary point of $\conv(\mathbf{C}^0)^{\gamma\epsilon_n}$, denoted as $\mathbf y \in  bd(\conv(\mathbf{C}^0)^{\gamma\epsilon_n})$. Then, 
$$\mathbf x = \mathbf P_{\mathbf {\tilde U}_n} \mathbf y + (\mathbf I - \mathbf P_{\mathbf {\tilde U}_n})\mathbf {\tilde u}^{(k)}. $$
By Lemma \ref{lem:bd_dis_bound}, we have $\forall \mathbf y\in bd(\conv(\mathbf{C}^0)^{\gamma\epsilon_n})$,
\begin{equation}\label{eq:dis_y_bound}
    \|\mathbf y - \mathbf u^{0(i)}\|_2 \geq d(\mathbf y, \conv(\mathbf{C}^0)) \geq \gamma \epsilon_n.\quad \forall i = 1,\cdots, d.
\end{equation}

Denote the projected point of $\mathbf u^{0(i)}$ on $\text{aff}(\mathbf{\hat C}_n) = \text{aff}(\mathbf{\tilde U}_n)$, as $\mathbf{\hat{u}}_{\mathbf{\tilde U}_n}^{0(i)}$. We have
\begin{align*}
    \|\mathbf x - \mathbf u^{0(i)}\|_2  = & \|(\mathbf x -  \mathbf{\hat{u}}_{\mathbf{\tilde U}_n}^{0(i)} ) + (\mathbf{\hat{u}}_{\mathbf{\tilde U}_n}^{0(i)}- \mathbf u^{0(i)})  \|_2\\
    \geq  & \|\mathbf x -  \mathbf{\hat{u}}_{\mathbf{\tilde U}_n}^{0(i)} \|_2
    =  \|\mathbf P_{\mathbf{\tilde U}_n}(\mathbf y - \mathbf u^{0(i)})\|_2
    \\
    = & \|\mathbf P_{\tU}(\mathbf y - \mathbf u^{0(i)}) + (\mathbf P_{\mathbf{\tilde U}_n}-\mathbf P_{\tU})(\mathbf y - \mathbf u^{0(i)})\|_2 
    \\
    \geq & \|\mathbf y - \mathbf u^{0(i)}\|_2 -  \|\mathbf P_{\mathbf{\tilde U}_n}-\mathbf P_{\tU}\|_2 \|\mathbf y - \mathbf u^{0(i)}\|_2 
    \\
      \geq & (1-  C_5\epsilon_n )  \|\mathbf y - \mathbf u^{0(i)}\|_2
      &\text{by Lemma \ref{lem:project_error} \eqref{eq:proj_mtr_bound}}\\ 
    \geq & (1-  C_5\epsilon_n ) \gamma\epsilon_n 
    \quad \forall i =1,\cdots, d. &\text{by~\eqref{eq:dis_y_bound}}
\end{align*}
By letting $\gamma$ large enough, we can make that for any $ \mathbf x \in bd\conv(\mathbf{C}^\sharp)$,
$$ \|\mathbf x - \mathbf u^{0(i)}\|_2 \geq (1-  C_5\epsilon_n ) \gamma\epsilon_n 
\geq C_4 \epsilon_n \quad \forall i =1,\cdots, d.$$
\end{proof}
~\\

Now we are ready to piece together all the useful results and conclude the following inequalities.
\begin{lemma}\label{lem:vol_upper_bound2}
$$|\conv(\mathbf{\hat C}_n)| \leq \left(1 + C'\epsilon_n\right)|\conv(\mathbf{C}^0)|$$
$$ |\det(\mathbf {\hat C}^T_n \mathbf {\hat C}_n )| \leq (1 + C'' \epsilon_n ) |\det({\tC}^T \tC )|$$
\end{lemma}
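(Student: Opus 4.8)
The plan is to assemble this lemma entirely from the auxiliary results already established, so the argument is essentially bookkeeping rather than new estimation: the two displayed inequalities are exactly \eqref{eq:vol_close} and \eqref{eq:vol_error}, and each analytic ingredient has been prepared in advance. Lemma~\ref{lem:vol_upper_bound} controls $|\conv(\hatCn)|$ by $|\conv(\smallC)|$; Lemmas~\ref{lem:Csharp_inside_simplex}--\ref{lem:proj_inside} certify that the projected polytope $\conv(\mathbf C^\sharp)$ is an admissible competitor in the minimum-volume problem defining $\smallC$; Proposition~\ref{prop:enlargement} controls the volume of the enlargement; and Corollary~\ref{lem:h_diff} controls the perpendicular distances.

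For the first (volume) inequality I would chain the four bounds
\[
\Big(1-\tfrac1n\Big)|\conv(\hatCn)| \le |\conv(\smallC)| \le |\conv(\mathbf C^\sharp)| \le |\conv((\tC)^{\gamma\epsilon_n})| \le \big(1+\rho(\tC)\gamma\epsilon_n\big)^{k-1}|\conv(\tC)|,
\]
where the first step is Lemma~\ref{lem:vol_upper_bound}; the second holds because Lemma~\ref{lem:Csharp_inside_simplex} places the $k$ vertices of $\mathbf C^\sharp$ on $\text{aff}(\hatCn)\cap\simplex^{V-1}$ and Lemma~\ref{lem:proj_inside} gives $\setballs\subseteq\conv(\mathbf C^\sharp)$, so $\conv(\mathbf C^\sharp)$ is feasible in Definition~\ref{def:smallest_C} and hence has volume at least that of the minimizer $\smallC$; the third step is that orthogonal projection onto $\text{aff}(\hatCn)$ is non-expansive on $(k-1)$-volume; and the last step is Proposition~\ref{prop:enlargement}. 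It then remains to absorb the two correction factors: since $k$ is fixed and $\epsilon_n\to0$, a Taylor bound gives $\big(1+\rho(\tC)\gamma\epsilon_n\big)^{k-1}\le 1+C\epsilon_n$, while $\big(1-\tfrac1n\big)^{-1}=1+O(1/n)\le 1+C\epsilon_n$ because $1/n\le\epsilon_n^2\le\epsilon_n$ for large $n$; multiplying and dropping the $O(\epsilon_n^2)$ terms yields $|\conv(\hatCn)|\le(1+C'\epsilon_n)|\conv(\tC)|$.

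For the second (determinant) inequality I would invoke Lemma~\ref{lem:vol_det}, writing $\sqrt{\det(\mathbf C^T\mathbf C)}=(k-1)!\,h\,|\conv(\mathbf C)|$, so the ratio $|\det(\hatCn^T\hatCn)|/|\det(\tC^T\tC)|$ factors (up to fixed powers) into the perpendicular-distance ratio $\hat h_n/h^0$ and the volume ratio $|\conv(\hatCn)|/|\conv(\tC)|$. Corollary~\ref{lem:h_diff} gives $|\hat h_n-h^0|\le C'\epsilon_n$ together with the uniform lower bound $h^0>C''$, whence $\hat h_n/h^0\le 1+C\epsilon_n$; combining this with the volume bound just proved and again discarding $O(\epsilon_n^2)$ terms gives $|\det(\hatCn^T\hatCn)|\le(1+C''\epsilon_n)|\det(\tC^T\tC)|$, which is \eqref{eq:vol_error} and closes the proof of Lemma~\ref{lem:det_C_bound}.

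The step I expect to need the most care is the second inequality of the chain, $|\conv(\smallC)|\le|\conv(\mathbf C^\sharp)|$: I must verify that $\conv(\mathbf C^\sharp)$ genuinely qualifies as a competitor in the minimum-volume problem defining $\smallC$ --- that its vertices lie on $\text{aff}(\hatCn)\cap\simplex^{V-1}$ (Lemma~\ref{lem:Csharp_inside_simplex}) and that it contains the whole point set via $\setballs\subseteq\setballs'\subseteq\conv(\mathbf C^\sharp)$ (Lemma~\ref{lem:proj_inside}) --- and that the enlargement constant $\gamma$ is fixed once and for all independently of $n,d$ so these inclusions hold uniformly. The only other point worth an explicit line is the projection-nonexpansiveness claim, which is standard but should be stated carefully since the two polytopes live on distinct affine hyperplanes.
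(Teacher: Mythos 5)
Your proposal reconstructs the paper's proof essentially step for step: the same chain $\left(1-\tfrac1n\right)|\conv(\hatCn)| \le |\conv(\smallC)| \le |\conv(\mathbf C^\sharp)| \le |\conv((\tC)^{\gamma\epsilon_n})| \le \left(1+\rho(\tC)\gamma\epsilon_n\right)^{k-1}|\conv(\tC)|$ assembled from Lemma~\ref{lem:vol_upper_bound}, Lemmas~\ref{lem:Csharp_inside_simplex}--\ref{lem:proj_inside}, and Proposition~\ref{prop:enlargement}, followed by the same factorization of the determinant ratio into $\hat h_n/h^0$ times the volume ratio via Lemma~\ref{lem:vol_det} and Corollary~\ref{lem:h_diff}. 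The single deviation is your justification of $|\conv(\mathbf C^\sharp)|\le|\conv((\tC)^{\gamma\epsilon_n})|$ by non-expansiveness of orthogonal projection on $(k-1)$-dimensional volume, where the paper instead compares edge lengths; yours is in fact the safer argument, since simplex volume is not in general monotone in edge lengths, while the affine $1$-Lipschitz projection bound is airtight.
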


\begin{proof}
Since the $\delta$-enlargement is an affine transformation, by Proposition \ref{prop:enlargement},
$$|\conv((\mathbf{C}^0)^{\gamma\epsilon_n})|=\left(1 + \rho(\mathbf{C}^0)\gamma\epsilon_n\right)^{k-1}|\conv( \mathbf{C}^0)| \leq (1 +C_1\epsilon_n)|\conv( \mathbf{C}^0)|. $$

Then, by Lemma \ref{lem:Csharp_inside_simplex} and Lemma \ref{lem:proj_inside}$, \conv(\mathbf C^\sharp)$ is on $\text{aff}(\mathbf {\hat C}_n)\bigcap \simplex^{V-1}$ covering $\setballs$. And by Definition \ref{def:smallest_C}, $\conv(\smallC)$ is the smallest $k$-vertex convex polytope on $\text{aff}(\mathbf {\hat C}_n)\bigcap \simplex^{V-1}$ covering $\setballs$. So we have ,
$$|\conv(\smallC)|\leq |\conv(\mathbf C^\sharp)|\leq |\conv((\mathbf{C}^0)^{\gamma\epsilon_n})|.$$
The last inequality holds because $\mathbf C^\sharp$ is a projection of $(\mathbf{C}^0)^{\gamma\epsilon_n}$ on $\text{aff}(\mathbf {\hat C}_n)$, so that any side length of $\conv(\mathbf C^\sharp)$ is shorter than the corresponding one of $(\mathbf{C}^0)^{\gamma\epsilon_n}$, i.e.,
$$\|\mathbf c^\sharp_f- \mathbf c^\sharp_p\|_2 =  \|\mathbf P_{\mathbf {\tilde U}_n}[(\mathbf{c}^0_f)^{\gamma\epsilon_n}- (\mathbf{c}^0_p)^{\gamma\epsilon_n}]\|_2\leq\|(\mathbf{c}^0_f)^{\gamma\epsilon_n}- (\mathbf{c}^0_p)^{\gamma\epsilon_n}\|_2 $$

Together with Lemma \ref{lem:vol_upper_bound}, we obtain,
\begin{align*}
    |\conv(\mathbf{\hat C}_n)|& \leq\left(1 +\frac{1}{n-1}\right) |\conv(\smallC)|\\
    & \leq \left(1 +\frac{1}{n-1}\right)\left(1 + C_1 \epsilon_n\right)|\conv(\mathbf{C}^0)|\\
    & \leq \left(1 + C' \epsilon_n\right)|\conv(\mathbf{C}^0)|.
\end{align*}

Furthermore, by Lemma \ref{lem:vol_det}, we have
$$\frac{|\conv(\mathbf {\hat C}_n)|}{|\conv(\mathbf C^0)|} = \frac{h^0}{\hat h_n}\cdot \frac{\sqrt{\det(\mathbf {\hat C}_n^T\mathbf {\hat C}_n)}}{\sqrt{\det{({\tC}^T\tC})}}\leq {1 + C'\epsilon_n}.$$

By Corollary \ref{lem:h_diff}, we obtain
$$\frac{\sqrt{\det(\mathbf {\hat C}_n^T\mathbf {\hat C}_n)}}{\sqrt{\det{({\tC}^T\tC})}}\leq  \frac{\hat h_n}{h^0}\cdot({1 + C'\epsilon_n}) \leq 1+ C''\epsilon_n.$$

\end{proof}

\subsection{Proof of Proposition \ref{prop:ab_ss_property} }\label{app:pf_prop_3}
\begin{proof}
\begin{itemize}
\item[(i)] If $\alpha \geq \alpha'$, $\beta \leq \beta'$ and $\mathbf W$ is $(\alpha, \beta)$-SS,
\begin{align*}
& [cone(\mathbf W)^{\ast}]^{\alpha'} \bigcap [bd \mathcal K]^{\alpha'} \subseteq [cone(\mathbf W)^{\ast}]^\alpha \bigcap [bd \mathcal K]^{\alpha}\\
 & \subseteq \{\mathbf x: \|\mathbf x - \lambda \mathbf e_f \|_2 \leq \beta\lambda, \lambda \geq 0 \} 
\subseteq \{\mathbf x: \|\mathbf x - \lambda \mathbf e_f \|_2 \leq \beta'\lambda, \lambda \geq 0 \}.
\end{align*}
Then $\mathbf W$ is $(\alpha', \beta')$-SS.

 \item[(ii)] 
 If $\conv(\mathbf W)\subseteq \conv(\mathbf{\bar{W}})$, 
 \begin{align}\label{eq:prop3_1}
     cone(\mathbf{\bar{W}})^{\ast} \subseteq cone(\mathbf W)^{\ast} \subseteq \mathcal K.
  \end{align}
Also, since 
$$[cone(\mathbf W)^{\ast}]^\alpha = \left\{\mathbf x: \mathbf x^T \mathbf W \geq -\alpha\|\mathbf x\|_2\right\} = \left\{\mathbf x: \mathbf x^T \mathbf w \geq -\alpha\|\mathbf x\|_2, \forall \mathbf w\in \conv(\mathbf W)\right\},$$
 we have 
  \begin{align}\label{eq:prop3_2}
    [cone(\mathbf{\bar{W}})^{\ast}]^\alpha \subseteq [cone(\mathbf{W})^{\ast}]^\alpha.
  \end{align}
By \eqref{eq:prop3_1}, \eqref{eq:prop3_2} and the definition, if $\mathbf W$ is $(\alpha, \beta)$-SS, $\mathbf{\bar{W}}$ is also $(\alpha, \beta)$-SS.
 \item[(iii)] The proof is trivial by definition.
\end{itemize}
\end{proof}

\subsection{Proof of Proposition~\ref{prop:suff_SS_new} }\label{app:pf_prop_6}
The following lemma is helpful in the proof of Proposition~\ref{prop:suff_SS_new}.
\begin{lemma}\label{lem:l1_and_l2_norm}
For any $\mathbf{x} \in \mathds{R}^k$, if 
$$
\|\mathbf x\|_1 - \|\mathbf x\|_2 \leq \epsilon\|\mathbf x\|_1
$$
for some $\epsilon\in [0, \frac{1}{2k}]$,
then there exists one element $x_i$ of $\mathbf{x}$ such that 
$$
\|\mathbf{x}-x_i\mathbf{e}_i \|_2\leq 4\sqrt{k-1}\epsilon\cdot |x_i|.
$$
\end{lemma}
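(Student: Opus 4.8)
The plan is to prove the lemma by choosing $i$ to be the coordinate of largest magnitude and then controlling the total mass carried by the remaining coordinates. Throughout I would write $a_j = |x_j|$ and set $M = \max_j a_j = a_i$, $S = \|\mathbf x\|_1 = \sum_j a_j$, and $R = S - M = \sum_{j\neq i} a_j$. The statement is homogeneous of degree one in $\mathbf x$ on both sides, and it is trivial when $\mathbf x = \mathbf 0$ or $k=1$, so I may assume $M > 0$ and $k \ge 2$. Since the target quantity is
$$\|\mathbf x - x_i \mathbf e_i\|_2^2 = \sum_{j\neq i} a_j^2,$$
everything reduces to showing that this is small relative to $M^2$.

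First I would convert the hypothesis into a bound on the off-diagonal mass. From $\|\mathbf x\|_2 \ge (1-\epsilon)\|\mathbf x\|_1$ one gets $\|\mathbf x\|_2^2 \ge (1-\epsilon)^2 S^2 \ge (1-2\epsilon)S^2$. Combining this with the elementary identity $S^2 - \|\mathbf x\|_2^2 = \sum_{j\neq l} a_j a_l$ and keeping only the (nonnegative) cross terms involving the index $i$, I obtain
$$2MR \le \sum_{j\neq l} a_j a_l = S^2 - \|\mathbf x\|_2^2 \le 2\epsilon S^2,$$
and hence $MR \le \epsilon S^2 = \epsilon (M+R)^2$.

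The key step is to turn this quadratic inequality into a clean, dimension-independent bound $R \le 4\epsilon M$. Writing $r = R/M \ge 0$, the inequality reads $\epsilon r^2 - (1-2\epsilon) r + \epsilon \ge 0$, whose roots are $r_\pm = \big[(1-2\epsilon) \pm \sqrt{1-4\epsilon}\,\big]/(2\epsilon)$, so $r$ must lie in $[0, r_-]$ or in $[r_+, \infty)$. Here the geometric constraint $r = R/M \le k-1$ (there are only $k-1$ other coordinates, each at most $M$) together with $\epsilon \le 1/(2k)$ is exactly what forces $r$ onto the small branch: I would check that $r_+ \ge \tfrac{1}{2\epsilon}-1 \ge k-1 \ge r$, which rules out $r \ge r_+$ and leaves $r \le r_-$. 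A short rationalization then gives $r_- = 2\epsilon/\big[(1-2\epsilon)+\sqrt{1-4\epsilon}\,\big] \le \epsilon/(1-3\epsilon) \le 4\epsilon$ for $\epsilon \le 1/4$. This branch-selection argument is the only delicate point; the tempting shortcut of substituting the crude $S \le kM$ directly produces only $R \le 2k\epsilon M$, which loses a factor of $k$ and is too weak for $k \ge 3$.

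Finally I would conclude with the nonnegativity estimate $\sum_{j\neq i} a_j^2 \le \big(\sum_{j\neq i} a_j\big)^2 = R^2 \le 16\epsilon^2 M^2$, giving $\|\mathbf x - x_i\mathbf e_i\|_2 \le 4\epsilon M \le 4\sqrt{k-1}\,\epsilon\,|x_i|$, where the last step uses $\sqrt{k-1} \ge 1$ for $k \ge 2$; in fact this route yields the stronger constant $4$ in place of $4\sqrt{k-1}$, so there is ample slack. The degenerate case $\epsilon = 0$ is immediate, since then $\|\mathbf x\|_1 = \|\mathbf x\|_2$ forces all cross terms to vanish and $\mathbf x$ to have a single nonzero entry, making both sides of the conclusion zero.
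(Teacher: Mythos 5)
Your proof is correct, and it takes a genuinely different route from the paper's. The paper normalizes to $\mathbf x \ge 0$, $\|\mathbf x\|_1 = 1$ and runs a contradiction argument showing a gap in the possible coordinate values: no coordinate can lie in $[2\epsilon, \tfrac12]$, since such a coordinate $x_1$ would force $\|\mathbf x\|_1-\|\mathbf x\|_2 \ge 1-\sqrt{(1-x_1)^2+x_1^2} > x_1/2 \ge \epsilon$; pigeonholing (some coordinate is $\ge 1/k \ge 2\epsilon$, and at most one can exceed $\tfrac12$) then produces a unique dominant coordinate $x_i > \tfrac12$ with every other coordinate below $2\epsilon$, and the conclusion follows coordinatewise via $\|\mathbf x - x_i\mathbf e_i\|_2 \le 2\sqrt{k-1}\,\epsilon \le 4\sqrt{k-1}\,\epsilon\, x_i$. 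You instead control the aggregate off-maximum mass $R$ through the cross-term identity $S^2-\|\mathbf x\|_2^2=\sum_{j\neq l}a_ja_l \ge 2MR$, solve the resulting quadratic inequality in $r=R/M$, and use the geometric constraint $r \le k-1 \le \tfrac{1}{2\epsilon}-1$ to land on the small-root branch, giving $R\le 4\epsilon M$. The only hairline point in your branch selection is that $r\ge r_+$ is not literally excluded when $r_+=\tfrac{1}{2\epsilon}-1$ (which requires $\epsilon=\tfrac14$, hence $k=2$); but there $r_-=r_+$, so $r\le r_-$ still holds and nothing breaks. As for what each approach buys: the paper's argument yields per-coordinate structure (after normalization, each non-dominant coordinate is individually below $2\epsilon$), whereas yours is sharper in the constant --- bounding $\sum_{j\neq i}a_j^2 \le R^2$ rather than summing $k-1$ worst-case coordinates, you actually prove the dimension-free bound $\|\mathbf x - x_i\mathbf e_i\|_2 \le 4\epsilon|x_i|$, strictly stronger than the stated $4\sqrt{k-1}\,\epsilon|x_i|$.
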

\begin{proof}
It suffices to show the lemma holds for $\mathbf x\geq 0$ and $\|\mathbf x\|_1 = 1$. Now suppose there exist some elements of $\mathbf x$, say $x_1$, such that 
$$x_1\geq 2\epsilon\;\; \textit{and}\;\; x_1\leq \frac{1}{2}.$$
Since $\|\mathbf{x}_{-1}\|_2$ is a convex function, under the convex constraint 
$$\left\{(x_2, \cdots, x_k):\sum_{j=2}^k x_j = 1-x_1, x_j \geq 0, j = 2,\cdots, k\right\},$$
it is maximized on the vertex of the constraint. Therefore,
$\|\mathbf{x}_{-1}\|_2 \leq 1-x_1$, which leads to
\begin{align*}
\|\mathbf x\|_1 - \|\mathbf x\|_2
&\geq 1-\sqrt{(1-x_1)^2+x_1^2}\\
&= 1-(1-x_1)\sqrt{1+\left(\frac{x_1}{1-x_1}\right)^2}\\
&> 1-(1-x_1)\left[1+\frac{1}{2}\left(\frac{x_1}{1-x_1}\right)^2\right] \\
&= x_1-\frac{x_1^2}{2(1-x_1)}\\
&\geq x_1-x_1^2\geq \frac{x_1}{2}\geq \epsilon,
\end{align*}
where the second inequality is because $\sqrt{1+t}< 1+\frac{t}{2}$ for $t > 0$. 
So we get a contradiction. Consequently, there is no element in $[2\epsilon, \frac{1}{2}]$. Since there is at least one element that is larger than or equal to $\frac{1}{k}$ and $2\epsilon \leq \frac{1}{k}$, at least one element is larger than or equal to $2\epsilon$.
At the same time, there is at most one element that is larger than $\frac{1}{2}$, so there must be exactly one element that is larger than $\frac{1}{2}$. Let the element be $x_i$, then all other elements are less than $2\epsilon$. 
Therefore,
\begin{align*}
\|\mathbf{x}-x_i\mathbf{e}_i \|_2\leq \sqrt{k-1}\cdot 2\epsilon\leq \sqrt{k-1}\cdot 2\epsilon\cdot 2x_i
\end{align*}
\end{proof}



Now we are ready to present the proof of Proposition~\ref{prop:suff_SS_new}.
\begin{proof}


By Proposition \ref{prop:ab_ss_property}(ii), it suffices to prove for the case when all $x_{ij} = m\in \left[0, \frac{1}{k}\right)$.
Denote 
\begin{align*}
    A_{\beta_\epsilon} &= \left\{\mathbf x: \|\mathbf x - \lambda \mathbf e_f \|_2 \leq \beta_\epsilon\lambda, \lambda \geq 0 \right\},\\
    B_\epsilon &= [bd \mathcal K]^{\epsilon} = \left\{\mathbf x:  |\|\mathbf x\|_2 - \mathbf x^T \mathbf 1_k| \leq \epsilon  \|\mathbf x\|_2 \right\},\\ 
    C_\epsilon &= [cone(\tW)^{\ast}]^\epsilon = \left\{\mathbf x: \mathbf x^T \tW \geq -\epsilon\|\mathbf x\|_2\right\}.
\end{align*}
Then it suffices to show that there exist $\beta_\epsilon \to 0$ when $\epsilon \to 0$ such that $B_\epsilon\bigcap C_\epsilon \subseteq A_{\beta_\epsilon}$ for $\epsilon>0$.

Without loss of generality, we assume $\|\mathbf x\|_2 = 1$. For any $\mathbf x\in B_\epsilon\bigcap C_\epsilon$,  We consider the following three cases:

\textit{Case (i).}
When all elements of $\mathbf x$ are nonnegative. Since $\mathbf x\in B_\epsilon$, $\|\mathbf x\|_1 = \sum_{i=1}^k x_i \leq 1+\epsilon$. Then
\begin{align}\label{eq: l2_and_l1_old}
  \|\mathbf x\|_1 - \|\mathbf x\|_2\leq \epsilon \leq \epsilon\|\mathbf x\|_1.  
\end{align}

\textit{Case (ii).}
When there exist at least negative two elements in $\mathbf x$. Suppose one of the negative elements is $x_1$. Denote 
$$\mathcal P = \left\{i: x_i\geq 0\right\},\; s = \sum_{i\in \mathcal P}x_i \geq 0.$$ 
And 
$$\mathcal N = \left\{i: x_i<0, i\neq 1\right\},\; t = \sum_{i\in \mathcal N}x_i < 0.$$ 
Since $\mathbf x\in C_\epsilon$, for all $i \in \mathcal N$, 
$$m x_1 + (1-m) x_i \geq -\epsilon,$$
which implies 
$$
0> x_i \geq  -\frac{\epsilon}{1-m}
$$
and
\begin{align}\label{eq: t_ineq}
0> t = \sum_{i\in \mathcal N}x_i  > -\frac{k}{1-m}\epsilon.
\end{align}
Also, pick any $i\in \mathcal N$,
$$
-\epsilon \leq (1-m) x_1 + m x_i < (1-m) x_1.
$$
Therefore,
\begin{align}\label{eq: x_1_ineq}
0> x_1  > -\frac{\epsilon}{1-m}.
\end{align}
By \eqref{eq: t_ineq} and \eqref{eq: x_1_ineq},
\begin{align}\label{eq: l1_and_ele_sum}
\sum_{i=1}^k x_i = x_1 + s + t > \left(|x_1|-\frac{2\epsilon}{1-m}\right) + s + \left(|t| -\frac{2k}{1-m}\epsilon\right) = \|\mathbf x\|_1 - \frac{2k+2}{1-m}\epsilon.
\end{align}
Since $\mathbf x\in B_\epsilon$, 
\begin{align}\label{eq: l2_and_ele_sum}
\sum_{i=1}^k x_i \leq 1+\epsilon.
\end{align}
By \eqref{eq: l1_and_ele_sum} and \eqref{eq: l2_and_ele_sum},
$$
\|\mathbf x\|_1 \leq 1+\left(1+\frac{2k+2}{1-m}\right)\epsilon,
$$
i.e.,
\begin{align}\label{eq: l2_and_l1}
\|\mathbf x\|_1 - \|\mathbf x\|_2 \leq \left(1+\frac{2k+2}{1-m}\right)\epsilon \leq \left(1+\frac{2k+2}{1-m}\right)\epsilon\cdot \|\mathbf x\|_1.
\end{align}

\textit{Case (iii).} 
When there exists only one negative element in $\mathbf x$. Suppose the negative element is $x_1$.
Without loss of generality, we assume $x_k \geq |x_j|$ for all $j = 1, \cdots, k-1$.  

Denote 
$$r = \sum_{i=2}^{k-1}x_i \geq 0.$$ 
Since $\mathbf x\in B_\epsilon$, $\sum_{i=1}^k x_i \leq 1+\epsilon$. 
At the same time, $\|\mathbf x\|_2 = 1$. Combining these two expressions, we have
\begin{align}\label{eq: 1st_ineq}
0\leq \sum_{i=1}^{k-1} x_i^2 + \left(\sum_{i=1}^{k-1} x_i\right)^2 - 2(1+\epsilon)\sum_{i=1}^{k-1} x_i + \epsilon^2 + 2\epsilon
\end{align}
In \eqref{eq: 1st_ineq}, applying the fact that 
$$\sum_{i=2}^{k-1}x_i^2 \leq \left(\sum_{i=2}^{k-1}x_i\right)^2 = r^2,$$
we have
\begin{align}\label{eq: 2nd_ineq}
0\leq 2x_1^2-2(1+\epsilon)x_1 + 2r^2-2(1+\epsilon)r+\epsilon^2 +2\epsilon.
\end{align}
Since $\mathbf x\in C_\epsilon$, for all $i = 2,\cdots, k-1$, 
$$(1-m)x_1 + m x_i \geq -\epsilon,$$
which implies 
\begin{align}\label{eq: x_1_ineq_new}
0> x_1 \geq  -\frac{m}{(k-2)(1-m)}r-\frac{\epsilon}{1-m}.
\end{align}
From \eqref{eq: 2nd_ineq} and \eqref{eq: x_1_ineq_new}, we derive that 
\begin{align}\label{eq: 3rd_ineq}
0\leq & 2\left[\left(\frac{m}{(k-2)(1-m)}\right)^2+1\right]r^2 + \left[\frac{4m}{(k-2)(1-m)^2}\epsilon+\frac{2m}{(k-2)(1-m)}(1+\epsilon)-2-2\epsilon\right]r \nonumber\\
& + \left[\frac{2}{(1-m)^2}\epsilon^2+\frac{2}{1-m}\epsilon+\frac{2}{1-m}\epsilon^2+\epsilon^2+2\epsilon\right]
\end{align}
Since $\mathbf x\in B_\epsilon$, 
$$
1-\epsilon \leq \sum_{i=1}^k x_i = x_1 + r + x_k \leq r + x_k \leq (k-1)x_k,
$$
so
\begin{align}\label{x_k_lower}
    x_k\geq \frac{1-\epsilon}{k-1}.
\end{align}
Since $\mathbf x\in C_\epsilon$,
$$(1-m)x_1 + m x_k \geq -\epsilon.$$
So 
\begin{align}\label{x_1_and_x_k}
    x_1 \geq -\frac{m}{1-m}x_k -\frac{\epsilon}{1-m}.
\end{align}
Therefore,
\begin{align*}
    1+\epsilon &\geq \sum_{i=1}^k x_i = x_1 + r + x_k \nonumber\\
    &\stackrel{\eqref{x_1_and_x_k}}{\geq} -\frac{m}{1-m}x_k -\frac{\epsilon}{1-m} + r + x_k \nonumber\\
    &\stackrel{\eqref{x_k_lower}}{\geq} \left(1-\frac{m}{1-m}\right)\frac{1-\epsilon}{k-1}-\frac{\epsilon}{1-m} + r.
\end{align*}
In other words,
\begin{align}\label{r_upper}
    r \leq \left(1-\frac{1-2m}{(k-1)(1-m)}\right) + \left(1+\frac{1}{1-m}+\frac{1-2m}{(k-1)(1-m)}\right)\epsilon
\end{align}
By \eqref{eq: 3rd_ineq} and \eqref{r_upper}, we get $r < 30\epsilon$ when $\epsilon$ is small enough. 
Then combining with \eqref{eq: x_1_ineq_new},
$$
0> x_1 \geq  -\frac{1}{1-m}\left(\frac{30m}{k-2}+1\right)\epsilon.
$$
Consequently,
\begin{align}\label{eq: l1_and_ele_sum_new}
\sum_{i=1}^k x_i = x_1 + r + x_k 
& > \left[|x_1|-\frac{2}{1-m}\left(\frac{30m}{k-2}+1\right)\epsilon\right] + r + x_k \nonumber \\ 
& = \|\mathbf x\|_1 - \frac{2}{1-m}\left(\frac{30m}{k-2}+1\right)\epsilon.
\end{align}
Since $\mathbf x\in B_\epsilon$, 
\begin{align}\label{eq: l2_and_ele_sum_new}
\sum_{i=1}^k x_i \leq 1+\epsilon.
\end{align}
By \eqref{eq: l1_and_ele_sum_new} and \eqref{eq: l2_and_ele_sum_new},
$$
\|\mathbf x\|_1 \leq 1+\left[1+\frac{2}{1-m}\left(\frac{30m}{k-2}+1\right)\right]\epsilon,
$$
i.e.,
\begin{align}\label{eq: l2_and_l1_new}
\|\mathbf x\|_1 - \|\mathbf x\|_2 
\leq \left[1 + \frac{2}{1-m}\left(\frac{30m}{k-2}+1\right)
\right]\epsilon 
\leq \left[1+\frac{2}{1-m}\left(\frac{30m}{k-2}+1\right)\right]\epsilon\cdot \|\mathbf x\|_1.
\end{align}

~\\
~\\
Finally, combining the three cases above, by \eqref{eq: l2_and_l1_old}, \eqref{eq: l2_and_l1} and \eqref{eq: l2_and_l1_new}, for any $\mathbf x\in B_\epsilon\bigcap C_\epsilon$, 
\begin{align}\label{eq: l2_and_l1_final}
\|\mathbf x\|_1 - \|\mathbf x\|_2 
\leq \left[1 + \max\left\{\frac{2k+2}{1-m}, \frac{2}{1-m}\left(\frac{30m}{k-2}+1\right)\right\}\right]\epsilon \cdot \|\mathbf x\|_1.
\end{align}
Then by Lemma \ref{lem:l1_and_l2_norm}, we know that $B_\epsilon\bigcap C_\epsilon \subseteq A_{\beta_\epsilon}$ for all $\epsilon>0$ and $\epsilon$ small, where 
$$\beta_\epsilon = 4\sqrt{k-1} \cdot \left[1 + \max\left\{\frac{2k+2}{1-m}, \frac{2}{1-m}\left(\frac{30m}{k-2}+1\right)\right\}\right]\epsilon \to 0$$
when $\epsilon \to 0$.
\end{proof}

\subsection{Proof of Proposition~\ref{prop:const_SS_columns} }\label{app:pf_prop_4}
\begin{proof}
By Step 1 of the proof of Theorem~\ref{thm:sample} in Section~\ref{app:pf_thm_3}
we know that: if the probability density function satisfies
\begin{equation}\label{eq:SS_density}
    \mathds P(\|\mathbf w - \mathbf w_i^\sharp\|_2 \leq r) \geq (k-1)!\cdot c_0 \cdot r^{k-1},\quad \forall\, 0 <r \leq r_0
\end{equation}
for the $s$ distinct points $\mathbf w^\sharp_1, \cdots, \mathbf w^\sharp_s$ in its support and some positive constants $r_0$, $c_0$, then with probability at least $1 - C_1 s/d$, for any $\mathbf w_i^\sharp$, there exists at least one sample $\mathbf w_{(i)}^1$, such that 
\begin{equation}\label{eq:sample_close}
\|\mathbf w_{(i)}^1 - \mathbf w_i^\sharp\|_2 \leq r_d,\;\; \forall i = 1,\cdots, s,
\end{equation}
where $r_d = \left(\frac{\log d}{d} \right)^{\frac{1}{k-1}}$. 

Now we show that 
$$[cone(\mathbf W^0_1)^\ast]^{ \alpha-r_d}=  \{\mathbf x: \mathbf x^T \mathbf W^0_1 \geq -(\alpha-r_d)\|\mathbf x\|_2\} \subseteq
[cone(\mathbf W^\sharp)^\ast]^{ \alpha}=  \{\mathbf x: \mathbf x^T \mathbf W^\sharp \geq -\alpha\|\mathbf x\|_2\}.$$
For any $\mathbf x \in \mathds{R}^{k}$ and $\mathbf x^T \mathbf W^0_1 \geq -(\alpha-r_d)\|\mathbf x\|_2$, 
$$-\alpha\|\mathbf x\|_2 \leq \mathbf x^T \mathbf W^0_1 -r_d\|\mathbf x\|_2 \leq \mathbf x^T \mathbf W^0_1 + \mathbf x^T(\mathbf W^\sharp-\mathbf W^0_1) = \mathbf x^T \mathbf W^\sharp,$$
where in the second inequality we apply \eqref{eq:sample_close} and Cauchy–Schwarz inequality.
Therefore, by definition, if $\mathbf W^\sharp$ is $(\alpha, \beta)$-SS, $\mathbf W^0_1$ is $(\alpha-r_d, \beta)$-SS.

We pick $\mathbf w^\sharp_i$ to be the vertex $\mathbf e_i$ of $\simplex^{k-1}$ for $i = 1,\cdots, k$. Apparently, when the density function is uniformly larger than a constant on neighborhoods of $\mathbf e_i$'s, \eqref{eq:SS_density} holds.
Let 
$$\alpha_0 = C_2 \sqrt{\frac{\log (n\vee d)}{n}} + r_d,$$
then by Proposition~\ref{prop:suff_SS_new}, $\mathbf W^\sharp$ is $\left(\alpha_0, C_3 \alpha_0 \right)$-SS for all $n$ and $d$ if $\frac{\log d}{n}\to 0$. As a result, $\mathbf W_1^0$ is $(C_2 \sqrt{\frac{\log (n\vee d)}{n}}, C_3 \alpha_0)$-SS.

When $d\geq C n^{\frac{k-1}{2}}$, we have $$\alpha_0 = C_2 \sqrt{\frac{\log (n\vee d)}{n}} + r_d \leq C_2' \sqrt{\frac{\log (n\vee d)}{n}}.$$
Then $\mathbf W^0_1$ is $(C_2 \sqrt{\frac{\log (n\vee d)}{n}}, C_3' \sqrt{\frac{\log (n\vee d)}{n}})$-SS for all $n$ and $d$ if $\frac{\log d}{n}\to 0$, which finishes the proof.
\end{proof}

\newpage

{\colorblue
\section{Additional Simulations and Experiments}\label{app:add_sim}
\subsection{Convergence of the Estimation}\label{app:conver}
We use the Monte Carlo simulation to show the convergence of the integrated likelihood $ F_{n\times d}(\mathbf C)$ and the MLE $\hatCn$.
Consider a simple setup: $k = V = 3,d=6,$
$$\tC = \begin{bmatrix} 
2/3 & 1/6 & 1/6 \\
1/6 & 2/3 & 1/6\\
1/6 & 1/6 & 2/3 \\
\end{bmatrix}, \quad \tW = \begin{bmatrix}
5/6 & 0 & 1/6 & 5/6 & 1/6 & 0 \\ 
1/6 & 5/6 & 0 & 0 & 5/6 & 1/6 \\
0 & 1/6 & 5/6 & 1/6 & 0 & 5/6 \\
\end{bmatrix},$$
and the sample size is set to be $n = 60, 600,6000,60000$. 

In the experiment, we consider the ``noiseless'' data, i.e., $\mathbf X = n\tC\tW$.  We compare the integrated likelihood among candidate $\mathbf C$'s taking the following form:
\begin{align}
    \label{mat:c_appendix}
 \begin{bmatrix}
c & (1-c)/2 & (1-c)/2\\
(1-c)/2 & c & (1-c)/2\\
(1-c)/2 & (1-c)/2 & c\\
\end{bmatrix},
\end{align}
with $c$ taking values from $[0.5,1]$. We use Monte Carlo method to evaluate the integrated likelihood \eqref{eq:lkh_fn}: 
$$\hat{F}_{n\times d, T}(\mathbf C) \approx  \prod_{i=1}^d  \left[\frac{1}{T} \sum_{t=1}^T f_n(\mathbf x^{(i)}| \mathbf u = \mathbf C\mathbf w_t)\right],$$
where $\mathbf w_1,\cdots, \mathbf w_T$ are i.i.d. random samples from $\text{Dir}_k (\mathbf 1)$ and $T = 50,000.$

\begin{figure}
\begin{minipage}{.6\textwidth}
    \centering
    \includegraphics[width=0.99\textwidth]{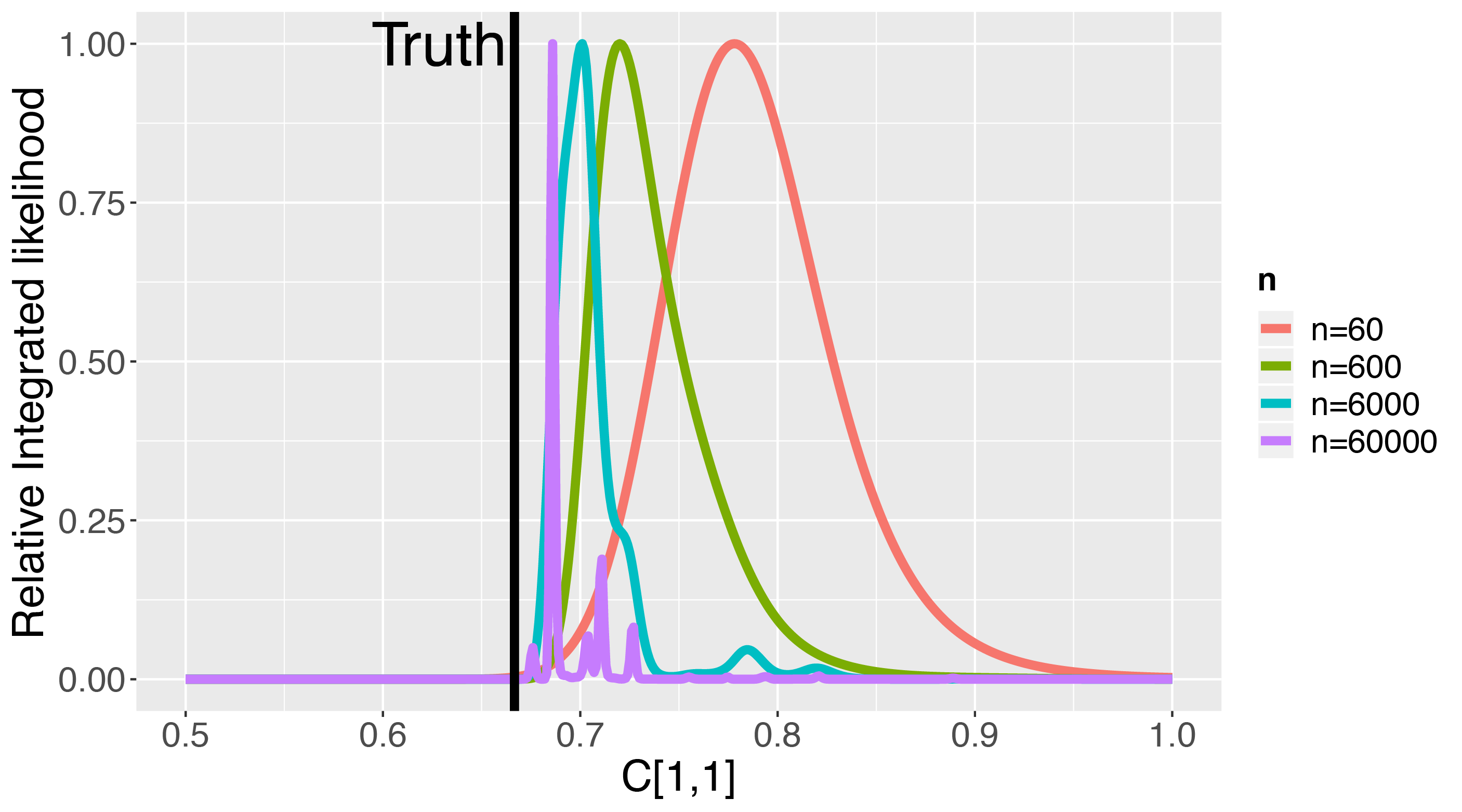}
    \label{fig:lkh_vs_C}
\end{minipage}
\begin{minipage}{.39 \textwidth}
\centering
\includegraphics[width=0.8\textwidth]{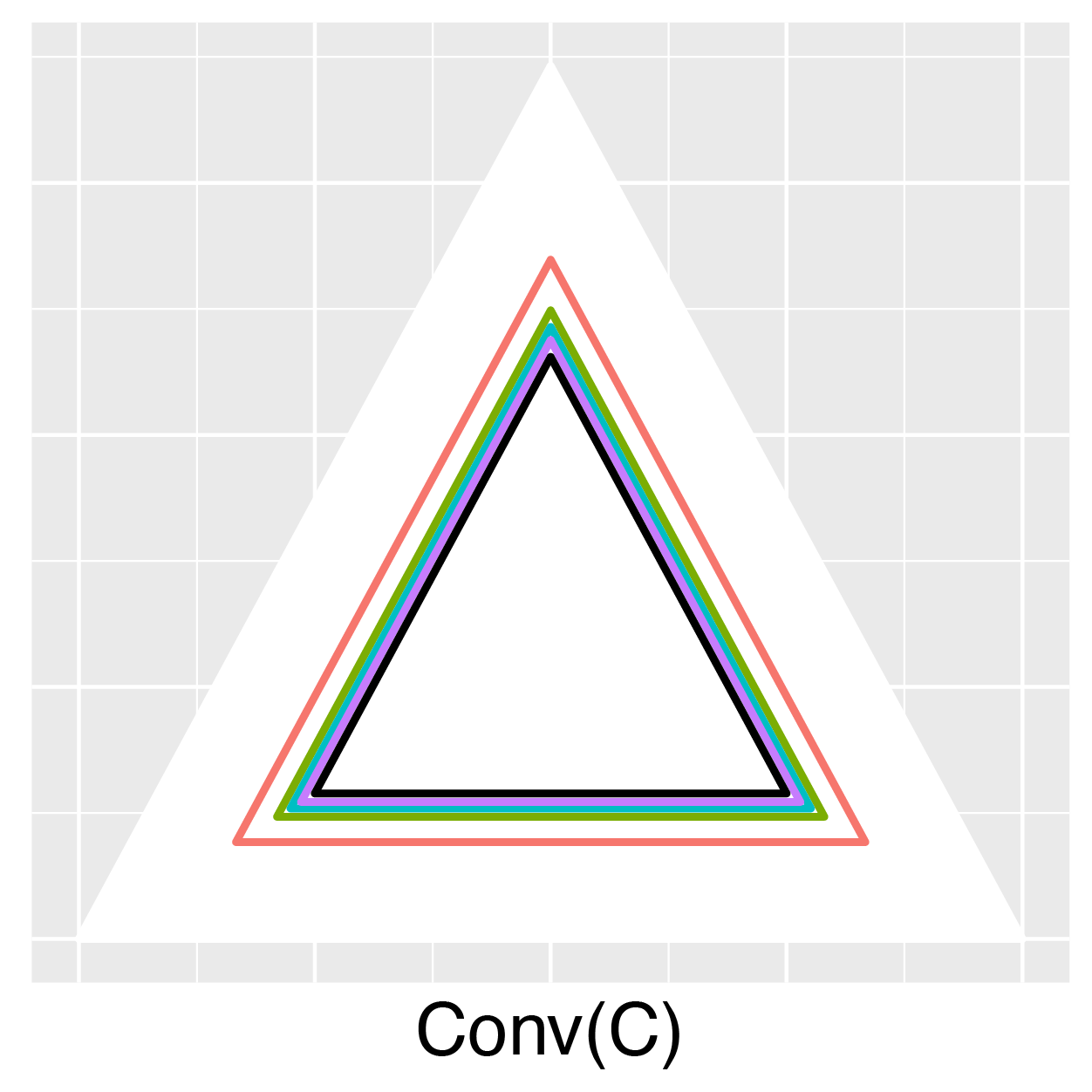}
\end{minipage}
\caption{Results of the experiment in Section \ref{app:conver}. Left: the relative integrated likelihood of ``noiseless'' data. Right: $\conv(\tC)$ and the optimal $\conv(\mathbf C)$'s under different $n$. The white triangle represents $\simplex^2$; the smallest black triangle is $\conv(\tC)$; other colored triangles represent the $\conv(\mathbf C)$'s that maximize $\hat{F}_{n\times d, T}(\mathbf C)$ under different $n$'s. The legend in the middle is shared by both plots.}
   \label{fig:sim_3}
\end{figure}

The left plot of Figure \ref{fig:sim_3} shows $\hat{F}_{n\times d, T}(\mathbf C)/\max_{\mathbf C}\hat{F}_{n\times d, T}(\mathbf C)$, the relative value of the estimated integrated likelihood. As $n$ increases, the peak of the likelihood approach the truth (i.e., $c=2/3$): the optimal $c$ values that maximize $\hat{F}_{n\times d, T}(\mathbf C)$ for $n=60,600,6000,60000$, are $0.778, 0.720, 0.701, 0.686$, respectively. The small fluctuations in the curves of $n=6000,60000$ are possibly due to numeric issues.
The right plot of Figure \ref{fig:sim_3} displays $\conv(\tC)$ and the optimal $\conv(\mathbf C)$'s for different $n$.


\subsection{Comparison with Other Methods}\label{app:comparison}

In this section, we provide additional simulation studies to compare the proposed method (MCMC-EM) with several existing approaches: 
Anchor Free (AnchorF) \citep{huang2016anchor}, Geometric Dirichlet Means (GDM) \citep{yurochkin2016geometric}, and two MCMC algorithms based on Gibbs sampler (Gibbs) \citep{griffiths2004finding} and  based on partially collapsed Gibbs sampler (pcLDA) \citep{magnusson2018sparse, terenin2018polya}.

The basic simulation setup is as follows: $V = 1200$, $d = 1000$, $n=1000$ and $k = 5$,  columns of $\mathbf{C}$ are generated from $Dir_V(0.1)$ and columns of $\mathbf{W}$ are from $Dir_k(0.1)$. For our MCMC-EM algorithm, the number of MCMC samples is 20 without burn-in. The EM algorithm stops after 50 iterations; For each  simulation, we run  the EM algorithm 12 times in parallel with different randomly-initialized parameters and report the result with the highest likelihood value. All hyper-parameters
are set as default, except that the prior over mixing weights in Gibbs and pcLDA is set to be uniform, same as ours.

\begin{figure}[h!]
\centering
\captionsetup{width=.88\linewidth}
  \includegraphics[width=.74\linewidth]{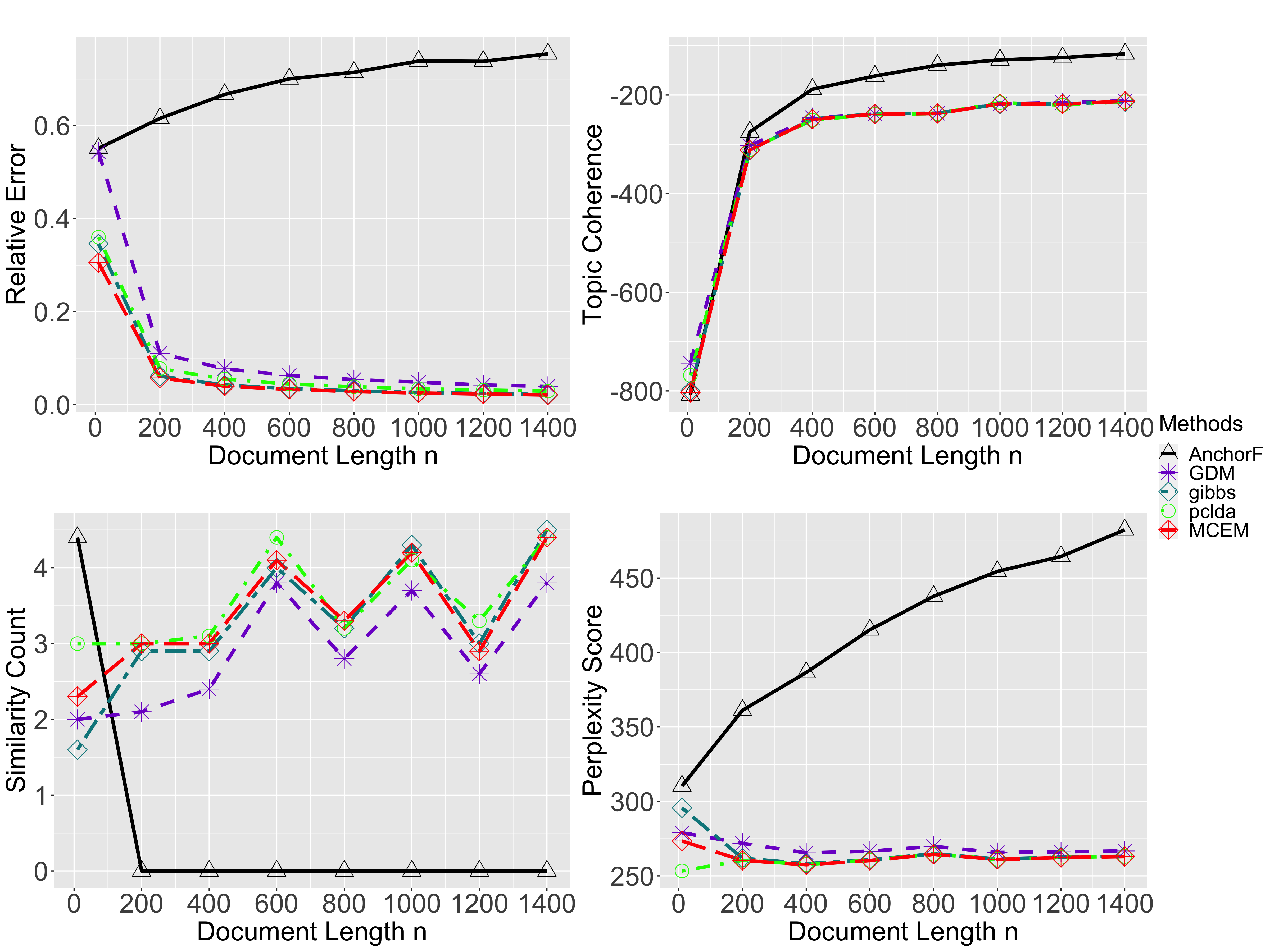}
\caption{Comparison with existing methods when document length varies ($n=10,200,400,\cdots,1400$).
}
\label{sim:p1}
\end{figure}

\begin{figure}[h!]
\centering
\captionsetup{width=.88\linewidth}
  \includegraphics[width=.74\linewidth]{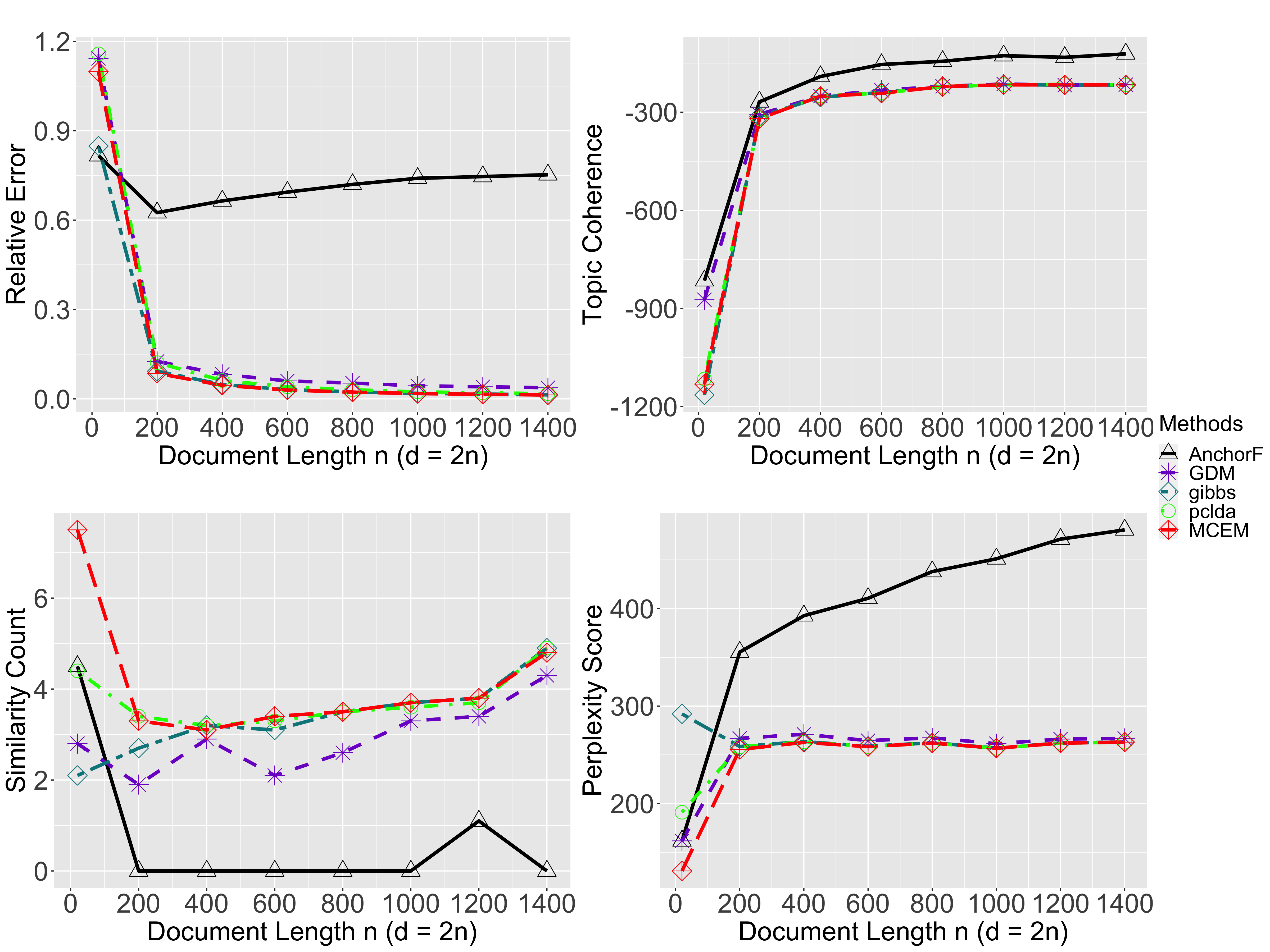}
\caption{Comparison with existing methods when both document length $n$ and number of documents $d$ vary ($n=20,200,400,\cdots,1400$, $d=2n$).
}
\label{sim:p2}
\end{figure}

\begin{figure}[h!]
\centering
\captionsetup{width=.88\linewidth}
  \includegraphics[width=.74\linewidth]{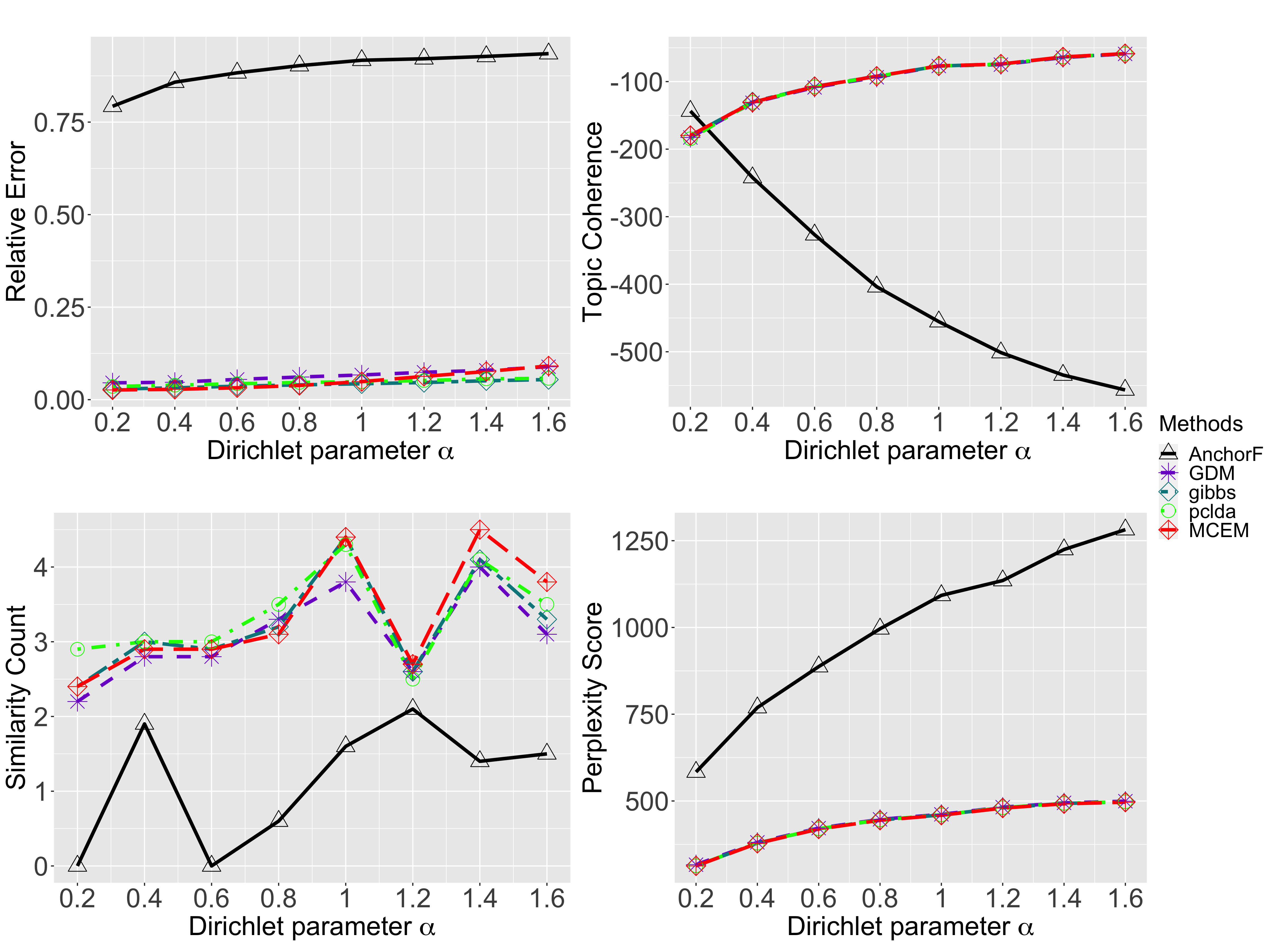}
\caption{Comparison with existing methods when the Dirichlet parameter $\alpha$ varies.
Columns of $\mathbf{W}\sim Dir_k(\alpha)$ with $\alpha=0.2,0.4,\cdots,1.6$. Identity matrix $\mathbf{I}_k$ is appended to the randomly sampled matrix $\mathbf{W}$ to ensure model identifiability.}
\label{sim:p3}
\end{figure}

We evaluate the performance by the following four metrics: 
\begin{itemize}
\item \emph{Relative Error} is defined by $\min_{\mathbf \Pi}\|\mathbf {\hat C}\mathbf\Pi -\mathbf C  \|_F/\|\mathbf C\|_F$, where $\mathbf\Pi$ is a permutation matrix. 
\item \emph{Topic Coherence} is used to measure the single-topic quality, defined as 
$$\sum_{l=1}^k\sum_{v_1, v_2 \in \mathcal V_l} \log \left(\frac{\text{freq}(v_1, v_2) + \epsilon} {\text{freq}(v_2)}\right)$$
where $\mathcal V_l$ is the leading 20 words for topic $l$, $\text{freq}(v_1, v_2)$, $\text{freq}(v_2)$ are the co-occurrence count of word $v_1$ and word $v_2$ and the occurrence counts of word $v_2$, respectively, and $\epsilon$ is a small constant added to avoid numerical issue. Generally, the higher the topic coherence is, the better the quality of the mined topics is.
\item \emph{Similarity Count} is used to measure similarity between topics \citep{arora2013practical,huang2016anchor}, which is obtained simply by adding up the overlapped words across $\mathcal V_l$.
$$\sum_{l_1 < l_2} \sum_{v_1 \in V_{l_1}, v_2 \in V_{l_2}} \idfn (v_1 = v_2).$$
    It focuses on the relationship between mined topics while the topic coherence measures the one within each topic. A smaller similarity count means the mined topics are more distinguishable.
\item \emph {Perplexity Score} measures the goodness of fit of the fitted model to the data. It is the multiplicative inverse of the likelihood normalized by the number of words. Sometimes the perplexity score is calculated on the hold-out data. Here, for simplicity, we use the one based on the training data (the whole dataset),
    $$\sqrt[\leftroot{-3}\uproot{3}\sum_{i=1}^d n_i]{\frac{1}{\prod_{i=1}^d  f_{n_i}(\mathbf x^{(i)}|\mathbf {\hat C}, \mathbf {\hat w}^{(i)})}}.$$
    For a fixed $k$, a smaller perplexity score implies a better fit of the model.
\end{itemize}

We investigate the performance of those methods (i) when  document length $n$ varies, (ii) when both document length $n$ and number of documents $d$ varies, and (iii) when the parameter $\alpha$ of the Dirichlet distribution we use to generate  $\mathbf{W}$ varies. Results are reported in Figure \ref{sim:p1} to Figure \ref{sim:p3}; each metric reported in those  plots is the average over 10 repetitions. Below we summarize our findings: 
\begin{enumerate}[label=(\roman*)]
\item MCMC-EM, GDM, Gibbs and pcLDA perform very similarly in these three simulation settings in terms of four different evaluation metrics.
That is because MCMC-EM, Gibbs and pcLDA have the same objective function and GDM is also a likelihood-based approach.
MCMC-EM has the best relative error and perplexity score in most experiments of the first two settings;
\item Estimators of MCMC-EM, GDM, Gibbs and pcLDA converge very quickly as $n$ increases or as both $n$ and $d$ increase. Their performance is stable as the Dirichlet parameter $\alpha$ increases;
\item The eigenvalue decomposition-based approach AnchorF has better similarity count than other methods in most experiments. However, it performs much worse than others in terms of relative error and perplexity score in almost all experiments.  
The topic coherence of AnchorF is slightly better than the others in the first two settings, but decreases sharply as the Dirichlet parameter $\alpha$ increases in the third setting.
\end{enumerate}

In Table \ref{tab:time}, we report the computation time of our MCMC-EM algorithm and other methods for the experiment in Fig.~\ref{sim:p3} ($V=1200$, $d=n=1000$ and $k=5$).
For our MCMC-EM algorithm, the number of MCMC samples is 20 without burn-in. 
The EM algorithm stops after 50 iterations. 
The results show that the computation time of our MCMC-EM algorithm is comparable with the other methods. 
Our code, which is currently partially implemented in C++, could run faster if being fully implemented in C++; in comparison, the publicly available codes of the competing methods have been mostly highly optimized.
\begin{table}[!ht]
    \centering
    \begin{tabular}{l|lllll}
    \hline
        Method & AnchorF & GDM & Gibbs & pcLDA & $\text{MC}^2$-EM \\ \hline
        Time/s & 6.93 & 0.27 & 82.20 & 34.83 & 49.15 \\ \hline
    \end{tabular}
    \caption{Computational time of the MCMC-EM algorithm and other methods ($V=1200$, $d=n=1000$,  $k=5$).}
    \label{tab:time}
\end{table}


\subsection{Selecting the Number of Topics}\label{app:find_k}
In practice, the number of topics $k$ is unknown. Below we propose a procedure to select $k$ based on the "effective rank" of the sample term-document matrix $\hat{\mathbf{U}}$ reflected in the spectrum. 
   
In Theorem 2, the topic matrix $\mathbf{C}$ is assumed to have full rank; consequently, the true term-document matrix $\mathbf{U}=\mathbf{C} \mathbf{W}$ has rank $k$. By Weyl's inequality \citep{weyl1912asymptotische}, the singular values of the sample term-document matrix $\hat{\mathbf{U}}$ are expected to be close to those of $\mathbf{U}$. Similar to the elbow method used in selecting the number of components in clustering analysis and in PCA, we plot the ordered singular values of $\hat{\mathbf{U}}$ versus its index, and then select $k$ by detecting the location of a significant drop of the curve. 

To test our procedure, we conducted a simulation study where $k=5$, $V = 1200$, $d = 1000$ and $n = 50$. Columns of $\mathbf{C}$ are randomly generated from $Dir_{V}(0.1)$ and columns of $\mathbf{W}$ are randomly generated from $Dir_{k}(0.1)$. We repeated the experiment  10 times and the results are shown in Fig.~\ref{fig:vol_k}. 
From the figure we can see that there is a sudden drop between the 5th and the 6th largest singular values. And the singular values after the 6th one are stable.
So, we would set $k=5$, which agrees with the underlying truth. 
\begin{figure}[h!]
    \centering
    \captionsetup{width=1\linewidth}
    \includegraphics[width=0.8\textwidth]{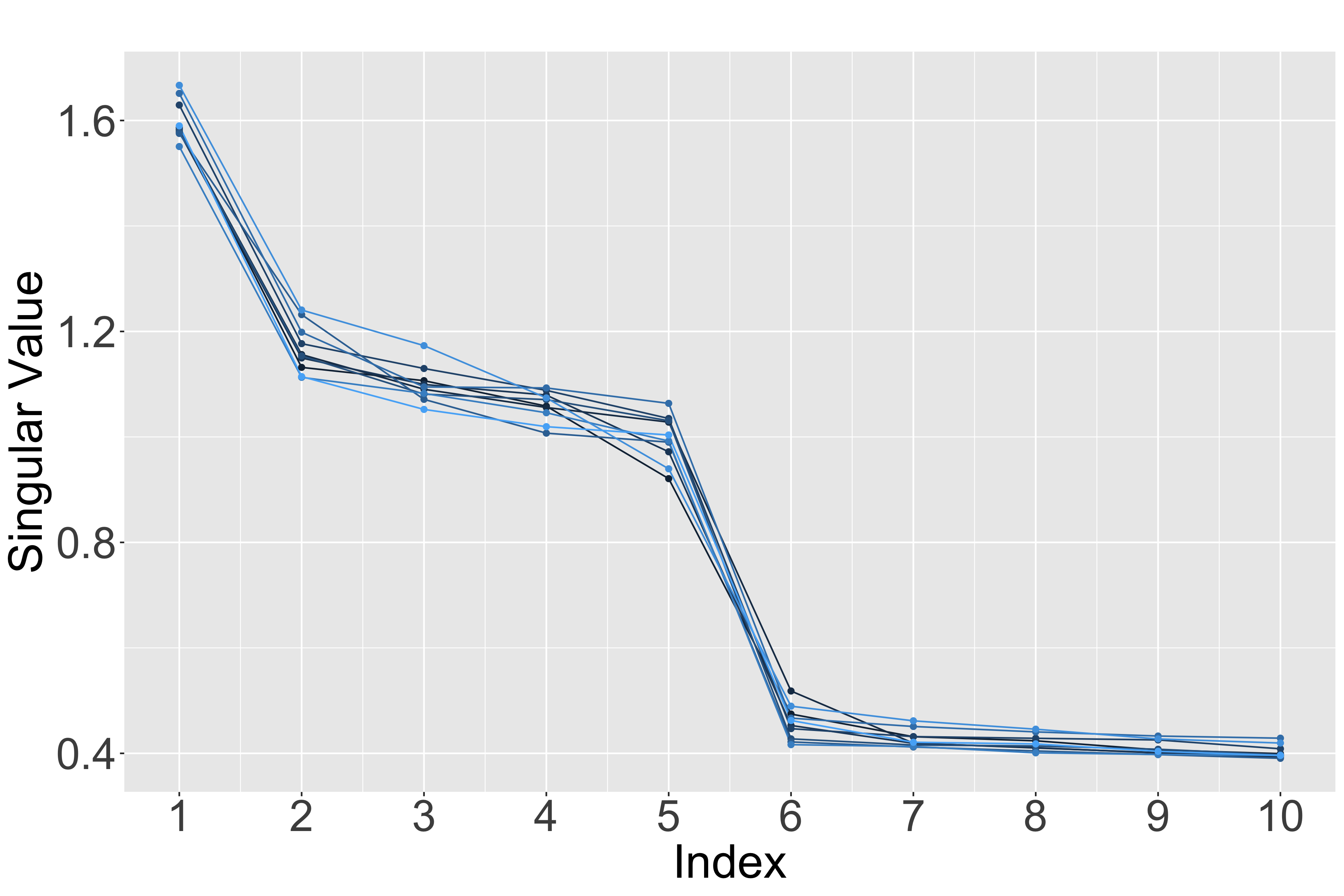}
    \caption{Singular values plot of sample term-document matrices. In 10 repetitions of the experiments, $k=5$, $V = 1200$, $d = 1000$ and $n = 50$. Columns of $\mathbf{C}$ and  $\mathbf{W}$ are generated Dirichlet distributions.}
    \label{fig:vol_k}
\end{figure}

We also apply the approach to the two text data used in the paper -- the NIPS and the Daily Kos datasets. The singular values plots are in Figure \ref{fig:singular_real}. For the NIPS dataset, there is a drop between 5th and 6th largest singular values. For the Daily Kos dataset, there is a drop between 7th and 8th largest singular values. So we choose 5 and 7 as the recommended number of topics for the NIPS and the Daily Kos datasets, respectively. 

\newpage

\begin{figure}[h!]
\centering
\begin{subfigure}{.8\textwidth}
  \centering
  \includegraphics[width=\linewidth]{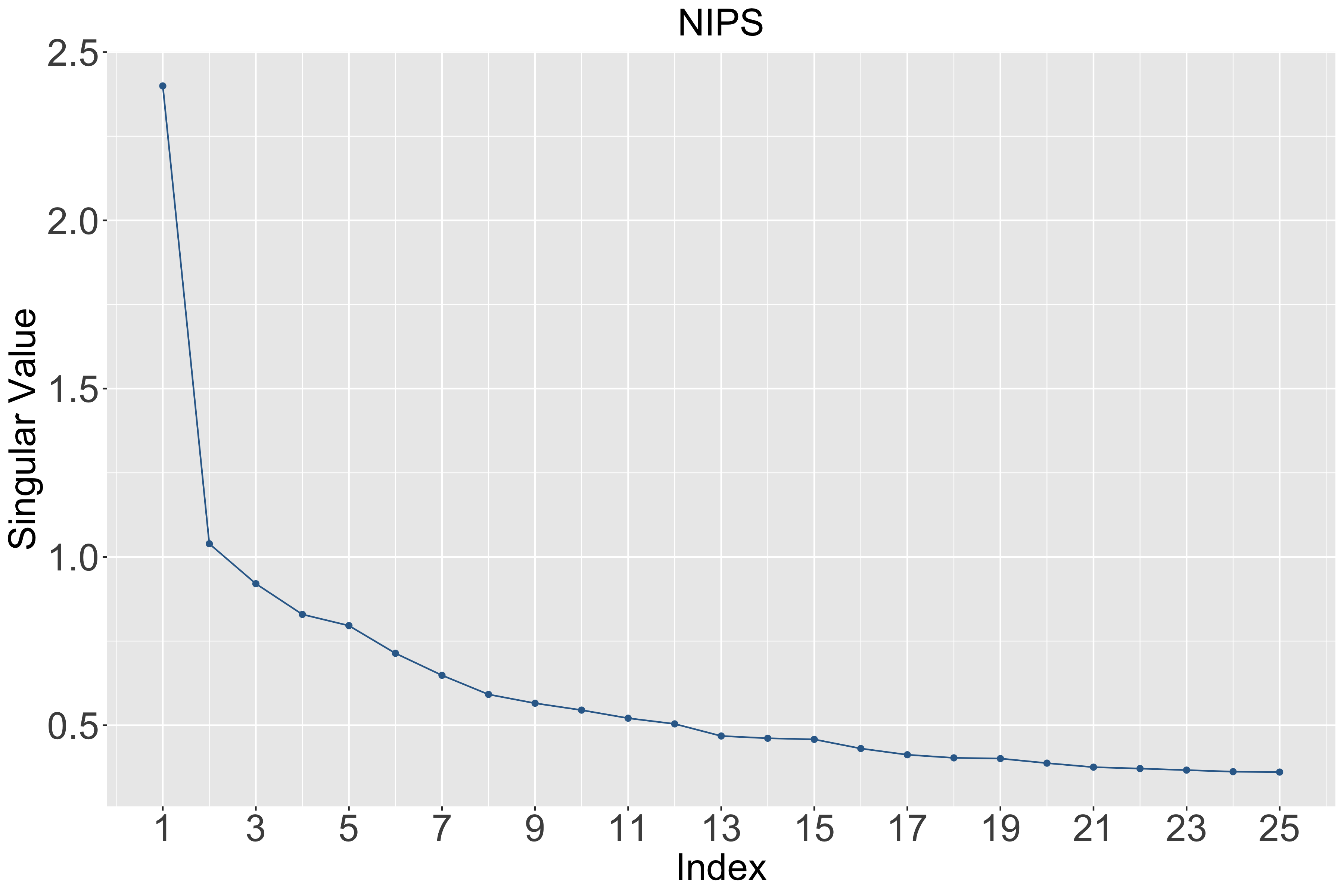}
\end{subfigure}
\begin{subfigure}{.8\textwidth}
  \centering
  \includegraphics[width=\linewidth]{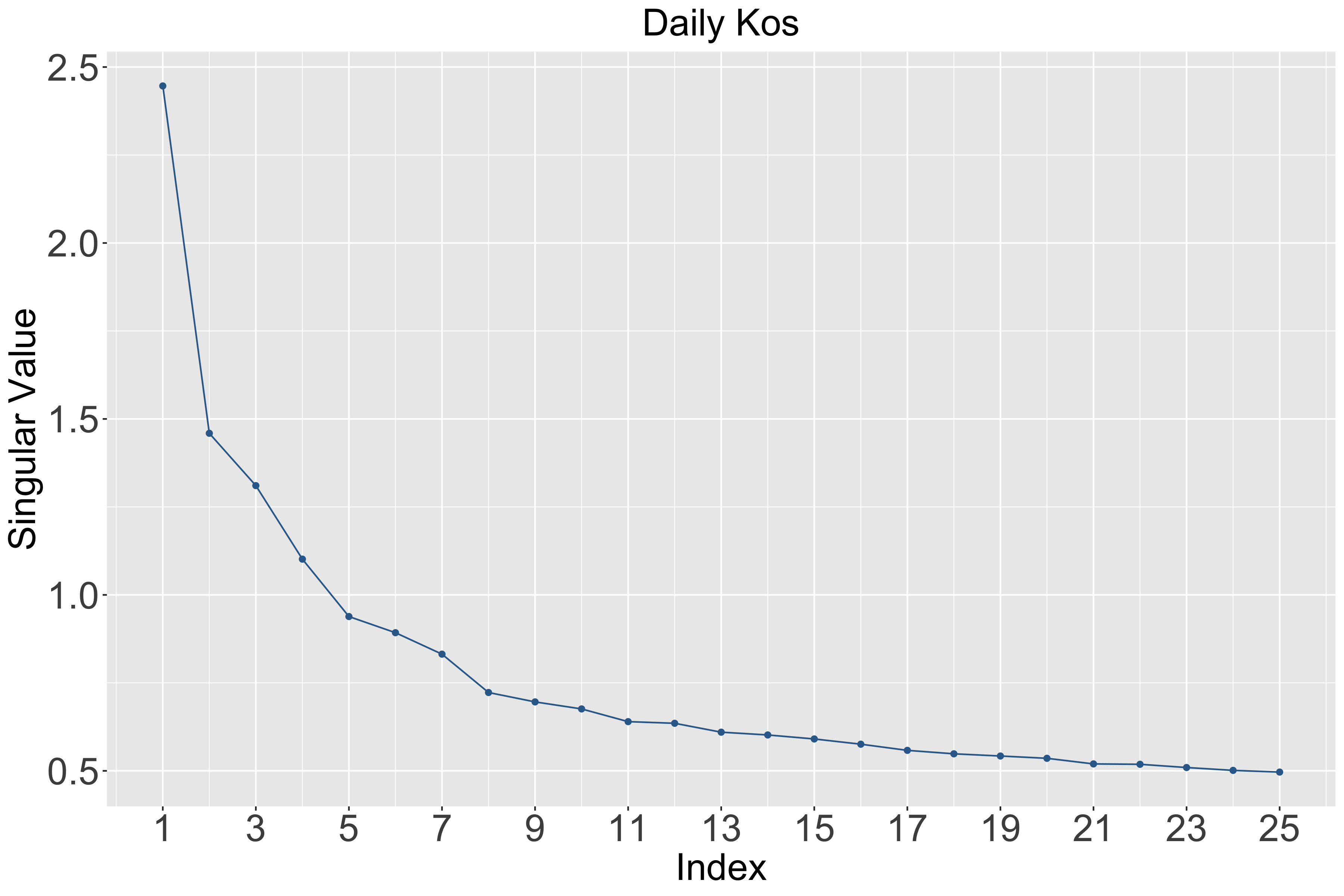}
\end{subfigure}
\caption{Singular values plot of the NIPS and the Daily Kos datasets.}
\label{fig:singular_real}
\end{figure}

}

\newpage

\section{Estimated Topics for the NIPS Dataset}\label{app:nips_topwords}
The NIPS dataset is originally from \cite{perrone2016poisson} and is accessible on UCI Machine Learning Repository\footnote{\href{ https://archive.ics.uci.edu/ml/datasets/NIPS+Conference+Papers+1987-2015}{https://archive.ics.uci.edu/ml/datasets/NIPS+Conference+Papers+1987-2015}}.
It contains $V=11463$ words and $d=5811$ NIPS conference papers published between 1987 and 2015, with an average document length of 1902. In this section, we display the top 10 words of mined topics output by our MCMC-EM algorithm at $k=5, 10, 15, 20$.

\begin{landscape}
\begin{table}[h]
\centering
\scalebox{0.9}{
\begin{tabular}{lllll}
\hline
Topic 1  & Topic 2   & Topic 3      & Topic 4        & Topic 5   \\
\hline
network  & algorithm & model        & training       & learning  \\
neural   & matrix    & models       & learning       & algorithm \\
input    & function  & data         & data           & state     \\
time     & problem   & distribution & set            & time      \\
model    & data      & inference    & image          & function  \\
networks & set       & using        & features       & value     \\
figure   & error     & parameters   & feature        & policy    \\
neurons  & linear    & prior        & classification & set       \\
output   & let       & likelihood   & using          & action    \\
system   & theorem   & bayesian     & images         & optimal  \\
\hline
\end{tabular}}
\caption{Mined topics for NIPS at $k=5$}
~\\
\centering
\scalebox{0.9}{
\begin{tabular}{llllllllll}
\hline
Topic 1     & Topic 2        & Topic 3     & Topic 4  & Topic 5    & Topic 6  & Topic 7   & Topic 8   & Topic 9   & Topic 10     \\
\hline
image       & learning       & data        & network  & graph      & neurons  & state     & algorithm & matrix    & model        \\
images      & training       & noise       & networks & algorithm  & model    & learning  & function  & kernel    & models       \\
object      & data           & time        & neural   & tree       & time     & policy    & theorem   & data      & distribution \\
model       & classification & model       & learning & set        & neural   & time      & bound     & problem   & data         \\
figure      & set            & figure      & training & node       & neuron   & action    & let       & linear    & gaussian     \\
features    & features       & test        & input    & clustering & input    & value     & learning  & algorithm & inference    \\
using       & class          & error       & output   & nodes      & spike    & function  & loss      & method    & likelihood   \\
visual      & feature        & performance & layer    & number     & figure   & algorithm & case      & sparse    & parameters   \\
recognition & label          & estimate    & units    & problem    & activity & reward    & functions & methods   & bayesian     \\
objects     & using          & using       & hidden   & time       & stimulus & optimal   & error     & vector    & prior     \\
\hline
\end{tabular}}
\caption{Mined topics for NIPS at $k=10$}
\end{table}
\end{landscape}

\begin{landscape}
\begin{table}
\centering
\begin{tabular}{lllllllll}
\hline
Topic 1      & Topic 2        & Topic 3      & Topic 4      & Topic 5     & Topic 6   & Topic 7   & Topic 8  \\
\hline
neurons      & learning       & model        & learning     & model       & graph     & network   & state    \\
model        & data           & models       & algorithm    & time        & nodes     & neural    & learning \\
time         & training       & distribution & bound        & data        & node      & networks  & policy   \\
neuron       & classification & inference    & loss         & models      & graphs    & input     & action   \\
spike        & set            & bayesian     & bounds       & figure      & structure & learning  & value    \\
neural       & features       & data         & probability  & human       & network   & output    & states   \\
activity     & feature        & parameters   & error        & prediction  & edge      & training  & time     \\
stimulus     & class          & likelihood   & theorem      & task        & networks  & units     & reward   \\
cells        & label          & latent       & let          & subjects    & edges     & layer     & function \\
figure       & using          & posterior    & regret       & target      & random    & hidden    & control  \\
\hline
\hline
Topic 9      & Topic 10       & Topic 11     & Topic 12     & Topic 13    & Topic 14  & Topic 15  &          \\
\hline
gaussian     & image          & tree         & algorithm    & kernel      & matrix    & function  &          \\
data         & images         & time         & algorithms   & data        & sparse    & problem   &          \\
distribution & object         & speech       & gradient     & space       & norm      & set       &          \\
function     & features       & using        & time         & points      & rank      & functions &          \\
mean         & model          & trees        & optimization & distance    & problem   & theorem   &          \\
noise        & objects        & source       & methods      & clustering  & matrices  & let       &          \\
variance     & using          & algorithm    & method       & linear      & data      & convex    &          \\
error        & feature        & signal       & step         & dimensional & analysis  & case      &          \\
estimate     & recognition    & node         & cost         & point       & algorithm & following &          \\
estimation   & figure         & used         & convergence  & kernels     & convex    & given     &    \\
\hline
\end{tabular}
\caption{Mined topics for NIPS at $k=15$}
\end{table}
\end{landscape}

\begin{landscape}
\begin{table}
\centering
\begin{tabular}{llllllllll}
\hline
Topic 1      & Topic 2    & Topic 3  & Topic 4      & Topic 5       & Topic 6        & Topic 7      & Topic 8     & Topic 9  & Topic 10    \\
\hline
theorem      & network    & neurons  & distribution & function      & graph          & model        & image       & policy   & matrix      \\
bound        & networks   & model    & posterior    & points        & tree           & data         & images      & learning & data        \\
let          & learning   & spike    & bayesian     & point         & nodes          & models       & object      & action   & rank        \\
probability  & training   & neuron   & sampling     & functions     & node           & parameters   & features    & value    & matrices    \\
bounds       & neural     & time     & prior        & space         & algorithm      & distribution & model       & state    & norm        \\
proof        & input      & activity & inference    & error         & set            & likelihood   & recognition & reward   & low         \\
sample       & layer      & neural   & process      & case          & graphs         & gaussian     & feature     & function & algorithm   \\
lemma        & units      & cells    & variational  & approximation & variables      & variables    & using       & optimal  & dimensional \\
following    & output     & input    & data         & mean          & structure      & log          & objects     & actions  & pca         \\
distribution & hidden     & stimulus & model        & given         & edge           & mixture      & vision      & decision & analysis    \\
\hline
\hline
Topic 11     & Topic 12   & Topic 13 & Topic 14     & Topic 15      & Topic 16       & Topic 17     & Topic 18    & Topic 19 & Topic 20    \\
\hline
algorithm    & kernel     & noise    & time         & algorithm     & learning       & features     & clustering  & system   & model       \\
algorithms   & distance   & signal   & state        & optimization  & training       & feature      & word        & memory   & human       \\
online       & data       & using    & model        & gradient      & classification & data         & cluster     & network  & figure      \\
regret       & kernels    & speech   & states       & problem       & class          & set          & words       & neural   & task        \\
learning     & space      & filter   & sequence     & function      & data           & regression   & clusters    & figure   & target      \\
set          & metric     & source   & system       & convex        & examples       & method       & data        & input    & subjects    \\
time         & learning   & signals  & markov       & methods       & label          & selection    & language    & control  & brain       \\
number       & based      & sparse   & transition   & convergence   & set            & number       & set         & time     & information \\
problem      & using      & coding   & dynamics     & method        & classifier     & problem      & model       & output   & subject     \\
bound        & similarity & basis    & sequences    & solution      & error          & methods      & means       & systems  & experiment \\
\hline
\end{tabular}
\caption{Mined topics for NIPS at $k=20$}
\end{table}
\end{landscape}

\newpage
\section{Estimated Topics for the KOS Dataset}\label{app:kos_topwords}
The Daily Kos dataset is accessible on UCI Machine Learning Repository Bag of Words Database\footnote{\href{ https://archive.ics.uci.edu/ml/machine-learning-databases/bag-of-words/}{https://archive.ics.uci.edu/ml/machine-learning-databases/bag-of-words/}}, and its original source is \href{dailykos.com}{dailykos.com}, a group blog and internet forum focused on the Democratic Party and liberal American politics.
The KOS dataset contains $V=6906$ words and $d=3430$ Daily Kos blog entries, with an average document length of 67. In this section, we display the top 10 words of mined topics output by our MCMC-EM algorithm at $k=5, 10, 15, 20$.
\begin{landscape}
\begin{table}[h]
\centering
\scalebox{0.9}{
\begin{tabular}{lllll}
\hline
Topic 1  & Topic 2   & Topic 3      & Topic 4        & Topic 5   \\
\hline
republican & kerry      & bush      & november    & iraq           \\
senate     & bush       & president & poll        & war            \\
house      & dean       & people    & house       & bush           \\
party      & poll       & kerry     & republicans & administration \\
democrats  & percent    & media     & governor    & military       \\
campaign   & edwards    & bushs     & senate      & american       \\
democratic & democratic & time      & electoral   & president      \\
elections  & voters     & campaign  & polls       & iraqi          \\
race       & primary    & general   & account     & people         \\
state      & polls      & years     & vote        & officials     \\
\hline
\end{tabular}}
\caption{Mined topics for KOS at $k=5$}
~\\
\centering
\scalebox{0.9}{
\begin{tabular}{llllllllll}
\hline
Topic 1     & Topic 2        & Topic 3     & Topic 4  & Topic 5    & Topic 6  & Topic 7   & Topic 8   & Topic 9   & Topic 10     \\
\hline
kerry      & november    & senate     & general    & house          & bush           & iraq      & campaign   & bush           & people       \\
dean       & poll        & race       & election   & bush           & kerry          & war       & media      & tax            & political    \\
poll       & house       & house      & bush       & committee      & president      & military  & party      & administration & america      \\
edwards    & republicans & elections  & states     & white          & bushs          & iraqi     & democratic & years          & issue        \\
primary    & governor    & republican & republican & national       & administration & american  & million    & jobs           & rights       \\
percent    & senate      & democrats  & voters     & delay          & general        & troops    & money      & year           & time         \\
clark      & electoral   & state      & state      & texas          & cheney         & soldiers  & time       & health         & marriage     \\
democratic & account     & district   & party      & administration & war            & saddam    & people     & percent        & conservative \\
polls      & polls       & gop        & vote       & court          & iraq           & officials & political  & million        & politics     \\
results    & vote        & democratic & nader      & report         & john           & people    & democrats  & economy        & gay         \\
\hline
\end{tabular}}
\caption{Mined topics for KOS at $k=10$}
\end{table}
\end{landscape}

\begin{landscape}
\begin{table}[]
\centering
\begin{tabular}{lllllllll}
\hline
Topic 1      & Topic 2        & Topic 3      & Topic 4      & Topic 5     & Topic 6   & Topic 7   & Topic 8  \\
\hline
bush     & party      & senate     & bush      & years         & law         & poll    & people     \\
news     & campaign   & race       & kerry     & people        & republicans & bush    & campaign   \\
national & money      & elections  & president & abu           & court       & percent & convention \\
john     & million    & republican & general   & policy        & rights      & kerry   & media      \\
kerry    & democratic & state      & bushs     & blades        & republican  & voters  & nader      \\
war      & house      & house      & john      & meteor        & marriage    & polls   & speech     \\
bushs    & democrats  & democrats  & campaign  & ghraib        & issue       & results & tom        \\
general  & delay      & district   & kerrys    & american      & state       & numbers & ballot     \\
service  & candidates & seat       & debate    & government    & gay         & polling & time       \\
campaign & committee  & gop        & election  & international & political   & lead    & party     \\
\hline
\hline
Topic 9      & Topic 10       & Topic 11     & Topic 12     & Topic 13    & Topic 14  & Topic 15  &          \\
\hline
election    & dean       & bush           & media     & iraq     & bush           & november    &  \\
vote        & edwards    & tax            & people    & war      & administration & poll        &  \\
general     & kerry      & jobs           & time      & iraqi    & president      & house       &  \\
voting      & primary    & administration & ive       & military & iraq           & account     &  \\
voters      & democratic & year           & community & troops   & house          & governor    &  \\
republicans & clark      & health         & blog      & american & intelligence   & senate      &  \\
voter       & iowa       & economy        & political & soldiers & white          & polls       &  \\
republican  & gephardt   & years          & dkos      & forces   & cheney         & electoral   &  \\
ohio        & lieberman  & billion        & read      & killed   & report         & republicans &  \\
oct         & jan        & states         & ill       & baghdad  & war            & vote        & \\
\hline
\end{tabular}
\caption{Mined topics for KOS at $k=15$}
\end{table}
\end{landscape}

\begin{landscape}
\begin{table}[]
\centering
\begin{tabular}{llllllllll}
\hline
Topic 1      & Topic 2    & Topic 3  & Topic 4      & Topic 5       & Topic 6        & Topic 7      & Topic 8     & Topic 9  & Topic 10    \\
\hline
news     & military & iraq     & tax     & party      & people    & iraq     & law       & administration & house       \\
media    & abu      & war      & billion & democratic & life      & war      & court     & bush           & republicans \\
john     & women    & saddam   & years   & campaign   & american  & iraqi    & marriage  & white          & delay       \\
campaign & ghraib   & troops   & year    & political  & political & baghdad  & gay       & house          & republican  \\
fox      & rumsfeld & united   & federal & democrats  & country   & killed   & rights    & intelligence   & democrats   \\
debate   & people   & iraqi    & cuts    & candidates & years     & american & amendment & president      & committee   \\
press    & american & american & budget  & candidate  & family    & soldiers & federal   & report         & senate      \\
sunday   & defense  & bush     & energy  & election   & white     & military & issue     & commission     & gop         \\
national & health   & country  & plan    & campaigns  & america   & forces   & state     & officials      & elections   \\
mccain   & war      & military & health  & dnc        & politics  & city     & legal     & security       & bill      \\
\hline
\hline
Topic 11     & Topic 12   & Topic 13 & Topic 14     & Topic 15      & Topic 16       & Topic 17     & Topic 18    & Topic 19 & Topic 20    \\
\hline
poll    & dean       & november    & bush      & media      & bush      & million     & bush     & bush         & senate     \\
percent & edwards    & poll        & cheney    & people     & kerry     & money       & states   & democrats    & race       \\
kerry   & kerry      & house       & general   & time       & president & campaign    & state    & republicans  & elections  \\
bush    & primary    & governor    & kerry     & ive        & bushs     & candidates  & kerry    & jobs         & republican \\
polls   & clark      & account     & service   & ill        & john      & raised      & nader    & republican   & state      \\
voters  & democratic & electoral   & national  & blog       & general   & house       & general  & democratic   & seat       \\
results & iowa       & republicans & bushs     & bloggers   & campaign  & dkos        & florida  & president    & district   \\
polling & gephardt   & senate      & guard     & night      & kerrys    & fundraising & election & conservative & democrats  \\
numbers & lieberman  & polls       & military  & convention & george    & donors      & vote     & job          & gop        \\
lead    & poll       & vote        & president & email      & debate    & time        & ohio     & reagan       & candidate \\
\hline
\end{tabular}
\caption{Mined topics for KOS at $k=20$}
\end{table}
\end{landscape}

\newpage
\section{Mined meta states for the taxi-trip dataset}\label{app:taxi_trips}

\begin{figure}[H]
    \centering
    \includegraphics[width=0.9\textwidth]{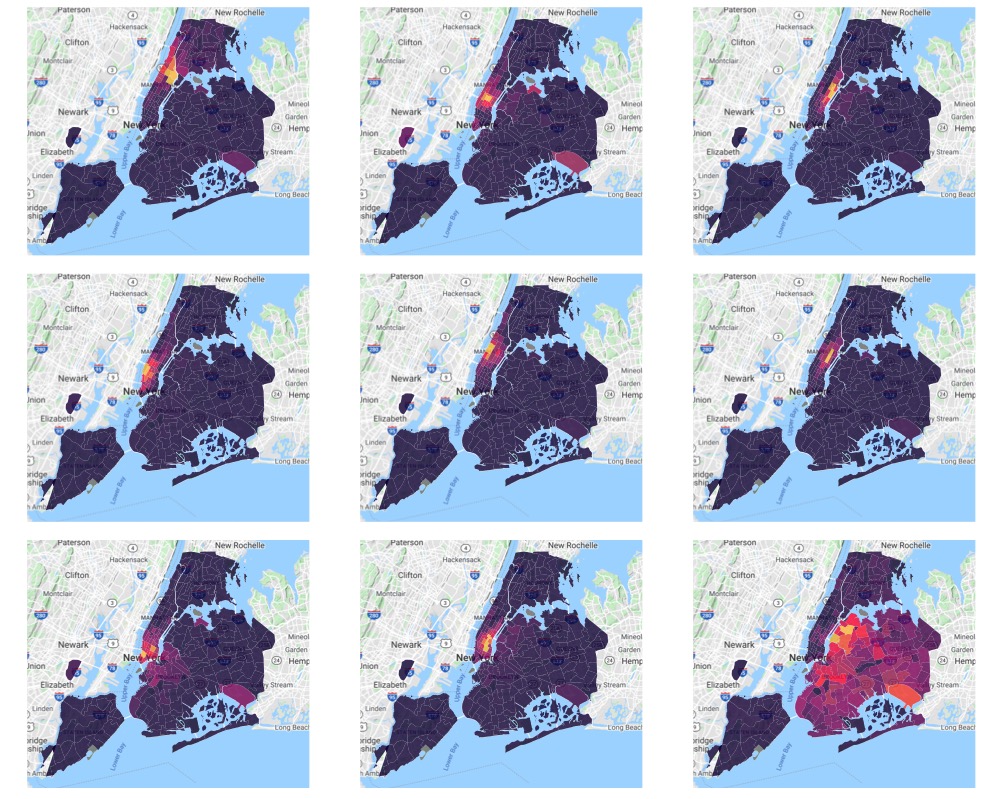}
    \caption{Estimation of disaggregation distributions for NYC taxi-trip data for $k=9$: $\mathbf {\hat C}_1, \mathbf {\hat C}_2, \cdots, \mathbf {\hat C}_9 \in \mathds{R}^V$, where $\mathbf {\hat C}_l = \mathds P(X_{t+1}|Z_t = l)$.}
    \label{fig:taxi_k9_C}
\end{figure}

\begin{figure}[H]
    \centering
    \includegraphics[width=0.9\textwidth]{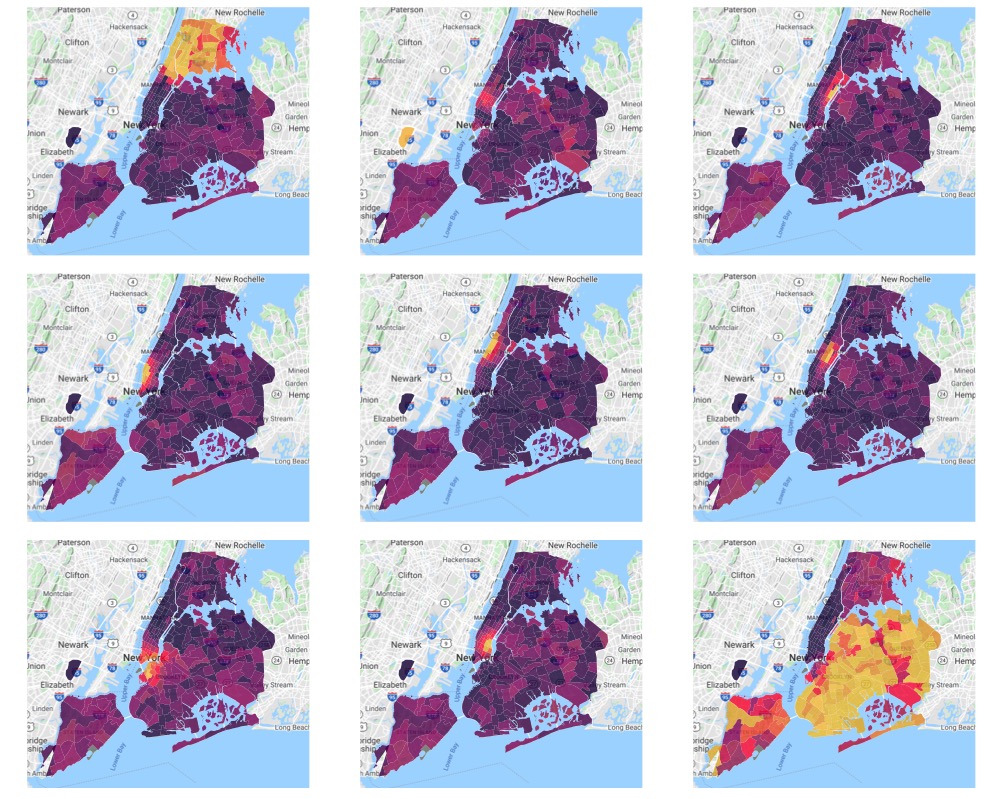}
    \caption{Estimation of aggregation distributions for NYC taxi-trip data for $k=9$: $\mathbf {\hat W}_1, \mathbf {\hat W}_2, \cdots, \mathbf {\hat W}_9 \in \mathds{R}^V$, where $\mathbf {\hat W}_l = \mathds P(Z_t = l|X_t)$.}
\end{figure}

\end{document}